\newif\ifCOLT
\newcommand{\SectionAppendix}{\ifCOLT Appendix\else Section\fi}
\def\notes{0}
\newcommand*{\algrule}[1][\algorithmicindent]{%
  \makebox[#1][l]{%
    \hspace*{.2em}
  }
}
\def\ALG@printindent{%
    \ifnum \theALG@nested>0
    \ifx\ALG@text\ALG@x@notext
    \else 
    \unskip
    \ALG@printindent@tempcnta=1
    \loop
    \algrule[\csname ALG@ind@\the\ALG@printindent@tempcnta\endcsname]%
    \advance \ALG@printindent@tempcnta 1
    \ifnum \ALG@printindent@tempcnta<\numexpr\theALG@nested+1\relax
    \repeat
    \fi
    \fi
}
\patchcmd{\ALG@doentity}{\noindent\hskip\ALG@tlm}{\ALG@printindent}{}{\errmessage{failed to patch}}
\patchcmd{\ALG@doentity}{\item[]\nointerlineskip}{}{}{} 
    \newtheorem{claim}[theorem]{Claim}
    \newtheorem{fact}[theorem]{Fact}
    \newtheorem{observation}[theorem]{Observation}
    \newtheorem{problem}[theorem]{Problem}
\newcommand{\majoredit}{\color{black}}
\newcommand{\editdone}{\color{black}}
\newcommand{\defeq}{\stackrel{{\mbox{\tiny def}}}{=}}
\newcommand{\eps}{\varepsilon}
\DeclareMathOperator{\tr}{\mathrm{tr}}
\newcommand{\cA}{\mathcal{A}}
\newcommand{\cD}{\mathcal{D}}
\newcommand{\cM}{\mathcal{M}}
\newcommand{\cN}{\mathcal{N}}
\newcommand{\cU}{\mathcal{U}}
\newcommand{\cX}{\mathcal{X}}
\newcommand{\cY}{\mathcal{Y}}
\newcommand{\bbE}{\mathbb{E}}
\newcommand{\bbI}{\mathbb{I}}
\newcommand{\bbN}{\mathbb{N}}
\newcommand{\bbR}{\mathbb{R}}
\newcommand{\N}{N}
\newcommand{\mynote}[3]{\marginpar{\tiny  \color{#1} #2: #3  }}
\newcommand{\mynote}[3]{}
\newcommand{\paren}[1]{{\left( {#1} \right)}}
\newcommand{\abs}[1]{{\left| {#1} \right|}}
\newcommand{\braces}[1]{{\left\{ {#1} \right\} }}
\newcommand{\norm}[1]{\left\lVert {#1} \right\rVert}
\newcommand{\brackets}[1]{\left[ {#1} \right]}
\newcommand{\ceiling}[1]{\left\lceil {#1} \right\rceil}
\newcommand{\floor}[1]{\left\lfloor {#1} \right\rfloor}
\newcommand{\ip}[1]{{\left\langle {#1} \right\rangle}}
\newcommand{\indicator}[1]{\mathbbm{1}\{ {#1} \} }
\newcommand{\simiid}{\overset{\mathrm{iid}}{\sim}}
\DeclarePairedDelimiterX{\infdivx}[2]{(}{)}{%
  #1\;\delimsize\|\;#2%
}
\newcommand{\supp}[1]{\ensuremath{\mathrm{supp}\paren{{#1}}}}
\newcommand{\PASS}{\ensuremath{\mathtt{PASS}}}
\newcommand{\FAIL}{\ensuremath{\mathtt{FAIL}}}
\newcommand{\stablecovariance}{\mathtt{StableCovariance}}
\newcommand{\stablemean}{\mathtt{StableMean}}
\newcommand{\score}{\mathtt{SCORE}}
\newcommand{\PTR}{\ensuremath{\cM_{\mathrm{PTR}}^{\eps,\delta}}}
\newcommand{\mainalg}{\cA_{\mathrm{main}}^{\eps,\delta,\lambda_0}}
\newcommand{\pass}{\ensuremath{\mathtt{PASS}}}
\newcommand{\fail}{\ensuremath{\mathtt{FAIL}}}
\newcommand{\m}{m}
\newcommand{\R}{R}
\newcommand{\Rsize}{M}
\newcommand{\xx}{x}
\newcommand{\yy}{y}
\newcommand{\covone}{\Sigma_1}
\newcommand{\covtwo}{\Sigma_2}
\newcommand{\inputthreshold}{\ensuremath{20 k \lambda_0}}
\newcommand{\ptrthreshold}{\ceiling{\frac{6\log 6/\delta}{\eps}} + 4}
\newcommand{\squarednoisescale}{\frac{1800}{(1-\gamma)^{2k+1}(1-k/n)^4}\cdot \frac{\lambda_{0} \log 12/\delta}{\eps^2 n^2}}
\newcommand{\smini}{S_{\ell}\setminus\braces{i^*}}
\newcommand{\psdjump}{\ensuremath{\gamma}}
\newcommand{\psdjumpdef}{\ensuremath{\frac{50}{9}\cdot \frac{\lambda_{0}}{n}}}
\newcommand{\covariancesclose}{
    Fix data set size $n$, outlier threshold $\lambda_0>0$, and discretization parameter $k\in \bbN$. 
    Assume $n \ge 20 k\lambda_{0}$.
    Let $\xx,\xx'$ be adjacent data sets and set
    \begin{align}
        \covone, \score &\gets \mathtt{StableCovariance}(\xx,\lambda_0,k) \\
        \covtwo, \score' &\gets \mathtt{StableCovariance}(\xx',\lambda_0,k).
    \end{align}
    Assume $\score, \score' < k$. 
    Let $\psdjump = \frac{50}{9}\cdot\frac{\lambda_{0}}{n}$.
    Then $\covone,\covtwo\succ 0$ and $(1-\psdjump)\covone \preceq \covtwo \preceq \frac{1}{1-\psdjump} \covone$.
    Furthermore, 
    \begin{align}
        \norm{\covone^{-1/2}\covtwo\covone^{-1/2} - \bbI}_{\tr}, \norm{\covtwo^{-1/2}\covone\covtwo^{-1/2} - \bbI}_{\tr} &\le \frac{25}{3}  \cdot \frac{1}{1-\psdjump}\cdot \frac{\lambda_0}{n} = O\left(\frac{\lambda_0}{n}\right).
    \end{align}
}
\newcommand{\meansclose}{
    Fix data set size $n$, dimension $d$, outlier threshold $\lambda_0\ge 1$ and discretization parameter $k\in\bbN$.
    Use reference set $\R\subseteq [n]$ with $\abs{\R}>6k$.
    Let $\xx$ and $\xx'$ be adjacent $d$-dimensional data sets of size $n$ that differ in index $i^*$.
    Assume that $n \ge 20 k\lambda_{0}$.
    Let $\covone, \covtwo\in\bbR^{d\times d}$ be positive definite matrices satisfying $(1-\psdjump)\covone \preceq \covtwo \preceq \frac{1}{1-\psdjump}\covone$ for $\psdjump = \psdjumpdef$.
    Let 
    \begin{align}
        \hat\mu, \score &\gets \stablemean(\xx, \covone, \lambda_0, k, \R) \\
        \hat\mu', \score' &\gets \stablemean(\xx', \covtwo, \lambda_0, k, \R).
    \end{align}
    If $\score, \score' < k$ and $\R$ is degree-representative for both $\xx$ and $\xx'$ (see Definition~\ref{def:degree_representative}), then $\norm{\hat\mu - \hat\mu'}_{\covone}^2 \le \frac{25}{(1-\psdjump)^{2k+1}(1-k/n)^4}\cdot  \frac{\lambda_{0}}{n^2}$. %
}
\title[Fast Private Mean and Covariance Estimation]{Fast, Sample-Efficient, Affine-Invariant 
    Private Mean and \\ Covariance Estimation for Subgaussian Distributions
}
\begin{document}

\maketitle

\footnotetext{Code available at \url{https://github.com/gavinrbrown1/private-estimation-BHS23}}

\begin{abstract}
We present a fast, differentially private algorithm for high-dimensional \emph{covariance-aware} mean estimation with nearly optimal sample complexity. 
Only exponential-time estimators were previously known to achieve this guarantee.
Given $n$ samples from a (sub-)Gaussian distribution 
with unknown mean $\mu$ and covariance $\Sigma$, our $(\eps,\delta)$-differentially private estimator produces  
$\tilde{\mu}$ such that $\|\mu - \tilde{\mu}\|_{\Sigma} \leq \alpha$
as long as $n \gtrsim \tfrac d {\alpha^2} + \tfrac{d \sqrt{\log 1/\delta}}{\alpha \eps}+\frac{d\log 1/\delta}{\eps}$. 
The Mahalanobis error metric $\|\mu - \hat{\mu}\|_{\Sigma}$ measures the distance between $\hat \mu$ and $\mu$ relative to $\Sigma$; it characterizes the error of the sample mean.
Our algorithm runs in time $\tilde{O}(nd^{\omega - 1} + nd/\eps)$, where $\omega < 2.38$ is the matrix multiplication exponent.

We  adapt an exponential-time approach of %
\citet*{brown2021covariance}, 
giving efficient variants of stable mean and covariance estimation subroutines that also improve the sample complexity to the nearly optimal bound above.

Our stable covariance estimator can be turned to private covariance estimation for unrestricted subgaussian distributions.
With $n\gtrsim d^{3/2}$ samples, our estimate is accurate in spectral norm.
This is the first such algorithm using $n= o(d^2)$ samples, answering an open question posed by~\cite{alabi2022privately}.
With $n\gtrsim d^2$ samples, our estimate is accurate in Frobenius norm.
This leads to a fast, nearly optimal algorithm for private learning of unrestricted Gaussian distributions in TV distance.

\citet*{duchi2023fast} obtained similar results independently and concurrently.

\end{abstract}

\vspace{0.5in}
\setcounter{tocdepth}{2}
\setlength{\columnsep}{1cm}
\begin{multicols}{2}
{\tiny \tableofcontents}
\end{multicols}
\newpage

\section{Introduction}

We consider the statistical task of estimating the mean of a high-dimensional subgaussian distribution from independent samples under the constraint of differential privacy.
Differential privacy allows algorithm designers to control and reason about privacy loss in statistics and machine learning and is the standard approach for protecting the privacy of personal data, with adoption in academia, industry, and the United States government.
We focus on the following \emph{covariance-aware} version of the mean estimation problem.

\begin{problem}[Covariance-Aware Mean Estimation]
\label{prob:cov-aware}
Given $n$ samples from a subgaussian distribution $D$ on $\bbR^d$ with unknown mean $\mu$ and covariance $\Sigma$, output $\hat{\mu}$ such that $\|\hat{\mu} - \mu\|_\Sigma \leq \alpha$, where $\|\hat{\mu} - \mu\|_\Sigma$ denotes $\|\Sigma^{-1/2}(\hat{\mu} - \mu)\|_2$. %
\end{problem}

The \emph{Mahalanobis distance} $\|\hat{\mu} - \mu\|_\Sigma$ is the natural affine-invariant way to measure the statistical accuracy of an estimator for the mean.
An estimator with small Mahalanobis error automatically adapts to directions of low uncertainty in the location of $\mu$---in any particular direction, the estimator's error is scaled to the variance in that direction.  
The family of subgaussian distributions generalizes the Gaussian distribution; we defer definitions for now.

Absent privacy constraints,  the empirical mean already achieves the best possible guarantees for Problem~\ref{prob:cov-aware}, requiring $n \gtrsim d/\alpha^2$ samples. 
(Here $\gtrsim$ hides constants.) 
However, it does not generally protect its input data---for example, an adversary with a rough idea of the distribution could easily tell from observing the empirical mean if the data set contained an extreme outlier in any particular direction. We thus aim for estimators that are differentially private: 

\begin{definition}[Differential Privacy (DP)]
  Let $\cX$ and $\cY$ be sets.
  A (randomized) algorithm $A \, : \, \cX^n \rightarrow \cY$ satisfies $(\eps,\delta)$-DP if for  every $x = (x_1,\ldots,x_n) \in \cX^n$ and $x' = (x_1',\ldots,x_n') \in \cX^n$ such that $x,x'$ agree on all but one coordinate, $A(x)$ and $A(x')$ are $(\eps,\delta)$-indistinguishable (denoted $A(x) \approx_{(\eps,\delta)} A(x')$); that is, for any event $Y \subseteq \cY$,\
  \[
  \Pr[A(x) \in Y] \leq e^\eps \Pr[A(x') \in Y] + \delta \, .
  \]
\end{definition}

A differentially private estimator provides a strong guarantee: no matter what an outside adversary knows about the data set ahead of time, they will learn roughly the same things about any particular individual Alice \textit{regardless of whether Alice's data is used in the computation}~(as in, e.g., \cite{KasiviswanathanS14}). 
We call data sets that differ in one entry \emph{adjacent}.

Until recently, covariance-aware mean estimation subject to differential privacy was assumed by many researchers to require at least as many samples as privately estimating the \emph{covariance}---while the latter task also takes $O(d)$ samples nonprivately, $\Omega(d^{3/2})$ are required for private estimators (\cite{kamath2022new}).

\citet*{brown2021covariance} (henceforth, ``BGSUZ'')
disproved this assumption %
by giving
a modification of the exponential mechanism that solves Problem~\ref{prob:cov-aware} (for Gaussian data) as long as
\begin{equation}
    n \gtrsim \underbrace{\frac d {\alpha^2}}_{\substack{\text{nonprivate} \\ \text{sample} \\ \text{complexity}}} + \frac d {\alpha \eps} + \frac{\log 1/\delta}{\eps}
    \label{eq:sample-comp}
\end{equation}
This expression hides an absolute constant and an additive term of $\log(1/\eps \alpha)/(\eps \alpha)$.
The sample complexity is nearly tight, even when the covariance is known exactly: the first term in Equation~\eqref{eq:sample-comp} is necessary for nonprivate estimation, the second matches the $\Omega(d / \alpha \eps)$ lower bound of \cite{kamath_KLSU19,kamath2022new}, and the third is required even for estimating the mean of a one-dimensional Gaussian with unit variance (but unrestricted mean).
Subsequent work by~\citet*{liu2022differential} extended the exponential-mechanism-based approach to many statistical tasks; in particular, they solve Problem~\ref{prob:cov-aware} for general subgaussian distributions with the same near-optimal sample complexity.
Unfortunately, all these estimators appear to require at least exponential time to compute. 

Our main result is a polynomial-time algorithm for Problem~\ref{prob:cov-aware} with sample complexity also depending linearly on $d$: 

\begin{theorem}[Informal, see Theorem~\ref{thm:main}]
\label{thm:main-informal}
    Algorithm~\ref{alg:main} is $(\eps,\delta)$-differentially private. 
    Given 
    $$n \gtrsim \frac{d}{\alpha^2} + \frac{d\sqrt{\log 1/\delta}}{\alpha \eps} + \frac{d\log 1/\delta}{\eps}$$
    samples from a subgaussian distribution on $\bbR^d$ with mean $\mu$ and covariance $\Sigma$, with high probability it outputs $\tilde\mu$ such that $\norm{\tilde\mu - \mu}_{\Sigma}\le \alpha$.
    It runs in time $\tilde{O}\paren{nd^{\omega-1} + nd/\eps}$, where $\omega < 2.38$ is the matrix multiplication exponent.
\end{theorem}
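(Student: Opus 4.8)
The plan is to argue privacy and accuracy separately, treating the two core subroutines $\stablecovariance$ and $\stablemean$ as black boxes obeying the stability guarantees stated above: on adjacent inputs, $\stablecovariance$ returns covariance matrices that are multiplicatively $(1\pm\gamma)$-close and whose whitened difference is small in trace norm, and $\stablemean$ returns means that are $O(\sqrt{\lambda_0}/n)$-close in the relevant Mahalanobis norm — both conclusions conditional on the returned $\score$ being below the cutoff $k$. Privacy then comes from wrapping each subroutine in a propose-test-release (PTR) test and releasing the mean through a Gaussian mechanism; accuracy comes from standard subgaussian concentration; the running time from fast linear algebra.

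\emph{Privacy.} The algorithm proceeds in two phases. Phase one runs $\stablecovariance$ to get $(\hat\Sigma,\score)$ and applies a PTR test (a noisy comparison of $k-\score$ against the slack $\ptrthreshold$), continuing only if $\score$ is safely below $k$; this certifies, except with probability $O(\delta)$ over its coins, that the hypotheses $\score,\score'<k$ of the stability lemmas hold for the input and every adjacent dataset. Phase two whitens by $\hat\Sigma$, runs $\stablemean$ to get $(\hat\mu,\score)$, applies an analogous PTR test, and outputs $\tilde\mu=\hat\mu+\cN(0,\sigma^2\hat\Sigma)$ with $\sigma^2\asymp\squarednoisescale$. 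To establish $(\eps,\delta)$-DP I would (i) condition on both tests passing and invoke the two stability lemmas to bound, for adjacent $x,x'$, both $\norm{\hat\mu-\hat\mu'}_{\hat\Sigma}^2\lesssim\lambda_0/n^2$ and $\norm{\hat\Sigma'^{-1/2}\hat\Sigma\hat\Sigma'^{-1/2}-\bbI}_{\tr}\lesssim\gamma$; (ii) use these to bound $\KL{\cN(\hat\mu,\sigma^2\hat\Sigma)}{\cN(\hat\mu',\sigma^2\hat\Sigma')}$ by a mean term of order $\sigma^{-2}\norm{\hat\mu-\hat\mu'}_{\hat\Sigma}^2$ plus a covariance term of order $\norm{\hat\Sigma'^{-1/2}\hat\Sigma\hat\Sigma'^{-1/2}-\bbI}_F^2$, both $O(1/n^2)$ after substitution; (iii) convert to $(\eps,\delta)$-indistinguishability via the standard zCDP / Gaussian-mechanism analysis; and (iv) compose the $O(1)$ mechanisms (two PTR tests, one noisy release). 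The one point that is not literally the Gaussian mechanism is that the noise covariances $\hat\Sigma$ and $\hat\Sigma'$ differ across the two runs — this is exactly what the trace-norm half of the covariance stability lemma is for.

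\emph{Accuracy.} Given $n\gtrsim\frac{d}{\alpha^2}+\frac{d\sqrt{\log(1/\delta)}}{\alpha\eps}+\frac{d\log(1/\delta)}{\eps}$ i.i.d.\ subgaussian samples I would show, with high probability: (a) the empirical second-moment matrix is a constant-factor spectral approximation of $\Sigma$ (this needs only $n\gtrsim d$), so $\stablecovariance$ outputs $\hat\Sigma\asymp\Sigma$ with $\score$ far below $k$ and the first PTR test passes; (b) conditioned on $\hat\Sigma\asymp\Sigma$, $\stablemean$ outputs $\hat\mu$ with $\norm{\hat\mu-\mu}_\Sigma\lesssim\sqrt{d/n}$ (the empirical-mean rate) and $\score$ far below $k$, so the second test passes; (c) the added Gaussian noise contributes $\norm{\tilde\mu-\hat\mu}_\Sigma\lesssim\sqrt d\,\sigma\lesssim\frac{\sqrt{d\lambda_0\log(1/\delta)}}{n\eps}\asymp\frac{d\sqrt{\log(1/\delta)}}{n\eps}$, using $\lambda_0\asymp d$. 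Summing (b) and (c) and requiring the total to be at most $\alpha$ yields the first two terms of the sample bound; the third term $\frac{d\log(1/\delta)}{\eps}$ is forced by the feasibility constraints $k\le\frac{n}{4e^2\lambda_0}$ together with $\ptrthreshold<k$ (needed for the tests to pass), i.e.\ $n/\lambda_0\gtrsim\log(1/\delta)/\eps$.

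\emph{Running time and the main obstacle.} The Gram matrix $X^\top X$ is computed in $\tilde{O}(nd^{\omega-1})$ by blocking the $n$ rows into $d\times d$ blocks; the inverse square root, the whitening $X\hat\Sigma^{-1/2}$, and any bounded number of filtering passes inside $\stablecovariance$ cost $O(d^\omega)$ plus another $\tilde{O}(nd^{\omega-1})$; the mean phase runs $\tilde{O}(1/\eps)$ rounds of an $O(nd)$-time reweighting/score computation (the private search underlying $\stablemean$), for $\tilde{O}(nd/\eps)$ total, with the PTR tests and noise addition lower order. I expect the crux to be establishing the two stability lemmas in a form simultaneously (i) strong enough for the KL bound above and (ii) satisfied by estimators that are fast and sample-optimal — in particular, controlling how $\stablemean$'s output and score move under a single data swap when its own whitening matrix $\hat\Sigma$ is itself perturbed, which is where the \emph{degree-representative} reference set and the careful $\gamma$-bookkeeping do the work. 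The concentration in the accuracy part and the linear-algebra accounting in the runtime part should be comparatively routine.
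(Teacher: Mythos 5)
Your proof plan captures the paper's strategy faithfully at the level of decomposition — nonprivate stable subroutines, PTR, Gaussian release analyzed by "change the mean" plus "change the covariance", subgaussian concentration, and Sherman--Morrison-style updates for the runtime. But there is one structural difference with a genuine consequence, and one smaller technical point worth flagging.

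\textbf{Two PTR tests vs.\ one PTR on the max.} You propose running $\PTR$ on $\score_1$, then (if it passes) running a second $\PTR$ on $\score_2$, and you assert that the first test ``certifies\ldots that the hypotheses $\score,\score'<k$ of the stability lemmas hold for the input and every adjacent dataset.'' That assertion is the gap. The first test passing on $x$ tells you only that $\score_1(x)<k$; since $\score_1$ is $2$-sensitive and capped at $k$, an adjacent $x'$ can still have $\score_1(x')=k$. In that case $\stablecovariance(x')$ can produce a $\hat\Sigma'$ that is arbitrarily far from $\hat\Sigma$, and then the quantity your second test inspects, $\score_2(\cdot,\hat\Sigma(\cdot))$, has \emph{unbounded} sensitivity (Lemma~\ref{lemma:core_sensitivity} bounds the sensitivity of $\score_2$ only when $(1-\gamma)\hat\Sigma\preceq\hat\Sigma'\preceq\frac{1}{1-\gamma}\hat\Sigma$, which is exactly the conclusion you cannot draw). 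So the second $\PTR$ is not an $(\eps,\delta)$-DP mechanism on its own, and the ``compose the $O(1)$ mechanisms'' step does not go through as stated. The paper sidesteps this cleanly by feeding $\max\{\score_1,\score_2\}$ to a \emph{single} $\PTR$ (Algorithm~\ref{alg:main}) and proving in Lemma~\ref{lemma:main_sensivity} that the max is $2$-sensitive, handling the $\score_1=k$ edge case explicitly. That way the regime in which $\hat\Sigma'$ could be wild is exactly the regime in which the joint test fails deterministically on one side, so the Gaussian outputs never need to be compared there. If you want to keep the two-test structure you would have to redo this case analysis by hand; the max trick is doing real work, not just bookkeeping.

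\textbf{zCDP/KL for the covariance term.} You propose bounding $\KL{\cN(\hat\mu,\sigma^2\hat\Sigma)}{\cN(\hat\mu',\sigma^2\hat\Sigma')}$ and converting via zCDP. For the mean-shift part this is fine (it is the Gaussian mechanism). For the covariance-shift part $\cN(0,\hat\Sigma)$ vs.\ $\cN(0,\hat\Sigma')$, the R\'enyi divergence grows \emph{superlinearly} in the order $\alpha$ (it involves a $\log\det$ of a convex combination, not a quadratic form), so these Gaussians do not satisfy zCDP in the usual sense, and one cannot just bound the KL and invoke the zCDP-to-$(\eps,\delta)$ conversion. The paper instead proves $(\eps,\delta)$-indistinguishability directly from the trace-norm bound (Claim~\ref{claim:covariance_indistinguishability}), which is why the resulting sample-complexity term is $\frac{d\log 1/\delta}{\eps}$ rather than the $\frac{d\sqrt{\log 1/\delta}}{\eps}$ a naive zCDP calculation would suggest. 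Your plan would need to replace step (iii) with such a direct tail-bound argument, or with the advanced-composition bound of \cite{alabi2022privately} (Claim~\ref{claim:samples_from_covariance}), which the paper uses in Section~\ref{sec:private_covariance_estimation}.

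Everything else — the accuracy decomposition via empirical-mean error plus privacy-noise error, the $\lambda_0\asymp d$ choice, the feasibility constraint $n\gtrsim\lambda_0 k$ forcing the third term, and the fast linear algebra with rank-one updates — matches the paper's argument. One small refinement the paper makes that you do not state: on well-concentrated inputs, $\stablecovariance$ and $\stablemean$ return \emph{exactly} the (paired) empirical covariance and empirical mean, not just constant-factor approximations (Observation~\ref{obs:pass_on_good_data}); this makes the accuracy analysis a direct application of subgaussian concentration with no extra slack to track.
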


When $d$ is large relative to $1/\eps$, the running time of our algorithm is dominated by the time to compute the covariance matrix,  $nd^{\omega -1}$. 
The terms $d \sqrt{\log 1/\delta} / (\alpha \eps)$ and $d \log(1/\delta)/ \eps$ in the sample complexity of our algorithm are slightly suboptimal; %
however, the nonprivate sample complexity continues to dominate for modest privacy parameters.

Our main algorithm relies on a new nonprivate covariance estimator $\stablecovariance$.
This estimator, combined with the ``Gaussian sampling mechanism'' of~\cite{alabi2022privately}, yields a fast algorithm for private covariance estimation that provides strong error guarantees in both spectral and Frobenius norm.
It is the first private algorithm achieving low spectral-norm error for unrestricted subgaussian distributions with $n=o(d^2)$ samples.
It also recovers guarantees similar to those of \citet*{ashtiani2021private} for estimating in Frobenius norm using $O(d^2)$ samples; such an error guarantee, in combination with our mean estimation guarantee, is known to suffice for learning unrestricted Gaussians distributions to low total variation distance.

\begin{theorem}[Informal, see Theorem~\ref{thm:private_covariance_estimation}]
    Algorithm~\ref{alg:privately_learn_Gaussians} is $(\eps,\delta)$-differentially private 
    and returns a vector $\tilde\mu$ and matrix $\tilde\Sigma$.
    Suppose it receives $n$ samples drawn i.i.d.\ from a subgaussian distribution with mean $\mu$ and covariance $\Sigma$
     over $\bbR^d$.
    \begin{compactitem}
    \item If $   
            n\gtrsim \frac{d}{\alpha^2} + \frac{d^{3/2} \sqrt{\log 1/\delta}}{\alpha \eps} + \frac{d \log 1/\delta}{\eps},
        $
        then, with high probability, $\norm{\Sigma^{-1/2}\tilde\Sigma \Sigma^{-1/2} - \bbI}_2\le \alpha$.
    \item If $  n\gtrsim \frac{d^2}{\alpha^2} + \frac{d^{2} \sqrt{\log 1/\delta}}{\alpha \eps} + \frac{d \log 1/\delta}{\eps},$ 
        then, with high probability, $\norm{\Sigma^{-1/2}\tilde\Sigma \Sigma^{-1/2} - \bbI}_F\le \alpha$;
        furthermore, if the  distribution is Gaussian, then  
         $\mathrm{TV}(\cN(\mu,\Sigma),\cN(\tilde\mu,\tilde\Sigma))=O(\alpha)$. 
    \end{compactitem}
    It runs in time $\tilde O(nd^{\omega-1} + nd/\eps)$, where $\omega<2.38$ is the matrix multiplication exponent.
\end{theorem}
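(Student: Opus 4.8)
The plan is to realize Algorithm~\ref{alg:privately_learn_Gaussians} as the composition of a private covariance release with the private mean estimator of Theorem~\ref{thm:main}, reducing everything to three ingredients already in hand: the multiplicative-sensitivity guarantee of $\stablecovariance$ (our covariance-stability lemma: on adjacent inputs $\xx,\xx'$ whose stability scores are below the discretization parameter $k$, it returns $\covone,\covtwo$ with $(1-\gamma)\covone\preceq\covtwo\preceq\frac{1}{1-\gamma}\covone$ and $\norm{\covone^{-1/2}\covtwo\covone^{-1/2}-\bbI}_{\tr}\le(1+2\gamma)\gamma$ for $\gamma=16e^2\lambda_0/n$), the Gaussian sampling mechanism of~\cite{alabi2022privately}, and Theorem~\ref{thm:main}. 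Concretely, Algorithm~\ref{alg:privately_learn_Gaussians} computes $(\covone,\score)\gets\stablecovariance(\xx,\lambda_0,k)$ with $\lambda_0=\tilde\Theta(d)$; runs propose-test-release (PTR) on $\score$, which is a sensitivity-$1$ statistic, outputting \fail if $\score$ exceeds a threshold of order $\log(1/\delta)/\eps$ and otherwise feeding $\covone$ and the promise above into the Gaussian sampling mechanism to get $\tilde\Sigma$; and runs the private mean algorithm of Theorem~\ref{thm:main} (which internally whitens by the output of $\stablecovariance$, calling $\stablemean$ as in our mean-stability lemma) to get $\tilde\mu$. Privacy is by composition, accuracy is the nonprivate estimation error plus the injected noise, and the running time is the sum of the three components' costs.

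For privacy, the only nontrivial step is the covariance release. The hypothesis of our covariance-stability lemma is exactly the input-stability promise consumed by the Gaussian sampling mechanism, so conditioned on $\score$ being below $k$ on $\xx$ and on a neighbor $\xx'$, that step is $(\eps_{\mathrm{cov}},\delta_{\mathrm{cov}})$-DP; since one edit moves $\score$ by at most $1$, a successful PTR test at a threshold comfortably below $k$ certifies this for all neighbors, at an additional cost of $(\eps_{\mathrm{PTR}},\delta_{\mathrm{PTR}})$. The mean step is $(\eps_{\mathrm{mean}},\delta_{\mathrm{mean}})$-DP by Theorem~\ref{thm:main}, whose analysis (via the mean-stability lemma) already tolerates the simultaneous change of $\covone$. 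Splitting the budget so $\eps_{\mathrm{PTR}}+\eps_{\mathrm{cov}}+\eps_{\mathrm{mean}}=\eps$ and likewise for $\delta$, basic composition gives $(\eps,\delta)$-DP for the pair $(\tilde\mu,\tilde\Sigma)$; that $\covone$ feeds both outputs is immaterial, as composition is stated over the data, not over intermediates.

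For accuracy, there are two error sources. Nonprivately, for $n$ i.i.d.\ samples from a subgaussian distribution with mean $\mu$ and covariance $\Sigma$, standard matrix concentration gives $\norm{\Sigma^{-1/2}\covone\Sigma^{-1/2}-\bbI}_2\lesssim\sqrt{d/n}$ and $\norm{\Sigma^{-1/2}\covone\Sigma^{-1/2}-\bbI}_F\lesssim d/\sqrt{n}$ (the robust estimate $\covone$ inherits the concentration of the empirical covariance up to constants), while the subexponential tail $\Pr[\norm{\Sigma^{-1/2}(x-\mu)}^2>\lambda_0]\le e^{-\Omega(\lambda_0-d)}$ with $\lambda_0=\tilde\Theta(d)$ forces a vanishing fraction of outliers, so $\score$ is small and PTR passes with high probability. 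For the privacy noise, $\lambda_0=\tilde\Theta(d)$ gives $\gamma=\tilde\Theta(d/n)$, so the Gaussian sampling mechanism adds relative noise --- a scaled symmetric Gaussian matrix --- of operator norm $\lesssim\gamma\sqrt{d}\sqrt{\log(1/\delta)}/\eps=\tilde O(d^{3/2}\sqrt{\log(1/\delta)}/(n\eps))$ and Frobenius norm $\lesssim\gamma d\sqrt{\log(1/\delta)}/\eps=\tilde O(d^2\sqrt{\log(1/\delta)}/(n\eps))$ in the $\covone$-geometry, which transfers to the $\Sigma$-geometry up to constants since $\covone\approx\Sigma$. Forcing each of the two terms to be $O(\alpha)$ yields $n\gtrsim d/\alpha^2+d^{3/2}\sqrt{\log(1/\delta)}/(\alpha\eps)$ for the spectral bound and $n\gtrsim d^2/\alpha^2+d^2\sqrt{\log(1/\delta)}/(\alpha\eps)$ for the Frobenius bound; the remaining additive $d\log(1/\delta)/\eps$ in the statement is just what makes the stability framework non-vacuous (e.g.\ $k\ge1$ when $\lambda_0=\tilde\Theta(d)$). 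In the Gaussian case, the Frobenius sample bound dominates the requirement of Theorem~\ref{thm:main}, so $\tilde\mu$ also satisfies $\norm{\tilde\mu-\mu}_\Sigma=O(\alpha)$, and $\mathrm{TV}(\cN(\mu,\Sigma),\cN(\tilde\mu,\tilde\Sigma))=O(\alpha)$ follows from the standard estimate $\mathrm{TV}(\cN(\mu,\Sigma),\cN(\tilde\mu,\tilde\Sigma))\lesssim\norm{\tilde\mu-\mu}_\Sigma+\norm{\Sigma^{-1/2}\tilde\Sigma\Sigma^{-1/2}-\bbI}_F$. For the running time: $\stablecovariance$ costs $\tilde O(nd^{\omega-1})$ to form and iteratively reweight the covariance, producing $\tilde\Sigma=\covone^{1/2}(\bbI+\text{noise})\covone^{1/2}$ costs one matrix square root at $\tilde O(d^\omega)$, and Theorem~\ref{thm:main} gives $\tilde\mu$ in $\tilde O(nd^{\omega-1}+nd/\eps)$; since $d^\omega\le nd^{\omega-1}$ the total is $\tilde O(nd^{\omega-1}+nd/\eps)$.

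The substantive work lies in the results this builds on --- showing $\stablecovariance$ simultaneously satisfies the multiplicative-sensitivity bound (needed for privacy) and the subgaussian concentration bound (needed for accuracy) with the right $\lambda_0$ and $k$. Given those, the main thing to verify here is the interface: that the stability promise output by $\stablecovariance$ matches verbatim the promise consumed by the Gaussian sampling mechanism of~\cite{alabi2022privately}, so its privacy and accuracy guarantees (originally proved for differently-robustified Gaussian data) apply unchanged; that the additive $d\log(1/\delta)/\eps$ term is where $n$ must be large enough for PTR and the discretization to make sense; and that the budget split and the $\covone$-to-$\Sigma$ geometry change cost only constants. I expect the only genuinely delicate point to be confirming that the Gaussian sampling mechanism's error really scales with $\gamma=\tilde\Theta(\lambda_0/n)$ in the $\covone$-whitened geometry rather than with a worse, un-whitened sensitivity, since that is what pins down the $d^{3/2}$ and $d^2$ exponents.
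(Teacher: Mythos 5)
Your proposal follows essentially the same route as the paper: decompose Algorithm~\ref{alg:privately_learn_Gaussians} into a private covariance release (stable covariance $+$ PTR $+$ Gaussian sampling mechanism) and the private mean estimator of Theorem~\ref{thm:main}, use the covariance-stability lemma to verify the premise of the Gaussian sampling mechanism's privacy guarantee (Claim~\ref{claim:samples_from_covariance}), bound accuracy by nonprivate error plus injected noise, and compose. Two small points where your account diverges from the paper's (without affecting correctness of the final bounds): first, the Gaussian sampling mechanism as used in the paper (Algorithm~\ref{alg:private_covariance_estimation}) does not literally add a scaled symmetric Gaussian matrix; it returns $\frac{1}{\N}\sum_{i} Z_i Z_i^T$ with $Z_i\sim\cN(0,\hat\Sigma)$, equivalently $\hat\Sigma^{1/2}W\hat\Sigma^{1/2}$ for a standard Wishart $W$, whose deviation from $\bbI$ has the spectral and Frobenius scalings $\sqrt{d/\N}$ and $d/\sqrt{\N}$ that you wrote, with $\N\approx\eps^2/(\gamma^2\log(1/\delta))$ set by Claim~\ref{claim:samples_from_covariance}; your ``GOE heuristic'' gets the right numbers but isn't the object the privacy proof manipulates. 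Second, the paper's Algorithm~\ref{alg:privately_learn_Gaussians} runs $\cA_{\mathrm{main}}$ and $\cA_{\mathrm{cov}}$ as two independent $(\eps,\delta)$-DP mechanisms (each with its own internal $\stablecovariance$ and PTR), so basic composition is immediate and gives $(2\eps,2\delta)$-DP; your description of a single shared $\stablecovariance$ call feeding both outputs is a different architecture, and while your remark that sharing a deterministic non-private intermediate between two individually DP mechanisms is harmless is correct, the paper avoids even having to make that observation.
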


Throughout, our accuracy analyses assume the underlying distribution has a full-rank covariance matrix.
There exist sample- and time-efficient preprocessing algorithms to deal with distributions where this assumption fails (\cite{singhal2021privately, kamath2021private, ashtiani2021private}).
Our privacy analysis makes no assumptions on the data.

In the remainder of this introduction we sketch our techniques and overview related work.
\ifCOLT
    In Section~\ref{sec:main} we formally state our main theorem and algorithm.
    Section~\ref{sec:covariance} presents our stable covariance estimator and the main lemma concerning it.
    The appendices contain a table of contents and our remaining theorems and analysis. 
\fi

\subsection{Our Techniques}\label{sec:techniques}

\paragraph{Background: The Empirical Covariance Approach}%
In addition to their modification of the exponential mechanism,
BGSUZ give a second mean estimation algorithm, also requiring exponential time, which works for subgaussian distributions but has a worse dependence on the privacy parameters. This second algorithm is the basis for our efficient construction.

The basic idea of this algorithm is to first solve the problem for ``good'' data sets $y$, where goodness means a collection of conditions that are typical for subgaussian data. For example, a good data set $y$  should have no significant outliers as measured by $\norm{ (\cdot ) - \mu_{y}}_{\Sigma_y}$, where $\mu_{y}$ and $\Sigma_y$ are the empirical mean and covariance matrix of $y$.
BGSUZ show that, if we can restrict our inputs to good data sets, the mechanism that releases a single draw from $\cN(\mu_y, \sigma^2 \Sigma_y)$, for $\sigma\approx \frac{\sqrt{d}}{\eps n}$, 
is in fact differentially private. 
The data-dependent choice of covariance $\Sigma_y$ is crucial to achieving the Mahalanobis-distance guarantee, because it ensures 
that the noise added for privacy 
remains small in directions where $y$ itself has small variance.

Given a data set $x$, BGSUZ's algorithm first 
 projects $x$ to the nearest ``good'' data set $y$, then checks that $y$ is not too far from $x$ and finally, if that test passes, releases $\tilde{\mu}\sim\cN(\mu_y, \sigma^2 \Sigma_y)$.
The projection step ensures that privacy holds for all data sets. 

Although the noise addition is computationally efficient, the BGSUZ algorithm requires brute-force search over data sets for the projection step. It also comes with an additional cost in sample complexity---it roughly requires 
$n \gtrsim \frac{d}{\alpha^2} + \frac{d}{\alpha \eps^2}$. The extra factor of $\frac 1 \eps$ compared to Equation~\eqref{eq:sample-comp} comes from the fact that the projection is not \textit{stable}: it is possible for two neighboring data sets $x$ and $x'$ to be projected to ``good'' data sets $y$ and $y'$ that are far apart.

\paragraph{Overview of Our Approach}
In this work, we overcome the key computational bottleneck of  the empirical covariance approach;
along the way, we recover the optimal dependency on $\eps$. %

Our main technical tools are new efficient, \textit{non}-private estimators $\stablecovariance(x)$ and $\stablemean(x)$. 
We also design an algorithm $\score(x)$ which approximates the Hamming distance from $x$ to the nearest ``good'' dataset. 
We show that $|\score(x) - \score(x')| \leq 2$ for adjacent $x,x'$ and that if $\score(x)$ is small---as it is with high probability for subgaussian $x$---then $\stablecovariance(x)\approx\stablecovariance(x')$ and $\stablemean(x) \approx \stablemean(x')$.

At a  high level, on input $x$, our algorithm first privately tests if $\score(x)$ less than a threshold, roughly $1/\eps$.
If the test fails, we stop and return nothing (as in the ``PTR'' framework of \cite{DworkL09}). If the test passes, we output a sample from $\cN(\stablemean(x),~\sigma^2 \cdot \stablecovariance(x))$, where $\sigma\approx \frac{\sqrt{d}}{\eps n}$ is the scale parameter as before.

$\stablemean(x)$ and $\stablecovariance(x)$ stand in for the empirical parameters $\mu_y$ and $ \Sigma_y$ of the projection $y$ from BGSUZ's algorithm.
The advantages are computational tractability and stability: $\stablemean$ and $ \stablecovariance$ are more stable than their counterparts in BGSUZ, leading to our optimal dependence on $\eps$.
We establish the following key properties of $\stablecovariance$ (the ideas for $\stablemean$ are similar):
\begin{itemize}   
    \item \emph{Accuracy:} When the data come from a subgaussian distribution, with high probability we have $\stablecovariance(x) = \Sigma_x$, where $\Sigma_x$ is the empirical covariance of $x$.\footnote{Actually, the ``paired'' empirical covariance $\tfrac 2 n \sum_{i \leq n/2} (x_i - x_{i+n/2})(x_i - x_{i + n/2})^T$.}
    \item \label{property:stability}\emph{Stability:} If $x$ is a data set with $\score(x)\lesssim 1/\eps$, then for any adjacent $x'$,
    the matrices $\covone = \stablecovariance(x)$ and $\covtwo = \stablecovariance(x')$ are close. Specifically, $$\norm{ \covone^{-1/2}\covtwo\covone^{-1/2} - \bbI_{d}}_{\tr} \, , \, \norm{\covtwo^{-1/2}\covone\covtwo^{-1/2} - \bbI_{d} }_{\tr} \leq O(d/n), $$
    where $\|\cdot \|_{\tr}$ denotes the sum of absolute values of eigenvalues.
    \item \emph{Efficiency:} $\stablecovariance$ runs in polynomial time.
\end{itemize}

\paragraph{Stable Covariance}
We now sketch $\stablecovariance$.
Assume for now we have a data set $x$ that we believe to be drawn from $\cN(0,\Sigma)$ for some unknown $\Sigma$.
(We reduce to zero-mean case via a standard sample-pairing trick, see Equation~\eqref{def:empirical_mean_paired_covariance}.)
Ultimately, $\stablecovariance(x)$ produces a weight vector $w \in[0,1/n]^n$ and outputs $\Sigma_w=\sum_{i\in[n]} w_i \cdot x_i x_i^T$.
We will show the weights satisfy the following conditions.
\begin{enumerate}[(a)]
    \item \emph{Uniform on Subgaussian Inputs:} When the $x_i$ are drawn i.i.d. from a subgaussian distribution, we have $w = (1/n,\ldots,1/n)$ with high probability. \label{item:uniform}
    \item \emph{$\lambda$-Good Weighting:} For any $i \in [n]$ such that $w_i > 0$, we have $\norm{\Sigma_w^{-1/2} x_i}_2^2\le \lambda$. \label{item:good-weighting}
    \item \emph{Stability:} If $x$ is a data set with $\score(x)\lesssim 1/\eps$, then for any adjacent $x'$,
    the associated weight vectors $w,w'$ have $\|w - w'\|_1 \leq O(1/n)$. \label{item:stable-weighting}
\end{enumerate}

Here $\lambda>0$ is a hyperparameter, which we will eventually set to roughly $d$. 
Achieving Property (\ref{item:uniform}) implies our accuracy claim about $\stablecovariance$, since setting all weights to $1/n$ yields exactly the second moment matrix of $x$.
Since we have assumed $x$ is zero-mean, this serves in place of the empirical covariance.

Properties (\ref{item:good-weighting}) and (\ref{item:stable-weighting}) together imply the stability of $\stablecovariance$'s parameter estimates.
For intuition, consider the case where $w$ and $w'$ differ on exactly one value: suppose $w_i'=w_i+\eta_i$ for $\eta_i>0$.
Further suppose both weights are nonzero and that $x$ and $x'$ agree on index $i$.
Thus we have $\Sigma_{w'}=  \Sigma_{w} + \eta_i \cdot x_i x_i^T$.
\begin{align}
	\Sigma_{w}^{-1/2}\Sigma_{w'}\Sigma_{w}^{-1/2} - \bbI 
		&= \Sigma_{w}^{-1/2}\paren{\Sigma_{w}+ \eta_i \cdot x_ix_i^T} \Sigma_{w}^{-1/2} - \bbI \\
		&= \eta_i \cdot\Sigma_w^{-1/2} x_i x_i^T \Sigma_w^{-1/2}.
\end{align}
This is a rank-one matrix whose non-zero eigenvalue is exactly $\eta_i  \norm{x_i}_{\Sigma_w}^2$, which is at most $\eta_i \lambda$ by Property~\eqref{item:good-weighting}.

The remainder of the stability proof is relatively straightforward: in general $\Sigma_w-\Sigma_{w'}$ is the sum of rank-one matrices whose coefficients $\eta_i$ satisfy $\sum_i \abs{\eta_i} = \norm{w-w'}_1$.
The trace norm is the sum of the absolute values of the eigenvalues, so we arrive at an upper bound of roughly $\lambda\norm{w-w'}_1 \approx d/n$.
The proof requires some more work to account for the one index $i^*$ where $x_{i^*}\neq x_{i^*}'$, as well as accommodating indices where $w_i$ is zero and $w_i'$ is not (in this case $\norm{x_i}_{\Sigma_w}^2$ is not bounded by assumption).

\paragraph{Good Subsets}
We now discuss how $\stablecovariance$ computes the weight vector $w$.
\begin{definition}
A special case of a $\lambda$-good weighting is a $\lambda$-good \emph{subset}.
A subset $S \subseteq [n]$ is $\lambda$-good if its associated weight vector $w_S$ is $\lambda$-good, where $(w_S)_i = 1/n$ for $i \in S$ and $0$ otherwise. 
\end{definition}
$\stablecovariance$ computes a nested sequence of good subsets $S_0 \subseteq S_1 \subseteq \ldots \subseteq S_{2k}$, where $S_\ell$ is $\lambda_\ell$ good for some sequence of outlier thresholds $\lambda_1 \leq \ldots \leq \lambda_{2k}$ and $k$ is a parameter to set set later.
Then it averages a subset of the associated weight vectors to produce $w = \tfrac 1 k \sum_{\ell=k+1}^{2k} w_{S_\ell}$.
The subsets it chooses to average are described in the following lemma:

\begin{lemma}[Informal]
\label{lem:largest-intro}
  Let $x$ be a dataset and $\lambda > 0$ be an outlier threshold.
  There is a unique largest $\lambda$-good subset $S \subseteq [n]$ for $x$ which contains all other $\lambda$-good subsets.
  Furthermore, this largest good subset can be found using a greedy algorithm, $\mathtt{LargestGoodSubset}$.
\end{lemma}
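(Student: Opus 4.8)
The plan is to route everything through a single monotonicity fact about \emph{leverage scores}. For $S\subseteq[n]$ write $\Sigma_S:=\tfrac1n\sum_{i\in S}x_ix_i^\top$, and, whenever $\Sigma_S\succ 0$, set $\tau_S(i):=x_i^\top\Sigma_S^{-1}x_i=\norm{\Sigma_S^{-1/2}x_i}_2^2$. Unpacking the definition, $S$ is $\lambda$-good exactly when $\Sigma_S\succ 0$ and $\tau_S(i)\le\lambda$ for every $i\in S$. The first thing I would prove is the \textbf{monotonicity lemma}: if $S\subseteq T$ and $\Sigma_S\succ 0$, then $\Sigma_T\succ 0$ and $\tau_T(i)\le\tau_S(i)$ for all $i\in[n]$. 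This is immediate: $\Sigma_T-\Sigma_S=\tfrac1n\sum_{i\in T\setminus S}x_ix_i^\top\succeq 0$, positive-definiteness is inherited upward, and inversion is anti-monotone on positive definite matrices ($0\prec\Sigma_S\preceq\Sigma_T\Rightarrow\Sigma_T^{-1}\preceq\Sigma_S^{-1}$), so $x_i^\top\Sigma_T^{-1}x_i\le x_i^\top\Sigma_S^{-1}x_i$. Intuitively, adding more points to a weighted set only lowers everyone's leverage.

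From this I would extract two consequences. First, an \textbf{exclusion principle}: if $\Sigma_S\succ 0$ and $i\in S$ has $\tau_S(i)>\lambda$, then no $\lambda$-good subset $T\subseteq S$ contains $i$ — for if $i\in T$ with $T$ $\lambda$-good then $\Sigma_T\succ 0$ and monotonicity applied to $T\subseteq S$ gives $\lambda<\tau_S(i)\le\tau_T(i)\le\lambda$, a contradiction. Second, the \textbf{union of good sets is good}: if $S_1,S_2$ are $\lambda$-good then $\Sigma_{S_1\cup S_2}\succ 0$ (it dominates $\Sigma_{S_1}$), and for $i\in S_1$ monotonicity gives $\tau_{S_1\cup S_2}(i)\le\tau_{S_1}(i)\le\lambda$, and symmetrically for $i\in S_2$, so every index of $S_1\cup S_2$ has leverage at most $\lambda$. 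Iterating over the (finitely many) $\lambda$-good subsets, $S^\star:=\bigcup\{T:T\text{ is }\lambda\text{-good}\}$ is itself $\lambda$-good whenever at least one $\lambda$-good set exists; by construction it contains every $\lambda$-good set, so it is the unique largest one, which is the existence/uniqueness half of the statement.

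To finish I would analyze $\mathtt{LargestGoodSubset}$, which initializes $S\gets[n]$ and repeatedly deletes one (or all) of the indices $i\in S$ with $\tau_S(i)>\lambda$, halting once none remain; termination is clear since $\abs{S}$ strictly decreases. I would maintain the invariant ``$S$ contains every $\lambda$-good subset'': it holds initially, and each deletion preserves it by the exclusion principle, since the invariant guarantees that any deleted index — being a violator of the current $S$ — lies in no $\lambda$-good subset at all. At halting the remaining set $G$ has $\tau_G(i)\le\lambda$ for all $i\in G$ and $\Sigma_G\succ 0$, i.e.\ $G$ is $\lambda$-good, hence $G\subseteq S^\star$; combined with $G\supseteq S^\star$ from the invariant, $G=S^\star$. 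The only step needing care — a technical nuisance rather than a conceptual obstacle — is that a deletion could in principle make $\Sigma_S$ singular, leaving $\tau_S$ undefined; here I would note that $\Sigma_S\succ 0$ forces $\tau_S(i)\le n$ for every $i\in S$ (from $\Sigma_S\succeq\tfrac1n x_ix_i^\top$), and that $\Sigma_{S\setminus\{i\}}=\Sigma_S-\tfrac1n x_ix_i^\top$ remains positive definite exactly when $\tau_S(i)<n$, so outside a borderline configuration the execution never breaks nonsingularity; the fully degenerate possibility that no $\lambda$-good subset exists (e.g.\ the data do not span $\bbR^d$, in which case $\stablecovariance$'s accuracy hypotheses are vacuous) is handled by the convention that the routine returns $\emptyset$.
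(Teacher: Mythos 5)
Your proposal is correct and takes essentially the same approach as the paper's proof of this statement (Lemma~\ref{lemma:unique_largest}): the core is the anti-monotonicity of $\Sigma_S^{-1}$ under set inclusion in the PSD order, used to maintain the invariant that the greedy algorithm's working set $S$ always contains every $\lambda$-good subset. The only cosmetic difference is that you establish existence of the largest good subset directly via the union lemma before analyzing the algorithm, whereas the paper gets existence for free from the fact that the algorithm's output is itself good and contains all good sets.
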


$\stablecovariance$ takes $S_\ell$ to be the $\mathtt{LargestGoodSubset}$ for outlier threshold $\lambda_\ell$.
By Lemma~\ref{lem:largest-intro}, this implies immediately that $S_0 \subseteq \ldots \subseteq S_{2k}$, because $S_{\ell}$ is $\lambda_{\ell+1}$-good for all $\ell$.

For reasons we will see below, we choose the outlier thresholds according to $\lambda_{\ell + 1} = e^\eps \lambda_\ell$ and set $k \approx 1/\eps$, so that $\lambda_{2k} \leq O(\lambda_0)$.
We now have a nearly complete description of $\stablecovariance$, and enough information to see why the weights it computes satisfy properties~(\ref{item:uniform}) and~(\ref{item:good-weighting}) above.
We will ensure that if $[n]$ itself is a good subset, then $S_0 = \ldots = S_{2k} = [n]$, leading to Property~(\ref{item:uniform}) above.
Because the subsets $S_\ell$ are all $O(\lambda_{0})$-good, it will not be hard to show that $w$ is itself $O(\lambda_0)$-good, leading to Property~(\ref{item:good-weighting}) above.

Now we turn to Property~(\ref{item:stable-weighting}), which is contingent on $x,x'$ having $\score(x),\score(x') \lesssim 1/\eps$.
To proceed, we need to define $\score(x)$.

\paragraph{Score Function}
We have computed $S_\ell$, the largest $\lambda_\ell$-good subsets of $x$, for $\ell=0,1,\ldots,{2k}$.
While $\stablecovariance(x)$ averages over the items from $\ell=k+1,\ldots,2k$, $\score(x)$ uses the first half of the sequence:
\begin{align}
    \score(x) \defeq \min_{0\le \ell\le k} \paren{n - \abs{S_\ell} + \ell} \, .
        \label{eq:score_fn_defn}
\end{align}
As we will discuss below, this function has sensitivity 2: adjacent data sets have good subsets with similar sizes and outlier thresholds.
Our algorithm ultimately uses $\score'(x)=\min\braces{k, \score(x)}$, which simplifies some statements. %

For some intuition, note that if $[n]$ is good with respect to the outlier thresholds $\lambda_0,\ldots,\lambda_k$, then $\score(x) = 0$.
Together with the stability of $\score(x)$, this means $\score(x)$ roughly tracks the Hamming distance between $x$ and a dataset for which $[n]$ is a good subset.

\paragraph{Good Subsets on Adjacent Data Sets}
Towards establishing Property~(\ref{item:stable-weighting}) for the algorithm we have just described, we first consider what happens to a $\lambda$-good subset $S$ for a dataset $x$ if a single sample is \emph{removed} from $x$.

If we remove from $x$ a sample $x_i$ where $i \notin S$, then of course $S\setminus\braces{i}=S$, which is still $\lambda$-good for $x \setminus \{x_i \}$.
On the other hand, if we remove $x_i$ for some $i \in S$, the resulting $S\setminus\braces{i}$ may no longer be $\lambda$-good for $x \setminus \{x_i\}$; it may have many outliers!
See Figure~\ref{fig:bivariate_good_subset} for an illustration of this.
However, for the parameter regimes we consider, we can show that $S\setminus\braces{i}$ is still $\lambda'$-good for $x \setminus \{x_i\}$, where $\lambda' = e^{\eps}\lambda$.

This allows us to reason about pairs of data sets under removal \textit{and addition}, which is the notion we must consider to accommodate adjacent $x,x'$.
If $S$ is a $\lambda$-good subset for $x$ and data set $x'$ differs from it in index $i^*$, then $S\setminus\braces{i^*}$ is an $e^{\eps} \lambda$-good subset for $x'$.

\begin{figure}
    \centering
    \subfigure[Removing a point from a good subset][b]{
        \includegraphics[trim={37cm 11cm 5.5cm 6cm}, clip, width=0.4\textwidth]{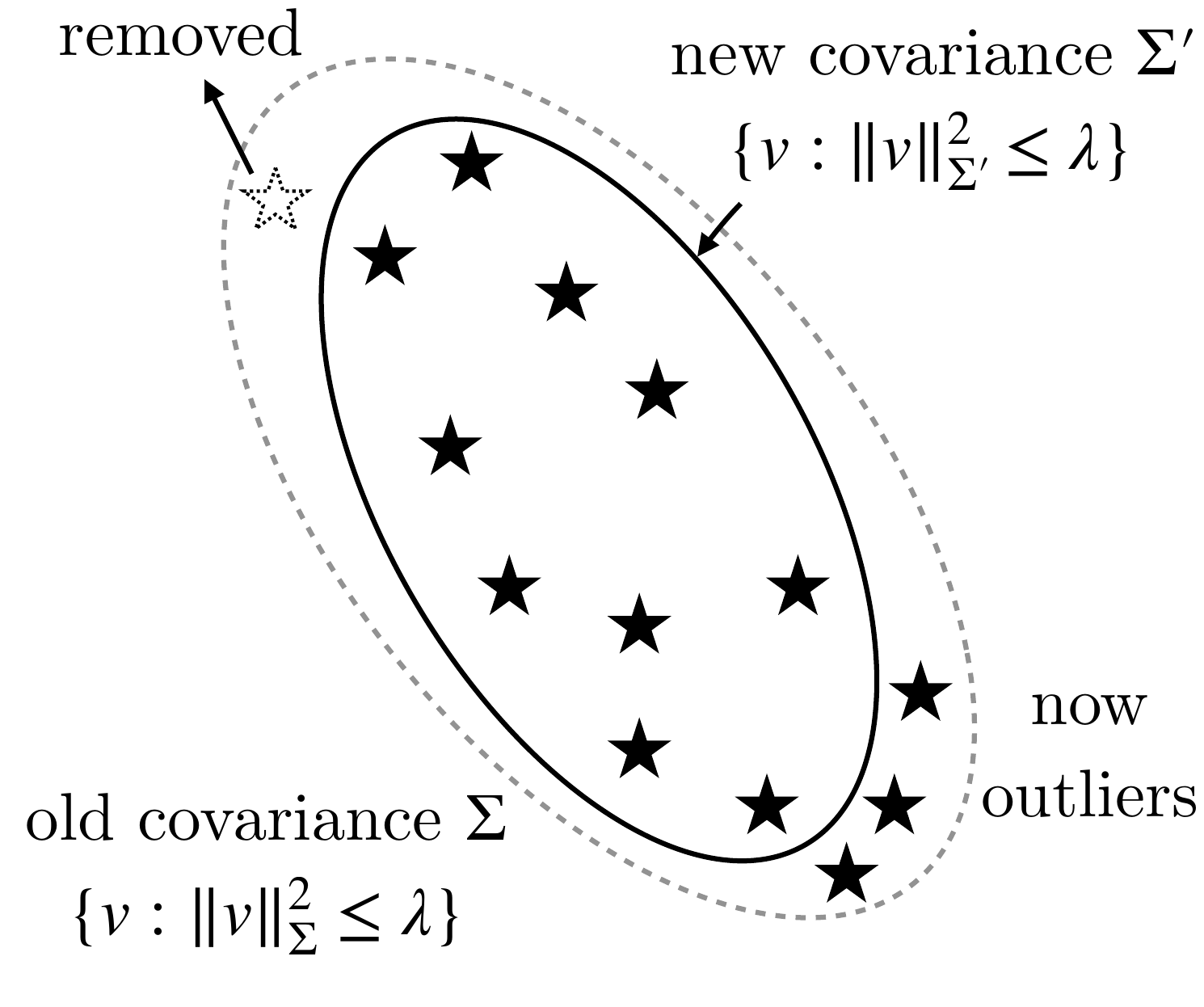}
        \label{fig:bivariate_good_subset}
    } \qquad
    \subfigure[Intertwined: $\tilde{S}_\ell \defeq S_\ell\setminus\braces{i^*}$][b]{
        \includegraphics[trim={7cm 11cm 36cm 5cm}, clip, width=0.4\textwidth]{figures/good_set_figures.pdf}
        \label{fig:good_sets_PH}
    }
    \caption{
        (\emph{a}) A good subset: no points are outliers with respect to the empirical covariance. 
        Removing one point may change the covariance (from $\Sigma$ to $\Sigma'$) and cause points to become (slight) outliers.
        (\emph{b}) Let $S_\ell$ denote the largest $\lambda_\ell$-good subset of $x$ and $T_\ell$ the same for adjacent $x'$.
        Let $\tilde{S}_{\ell}$ denote $S_\ell \setminus\braces{i^*}$, where $x,x'$ differ in index $i^*$ (and similarly $\tilde{T}_\ell$).
        These sets are intertwined: for all $\ell$, we have $\tilde{S}_{\ell}\cup \tilde{T}_{\ell} \subseteq \tilde{S}_{\ell+1}\cap \tilde{T}_{\ell+1}$.
    }
\end{figure}

\paragraph{Families of Largest Good Subsets on Adjacent Datasets}
Now let us return to the family $S_0,\ldots,S_{2k}$ of largest good subsets computed with respect to outlier thresholds $\lambda_0,\ldots,\lambda_{2k}$ on input $x$ and the analogous family $T_0,\ldots,T_{2k}$ for input $x'$ adjacent to $x$.
We prove a crucial ``intertwining'' relationship between $S_0,\ldots,S_{2k}$ and $T_0,\ldots,T_{2k}$; for a visual depiction, see Figure~\ref{fig:good_sets_PH}.

\begin{lemma}[Intertwining, Informal]
\label{lem:intertwining-intro}
Suppose $x,x'$ differ on index $i^*$.
For all $\ell$, defining $\tilde{S}_{\ell}\defeq S_\ell\setminus\braces{i^*}$ and $\tilde{T}_{\ell}\defeq T_\ell\setminus\braces{i^*}$, we know that $\tilde{S}_{\ell}\cup \tilde{T}_{\ell}$ is a subset of $\tilde{S}_{\ell+1}\cap \tilde{T}_{\ell+1}$.
\end{lemma}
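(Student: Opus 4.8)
The plan is to reduce the statement to two ``upward'' inclusions and then invoke symmetry. Concretely, $\tilde S_\ell \cup \tilde T_\ell \subseteq \tilde S_{\ell+1}\cap \tilde T_{\ell+1}$ is equivalent to the conjunction of the four inclusions $\tilde S_\ell \subseteq \tilde S_{\ell+1}$, $\tilde S_\ell \subseteq \tilde T_{\ell+1}$, $\tilde T_\ell \subseteq \tilde T_{\ell+1}$, and $\tilde T_\ell \subseteq \tilde S_{\ell+1}$. Since the hypotheses are symmetric under swapping $(x,S_\bullet)\leftrightarrow(x',T_\bullet)$, it suffices to prove the first two; the other two follow by the same argument with the roles of $x$ and $x'$ exchanged.

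First I would handle $\tilde S_\ell \subseteq \tilde S_{\ell+1}$. Since $S_\ell$ is $\lambda_\ell$-good for $x$ and $\lambda_{\ell+1}=e^\eps\lambda_\ell \ge \lambda_\ell$, the defining inequality $\norm{\Sigma_{w_{S_\ell}}^{-1/2}x_i}_2^2 \le \lambda_\ell \le \lambda_{\ell+1}$ shows $S_\ell$ is also $\lambda_{\ell+1}$-good for $x$ (goodness of a subset is monotone in the threshold). By Lemma~\ref{lem:largest-intro}, $S_{\ell+1}$ is the \emph{largest} $\lambda_{\ell+1}$-good subset of $x$ and contains every $\lambda_{\ell+1}$-good subset, so $S_\ell \subseteq S_{\ell+1}$; deleting $i^*$ from both sides gives $\tilde S_\ell = S_\ell\setminus\braces{i^*}\subseteq S_{\ell+1}\setminus\braces{i^*}=\tilde S_{\ell+1}$.

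Next I would handle the ``cross'' inclusion $\tilde S_\ell \subseteq \tilde T_{\ell+1}$, which is where the work with adjacent data sets enters. Here I invoke the removal/addition fact established earlier: if $S$ is $\lambda$-good for $x$ and $x'$ differs from $x$ only in coordinate $i^*$, then $S\setminus\braces{i^*}$ is $e^\eps\lambda$-good for $x'$. Applying this with $S=S_\ell$ and $\lambda=\lambda_\ell$ shows $\tilde S_\ell$ is $\lambda_{\ell+1}$-good for $x'$. Since $T_{\ell+1}$ is the largest $\lambda_{\ell+1}$-good subset of $x'$, we get $\tilde S_\ell \subseteq T_{\ell+1}$, and because $i^*\notin \tilde S_\ell$ by construction, in fact $\tilde S_\ell \subseteq T_{\ell+1}\setminus\braces{i^*}=\tilde T_{\ell+1}$. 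Combining the two inclusions with their symmetric counterparts completes the proof.

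The substantive content is entirely outsourced to the removal/addition fact, so the main thing to watch is that its hypotheses apply at every threshold in the sequence: we use it at $\lambda_0,\ldots,\lambda_{2k-1}$, and since $\lambda_\ell \le \lambda_{2k}=O(\lambda_0)$ for all $\ell$ (by the choice $\lambda_{\ell+1}=e^\eps\lambda_\ell$ and $k\approx 1/\eps$), the relevant regime condition (a bound of the form $k\le \frac{n}{4e^2\lambda_0}$, or whatever is needed for that fact) is met uniformly in $\ell$. I expect this bookkeeping to be the only delicate point; the inclusion chasing itself is routine once the fact and the maximality property of $\mathtt{LargestGoodSubset}$ are in hand.
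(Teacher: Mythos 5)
Your proof is correct and follows essentially the same route as the paper. The decomposition into four inclusions --- reduced by symmetry to $\tilde S_\ell \subseteq \tilde S_{\ell+1}$ (same-dataset nesting, via monotonicity of goodness in the threshold plus maximality of the largest good subset) and $\tilde S_\ell \subseteq \tilde T_{\ell+1}$ (cross-dataset, via the removal/addition fact that $S_\ell\setminus\{i^*\}$ is $e^\eps\lambda_\ell$-good for $x'$) --- mirrors exactly parts (i) and (ii) of the paper's formal Lemma~\ref{lemma:family_good_sets}, and your remark about the parameter regime (that $\lambda_\ell \le e^2\lambda_0$ throughout, so the hypothesis $\m\ge 2\lambda_\ell$ of the removal claim holds uniformly under $k\le \m/(2e^2\lambda_0)$) is the same bookkeeping the paper carries out. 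The only cosmetic difference is that the paper's introductory sketch derives both $S_{\ell+1}\supseteq\tilde S_\ell$ and $T_{\ell+1}\supseteq\tilde S_\ell$ from the single observation that $\tilde S_\ell$ is $\lambda_{\ell+1}$-good for \emph{both} $x$ and $x'$, whereas you get the first containment more directly from within-dataset monotonicity; the formal proof in the paper does it your way.
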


\begin{proof}[\emph{sketch}]
From our discussion above, 
$S_\ell\setminus\braces{i^*}$ is a $\lambda_{\ell+1}$-good subset of both $x$ and $x'$ (assuming $\lambda_{\ell+1}\le e^{\eps} \lambda_\ell$, which will be true by construction).
Furthermore, $T_{\ell+1}$, the largest $\lambda_{\ell+1}$-good subset of $x'$, contains all $\lambda_{\ell+1}$-good subsets of $x'$, by Lemma~\ref{lem:largest-intro}, and similarly for $S_{\ell+1}$.
So, $T_{\ell+1},S_{\ell+1} \supseteq \tilde{S}_\ell$.
And since $\tilde{S}_\ell$ by definition does not contain $\braces{i^*}$, also $\tilde{T}_{\ell+1}, \tilde{S}_{\ell+1} \supseteq \tilde{S}_\ell$. A symmetric argument applies to $\tilde{T}_\ell$.
\end{proof}

Now we sketch the main ideas to show that intertwining implies property~(\ref{item:stable-weighting}), that $\|w-w'\|_1 \leq O(1/n)$, where
$w$ are the weights associated to input $x$ and $w'$ are those associated with an adjacent $x'$.
Since $\score(x), \score(x') < k\approx 1/\eps$, there exists $\ell < k$ such that $|S_\ell| \geq n - k$, and since $S_\ell \subseteq S_{k+1}$, also $|S_{k+1}| \geq n - k$.
By the nesting property $S_\ell \subseteq S_{\ell+1}$ and similarly for the $T_\ell$'s, we have
\[
\left |  \bigcap_{\ell = k+1}^{2k} S_\ell  \cap T_\ell \right | \geq n - O(k)
\]

By definition,
\begin{equation}
|w_i - w_i'| = \frac 1 {kn} \left | \sum_{\ell = k+1}^{2k} \indicator{i \in S_\ell} - \sum_{\ell = k+1}^{2k} \indicator{i \in T_\ell} \right | \, . \label{eq:diff_weight_intro}
\end{equation}
For $i \in \bigcap_{\ell = k+1}^{2k} S_\ell \cap T_\ell$, the difference in Equation~\eqref{eq:diff_weight_intro} is $0$.
This leaves just $O(k)$ indices $i$ where $|w_i - w'_i|$ could be nonzero.
We divide these into two cases. %

If $i = i^*$, the sums in Equation~\eqref{eq:diff_weight_intro} differ by at most $k$, so $|w_{i^*} - w'_{i^*}| \leq 1/n$.

For $i \neq i^*$, by the intertwining property (Lemma~\ref{lem:intertwining-intro}) we know that if $w_i\neq w_i'$ then there exists only one $\ell$ such that $ \indicator{i \in S_\ell} \neq \indicator{i \in T_\ell}$.
So for such $i$ we have $|w_i - w_i'| \leq 1/(kn)$, and the total contribution of those terms is $O(1/n)$.
This concludes our sketch of $\stablecovariance$.

\paragraph{From $\stablecovariance$ to Mean Estimation}
Let's return to the task of estimating the mean, given the output $\hat{\Sigma}$ of $\stablecovariance(x)$.
A natural strategy would be to draw $n$ fresh samples $x_i$ and set $\tilde{x}_i = \hat{\Sigma}^{-1/2} x_i$, then hand these samples to a \emph{non}-covariance-aware private mean estimation algorithm.
If $\hat{\Sigma}$ were actually \emph{private}, composition would show that this is differentially private.
However, our stability (rather than privacy) guarantees of $\stablecovariance$ do not translate into an overall privacy guarantee for this scheme.

Instead, the second phase of our algorithm computes a stable mean, to which we add noise.
Our estimator uses the same idea from $\stablecovariance$ of averaging over large subsets of the data which contain no outliers; it has a different notion of ``outlier'' that depends on pairwise distances between points (relative to the estimated covariance $\hat\Sigma$).
It is similar in flavor to recent approaches in private estimation that look for large sets of points (or ``cores'') which are all close to each other~\citep{tsfadia2022friendlycore,ashtiani2021private}.

\paragraph{Beyond Mean Estimation: Private Covariance Estimation and Learning of Gaussian Distributions}%
A key piece of our privacy proof is the stability of $\stablecovariance$: on adjacent inputs $x$ and $x'$ we either fail or produce covariance estimates $\covone\gets \stablecovariance(x)$ and $\covtwo\gets \stablecovariance(x')$ such that $\cN(0,\covone)\approx_{(\eps,\delta)} \cN(0,\covtwo)$.
Our main algorithm combines this with a stable mean estimation procedure, but by itself it gives rise to a differentially private algorithm: compute $\hat\Sigma\gets \stablecovariance(x)$ and, if this does not fail, return $z\sim\cN(0,\hat\Sigma)$.
This algorithm has dubious utility: $z$ contains no information about the mean of $x$ and little information about the covariance.
However, as \cite{alabi2022privately} recently observed, \emph{multiple} such samples are useful: 
given private samples $z_1,\ldots, z_{\N}\simiid \cN(0,\hat\Sigma)$, one can post-process them to produce a private covariance estimate $\tilde\Sigma = \frac 1 \N \sum_{i=1}^{\N} z_i z_i^T$. 
Our main analysis says that we can draw $\N = 1$ sample when $n\gtrsim d$; by advanced composition, we can draw $\N$ samples when $n \gtrsim d \sqrt{\N}$.
\cite{alabi2022privately} call this the \emph{Gaussian sampling mechanism}; they combine it with a stable estimator based on the sum-of-squares framework to privately and robustly learn the covariance of a Gaussian in polynomial time.
Combined with %
$\stablecovariance$, it yields a fast algorithm for private covariance estimation with strong error guarantees in both spectral and Frobenius norm, in particular achieving the optimal dimension-dependence for both.

\subsection{Comparison with a Concurrent Result}

Independently and concurrently, \citet{duchi2023fast} (henceforth, ``DHK'') provided an algorithm for private mean estimation with similar running time and accuracy guarantees.
Both papers fill in the basic framework of BGSUZ by devising stable mean and covariance estimators.
DHK's stable covariance estimator has a similar flavor to ours: iteratively removing outlier and updating the empirical covariance.
However, the analysis and algorithmic details are quite different.
Both DHK's stable mean estimator and our own consider pairwise distances between points (after rescaling by the stable covariance).
However, DHK's algorithm looks at distances within small, randomly generated subsets of the data.
Their algorithm and analysis, which make clever use of the randomness in the subset assignments, is substantially different from our own.

In the basic setting where the data are drawn i.i.d.\ from a full-rank subgaussian distribution, our accuracy guarantees are slightly stronger.
Informally, the noise we add for privacy incurs error of magnitude $\frac{d \sqrt{\log 1/\delta}}{\eps n}$, while the comparable term in their analysis is $\frac{d \log 1/\delta}{\eps n}$.
Additionally, their algorithm requires $n\gtrsim \frac{d \log^2 1/\delta}{\eps^2}$ examples to ensure privacy, while ours requires asymptotically fewer: $n\gtrsim \frac{d\log1/\delta}{\eps}$.

DHK extend their algorithm and analysis to address distributions that are heavy-tailed or rank-deficient.
Similar extensions would likely apply to our algorithm.
We apply our stable covariance estimator to the task of learning the covariance of a subgaussian distributions; DHK's analogous estimator would fill the same role with similar guarantees.

\subsection{Related Work}\label{sec:related_work}

In this section, we overview the most relevant work on learning the mean and/or covariance of high-dimensional, unrestricted Gaussian and subgaussian distributions subject to differential privacy.
These results deal with approximate differential privacy; algorithms satisfying stronger privacy notions (such as pure or concentrated DP) cannot be accurate without prior bounds on the parameters.

There is a great deal of research on differentially private statistics outside of these bounds.
The primer of~\cite{kamath2020primer} contains an introduction to differentially private statistics along with a survey.
One notable point is the work of \citet*{karwa2017finite}, which established approaches for learning univariate Gaussians with bounds that are independent of the data location and scale (using tools of \cite{DworkL09}).
Another influential work is that of~\citet*{kamath_KLSU19}, who gave algorithms for learning the mean and covariance of high-dimensional Gaussians that satisfy concentrated differential privacy. They  require prior bounds on the mean and covariance, but their error bounds depend on these quantities only polylogarithmically. 
Both papers proved lower bounds that apply to our setting.

The way we use the stability of our nonprivate parameter estimates is somewhat nontraditional.
Stability-like properties, such as bounds on Lipschitz constants or ``global sensitivity,'' come up frequently in the design of differentially private algorithms. Most often, the idea is to argue that some function is stable so that we can add limited noise to the output of the function (as with the Laplace and usual Gaussian mechanisms). By contrast, 
 we use the stability of $\stablecovariance$ to argue that the noise itself (added to $\stablemean(x)$) is indistinguishable on adjacent inputs.
This concept appears elsewhere recently in various forms \citep{kothari2021private, ashtiani2021private,tsfadia2022friendlycore,alabi2022privately}, though the details of our application are quite different.

\paragraph{Gaussian Mean Estimation}
The sample complexity of private covariance-aware mean estimation for unrestricted Gaussians is well-understood, with nearly matching upper \citep{brown2021covariance} and lower bounds \citep{karwa2017finite,kamath_KLSU19}.
Algorithms with nearly optimal sample complexity exist for subgaussian distributions as well \citep{liu2022differential},
with slightly stronger lower bounds known for some related families of distributions \citep{cai2019cost}.
Previously, known approaches for this task required exponential time or $\Omega(d^2)$ samples, corresponding to the best-known sample complexity for privately learning the covariance in spectral norm.
One can use such an approximation as a preconditioner: after rescaling, the distribution will be nearly isotropic and one can apply the techniques of \citet{karwa2017finite} dimension-by-dimension.

\paragraph{Gaussian Covariance Estimation} 
Differentially private Gaussian covariance estimation has received much attention recently, especially as the key stepping stone to privately learning a Gaussian to small total variation distance.
For this task the sample complexity is well-understood, with lower bounds \citep{vadhan2017complexity, kamath2022new} and clean information-theoretic upper bounds \citep{aden2021sample} matching the non-private sample complexity for modest privacy parameters.
A series of improving upper bounds has established polynomial-time algorithms nearly matching these results \citep{liu2022differential,kothari2021private,kamath2021private,ashtiani2021private,alabi2022privately,hopkins2022robustness}.
For more comprehensive overviews (which also discuss robustness and stronger notions of privacy) refer to \cite{alabi2022privately} and \cite{hopkins2022robustness}.

The task of privately producing a spectral-norm covariance approximation, often useful as a preconditioner, has seen less progress.
Nonprivately, this task requires only $n\approx d$ examples; our algorithm requires $d^{3/2}$.
This dependence on the dimension was recently proved be optimal in the regime of $\alpha = O(1/\sqrt{d})$ \citep{kamath2022new}. 
To the best of our knowledge, ours is the first differentially private polynomial-time algorithm achieving such a guarantee for unrestricted Gaussian distributions, closing an open question posed by~\cite{alabi2022privately}. %
Under the assumption that $\bbI \preceq \Sigma \preceq\kappa\bbI$, the standard Gaussian mechanism requires $\Omega(\mathrm{poly}(\kappa)  \cdot d^{3/2})$ samples and  the private preconditioner of~\cite{kamath_KLSU19} requires $\Omega(\mathrm{polylog}(\kappa)\cdot d^{3/2})$ samples (ignoring the dependence on other parameters).

\ifCOLT
    \section{Main Result and Algorithm}\label{sec:main}

    In this section we provide our main algorithm and briefly discuss our analysis.
\else
    \section{Main Result and Analysis}\label{sec:main}
\fi

 \newcommand{\accuracystatement}{
        For $\mu\in \bbR^{d}$ and positive definite $\Sigma\in\bbR^{d\times d}$, let $\xx = \xx_1,\ldots,\xx_n$ be drawn i.i.d.\ from a subgaussian distribution $\cD$ with parameter $K_{\cD}$, mean $\mu$, and covariance $\Sigma$.
        There exist absolute constants $K_1, K_2$, and $K_3$ such that, if $\lambda_0 = K_1 K_{\cD}^2 \paren{d + \log n/\beta}$ and 
        \begin{align}
            n\ge K_2 K_{\cD}^2 \cdot\frac{\log 1/\delta}{\eps} \paren{d+ \log\paren{ \frac{K_{\cD} \log 1/\delta}{\eps \beta}}},
        \end{align}
        then, with probability at least $1-\beta$, Algorithm~\ref{alg:main} returns $\hat{\mu}$ such that
        \begin{align}
            \norm{\hat\mu-\mu}_{\Sigma}\le K_3 K_{\cD}\paren{ \sqrt{\frac{d + \log 1/\beta}{n}} + \frac{d \sqrt{\log 1/\delta}}{\eps n}+ \frac{\log n/\beta \sqrt{\log 1/\delta}}{\eps n}}.
        \end{align}
    }

\begin{theorem}[Main Theorem]\label{thm:main}
    Fix $\eps,\beta \in (0,1)$, $\delta\in (0,\eps/10]$, and $n,d\in\bbN$.
    Algorithm~\ref{alg:main} takes a data set of $n$ points, each in $\bbR^d$, privacy parameters $\eps,\delta$, and an outlier threshold $\lambda_0$.
    \begin{itemize}
        \item For any $\lambda_0\ge 10$, 
            Algorithm~\ref{alg:main} is $(\eps, \delta )$-differentially private. 
        \item \accuracystatement
        
        \item Algorithm~\ref{alg:main} can be implemented to require: one product of the form $A^T A$ for $A\in\bbR^{n\times d}$, two products of the form $AB$ for $A\in \bbR^{n\times d}$ and $B\in \bbR^{d\times d}$, one inversion of a matrix in $\bbR^{d\times d}$ to logarithmic bit complexity, and further computational overhead of $\tilde{O}(nd/\eps)$.
    \end{itemize}
\end{theorem}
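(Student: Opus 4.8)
The plan is to treat the three bullets in turn, using the two stability lemmas stated above (the one bounding $\norm{\covone^{-1/2}\covtwo\covone^{-1/2}-\bbI}_{\tr}$ for $\stablecovariance$ on adjacent inputs, and the one bounding $\norm{\hat\mu-\hat\mu'}_{\covone}^2$ for $\stablemean$) as black boxes, together with standard subgaussian concentration.

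\textbf{Privacy.} Algorithm~\ref{alg:main} is a propose--test--release mechanism~\cite{DworkL09}: on input $x$ it perturbs $\score(x)$ by (truncated) Laplace noise of scale $O(1/\eps)$, compares to a threshold $\tau = \Theta(\log(1/\delta)/\eps)$, halts on failure, and on success releases one draw $\hat\mu\sim\cN\bparen{\stablemean(x),\ \sigma^2\,\stablecovariance(x)}$ with $\sigma^2=\Theta\bparen{\lambda_0\log(1/\delta)/(\eps n)^2}$. Since $\score$ has global sensitivity $2$ --- which is exactly what the nesting and intertwining discussion of Section~\ref{sec:techniques} gives, via $\bigl|\,|S_\ell|-|T_\ell|\,\bigr|\le 1$ for the largest good subsets of adjacent inputs --- the noisy test is $(\eps/2,\delta/2)$-DP by the usual truncated-Laplace argument. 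For the release step, note that when the test passes we have $\score(x)<k$ for $k=\Theta(\log(1/\delta)/\eps)$, and the hypothesis on $n$ forces $k\le n/(4e^2\lambda_0)$, $k\le 1/(2\gamma)$ with $\gamma=16e^2\lambda_0/n$, and $|\R|>6k$; moreover, when $x$ passes with non-negligible probability, $\score(x)\le \tau+O(\log(1/\delta)/\eps)$, which (since $\delta\le\eps/10$ and $k$ exceeds $\tau$ by a constant factor) also forces $\score(x')<k$ on any neighbor $x'$. Hence both stability lemmas apply to the pair $x,x'$ and, with $\covone=\stablecovariance(x)$, $\covtwo=\stablecovariance(x')$, we get $(1-\gamma)\covone\preceq\covtwo\preceq\covone/(1-\gamma)$, $\norm{\covone^{-1/2}\covtwo\covone^{-1/2}-\bbI}_{\tr}\le(1+2\gamma)\gamma$, and $\norm{\stablemean(x)-\stablemean(x')}_{\covone}^2\le(1+2\gamma)\,38e^2\lambda_0/n^2$ (using that $\R$ is degree-representative on subgaussian-typical inputs, Definition~\ref{def:degree_representative}; on atypical inputs the covariances are still pinned down by the release conditions). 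Feeding these into the standard bound on the indistinguishability between $\cN(m_1,\Sigma_1)$ and $\cN(m_2,\Sigma_2)$ in terms of $\norm{\Sigma_1^{-1/2}(m_1-m_2)}_2$ and $\norm{\Sigma_1^{-1/2}\Sigma_2\Sigma_1^{-1/2}-\bbI}_{\tr}$, and scaling the covariance by $\sigma^2$, gives $\cN(\stablemean(x),\sigma^2\covone)\approx_{(\eps/2,\delta/2)}\cN(\stablemean(x'),\sigma^2\covtwo)$. Composing with the test, and absorbing the asymmetric boundary case (only one of $x,x'$ survives the test) into $\delta$, yields $(\eps,\delta)$-DP; no distributional assumption is used, matching the ``any $\lambda_0\ge 1$'' claim.

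\textbf{Accuracy.} Set $\lambda_0=\Theta(K_{\cD}^2(d+\log n/\beta))$. I would first show that with probability $\ge 1-\beta/2$ the (paired) sample is simultaneously (i) $\lambda_0$-good --- so $\score(x)=0$, $\stablecovariance(x)$ equals the paired empirical covariance $\Sigma_x$, and $\stablemean(x)$ equals the empirical mean $\mu_x$; (ii) spectrally close, $\norm{\Sigma^{-1/2}\Sigma_x\Sigma^{-1/2}-\bbI}_2=o(1)$; and (iii) $\R$ degree-representative. The outlier bound in (i), union-bounded over $n$ points, is where the $\log n/\beta$ term in $\lambda_0$ comes from; (ii) and (iii) are standard subgaussian covariance/quantile concentration. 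On this event the noisy test exceeds $\tau=\Theta(\log(1/\delta)/\eps)$ with probability $\le\delta\le\beta/2$, so with probability $\ge 1-\beta$ the output is $\hat\mu=\mu_x+\sigma\Sigma_x^{1/2}\zeta$, $\zeta\sim\cN(0,\bbI)$. Then $\norm{\hat\mu-\mu}_{\Sigma}\le\norm{\mu_x-\mu}_{\Sigma}+\sigma\,\norm{\Sigma^{-1/2}\Sigma_x^{1/2}}_2\,\norm{\zeta}_2$; the first term is $O\bparen{K_{\cD}\sqrt{(d+\log 1/\beta)/n}}$ by empirical-mean concentration, and the second is $O\bparen{\sigma\sqrt{d+\log 1/\beta}}$ since $\norm{\Sigma^{-1/2}\Sigma_x^{1/2}}_2=1+o(1)$ and $\norm{\zeta}_2$ concentrates at $\sqrt d$. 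Substituting $\sigma=\Theta\bparen{\sqrt{\lambda_0\log(1/\delta)}/(\eps n)}$ with $\lambda_0=\Theta(K_{\cD}^2(d+\log n/\beta))$, and splitting $\sqrt{d(d+\log n/\beta)}\le d+\log(n/\beta)$, reproduces the two privacy-noise terms $\frac{d\sqrt{\log 1/\delta}}{\eps n}$ and $\frac{\log(n/\beta)\sqrt{\log 1/\delta}}{\eps n}$ in the stated bound. The lower bound on $n$ is exactly what makes $k=\Theta(\log(1/\delta)/\eps)$ satisfy the preconditions invoked in the privacy part.

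\textbf{Running time and main obstacle.} The test costs $O(n)$ plus noise. Forming the paired second-moment matrix is one product $A^{\top}A$, $A\in\bbR^{n\times d}$; one inversion of the resulting $d\times d$ matrix supplies $\Sigma_w^{-1}$; the batched rescalings $\Sigma_w^{-1/2}x_i$ inside $\mathtt{LargestGoodSubset}$ and $\hat\Sigma^{-1/2}x_i$ inside $\stablemean$ are the two products $AB$ with $A\in\bbR^{n\times d}$, $B\in\bbR^{d\times d}$; maintaining running weighted statistics while sweeping the $O(1/\eps)$ outlier thresholds $\lambda_0,e^{\eps}\lambda_0,\dots$ gives the $\tilde O(nd/\eps)$ overhead; and the final sample is produced as $\stablemean(x)+\sigma B^{\top}\zeta$ for $\zeta\sim\cN(0,\bbI_n)$ and $B$ the row-weighted data matrix, costing $O(nd)$ and requiring no matrix square root. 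I expect the main obstacle to be not any single estimate but the simultaneous bookkeeping in the privacy argument: choosing $k$, $\lambda_0$, $\tau$, $\sigma$, and $\R$ so that the preconditions of both stability lemmas hold throughout the propose--test--release regime while the noise remains small enough for accuracy, and then correctly composing the noisy test with the Gaussian-pair indistinguishability, including the asymmetric case where only one of the adjacent inputs passes.
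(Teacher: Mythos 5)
Your overall decomposition (privacy via PTR plus the two stability lemmas, accuracy via a well-concentrated event on which the empirical parameters pass through exactly, running time via the three big matrix products plus Sherman–Morrison bookkeeping) is the same as the paper's, and most of the ingredients you name are the right ones. Two places in the privacy argument, however, are genuinely off, and one point in the accuracy argument introduces an assumption the theorem does not make.

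\textbf{Degree-representativity of $\R$ is a fact about the randomness of $\R$, not about the data.} You write that the mean-stability lemma applies ``using that $\R$ is degree-representative on subgaussian-typical inputs \dots on atypical inputs the covariances are still pinned down by the release conditions.'' This reverses the quantifiers: in the privacy proof you are not allowed to assume anything about $x,x'$. What the paper actually uses is that $\R$ is drawn uniformly and $|\R| > 18\log(4n/\delta')$, so by the hypergeometric tail bound (Claim~\ref{claim:hypergeometric_tail}) and a union bound over the $\le 2n$ points, $\R$ is degree-representative for \emph{both} arbitrary $\xx$ and $\xx'$ with probability $\ge 1-\delta'$ over the choice of $\R$ alone. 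One then conditions on this event and pays $\delta'$ at the end. Your budget $(\eps/2,\delta/2)$ for PTR plus $(\eps/2,\delta/2)$ for the Gaussian release has no room for this conditioning; the paper's $3\times$ split ($\eps'=\eps/3$, $\delta'=\delta/6$, with the Gaussian indistinguishability costing $(2\eps', 4\delta')$ via the group-privacy fact and the remaining $\delta'$ covering the $\R$ conditioning) is tuned precisely so that the pieces add up. As written, your privacy budget does not close.

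\textbf{The test is designed to pass deterministically on score $0$, which removes your extra hypothesis.} In the accuracy argument you say the noisy test fails ``with probability $\le\delta\le\beta/2$.'' The theorem does not assume $\delta\le\beta/2$ (and it shouldn't --- $\delta$ is a privacy parameter, $\beta$ a confidence parameter, and there is no reason for one to dominate the other). The paper avoids this by constructing $\PTR$ (Claim~\ref{claim:PTR}) so that $\PTR(0)=\pass$ with probability $1$; Observation~\ref{obs:pass_on_good_data} then says that on $\lambda_0$-well-concentrated inputs the algorithm never fails, regardless of $\R$. This is why degree-representativity of $\R$ is \emph{not} part of the good event in the accuracy proof either --- on well-concentrated data every reference point is within $\lambda_0$ of every other point, so the cores are trivially full for any $\R$. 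A vanilla truncated-Laplace test would indeed force you to relate $\delta$ and $\beta$, which is exactly the reason the paper modifies the test.

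Beyond these two points, your handling of the asymmetric boundary case ($\score(x')=k$ while $\score(x)<k$) is sketched correctly, and the running-time account (one $A^TA$, one inversion, two $AB$ rescalings, $\tilde O(nd/\eps)$ from the rank-one Sherman–Morrison updates, and sampling via $\hat\mu + c\,y^TW^{1/2}\zeta$ with no matrix square root) matches the paper.
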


\ifCOLT
    For the definition of subgaussian distributions, see Appendix~\ref{app:subgaussian_facts}.
    Recall that the Gaussian distribution $\cN(\mu,\Sigma)$ is subgaussian with parameter $K= O(1)$.
    Our restrictions on $\eps$ and $\delta$ are not a fundamental part of the analysis, and could be loosened at the cost of a messier privacy calculation.

    As we discussed in the introduction, the privacy argument (Appendix~\ref{sec:privacy_analysis}) relies on the stability of our nonprivate parameters estimates; we present the key covariance-stability lemma in Section~\ref{sec:covariance} and the remainder of the argument in Appendix~\ref{app:stable_covariance_analysis}, with the analysis of $\stablemean$ in Appendix~\ref{sec:mean}.
    The accuracy argument (Appendix~\ref{sec:accuracy}) is straightforward once we observe the following fact: on ``well-concentrated'' data, Algorithm~\ref{alg:main} never returns $\FAIL$ and instead samples from $\cN(\mu_x, c^2\Sigma_x)$, where $\mu_x$ and $\Sigma_x$ are the empirical mean and paired empirical covariance, respectively.

    Our analysis of running time (Appendix~\ref{sec:running_time}) takes some care: our pseudocode is polynomial-time but wasteful.
    We show how our algorithms can be implemented to execute a few ``expensive'' operations (namely, computing the empirical covariance and rescaling the data set) and afterward perform $\tilde{O}(1/\eps)$ additional steps, each of which can be done in nearly linear time.
\else
    We prove Theorem~\ref{thm:main} by proving each subclaim separately: privacy in Lemma~\ref{lemma:privacy}, accuracy in Lemma~\ref{lemma:accuracy}, and running time in Lemma~\ref{lemma:running_time}.
\fi

\begin{algorithm}
    
    \SetAlgoLined
    \SetKwInOut{Input}{input}
    \SetKwInOut{Require}{require}
    
    \Input{data set $\xx\in \bbR^{n\times d}$; privacy parameters $\eps,\delta$; outlier threshold $\lambda_0$}
    \BlankLine
    \tcc{Initialize}
    $k\gets \ptrthreshold; \quad \Rsize \gets 6k + \lceil18 \log 16 n/\delta\rceil; \quad c^2 \gets \frac{810 \lambda_0 \log 12/\delta}{\eps^2 n^2}$\;
    
    $\R \sim \mathrm{Uniform}\paren{\braces{\R'\subseteq [n]: \abs{\R'}=\Rsize}}$\;

    \BlankLine
    \tcc{Check input size}
    \If{$n < 20 k \lambda_0$}{
        \KwRet \FAIL
    }

    \BlankLine
    \tcc{Compute nonprivate parameter estimates}
    $\hat\Sigma, \score_1 \gets \stablecovariance(\xx,\lambda_0,k)$\tcc*[r]{$\hat\Sigma\in\bbR^{d\times d}, \score_1\in \bbN$}
    $\hat\mu, \score_2 \gets \stablemean(\xx, \hat\Sigma, \lambda_0, k, \R)$\tcc*[r]{$\hat\mu\in\bbR^d, \score_2\in \bbN$}
    
    \BlankLine
    \tcc{Test the scores and release}
    \eIf{$\cM_{\mathrm{PTR}}^{\eps/3,\delta/6}\paren{\max\braces{\score_1,\score_2}} = \PASS$}{
        \KwRet{$\tilde\mu \sim \cN(\hat\mu, c^2\hat\Sigma)$}\;
    }{
        \KwRet \FAIL
    }

    \caption{Private Mean Estimation, $\mainalg(\xx)$}
    \label{alg:main}
\end{algorithm}

\ifCOLT
\else
    \subsection{Privacy Analysis}\label{sec:privacy_analysis}

Let $\xx$ and $\xx'$ be adjacent data sets.
Our main algorithm, Algorithm~\ref{alg:main}, either fails or computes nonprivate estimates $(\hat\mu, \hat\Sigma)$ and outputs a sample $\tilde\mu \sim \cN(\hat\mu, c^2\hat\Sigma)$ for some real number $c$.
By standard propose-test-release-style analysis, it suffices to establish that (i) the probabilities of failing under $\xx$ and $\xx'$ are $(\eps, \delta)$-indistinguishable and (ii) for any pairs $(\hat\mu,\hat\Sigma)$ computed by Algorithm~\ref{alg:main} on $x$ and $(\hat\mu', \hat\Sigma')$ computed on $x'$, if we do not fail then
\begin{align}
    \cN(\hat\mu, c^2 \hat\Sigma) \approx_{(\eps,\delta)} \cN(\hat\mu', c^2\hat\Sigma').
\end{align}
The indistinguishability of failure probabilities follows from the low sensitivity of our score function.
As in BGSUZ, to establish the indistinguishability of Gaussians we ``change the mean'' and ``change the covariance'' separately and use composition to argue that the result is indistinguishable.
We prove the overall privacy statement in Lemma~\ref{lemma:privacy} at the end of this subsection.

We call our propose-test-release function on the maximum of the two scores (one for the covariance and one for the mean).
We begin by showing that this maximum has low sensitivity.
\begin{lemma}\label{lemma:main_sensivity}
    Fix outlier threshold $\lambda_0>0$, reference set $\R\subseteq [n]$, privacy parameters $\eps>0$ and $0<\delta<1$, and discretization parameter $k\in \bbN$.
    Assume $n \ge 20 k \lambda_{0}$.
    Let $\xx$ and $\xx'$ be adjacent data sets of size $n$.
    Let 
    \begin{align}
        \hat\Sigma, \score_1 &\gets \stablecovariance(\xx,\lambda_0,k) \\
        \hat\Sigma', \score_1' &\gets \stablecovariance(\xx',\lambda_0,k) \\
        \hat\mu, \score_2 &\gets \stablemean(\xx,\hat\Sigma,\lambda_0,k,\R) \\
        \hat\mu', \score_2' &\gets \stablemean(\xx',\hat\Sigma',\lambda_0,k,\R).
    \end{align}
    Then $\abs{\max\braces{\score_1,\score_2} - \max\braces{\score_1',\score_2'}}\le 2$.
\end{lemma}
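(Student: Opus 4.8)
Write $i^*$ for the unique coordinate on which $\xx$ and $\xx'$ differ, and recall that the reference set $\R$ is drawn once in Algorithm~\ref{alg:main} and reused in both $\stablemean$ calls, so it is the same on $\xx$ and $\xx'$. Since the map $(a,b)\mapsto \max\{a,b\}$ is $1$‑Lipschitz with respect to the $\ell_\infty$ distance, it suffices to control the two scores separately. I will show $|\score_1 - \score_1'|\le 2$ unconditionally, and $|\score_2-\score_2'|\le 2$ whenever $\score_1,\score_1'<k$; a short case analysis then handles the remaining inputs. It is convenient to use that both $\stablecovariance$ and $\stablemean$ return a score of the form $\min\{k,\min_{0\le\ell\le k}(n-|S_\ell|+\ell)\}$, where $S_0\subseteq\dots\subseteq S_k$ is the nested family of largest $\lambda_\ell$‑good subsets (Lemma~\ref{lem:largest-intro}) for a geometric sequence of outlier thresholds $\lambda_0\le\lambda_1\le\dots$; for $\stablecovariance$ a subset is ``$\lambda$‑good'' if no retained point is a $\lambda$‑outlier relative to the implied empirical covariance, while for $\stablemean$ the notion of outlier is pairwise and is measured relative to the covariance passed in.

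First I would bound $|\score_1-\score_1'|$, the sensitivity of the $\stablecovariance$ score. Let $S_\ell, T_\ell$ be the largest $\lambda_\ell$‑good subsets of $\xx$ and $\xx'$ respectively. The key fact, which is where the hypothesis $n\ge 32e^2\lambda_0 k$ enters, is that deleting $i^*$ from a $\lambda_\ell$‑good subset of $\xx$ leaves a $\lambda_{\ell+1}$‑good subset of $\xx'$; since $T_{\ell+1}$ contains \emph{every} $\lambda_{\ell+1}$‑good subset of $\xx'$ (Lemma~\ref{lem:largest-intro}), this gives $|T_{\ell+1}|\ge|S_\ell|-1$ for $\ell<k$, and symmetrically $|S_{\ell+1}|\ge|T_\ell|-1$. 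This is exactly the intertwining of Figure~\ref{fig:good_sets_PH}. Substituting $|T_{m}|\ge|S_{m-1}|-1$ into the definition of $\score_1'$, noting that the $\ell=k$ term is redundant once one intersects with the explicit value $k$ (because $n-|S_k|+k\ge k$), and reindexing the minimum, yields $\score_1'\le \score_1+2$; by symmetry $|\score_1-\score_1'|\le 2$.

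Next, the mean score, under the assumption $\score_1,\score_1'<k$. In that regime the $\stablecovariance$ stability guarantee applies: on adjacent inputs with both scores below $k$ it returns positive definite $\hat\Sigma,\hat\Sigma'$ with $(1-\gamma)\hat\Sigma\preceq\hat\Sigma'\preceq\tfrac{1}{1-\gamma}\hat\Sigma$ for $\gamma=16e^2\lambda_0/n$. Since $k=\ptrthreshold$ and $n\ge 32e^2\lambda_0 k$, we get $\gamma\le\tfrac{1}{2k}$, so $\tfrac{1}{1-\gamma}$ is dominated by one step of $\stablemean$'s geometric threshold sequence (whose ratio, like $\stablecovariance$'s, is $1+\Omega(1/k)$). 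Hence every pairwise squared Mahalanobis distance relative to $\hat\Sigma'$ is within that same factor of the corresponding distance relative to $\hat\Sigma$, so, after also deleting $i^*$, a $\lambda$‑good subset for $(\xx,\hat\Sigma)$ becomes a $\lambda'$‑good subset for $(\xx',\hat\Sigma')$ with $\lambda'$ at most two steps up the threshold sequence. Re‑running the reindexing argument of the previous paragraph (with a step of size $2$ instead of $1$, absorbed by taking the threshold sequence finer) gives $|\score_2-\score_2'|\le 2$, and combined with $|\score_1-\score_1'|\le 2$ and $1$‑Lipschitzness of $\max$ we are done in this case. For the remaining inputs some covariance score equals $k$; say $\score_1=k$ (the other case is symmetric). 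Then $\max\{\score_1,\score_2\}=k$ because every score is clamped to $[0,k]$, while $\score_1'\ge k-2$ by the previous paragraph, so $k-2\le\score_1'\le\max\{\score_1',\score_2'\}\le k$, and the two maxima differ by at most $2$. This exhausts all cases.

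The main obstacle is the third paragraph: showing stability of $\stablemean$'s score under the \emph{simultaneous} change of the dataset (to an adjacent one) and of the rescaling covariance (from $\hat\Sigma$ to $\hat\Sigma'$). The delicate quantitative point is to match the spectral ratio $\tfrac{1}{1-\gamma}$ against a fixed number of steps of $\stablemean$'s geometric threshold sequence using only $n\ge 32e^2\lambda_0 k$, so that the covariance perturbation is ``absorbed'' into the good‑subset bookkeeping in the same way a single changed point is; and one must remember that this is available only when $\stablecovariance$ itself is stable (both covariance scores below $k$), which is precisely why the boundary case must be dispatched separately — and why it is harmless to do so, since $k$ is simultaneously the clamp level and (up to constants) the PTR threshold used downstream. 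The covariance‑score bound and the boundary bookkeeping are routine given Lemma~\ref{lem:largest-intro} and the single‑point‑removal fact already established for $\stablecovariance$.
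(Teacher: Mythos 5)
Your overall architecture matches the paper's proof: bound $|\score_1-\score_1'|\le 2$ unconditionally via intertwining of largest good subsets, then bound $|\score_2-\score_2'|\le 2$ under the hypothesis $\score_1,\score_1'<k$ by first invoking the $\stablecovariance$ stability guarantee to get $(1-\gamma)\hat\Sigma\preceq\hat\Sigma'\preceq\tfrac{1}{1-\gamma}\hat\Sigma$, and finally dispatch the boundary case $\score_1=k$ by clamping. This is exactly how the paper argues (via Lemmas \ref{lemma:good_set_sensitivity}, \ref{lemma:covariances_close}, and \ref{lemma:core_sensitivity}).

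However, your accounting in the third paragraph is wrong in a way that, taken literally, would give sensitivity $3$ rather than $2$, and the fix you propose does not exist. You assert that after rescaling by $\hat\Sigma'$ instead of $\hat\Sigma$ \emph{and} deleting $i^*$, a core for $(\xx,\hat\Sigma)$ becomes a core for $(\xx',\hat\Sigma')$ ``with $\lambda'$ at most two steps up the threshold sequence,'' to be ``absorbed by taking the threshold sequence finer.'' The threshold sequence in $\stablemean$ is fixed by the algorithm at ratio $e^{1/k}$; you cannot refine it. More importantly, the correct accounting is one step, not two. In $\stablemean$ the preconditioner $\hat\Sigma$ is an input, not recomputed after removing a point, so deleting $i^*$ does not move any pairwise Mahalanobis distance $\norm{\xx_i-\xx_j}_{\hat\Sigma}^2$ and therefore costs \emph{no} increase in the outlier threshold $\lambda$. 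What deleting $i^*$ costs is a unit decrease in the degree requirement $\tau=|\R|-\ell$, because $i^*$ might be in $\R$ and hence counted in the $N_i$'s. The covariance perturbation $\tfrac{1}{1-\gamma}\le e^{2\gamma}\le e^{1/k}$ costs exactly one step of the $\lambda$ sequence (this is where $k\le\tfrac{1}{2\gamma}$, i.e.\ $n\ge 32e^2\lambda_0 k$, is used). The degree decrement and the one-step $\lambda$ increase are both parameterized by the \emph{same} index shift $\ell\mapsto\ell+1$, since $\stablemean$'s $\ell$-th core is the largest $(|\R|-\ell,\,e^{\ell/k}\lambda_0,\,\hat\Sigma)$-core, which makes the two ``costs'' coincide rather than add. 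That gives $S_\ell\setminus\{i^*\}\subseteq T_{\ell+1}$ (Lemma~\ref{lemma:family_cores}), and the reindexing step then yields sensitivity exactly $2$: $+1$ from the $+\ell$ term and $+1$ from $|T_{\ell^*+1}|\ge|S_{\ell^*}|-1$. You appear to have imported the $\stablecovariance$ intuition, where removing a point \emph{does} shift the Mahalanobis geometry and therefore does cost a $\lambda$ step, into the $\stablemean$ setting where it does not.
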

\begin{proof}
    Lemma~\ref{lemma:good_set_sensitivity}, proved in Section~\ref{sec:cov:sensitivity}, says that $\abs{\score_1 - \score_1'}\le 2$, 
    provided $n \ge 20\lambda_{0}$. 
    (Recall $\m$, the size of the paired data set, is $n/2$).
    This is satisfied by assumption, since $k> 1$.

    Lemma~\ref{lemma:core_sensitivity}, proved in Section~\ref{sec:mean:sensitivity}, says that $\abs{\score_2-\score_2'}\le 2$ provided
    $n\ge 20k\lambda_{0}$ and $(1-\psdjump)\hat\Sigma \preceq \hat\Sigma' \preceq \frac{1}{1-\psdjump} \hat\Sigma$ for $\psdjump=\frac{40}{7}\cdot\frac{\lambda_0}{n}$.
    The first holds by assumption.
    By Lemma~\ref{lemma:covariances_close} (below), if $\score_1,\score_1'<k$, the second condition also holds.
    If one of $\score_1$ or $\score_1'$ is equal to $k$, then we are still low-sensitivity:
    assuming without loss of generality that $\score_1=k$, we know that $\score_1'\ge k-2$, and thus 
    \begin{align}   
        \max\braces{\score_1, \score_2} &= k \\
        \max\braces{\score_1', \score_2'} &\ge \score_1' \ge k-2.
    \end{align}
    We have used the fact that all four scores are at most $k$.
\end{proof}

Algorithm~\ref{alg:main} feeds this maximum of the two scores to $\PTR$, a private propose-test-release-style check that the score is low and it is safe to proceed.
The textbook approach to this task uses Laplace noise; our mechanism is an analogue of this that always passes on zero inputs and always fails on large inputs.
This modification cleans up our arguments, decoupling parameter stability from the (random) outcome of the private check. 
We provide the (elementary) proof of Claim~\ref{claim:PTR} in Appendix~\ref{app:deferred}.
\newcommand{\PTRclaimstatement}{
    Fix $0<\eps\le 1$ and $0< \delta\le \frac{\eps}{10}$.
    There is an algorithm $\PTR: \bbR\to \braces{\pass,\fail}$ that satisfies the following conditions:
    \begin{enumerate}
        \item Let $\cU$ be a set and $g:\cU^n \to \bbR_{\ge 0}$ a function.
            If, for all $x,x'\in \cU^n$ that differ in one entry,  $\abs{g(x)-g(x')}\le 2$, then $\PTR(g(\cdot))$ is $(\eps,\delta)$-DP.
        \item $\PTR(0)=\pass$.
        \item For all $z\ge  \frac{2 \log 1/\delta}{\eps}+4$, $\PTR(z)=\fail$.
    \end{enumerate}
}
\begin{claim}\label{claim:PTR}
    \PTRclaimstatement
\end{claim}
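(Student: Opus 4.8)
The plan is to realize $\PTR$ as a \emph{noisy threshold test with bounded noise}. Fix a threshold $\tau$ and a real random variable $W$ with a density supported on a bounded interval $[a,b]$, and declare $\PTR(z)=\pass$ exactly when $z+W\le\tau$. Taking $\tau=b$, conditions~(2) and~(3) become immediate: for $z=0$ we have $z+W=W\le b=\tau$ surely, so $\PTR(0)=\pass$; and for $z\ge b-a$ we have $z+W>\tau$ almost surely (as $W$ has a density), so $\PTR(z)=\fail$. Hence it suffices to exhibit such a $W$ with $b-a\le\frac{2\log 1/\delta}{\eps}+4$ and to verify condition~(1).

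For condition~(1): because $\PTR$ is a fixed randomized map and $g$ has global sensitivity at most $2$ with range in $\bbR_{\ge 0}$, $\PTR(g(\cdot))$ is $(\eps,\delta)$-DP as soon as $\PTR(z)\approx_{(\eps,\delta)}\PTR(z')$ for all $z,z'\ge 0$ with $|z-z'|\le 2$. Let $p(z)=\Pr[\PTR(z)=\pass]=\Pr[W\le\tau-z]$, which is non-increasing in $z$. For a two-outcome mechanism a short case check shows this indistinguishability is equivalent to
\[
    p(z)\le e^{\eps}\,p(z+2)+\delta
    \quad\text{and}\quad
    1-p(z+2)\le e^{\eps}\paren{1-p(z)}+\delta
    \quad\text{for all }z\ge 0 ,
\]
since, by monotonicity of $p$, a general pair $z<z'\le z+2$ reduces to the worst case $z'=z+2$ and the two ``reverse'' inequalities are automatic. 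Writing $F$ for the CDF of $W$, these are the conditions $F(s+2)\le e^{\eps}F(s)+\delta$ and $1-F(s)\le e^{\eps}\paren{1-F(s+2)}+\delta$ for all $s$.

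The natural choice is to take $W$ a \emph{truncated Laplace}: the law $\mathrm{Lap}(2/\eps)$ conditioned to lie in $[-R,R]$ (a density proportional to $e^{-\eps|t|/2}$ there), so that $\tau=R$ and $b-a=2R$. On the interior of $[-R,R]$, $F$ agrees up to normalization with the ordinary Laplace CDF, which satisfies $F_{\mathrm{Lap}}(s+2)\le e^{\eps}F_{\mathrm{Lap}}(s)$ \emph{exactly}---this is precisely the fact that adding $\mathrm{Lap}(2/\eps)$ noise is $(\eps,0)$-DP for a query of sensitivity $2$---so both displayed inequalities hold on the interior with no $\delta$ needed. Slack is required only near $s=\pm R$, where truncation and renormalization discard a tail of mass at most $\tfrac12 e^{-\eps R/2}$ and so can cause each inequality to fail there by at most a quantity of this order; taking $R$ of order $\tfrac1\eps\log\tfrac1\delta$ makes this at most $\delta$, establishing condition~(1) while keeping the fail threshold $2R$ of the form $\frac{2\log 1/\delta}{\eps}+O(1)$, the $O(1)$ being absorbed into the ``$+4$'' using $\eps\le 1$ and $\delta\le\eps/10$.

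The step I expect to be the main obstacle is exactly this last calibration: one must choose the noise scale, the truncation radius $R$, and $\tau$ jointly so that the truncation-induced error is provably at most $\delta$ in \emph{both} inequalities, conditions~(2)--(3) hold on the nose, and the resulting fail threshold is at most $\frac{2\log 1/\delta}{\eps}+4$, all within the stated ranges of $\eps$ and $\delta$. Should the symmetric truncated Laplace prove lossy by a constant factor, the same strategy succeeds with an asymmetric truncation, or equivalently by specifying the pass-probability curve $p$ directly as a clamped piecewise-exponential---equal to $1$ near $0$, then descending first through a regime where $1-p$ grows geometrically and then one where $p$ shrinks geometrically, then equal to $0$ past $T$---for which the two displayed inequalities are checked by routine interval-by-interval computation. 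Nothing outside the parameter bookkeeping is more than elementary.
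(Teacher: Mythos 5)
Your noisy-threshold framework is sound, and you are in fact more careful than the paper in stating that \emph{both} one-sided inequalities
\begin{align}
    p(z)\le e^{\eps}p(z+2)+\delta
    \quad\text{and}\quad
    1-p(z+2)\le e^{\eps}\bparen{1-p(z)}+\delta
\end{align}
must be checked (the paper's proof only argues the first). But the calibration, which you rightly flag as the main obstacle, does not close to the claimed threshold, and the loss is not a constant that the ``$+4$'' absorbs. For the truncated Laplace of scale $2/\eps$ on $[-R,R]$ with per-tail truncated mass $\rho=\tfrac12 e^{-\eps R/2}$, the binding slack in the first inequality near $s=-R$ is $\rho(e^{\eps}-1)$: in the left tail of the untruncated Laplace, $F_L(s+2)=e^{\eps}F_L(s)$ holds with \emph{equality}, and renormalizing after truncation subtracts a $\rho(e^{\eps}-1)$ term. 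So you need $\rho(e^{\eps}-1)\lesssim\delta$, hence $R\gtrsim\tfrac{2}{\eps}\log\tfrac{e^{\eps}-1}{2\delta}$ and a fail threshold $2R\gtrsim\tfrac{4}{\eps}\log\tfrac{e^{\eps}-1}{2\delta}$. This doubles the coefficient of $\tfrac{\log(1/\delta)}{\eps}$; as $\delta\to 0$ with $\eps$ fixed, the gap from $\tfrac{2\log(1/\delta)}{\eps}+4$ grows without bound.

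The difficulty is not specific to your choice of noise. Iterating the two displayed inequalities forward from $p(0)=1$ gives $p(2m)\ge p^*_m$ where $p^*_0=1$ and $p^*_{m+1}=\max\braces{(p^*_m-\delta)/e^{\eps},\ 1-e^{\eps}(1-p^*_m)-\delta,\ 0}$; this optimal descent --- geometric growth of $1-p$ while $p\gtrsim\tfrac{1}{e^{\eps}+1}$, then geometric decay of $p$, then a drop to $0$ once $p\le\delta$ --- is exactly the three-phase shape you describe as your fallback, and it first reaches $0$ only after about $\tfrac{2}{\eps}\log\tfrac{e^{\eps}-1}{2\delta}$ steps of length $2$, i.e.\ $z\approx\tfrac{4}{\eps}\log\tfrac{e^{\eps}-1}{2\delta}$. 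Concretely, at $\eps=1$, $\delta=10^{-2}$ the recursion first hits $0$ at $z=18$, whereas the claimed threshold is $\tfrac{2\log 100}{1}+4\approx 13.2$; at $\eps=0.1$, $\delta=10^{-3}$ one gets roughly $z\approx 159$ versus a claimed $\approx 142$. So the statement as written does not appear to be provable by your construction or by the paper's. The paper's proof has a corresponding error: the displayed $p(z)=1-e^{\eps(z-2)/2}\delta$ is negative for $z$ between $2+\tfrac{2\log(1/\delta)}{\eps}$ and $\tau$, and Case~3 deduces $\delta'\le\tfrac{\eps}{e^2-1}$ from $\delta\le\tfrac{\eps}{10}$, which only controls $\delta'=e^{\eps(z-2)/2}\delta$ when $z\le 4$, not for all $z<\tau-2$ (where $\delta'$ approaches $1-\delta$). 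A corrected threshold of $\tfrac{4\log(1/\delta)}{\eps}+O(1)$ is achievable --- your truncated Laplace essentially attains it --- and all downstream uses only need the discretization parameter $k$ in Algorithm~\ref{alg:main} to dominate the threshold, so the paper's results survive after a constant-factor enlargement of $k$ and the input-size requirement.
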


We now present the stability guarantees for our nonprivate parameter estimates.
Recall that $\stablecovariance$ is deterministic and outputs an estimate $\hat\Sigma\in \bbR^{d\times d}$ along with an integer $\score$.
The value of $\score$ is 2-sensitive (under adjacent inputs) and, when $\score$ is not too large, the parameter estimate $\hat\Sigma$ is itself stable in the sense we need for privacy.

\begin{lemma}\label{lemma:covariances_close}
    \covariancesclose
\end{lemma}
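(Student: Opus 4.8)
The plan is to follow the outline of Section~\ref{sec:techniques}: first prove the Loewner sandwich $(1-\gamma)\covone\preceq\covtwo\preceq\frac{1}{1-\gamma}\covone$, and then \emph{bootstrap} from it to the trace-norm bound, using the sandwich to control the few indices that the crude estimate cannot handle on its own. Throughout, $\xx,\xx'$ differ in the single coordinate $i^*$ (if $\stablecovariance$ works internally with the paired points, read $i^*$ as the single pair-index affected; the structure below is unchanged).

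Recall how $\stablecovariance$ produces its output (Section~\ref{sec:techniques}): on input $\xx$ it forms the largest $\lambda_\ell$-good subsets $S_0\subseteq\cdots\subseteq S_{2k}$ (nested by Lemma~\ref{lem:largest-intro} and the fact that $\lambda_{\ell+1}\ge\lambda_\ell$), sets $w=\tfrac1k\sum_{\ell=k+1}^{2k}w_{S_\ell}\in[0,1/n]^n$, and returns $\covone=\Sigma_w=\sum_i w_i x_i x_i^T$; write $w'$, $T_\ell$, $\covtwo$ for the analogous objects on $\xx'$. I would use three ingredients from the earlier development. \textbf{(i)} $\lambda$-good weighting (Property~\eqref{item:good-weighting}): any $i$ with $w_i>0$ lies in $S_{2k}$, hence $x_i x_i^T\preceq\lambda\covone$ for the effective threshold $\lambda=O(\lambda_0)$, and symmetrically for $w'$, $\covtwo$. \textbf{(ii)} Since $\score=\min_{\ell\le k}(n-|S_\ell|+\ell)<k$, some $S_\ell$ with $\ell\le k$ has $|S_\ell|>n-k$, so by nesting $|S_m|>n-k$ for all $k\le m\le 2k$, and likewise for the $T_m$. \textbf{(iii)} The intertwining relation (Lemma~\ref{lem:intertwining-intro}): $\tilde S_m\cup\tilde T_m\subseteq\tilde S_{m+1}\cap\tilde T_{m+1}$, where $\tilde S_m=S_m\setminus\{i^*\}$. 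Positive-definiteness of $\covone$ is then immediate: by (ii) the second-moment matrix of $S_{k+1}$ is full rank (implicit in $S_{k+1}$ being called $\lambda$-good, since $\lambda$-goodness normalizes by $\Sigma_{w_{S_{k+1}}}^{-1/2}$), so $\covone\succeq\tfrac1k\Sigma_{w_{S_{k+1}}}\succ0$; likewise $\covtwo\succ0$.

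Next I would run the $\ell_1$-stability computation of Property~\eqref{item:stable-weighting}. Fix $i\ne i^*$; by (iii) the sequences $m\mapsto\indicator{i\in S_m}$ and $m\mapsto\indicator{i\in T_m}$ are $0/1$ and monotone nondecreasing with ``switch-on'' points differing by at most $1$, so they disagree in at most one coordinate and $|w_i-w_i'|\le\tfrac1{kn}$; moreover such an $i$ can contribute only if $i\notin S_k\cup T_k$ but $i\in S_{2k}\cup T_{2k}$, and by (ii) there are fewer than $(|S_{2k}|-|S_k|)+(|T_{2k}|-|T_k|)<2k$ of these. With $|w_{i^*}-w_{i^*}'|\le\tfrac1n$ this gives $\|w-w'\|_1=O(1/n)$. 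Now decompose
\[
\covtwo-\covone=\sum_{i\ne i^*}(w_i'-w_i)\,x_i x_i^T+w_{i^*}'\,x_{i^*}'(x_{i^*}')^T-w_{i^*}\,x_{i^*}x_{i^*}^T
\]
and bound the added and subtracted parts separately in the Loewner order. Every vector carrying a \emph{subtracted} coefficient has positive weight in $w$, so its outer product is $\preceq\lambda\covone$ by (i), and the subtracted part is $\preceq O(\lambda/n)\covone=O(\gamma)\covone$, yielding $\covtwo\succeq(1-O(\gamma))\covone$. For the upper bound, split the added part by whether $w_i>0$: the $w_i>0$ terms give $\preceq O(\gamma)\covone$ as before; the remaining indices ($w_i=0<w_i'$ with $i\ne i^*$, plus $i^*$ when $w_{i^*}'>0$) all lie in $\mathrm{supp}(w')$, so their outer products are $\preceq\lambda\covtwo$, and --- the key point --- (iii) forces each such index into the outermost layer $T_{2k}\setminus T_{2k-1}$, of which there are fewer than $k$ by (ii), each with weight $\tfrac1{kn}$, so this part is $\preceq O(\lambda/n)\covtwo=O(\gamma)\covtwo$. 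Hence $\covtwo\preceq(1+O(\gamma))\covone+O(\gamma)\covtwo$, and rearranging --- here is where the exact relations $\lambda\le\mathrm{const}\cdot\lambda_0$ and $\gamma=16e^2\lambda_0/n$ are used --- gives $\covtwo\preceq\tfrac1{1-\gamma}\covone$.

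Finally the trace-norm bound follows by bootstrapping. From $\covtwo\succeq(1-\gamma)\covone$ we get $\covone^{-1}\preceq\tfrac1{1-\gamma}\covtwo^{-1}$, so now \emph{every} vector $x_i$ with $w_i\ne w_i'$ --- including the corner indices with $w_i=0$ and the index $i^*$, all of which lie in $\mathrm{supp}(w')$ --- satisfies $\|\covone^{-1/2}x_i\|_2^2\le\tfrac1{1-\gamma}\|\covtwo^{-1/2}x_i\|_2^2\le\tfrac{\lambda}{1-\gamma}$. Applying the triangle inequality for $\|\cdot\|_{\tr}$ term by term to $\covone^{-1/2}(\covtwo-\covone)\covone^{-1/2}$ and inserting $\|w-w'\|_1=O(1/n)$,
\[
\norm{\covone^{-1/2}\covtwo\covone^{-1/2}-\bbI}_{\tr}\ \le\ \frac{\lambda}{1-\gamma}\Bparen{\sum_{i\ne i^*}|w_i-w_i'|+w_{i^*}'+w_{i^*}}\ =\ \frac{\lambda}{1-\gamma}\cdot O\!\paren{\tfrac1n}\ \le\ (1+2\gamma)\gamma,
\]
using $\tfrac1{1-\gamma}\le1+2\gamma$ for $\gamma\le\tfrac12$ together with the constant bookkeeping; the bound for $\norm{\covtwo^{-1/2}\covone\covtwo^{-1/2}-\bbI}_{\tr}$ is identical after swapping $\xx\leftrightarrow\xx'$. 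I expect the main obstacle to be exactly the two ``irregular'' index types: the singleton $i^*$ where the data genuinely change, and the indices in the support of one weighting but not the other, where the $\lambda$-good bound is unavailable a priori. The intertwining lemma is what tames both --- capping their number at $O(1)$ against the $\tfrac1{kn}$ granularity and pinning the new-support indices to the outermost layer --- and the ordering (Loewner sandwich first, then bootstrap the Mahalanobis bounds) is what keeps the trace-norm estimate from being circular; the rest is constant-tracking to land on $\gamma=16e^2\lambda_0/n$ and $(1+2\gamma)\gamma$.
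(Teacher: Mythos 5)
Your proposal is correct and, at the level of ideas, is the same as the paper's proof: the paper separates the argument into a general identifiability lemma (Lemma~\ref{lemma:identifiability_good_set}, which assumes $\lambda$-goodness, an $\ell_1$ bound, and an $\ell_\infty$ bound) together with Lemma~\ref{lemma:output_good_weighted_set} (the weights are $2e^2\lambda_0$-good) and Lemma~\ref{lemma:covariance_weights_close} (adjacent inputs give $\norm{w-w'}_1\le 2/\m$ and $\norm{w}_\infty\le 1/\m$), and the identifiability lemma's proof is precisely your ``Loewner sandwich first, then bootstrap the Mahalanobis bounds for the indices in $\supp(v)\setminus\supp(w)$ to get the trace-norm bound.'' You inline these pieces rather than stating a reusable lemma, and you derive the upper Loewner bound $\covtwo\preceq\frac{1}{1-\gamma}\covone$ by a self-referential rearrangement ($\covtwo\preceq(1+O(\gamma))\covone+O(\gamma)\covtwo$) whereas the paper gets it cleanly by swapping the roles of $w$ and $v$ in the lower-bound argument; the symmetric route saves a bit of constant bookkeeping and is the one the paper uses. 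Your ``outermost layer'' observation is also unnecessary once you have the $\ell_1$ bound---the total coefficient on new-support indices is already controlled by $\norm{w-w'}_1+\norm{w'}_\infty$---but it does no harm.
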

The bulk of Section~\ref{sec:covariance} is devoted to proving Lemma~\ref{lemma:covariances_close}.
Together with the following claim, it immediately implies indistinguishabilty of zero-mean, rescaled Gaussians.
\begin{claim}\label{claim:covariance_indistinguishability}
    Fix $\eps\in (0,1)$ and $\delta\in (0,1/6000]$ and let $\Sigma_1, \Sigma_2 \in \bbR^{d\times d}$ be positive definite matrices.
    If
    \begin{align}
        \norm{ \Sigma_1^{-1/2}\Sigma_2\Sigma_1^{-1/2}- \bbI }_{\tr}, \norm{ \Sigma_2^{-1/2}\Sigma_1\Sigma_2^{-1/2}- \bbI }_{\tr} \le \frac 2 3 \cdot \frac{\eps}{\log 2/\delta},
    \end{align}
    then $\cN(0, \Sigma_1)\approx_{(\eps,\delta)} \cN(0, \Sigma_2)$.
\end{claim}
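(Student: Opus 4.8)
The real content — that the two conjugated covariances are within $\tfrac{\eps}{3\log 2/\delta}$ in trace norm — already lives in Lemma~\ref{lemma:covariances_close}; this claim is just the (routine) conversion of that stability bound into indistinguishability of zero-mean Gaussians. I would prove it by reducing to a diagonal, coordinatewise comparison and then controlling the privacy-loss random variable with a Chernoff argument.

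\emph{Reduction.} Indistinguishability $P\approx_{(\eps,\delta)}Q$ is preserved, in both directions, under applying a fixed invertible linear map to $P$ and $Q$ (such a map is a bijection on events), and both the desired conclusion and the hypotheses are symmetric in $\Sigma_1,\Sigma_2$. So it suffices to bound the ``$\cN(0,\Sigma_1)$-side'' privacy loss. Applying $\Sigma_1^{-1/2}$ and then an orthogonal diagonalization, we may assume $\Sigma_1=\bbI$ and $\Sigma_2=\Lambda=\mathrm{diag}(\lambda_1,\dots,\lambda_d)$, in which case the two hypotheses read $\sum_i|\lambda_i-1|\le\tau$ and $\sum_i|1-\tfrac1{\lambda_i}|\le\tau$ for $\tau\defeq\tfrac{\eps}{3\log 2/\delta}$ (the second because $\Sigma_2^{-1/2}\Sigma_1\Sigma_2^{-1/2}$ and $\Sigma_1^{1/2}\Sigma_2^{-1}\Sigma_1^{1/2}$ are symmetric with the same spectrum). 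It then remains to show $\Pr_{X\sim\cN(0,\bbI)}[\,Z>\eps\,]\le\delta$ for the privacy loss $Z=\ln\tfrac{p(X)}{q(X)}$, with $q$ the density of $\cN(0,\Lambda)$.

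\emph{Tail bound.} Setting $a_i\defeq1-\tfrac1{\lambda_i}$ gives $Z=\tfrac12\sum_i\ln\lambda_i-\tfrac12\sum_i a_i x_i^2$ with $x_i\simiid\cN(0,1)$. From $1-\tfrac1\lambda\le\ln\lambda\le\lambda-1$ we get $\E[Z]=\tfrac12\sum_i(\ln\lambda_i-1+\tfrac1{\lambda_i})\in[0,\tau]$, and $\sum_i a_i^2\le(\sum_i|a_i|)^2\le\tau^2$. For $t>0$ with $t|a_i|<1$ for all $i$ (so the weighted-$\chi^2$ moment generating function is finite), $\log\E[e^{tZ}]=\tfrac t2\sum_i\ln\lambda_i-\tfrac12\sum_i\log(1+ta_i)$; applying the elementary inequality $-\log(1+u)\le-u+\tfrac34 u^2$ (valid for $|u|\le\tfrac13$) with $u=ta_i$ bounds the right side by $t\,\E[Z]+\tfrac{3t^2}{8}\sum_i a_i^2\le t\tau+\tfrac{3t^2\tau^2}{8}$. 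Choosing $t=\tfrac{\log 2/\delta}{\eps}$ makes $t\tau=\tfrac13$ (so $|ta_i|\le\tfrac13$, as needed), whence $\Pr[Z>\eps]\le e^{-t\eps}\E[e^{tZ}]\le\tfrac\delta2\,e^{1/3+1/24}=\tfrac\delta2\,e^{3/8}<\delta$, using $e^{3/8}<2$. The reverse direction is identical by symmetry, so $\cN(0,\Sigma_1)\approx_{(\eps,\delta)}\cN(0,\Sigma_2)$.

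\emph{Main obstacle.} There is no deep obstacle — this is a standard Gaussian-mechanism computation. The two points needing care are (i) $(\eps,\delta)$-indistinguishability is two-sided, which I handle via the symmetry of the hypotheses under $\Sigma_1\leftrightarrow\Sigma_2$; and (ii) one must keep every $1+ta_i$ bounded away from $0$ so the moment generating function exists and the $-\log(1+u)$ estimate applies — and the stated constant $\tfrac1{3\log 2/\delta}$ is precisely what makes the choice $t\tau=\tfrac13$ close, with a little room to spare.
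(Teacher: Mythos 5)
Your proof is correct, and it is exactly the ``straightforward Gaussian concentration argument'' the paper has in mind: the paper itself does not reprove Claim~\ref{claim:covariance_indistinguishability} but defers to Lemma 4.15 of BGSUZ (noting it gives the sharper constant $3$ in place of $5$), and that lemma is precisely this style of Chernoff/moment-generating-function bound on the privacy-loss random variable $Z=\tfrac12\sum_i\ln\lambda_i-\tfrac12\sum_i a_i x_i^2$ after simultaneous diagonalization. Your choice $t\tau=\tfrac13$ together with $-\log(1+u)\le -u+\tfrac34u^2$ on $|u|\le\tfrac13$ and the final estimate $\tfrac\delta2\,e^{3/8}<\delta$ is calibrated to hit the improved $\tfrac{\eps}{3\log 2/\delta}$ threshold the paper asserts, so this fills in the cited argument faithfully.
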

Claim~\ref{claim:covariance_indistinguishability} follows from a straightforward Gaussian concentration argument.
See BGSUZ, Lemma 4.15, and note that their proof establishes the stronger (by constant factors) statement we present.
See also \cite{alabi2022privately}, whose similar Theorem 5.1 we use in Section~\ref{sec:private_covariance_estimation}.

The main result in Section~\ref{sec:mean} is similar: $\stablemean$ is deterministic and returns an estimate $\hat\mu\in \bbR^d$ and an integer $\score$.
The latter is $2$-sensitive and, when it is not too big, the mean estimates are stable.
We have a few additional concerns: on inputs $\xx$ and $\xx'$, we might have previously calculated slightly different covariances $\covone$ and $\covtwo$. 
Thus we analyze the stability of $\stablemean$ when simultaneously moving from $(\xx,\covone)$  to $(\xx',\covtwo)$.
Furthermore, for computational efficiency $\stablemean$ accepts a set $\R\subseteq [n]$ of ``reference points'' on which to estimate whether points are outliers.
For our stability conditions to hold, this set needs to be both sufficiently large and sufficiently representative (in a precise sense: see Definition~\ref{def:degree_representative}). 

\begin{lemma}\label{lemma:means_close}
    \meansclose
\end{lemma}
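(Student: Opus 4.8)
The plan is to mirror the stability analysis of $\stablecovariance$ sketched in Section~\ref{sec:techniques}, adapting it to the pairwise-distance notion of outlier used by $\stablemean$. Recall the structure of the estimator: on input $(\xx,\covone,\lambda_0,k,\R)$, $\stablemean$ forms a nested family of largest good subsets $S_0\subseteq S_1\subseteq\cdots\subseteq S_{2k}$ with respect to a geometrically increasing sequence of outlier thresholds $\lambda_0\le\lambda_1\le\cdots\le\lambda_{2k}$, where a subset is good at level $\ell$ if every point in it has many $\covone$-Mahalanobis neighbors among the reference points $\R$ --- note the metric here is the \emph{given} matrix $\covone$, unlike in $\stablecovariance$. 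It then outputs $\score$, defined from the first half of the family as in Equation~\eqref{eq:score_fn_defn}, together with the estimate $\hat\mu=\tfrac1k\sum_{\ell=k+1}^{2k}\mu_{S_\ell}$, the average of the empirical means $\mu_{S_\ell}=\tfrac1{|S_\ell|}\sum_{i\in S_\ell}\xx_i$ of the selected subsets; in particular $\hat\mu=\sum_i w_i\xx_i$ is a convex combination of the data points, so $\sum_i w_i=1$. I will invoke, for $\stablemean$, the analogues of the largest-good-subset lemma (Lemma~\ref{lem:largest-intro}), the nesting property, and the intertwining property (Lemma~\ref{lem:intertwining-intro}) relating the family $S_0,\dots,S_{2k}$ for $(\xx,\covone,\R)$ to the family $T_0,\dots,T_{2k}$ for $(\xx',\covtwo,\R)$, together with the fact that a good subset at any level $\ell\le 2k$ has $\covone$-diameter $O(\sqrt{\lambda_0})$ (here is where $|\R|>6k$ enters, via a pigeonhole/triangle-inequality argument over $\R$).

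Granting these facts, the argument has an algebraic step and two estimates. Because $\score,\score'<k$, there is a level $\ell_0<k$ with $|S_{\ell_0}|>n-k$; by nesting (so $\bigcap_{\ell=k+1}^{2k}S_\ell=S_{k+1}$) and the symmetric statement for $\xx'$, the common core $C\defeq S_{k+1}\cap T_{k+1}$ has $|C|\ge n-2k>0$, so we may fix $p\in C$ with $p\ne i^*$, where $i^*$ is the coordinate on which $\xx$ and $\xx'$ differ. Writing $\hat\mu-\hat\mu'=\sum_{i\ne i^*}(w_i-w_i')\xx_i+\bigl(w_{i^*}\xx_{i^*}-w_{i^*}'\xx_{i^*}'\bigr)$, recentering each block around $\xx_p$, and using $\xx_p=\xx_p'$ together with $\sum_i w_i=\sum_i w_i'=1$ to cancel the leftover multiple of $\xx_p$, we get
\[
\hat\mu-\hat\mu'=\sum_{i\ne i^*}(w_i-w_i')(\xx_i-\xx_p)+w_{i^*}(\xx_{i^*}-\xx_p)-w_{i^*}'(\xx_{i^*}'-\xx_p),
\]
hence $\|\hat\mu-\hat\mu'\|_{\covone}\le\sum_{i\ne i^*}|w_i-w_i'|\,\|\xx_i-\xx_p\|_{\covone}+w_{i^*}\|\xx_{i^*}-\xx_p\|_{\covone}+w_{i^*}'\|\xx_{i^*}'-\xx_p\|_{\covone}$.

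The first estimate is that every data point appearing with nonzero coefficient above lies, for the relevant level, in a good subset that also contains $p$: for $i\ne i^*$ with $w_i\ne w_i'$ the intertwining property forces $i$ into some $S_\ell$ or $T_\ell$ with $k+1\le\ell\le 2k$ which also contains $p\in C$, and if $w_{i^*}>0$ (resp.\ $w_{i^*}'>0$) then $i^*\in S_\ell$ (resp.\ $i^*\in T_\ell$) for some such $\ell$; so by the diameter bound --- converting between $\covone$ and $\covtwo$ via $(1-\gamma)\covone\preceq\covtwo\preceq\tfrac1{1-\gamma}\covone$ when the good subset is one of the $T_\ell$'s, and using $k\le\tfrac1{2\gamma}$ to keep $\lambda_{2k}$ within a constant factor of $e^2\lambda_0$ --- all of $\|\xx_i-\xx_p\|_{\covone},\|\xx_{i^*}-\xx_p\|_{\covone},\|\xx_{i^*}'-\xx_p\|_{\covone}$ are $O(\sqrt{\lambda_0})$. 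The second estimate is that $\|w-w'\|_1=O(1/n)$: by intertwining each $i\ne i^*$ changes membership at most one level, and by $\score,\score'<k$ only $O(k)$ indices lie outside $C$, so those contribute $O(k)\cdot O(\tfrac1{kn})=O(\tfrac1n)$, while indices inside $C$ contribute only through the $O(1/n^2)$ difference between $\tfrac1{|S_\ell|}$ and $\tfrac1{|T_\ell|}$, again totaling $O(\tfrac1n)$; also $w_{i^*},w_{i^*}'\le\tfrac2n$ since every selected subset has size $\ge n-k\ge n/2$. Multiplying the $O(\sqrt{\lambda_0})$ bound by the $O(\tfrac1n)$ total deviation and adding the two $i^*$ terms gives $\|\hat\mu-\hat\mu'\|_{\covone}=O(\sqrt{\lambda_0}/n)$; squaring and tracking constants through the diameter bound and the $\tfrac1{1-\gamma}\le1+2\gamma$ metric conversion yields the stated $\tfrac{(1+2\gamma)\,38e^2\lambda_0}{n^2}$.

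The main obstacle is proving the intertwining lemma for $\stablemean$. Unlike $\stablecovariance$, three perturbations act simultaneously when passing from $(\xx,\covone,\R)$ to $(\xx',\covtwo,\R)$: the single coordinate $\xx_{i^*}\to\xx_{i^*}'$; the Mahalanobis metric $\covone\to\covtwo$, which can inflate squared distances by up to $\tfrac1{1-\gamma}$ and so eject borderline reference neighbors; and, if $i^*\in\R$, the reference set's own contents, shifting every point's neighbor count by at most $1$. To conclude that $S_\ell\setminus\{i^*\}$ remains $\lambda_{\ell+1}$-good for $(\xx',\covtwo,\R)$, the combined effect of all three must be absorbed into the single-level slack $\lambda_{\ell+1}/\lambda_\ell$; this is exactly what the hypothesis that $\R$ is degree-representative for both $\xx$ and $\xx'$ (Definition~\ref{def:degree_representative}) is designed to provide --- it certifies that reference-neighbor counts faithfully track true neighbor counts and are insensitive to a one-point change, so that the geometric spacing of thresholds (with $k\le\tfrac1{2\gamma}$ keeping $\lambda_{2k}$ bounded relative to $\lambda_0$) suffices. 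Once intertwining is established, the remaining steps above are routine bookkeeping, closely following the $\stablecovariance$ analysis of Section~\ref{sec:covariance}.
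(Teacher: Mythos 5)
Your high-level plan — prove an intertwining lemma for the family of cores, then combine an $\ell_1$ bound on the weight difference with a diameter bound on the support of the weights — is the same strategy the paper takes. The algebraic step differs in detail: you recenter around a fixed reference point $p$ in the common core and use $\sum_i w_i=\sum_i w_i'=1$ to cancel the $\xx_p$ multiples, while the paper (Corollary~\ref{corollary:identifiability_core}, proved via Lemma~\ref{lemma:identifibility_cores}) treats $w$ and $w'$ as distributions and bounds $\|\mu_w-\mu_{w'}\|_{\covone}$ by a maximal-coupling argument, using TV distance $\tfrac12\|w-w'\|_1$. Both are valid; the coupling route saves a factor of roughly $2$ in the final constant, but your decomposition is a legitimate, somewhat more elementary substitute. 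A very minor slip: $\stablemean$ outputs $\hat\mu=\sum_i w_i\xx_i$ with $w_i=c_i/Z$ and $Z=\sum_i c_i=\sum_\ell|S_\ell|$, which is a size-weighted (not uniform) average of the $\mu_{S_\ell}$; this is immaterial because you only use $\sum_i w_i=1$.

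The one genuine conceptual error is your final paragraph's claim that the degree-representative hypothesis is what makes the intertwining lemma for $\stablemean$ go through. It is not, and this would be a real gap if you tried to carry it out. The paper's Lemma~\ref{lemma:family_cores} (the intertwining lemma for cores) has no degree-representative hypothesis at all; it handles the three perturbations you identify by three separate slack mechanisms that are independent of whether $\R$ represents $[n]$ faithfully: (i) the metric change $\covone\to\covtwo$ is absorbed into the multiplicative gap $\lambda_{\ell+1}/\lambda_\ell=e^{1/k}$ via $k\le\tfrac1{2\gamma}$, exactly as you note; (ii) the possible loss of $i^*$ from the reference neighborhood is absorbed because the degree threshold is $|\R|-\ell$, which drops by one when moving from level $\ell$ to $\ell+1$ — this is a deliberate algorithmic design choice and requires no statement about $\R$ versus $[n]$; and (iii) the coordinate change $\xx_{i^*}\to\xx_{i^*}'$ is handled simply by deleting $i^*$ from both the candidate core and each neighborhood. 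Degree-representativeness is used in a different place: in Lemma~\ref{lemma:weights_are_core}, to upgrade the cores from $(\lvert\R\rvert-\ell,\cdot,\cdot)$-cores \emph{with respect to $\R$} into $(n/2+1,\cdot,\cdot)$-cores \emph{with respect to $[n]$}, using $|\R|>6k$ so that the observed reference-neighbor fraction exceeds $2/3$ and hence (by degree-representativeness, with slack $1/6$) the true $[n]$-neighbor fraction exceeds $1/2$. That upgrade is precisely what powers the diameter bound you need — any two points in the union of supports then share a common $[n]$-neighbor — and it is also why the identifiability corollary is stated for cores over $[n]$ rather than over $\R$. If you wire degree-representativeness into the intertwining lemma instead of into the core-upgrade, you will still be missing the step that turns ``many reference neighbors'' into ``bounded pairwise Mahalanobis distances,'' which is what the lemma's hypotheses $|\R|>6k$ and degree-representativeness are actually for.
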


Later we will simplify the expression in the conclusion to bound the sensitivity with $O(\lambda_0/n)$.
The nonprivate means are close in exactly the sense we need to establish indistinguishability of Gaussians (with the same covariance).
Recall that $\norm{\hat\mu - \hat\mu'}_{\covone}^2 = \lVert\covone^{-1/2}\hat\mu - \covone^{-1/2}\hat\mu'\rVert_{2}^2$.
\begin{claim}[Standard Gaussian Mechanism]\label{claim:gaussian_mechanism_privacy}
    Let vectors $u$ and $v$ satisfy $\norm{u-v}_2^2 \le \Delta^2$.
    For any $\eps,\delta\in(0,1)$, let $c^2 = \Delta^2 \cdot \frac{2 \log 2/\delta} {\eps^2}$.
    Then $\cN(u-v,c^2\bbI)\approx_{(\eps,\delta)}\cN(0,c^2\bbI)$.
\end{claim}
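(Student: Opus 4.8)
This is the classical Gaussian mechanism bound, so the plan is the standard privacy-loss argument specialized to the choice $c^2 = \Delta^2\cdot 2\log(2/\delta)/\eps^2$. Set $a := u-v$, so $\norm{a}_2\le\Delta$, and let $P := \cN(a,c^2\bbI)$, $Q := \cN(0,c^2\bbI)$; the target statement is exactly $P\approx_{(\eps,\delta)}Q$, and the case $a=0$ is trivial so assume $\norm{a}_2>0$. First I would reduce to a one-sided tail bound on the \emph{privacy loss}. Writing $p,q$ for the densities of $P,Q$ and $L(x) := \log(p(x)/q(x))$, the elementary inequality
\[
P(Y) - e^\eps Q(Y) \;\le\; \int_{\{x\,:\,p(x) > e^\eps q(x)\}} p(x)\,dx \;=\; \Pr_{x\sim P}[L(x)>\eps]
\]
holds for every measurable $Y$, so it suffices to show $\Pr_{x\sim P}[L(x)>\eps]\le\delta$.

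Next I would pin down the law of $L$. Expanding the two Gaussian densities gives $L(x) = \ip{x,a}/c^2 - \norm{a}_2^2/(2c^2)$, an affine function of the scalar $\ip{x,a}$. When $x\sim P$ we have $\ip{x,a}\sim\cN(\norm{a}_2^2,\,c^2\norm{a}_2^2)$, hence $L\sim\cN(\mu_L,\sigma_L^2)$ with $\sigma_L^2 = \norm{a}_2^2/c^2$ and $\mu_L = \sigma_L^2/2$. The plugged-in value of $c^2$ yields $\sigma_L^2 \le \eps^2/(2\log(2/\delta))$; writing $w := \sqrt{2\log(2/\delta)}$, this means $\sigma_L\le\eps/w\le 1/w$ (using $\eps<1$) and $\eps/\sigma_L\ge w$.

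Finally I would apply the Gaussian tail bound $\Pr[\cN(0,1)>t]\le e^{-t^2/2}$. The relevant threshold is nonnegative, since $(\eps-\mu_L)/\sigma_L = \eps/\sigma_L - \sigma_L/2 \ge w - 1/(2w) > 0$ (as $w\ge\sqrt{2\log 2}>1$), so
\[
\Pr[L>\eps] \;=\; \Pr\!\left[\cN(0,1) > \tfrac{\eps-\mu_L}{\sigma_L}\right] \;\le\; \exp\!\Big(-\tfrac12\big(w - \tfrac{1}{2w}\big)^2\Big) \;\le\; \exp\!\Big(-\tfrac12(w^2-1)\Big) \;=\; \frac{\delta}{2}\,e^{1/2} \;\le\; \delta,
\]
where I used $(w-\tfrac{1}{2w})^2\ge w^2-1$, then $\tfrac12(w^2-1) = \log(2/\delta) - \tfrac12$, and finally $e^{1/2}<2$. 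That completes the plan.

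There is no real obstacle here, as this is a textbook fact; the only point needing a little care is the final display, where the lower-order term $\mu_L = \sigma_L^2/2$ must be handled using that $\sigma_L$ itself shrinks like $1/\sqrt{\log(2/\delta)}$ (rather than bounding it by an absolute constant), so that the slack between $\log(2/\delta)$ and $\log(1/\delta)$ closes the bound. Alternatively, since the noise scale $c$ here is at least as large as in the standard analysis of the Gaussian mechanism, one could simply cite that analysis.
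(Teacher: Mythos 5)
The paper states Claim~\ref{claim:gaussian_mechanism_privacy} as a standard fact and gives no proof (it is not among the deferred proofs in Appendix~\ref{app:deferred}), so there is no paper proof to compare against. Your argument is the classical privacy-loss tail analysis and I have checked it line by line: the reduction $P(Y)-e^\eps Q(Y)\le\Pr_{x\sim P}[L(x)>\eps]$ is valid, $L\sim\cN(\sigma_L^2/2,\sigma_L^2)$ with $\sigma_L^2=\norm{a}_2^2/c^2$ is correct, $\sigma_L\le\eps/w\le 1/w$ follows from the stated $c^2$ and $\eps<1$, the threshold $w-1/(2w)$ is positive since $w\ge\sqrt{2\log 2}>1$, and the chain $(w-1/(2w))^2\ge w^2-1$, $\exp(-\tfrac12(w^2-1))=\tfrac{\delta}{2}e^{1/2}<\delta$ closes it. The one thing left implicit is the other direction of the $(\eps,\delta)$-indistinguishability: your display bounds $P(Y)-e^\eps Q(Y)$ but $\approx_{(\eps,\delta)}$ is symmetric. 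This is immediate here because the privacy loss in the reverse direction, $-L(x)$ for $x\sim Q$, has exactly the same law $\cN(\sigma_L^2/2,\sigma_L^2)$, so the identical tail bound applies; it would be worth one sentence saying so. Otherwise the proof is complete and correct, and the point you flag about handling $\mu_L=\sigma_L^2/2$ via $\sigma_L\lesssim 1/\sqrt{\log(2/\delta)}$ rather than an absolute constant is exactly the care needed to make the constant $2$ in $c^2=\Delta^2\cdot 2\log(2/\delta)/\eps^2$ work.
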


We are ready to prove, via standard tools, the privacy guarantee for Algorithm~\ref{alg:main}.
\begin{lemma}[Main Privacy Claim]\label{lemma:privacy}
    Fix $0<\eps<1$ and $0<\delta< \frac{\eps}{10}$.
    For any $\lambda_0 \ge 1$, Algorithm~\ref{alg:main} is $(\eps,\delta)$-differentially private.
\end{lemma}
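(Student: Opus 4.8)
The plan is to run the standard propose-test-release analysis.
Since Algorithm~\ref{alg:main} either outputs $\FAIL$ or a sample from $\cN(\hat\mu, c^2\hat\Sigma)$, it suffices to establish two facts about an adjacent pair $\xx,\xx'$: (i) the $\PASS$/$\FAIL$ decision is $(\eps/3,\delta/6)$-indistinguishable on $\xx$ versus $\xx'$; and (ii) whenever Algorithm~\ref{alg:main} can output $\PASS$ on \emph{both} $\xx$ and $\xx'$, the released Gaussians are $(2\eps/3,\delta/3)$-indistinguishable.
A routine case split---on whether the pass probabilities vanish---then combines (i) and (ii) into $(\eps,\delta)$-privacy; the $\delta$-budget is comfortable, since the three $(\eps/3,\delta/6)$ steps account for only $\delta/2$, leaving room to absorb the reference-set failure event below.
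For (i), I would invoke Lemma~\ref{lemma:main_sensivity}, which gives $\abs{\max\{\score_1,\score_2\}-\max\{\score_1',\score_2'\}}\le 2$ (its hypothesis $n\ge 32e^2\lambda_0 k$ is implied by the requirement $n\ge\inputthreshold$), together with Claim~\ref{claim:PTR}(1) at parameters $(\eps/3,\delta/6)$; the preconditions $\eps/3\le 1$ and $\delta/6\le\eps/30$ follow from $\eps<1$ and $\delta\le\eps/10$.

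For (ii), the key observation---and the reason the engineered $\PTR$ is convenient---is that, by Claim~\ref{claim:PTR}(3), the test fails deterministically once the (capped) maximum score reaches $k=\ptrthreshold$, since $k$ is at least the fail threshold $\tfrac{6\log 6/\delta}{\eps}+4$ at parameters $(\eps/3,\delta/6)$.
Hence if Algorithm~\ref{alg:main} can $\PASS$ on $\xx$ then $\score_1,\score_2<k$, and if it can also $\PASS$ on $\xx'$ then $\score_1',\score_2'<k$, so every ``score $<k$'' hypothesis of the stability lemmas is available.
I would then obtain the Gaussian indistinguishability by the familiar ``change the covariance, then change the mean'' composition.
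First, Lemma~\ref{lemma:covariances_close} (applicable since $\score_1,\score_1'<k$ and $k\le\tfrac{n}{4e^2\lambda_0}$, again by the requirement on $n$) gives $(1-\gamma)\covone\preceq\covtwo\preceq\tfrac{1}{1-\gamma}\covone$ and $\norm{\covone^{-1/2}\covtwo\covone^{-1/2}-\bbI}_{\tr},\norm{\covtwo^{-1/2}\covone\covtwo^{-1/2}-\bbI}_{\tr}\le(1+2\gamma)\gamma$ with $\gamma=16e^2\lambda_0/n$; the requirement on $n$ together with $\delta\le\eps/10$ makes $(1+2\gamma)\gamma$ small enough to invoke Claim~\ref{claim:covariance_indistinguishability}, yielding $\cN(\hat\mu',c^2\covone)\approx_{(\eps/3,\delta/6)}\cN(\hat\mu',c^2\covtwo)$ after scaling by $c$ and shifting by $\hat\mu'$.
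Second, Lemma~\ref{lemma:means_close}---whose remaining hypotheses $\gamma\le\tfrac{1}{2}$, $k\le\tfrac{1}{2\gamma}$, $\abs{\R}>6k$, $\lambda_0\ge1$, and the spectral sandwich (the last from Lemma~\ref{lemma:covariances_close}) all hold under the chosen parameters, \emph{provided $\R$ is degree-representative for both $\xx$ and $\xx'$}---gives $\norm{\hat\mu-\hat\mu'}_{\covone}^2\le\tfrac{(1+2\gamma)38e^2\lambda_0}{n^2}$, so the choice $c^2=\squarednoisescale$ exceeds the noise scale required by Claim~\ref{claim:gaussian_mechanism_privacy} (applied after the change of basis $z\mapsto\covone^{-1/2}z$), giving $\cN(\hat\mu,c^2\covone)\approx_{(\eps/3,\delta/6)}\cN(\hat\mu',c^2\covone)$.
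Composing the two steps yields $\cN(\hat\mu,c^2\covone)\approx_{(2\eps/3,\delta/3)}\cN(\hat\mu',c^2\covtwo)$, i.e.\ (ii).

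Finally I would discharge the reference-set hypothesis.
Conditioning on the draw of $\R$: Lemma~\ref{lemma:main_sensivity} holds for every fixed $\R$, so (i) is unaffected, and when $\R$ is degree-representative for both $\xx$ and $\xx'$ the argument above applies verbatim.
The size $\Rsize=6k+\ceiling{18\log 16n/\delta}$ is chosen precisely so that a uniformly random $\R$ of that size is degree-representative for both datasets except with probability at most a small fraction of $\delta$---a concentration-and-union-bound argument whose failure probability the $\log(16n/\delta)$ term is tailored to control.
Averaging the conditional guarantee over $\R$ and absorbing this failure probability into $\delta$ completes the proof.

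The hard part will be getting the propose-test-release composition right in the corner cases---in particular the case where the test can $\PASS$ on $\xx$ but never on $\xx'$ (equivalently, the case where the neighbor's score is forced up to $k$): there the Gaussian indistinguishability of (ii) is unavailable, and instead one must use Claim~\ref{claim:PTR}(1) directly to conclude $\Pr[\PASS\text{ on }\xx]\le\delta/6$, so that the released sample contributes at most $\delta/6$ to the privacy loss no matter what it is.
Tracking how the $\delta$ contributions---the three $(\eps/3,\delta/6)$ steps, the $e^{\eps/3}$ factors picked up when they are chained through the pass probability, and the reference-set failure event---sum to at most $\delta$ under the assumptions $\eps<1$, $\delta\le\eps/10$, and $n\ge\inputthreshold$ is the one place where the constants genuinely need to be checked rather than waved through.
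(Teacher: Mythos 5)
Your proposal follows essentially the same route as the paper's proof: the same sensitivity lemma (Lemma~\ref{lemma:main_sensivity}), the same $\PTR$ guarantees (Claim~\ref{claim:PTR}), the same change-covariance-then-change-mean chaining via Lemmas~\ref{lemma:covariances_close} and~\ref{lemma:means_close} with Claims~\ref{claim:covariance_indistinguishability} and~\ref{claim:gaussian_mechanism_privacy}, and the same conditioning on degree-representativeness of $\R$. One accounting slip to flag: chaining the two Gaussian indistinguishability statements through the common intermediate $\cN(\hat\mu',c^2\covone)$ costs $(2\eps/3,\,(1+e^{\eps/3})\delta/6)$ by Fact~\ref{fact:group_privacy}, not the additive $(2\eps/3,\delta/3)$ you wrote; the paper bounds this by $(2\eps/3,\,2\delta/3)$ using $e^{\eps/3}\le 3$, so the PTR step ($\delta/6$), the Gaussian chain ($2\delta/3$), and the reference-set failure ($\delta/6$) sum to exactly $\delta$ rather than ``comfortably'' below it---this is precisely the constant-tracking you flagged as needing care, and it does (though the sharper bound $e^{\eps/3}<e^{1/3}<1.4$ would leave genuine slack). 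Your explicit handling of the corner case where $\PASS$ can occur on $\xx$ but $\xx'$ fails deterministically is a worthwhile addition: the paper's terse appeal to basic composition glosses over the fact that the Gaussian-sampling map alone is not DP on inputs with scores $\geq k$, and the $\Pr[\PASS\text{ on }\xx]\le\delta/6$ argument you give is the right way to discharge it.
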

\majoredit
\begin{remark}
    The proof below shows that $R$ is degree-representative with probability at least $1-\delta$ and then accounts for the privacy loss across propose-test-release, the difference in covariances, and the difference in means.
    We allocate $\eps$ evenly across these steps, but different allocations may achieve lower error (by constant factors).
    In particular, as $n$ grows, one may allocate more budget to the difference in means, thereby reducing the scale of the added noise.
\end{remark}
\editdone
\begin{proof}
    Set $\eps' = \eps/3$ and $\delta'= \delta/6$.
    We will prove that Algorithm~\ref{alg:main} is $(3\eps', 6\delta')$-differentially private.
    
    The algorithm requires $n\ge \inputthreshold$ to proceed.
    Assume this lower bound is true, since otherwise the algorithm immediately fails (and thus is private).
    Note that this, along with the bounds on $\eps$ and $\delta$, imply that $n>k\lambda_{2k}$.

    Fix adjacent data sets $\xx$ and $\xx'$.
    Since $\Rsize=\abs{\R} > 18\log 16n/\delta = 18\log 4 n/\delta'$, Claim~\ref{claim:hypergeometric_tail} and a union bound over all data points in $\xx$ and $\xx'$ establish that $\R$ is degree-representative for $\xx$ and $\xx'$ with probability at least $1-\delta'$.
    We will show that, conditioned on $\R$ being degree-representative for $Y$ and $Y'$, the mechanism is $(3\eps', 5\delta')$-differentially private. 
    This will prove that the entire algorithm is private with parameters $(3\eps',6\delta')=(\eps,\delta)$.

    We want $\cM_{\mathrm{PTR}}^{\eps',\delta'}$ to be $(\eps',\delta')$-differentially private.
    For any fixed reference set $\R=r\subseteq [n]$, Lemma~\ref{lemma:main_sensivity} and Claim~\ref{claim:PTR} tell us 
    this happens when $n\ge 20k\lambda_{0}$ and $k \ge \frac{2\log 1/\delta'}{\eps'} + 4 = \frac{6\log 6/\delta}{\eps} + 4$.
    We assumed that this is true.

    When we do not fail, by the guarantees of $\PTR$ we know that both scores are strictly less than $k$.
    Let $(\mu_1, \covone)$ be the nonprivate parameters computed on $\xx$ and $(\mu_2,\covtwo)$ the nonprivate parameters computed on $\xx'$.
    Lemma~\ref{lemma:covariances_close} and Claim~\ref{claim:covariance_indistinguishability} together imply that $\cN(0,\covone)\approx_{(\eps',\delta')} \cN(0,\covtwo)$.
    To apply Lemma~\ref{lemma:covariances_close}, we again require $n \ge 20k\lambda_{0}$.
    The lemma establishes a trace norm bound of $\Delta = \frac{25}{3} \cdot \frac{1}{1-\psdjump}\cdot \frac{\lambda_0}{n}$ for $\psdjump = \psdjumpdef$.
    We can upper bound $\psdjump$: by our assumptions on $n, \eps,$ and $\delta$, we have $k\ge 56$ and thus $\psdjump = \psdjumpdef \le \frac{5}{1008}\le \frac{1}{200}$. This implies $\Delta\le \frac{5000}{1791}\cdot \frac{\lambda_0}{n}$.
    To apply Claim~\ref{claim:covariance_indistinguishability}, we require $\Delta \le \frac 2 3 \cdot \frac{\eps'}{\log 2/\delta'}$. 
    (This claim also requires $\delta'\le 1/6000$, which is satisfied by assumption on $\delta$.)
    Rearranging, we see that $(\eps',\delta')$-indistinguishability of $\cN(0,\covone)$ and $\cN(0,\covtwo)$ requires 
    \majoredit
    \begin{equation}
        n \ge \frac{5000 \lambda_0}{1791}\cdot \frac{3 \log 2/\delta'}{2 \eps'}
            = \frac{2500}{597}\cdot \frac{\lambda_0 \log 12/\delta}{\eps}.
    \end{equation}
    This condition is implied by the assumptions that $n\ge 20 k \lambda_0$ and $\delta\le \frac{1}{1000}$.
    \editdone
    In particular, this implies $\cN(\mu_2, c^2\covone) \approx_{(\eps',\delta')} \cN(\mu_2,c^2\covtwo)$, as adding a fixed vector or multiplying by a fixed value do not affect indistinguishability.

    \majoredit
    Lemma~\ref{lemma:means_close}
    guarantees that $\norm{\mu_1 - \mu_2}_{\covone}^2 \le \frac{25}{(1-\gamma)^{2k+1}(1-k/n)^4}\cdot\frac{ \lambda_{0}}{n^2}$, for $\psdjump=\psdjumpdef$.
    We now bound the expressions in the denominator using the assumptions of $n\ge 20 k\lambda_0$, $\eps\le 1$, and $\delta\le 10^{-3}$.
    We calculated before that $\gamma \le \frac{1}{200}$.
    This in turn means $\frac{1}{1-\gamma}= 1 + \frac{\gamma}{1-\gamma} \le 1 + \frac{200 \gamma}{199}$. Applying $1+x\le e^x$ and our definitions, we have
    \begin{align}
        \paren{\frac{1}{1-\gamma}}^{2k+1} &\le \exp\left\{ \frac{200}{199}\cdot \frac{5}{18 k}\cdot(2k+1) \right\} \le \frac{65}{37}.
    \end{align}
    Next, since $\lambda_0 \ge 10$, we have $1-k/n\ge 1 - 1/20\lambda_0 \ge 199/200$, which allows us to bound $(1-k/n)^{-4}\le 1.021$, arriving at
        $\norm{\mu_1 - \mu_2}_{\covone}^2 \le \frac{45 \lambda_0}{n^2}$.
    \editdone
    
    With Claim~\ref{claim:gaussian_mechanism_privacy}, this means that $\cN(\mu_1, c^2\covone)\approx_{(\eps',\delta')}\cN(\mu_2, c^2\covone)$, where 
    \begin{align}
        c^2= \paren{\frac{45\lambda_{0}}{n^2}}\paren{\frac{2\log 2/\delta'}{\paren{\eps'}^2}}
            = 810 \cdot \frac{\lambda_0 \log 12/\delta}{\eps^2 n^2},
    \end{align}
    as in Algorithm~\ref{alg:main}.
    To apply Lemma~\ref{lemma:means_close}, we require a few conditions. 
    First, $\abs{\R}>6k$ and $n\ge 20k\lambda_{0}$ hold by construction.
    The condition that $(1-\psdjump)\covone\preceq \covtwo\preceq \frac{1}{1-\psdjump}\covone$ for $\psdjump=\psdjumpdef$ is satisfied; it is a consequence of Lemma~\ref{lemma:covariances_close}.
    Finally, the scores are strictly less than $k$ (since we did not fail) and $\R$ is degree-representative for $\xx$ and $\xx'$ (by assumption).
    
    We now apply Fact~\ref{fact:group_privacy} to combine the statements of $\cN(\mu_1, c^2\covone)\approx_{(\eps', \delta')} \cN(\mu_2,c^2\covone)$ and $\cN(\mu_2, c^2\covone)\approx_{(\eps', \delta')} \cN(\mu_2,c^2\covtwo)$.
    We get that $\cN(\mu_1, c^2\covone)\approx_{(2\eps', (1+e^{\eps'})\delta')} \cN(\mu_2,c^2\covtwo)$.
    Since $e^{\eps'}\le e\le 3$, we have the same statement with values $(2\eps',4\delta')$.
    Basic composition (Fact~\ref{fact:basic_composition}) allows us to combine the guarantees for these Gaussians with those for $\PTR$ to establish that the algorithm, conditioned on the fact that $\R$ is degree-representative for $x$ and $x'$, is $(3\eps', 5\delta')$-differentially private.
    Since $\R$ fails to be degree-representative for $x$ and $x'$ with probability at most $\delta'$, we have finished the proof.
\end{proof}

\subsection{Accuracy Analysis}\label{sec:accuracy}

Let $\xx$ be a $d$-dimensional data set of size $n$.
We define the empirical mean and paired empirical covariance:
\begin{align}
    \mu_{\xx} \defeq \frac{1}{n}\sum_{i=1}^n \xx_i
        \quad\text{and}\quad
    \Sigma_{\xx} \defeq \frac{1}{\sqrt{2} \floor{n/2}}  \sum_{i=1}^{\floor{n/2}} \paren{\xx_i - \xx_{i+\floor{n/2}}}\paren{\xx_i - \xx_{i+\floor{n/2}}}^T. 
        \label{def:empirical_mean_paired_covariance}
\end{align}
 This ``pairing'' is a standard trick that centers the data at the cost of halving our sample size: given input $\xx=(\xx_1,\ldots, \xx_n)$ we construct $\yy=(\yy_1,\ldots,\yy_m)$ by setting $\yy_i = (1/\sqrt{2})(\xx_i - \xx_{i+\m})$  where $\m=\floor{n/2}$. 
If we have $\xx_i, \xx_{i+m}\overset{\mathrm{iid}}{\sim}\cN(\mu,\Sigma)$, then $\yy_i\sim \cN(0,\Sigma)$.
The transformation preserves adjacency: if $\xx$ and $\xx'$ are adjacent, then so are $\yy$ and $\yy'$.

We now introduce a notion of good data sets, on which $\stablecovariance$ and $\stablemean$ ``let the data through,'' always returning exactly the paired empirical covariance and empirical mean, respectively.
Furthermore, Algorithm~\ref{alg:main} never returns $\fail$ on such inputs.
We formalize these points in Observation~\ref{obs:pass_on_good_data}, which is simple but serves as the linchpin of our accuracy analysis.

\begin{definition}[Well-Concentrated]\label{def:well_concentrated}
    Fix $\lambda_0>0$ and let $\xx$ be a $d$-dimensional data set of size $n$ samples.
    Let $\Sigma_{\xx}$ be the paired empirical covariance.
    We say $\xx$ is \emph{$\lambda_0$-well-concentrated} if $\Sigma_{\xx}$ is invertible and, for all $i, j\in [n]$, $\norm{\xx_i - \xx_j}_{\Sigma_{\xx}}^2\le 2\lambda_0$.
\end{definition}

\begin{observation}[Always $\pass$ on Well-Concentrated Inputs]\label{obs:pass_on_good_data}
    Fix $\lambda_0>0$. 
    If data set $\xx\in \bbR^{n\times d}$ is $\lambda_0$-well-concentrated and $n\ge \inputthreshold$ (so we do not fail immediately), then Algorithm~\ref{alg:main} 
    computes $\score_1=\score_2=0$, passes deterministically, and returns a sample from $\cN(\mu_{\xx}, c^2\Sigma_{\xx})$, where $\mu_{\xx}$ is the empirical mean, $\Sigma_{\xx}$ is the paired empirical covariance, and $c$ is the noise scale set in Algorithm~\ref{alg:main}.
\end{observation}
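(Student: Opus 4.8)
The plan is to trace Algorithm~\ref{alg:main} line by line on a $\lambda_0$-well-concentrated input $\xx$ and check that every data-dependent quantity takes its ``trivial'' value. Since $n\ge\inputthreshold$ by hypothesis, the \texttt{Require} check passes and the algorithm proceeds to compute the nonprivate estimates; everything then reduces to showing (i)~$\stablecovariance$ returns $(\Sigma_\xx,0)$, (ii)~$\stablemean$ returns $(\mu_\xx,0)$, and (iii)~$\cM_{\mathrm{PTR}}^{\eps/3,\delta/6}(0)=\PASS$.

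First I would analyze $\stablecovariance(\xx,\lambda_0,k)$. This routine works with the paired data set $\yy$ of size $m=\floor{n/2}$, where $\yy_i=\tfrac{1}{\sqrt2}(\xx_i-\xx_{i+m})$; well-concentration of $\xx$ gives $\norm{\yy_i}_{\Sigma_\xx}^2=\tfrac12\norm{\xx_i-\xx_{i+m}}_{\Sigma_\xx}^2\le\tfrac{\lambda_0}{2}$ for every $i$. Since the uniform weighting $w_{[m]}$ over all of $[m]$ produces, up to the fixed normalization built into $\stablecovariance$, exactly the paired empirical covariance $\Sigma_\xx$, the bound above shows that the full index set $[m]$ is $\lambda_0$-good (with room to spare), and hence $\lambda_\ell$-good for every outlier threshold $\lambda_\ell=e^{\ell\eps}\lambda_0\ge\lambda_0$ used by the algorithm. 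By Lemma~\ref{lem:largest-intro} the unique largest $\lambda_\ell$-good subset is therefore $S_\ell=[m]$ for all $\ell$. Substituting into the score definition~\eqref{eq:score_fn_defn} gives $\score_1=\min_{0\le\ell\le k}\paren{m-\abs{S_\ell}+\ell}=\min_{0\le\ell\le k}\ell=0$, and the averaged weight vector $w=\tfrac1k\sum_{\ell=k+1}^{2k}w_{S_\ell}=w_{[m]}$ is uniform, so $\hat\Sigma=\Sigma_\xx$. (The hyperparameter precondition $k\le\tfrac{m}{4e^2\lambda_0}$ needed here follows from $n\ge\inputthreshold$.)

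Next I would analyze $\stablemean(\xx,\hat\Sigma,\lambda_0,k,\R)$, now knowing $\hat\Sigma=\Sigma_\xx$. The outlier test inside $\stablemean$ compares pairwise distances $\norm{\xx_i-\xx_j}_{\hat\Sigma}^2$ (for $j$ in the reference set) against the thresholds $\lambda_\ell$; but well-concentration is precisely the statement that $\norm{\xx_i-\xx_j}_{\Sigma_\xx}^2\le\lambda_0$ for \emph{all} $i,j\in[n]$, so no point is an outlier at any threshold and the largest ``core'' is the whole data set $[n]$ throughout. By the same bookkeeping as for $\stablecovariance$ this yields $\score_2=0$ and $\hat\mu=\mu_\xx$, the empirical mean of all of $[n]$. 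Degree-representativeness of $\R$ plays no role here---there are simply no outliers to detect---only $\abs{\R}>6k$ is used, which holds by the choice of $\Rsize$ in Algorithm~\ref{alg:main}. Finally, the test line evaluates $\cM_{\mathrm{PTR}}^{\eps/3,\delta/6}\paren{\max\braces{\score_1,\score_2}}=\cM_{\mathrm{PTR}}^{\eps/3,\delta/6}(0)$, which is $\PASS$ deterministically by part~(2) of Claim~\ref{claim:PTR}, so the algorithm returns a sample from $\cN(\hat\mu,c^2\hat\Sigma)=\cN(\mu_\xx,c^2\Sigma_\xx)$.

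The only substantive work is in the two middle paragraphs---verifying that the greedy largest-good-subset and largest-core procedures have nothing to remove on well-concentrated data and so return the empirical parameters with score $0$. Carrying this out rigorously requires unpacking the precise definitions of $\stablecovariance$, the score function, and $\stablemean$ from the later sections, pinning down the normalization conventions relating the uniform weighting to $\Sigma_\xx$, and confirming the hyperparameter preconditions (all implied by $n\ge\inputthreshold$); everything downstream of $\score_1=\score_2=0$ is immediate.
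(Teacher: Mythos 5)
Your proof is correct and takes the direct approach the paper has in mind; the paper simply calls the observation ``simple'' and does not write out an argument, and what you give is exactly what a careful reader would supply: $[\m]$ (resp.\ $[n]$) is $\lambda_0$-good (resp.\ the full core) at every threshold, so by Lemma~\ref{lemma:unique_largest} every $S_\ell$ is the full index set, whence both scores are $0$, the averaged weights are uniform, $\hat\Sigma=\Sigma_\xx$ and $\hat\mu=\mu_\xx$, and $\PTR(0)=\PASS$ deterministically by Claim~\ref{claim:PTR}. Two cosmetic points: the thresholds in the algorithms are $e^{\ell/k}\lambda_0$, not $e^{\ell\eps}\lambda_0$ (still $\ge\lambda_0$, so your conclusion is unaffected), and your hedge ``up to the fixed normalization built into $\stablecovariance$'' is well placed, since Equation~\eqref{def:empirical_mean_paired_covariance} appears to have a $1/\sqrt{2}$ where $1/2$ is intended (the algorithm produces $\hat\Sigma=\frac{1}{\m}\sum_i \yy_i\yy_i^T=\frac{1}{2\m}\sum_i(\xx_i-\xx_{i+\m})(\xx_i-\xx_{i+\m})^T$, which is what $\Sigma_\xx$ must denote for the observation and the accuracy lemma to hold).
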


The essential fact we use about our subgaussian input distributions is that they concentrate in exactly this way.
For definitions and concentration inequalities about subgaussian distribution, see Section~\ref{app:subgaussian_facts}.
The proof of Claim~\ref{claim:gaussian_well_concentrated}, which we omit, is straightforward from these statements.
Recall that the Gaussian distribution $\cN(\mu,\Sigma)$ is subgaussian with parameter $K=O(1)$.
\begin{claim}[Subgaussian Data are Well-Concentrated]\label{claim:gaussian_well_concentrated}
    Let $\xx \in\bbR^{n\times d}$ be drawn i.i.d.\ from a subgaussian distribution $\cD$ with parameter $K_{\cD}$, arbitrary mean, and full-rank covariance.
    There exist constants $K_1$ and $K_2$ such that, when $\lambda_0\ge K_1 K_{\cD}^2 (d + \log n/\beta)$ and $n\ge K_2 K_{\cD}^2 (d + \log 1/\beta)$, then $\xx$ is $\lambda_0$-well-concentrated with probability at least $1-\beta$.
\end{claim}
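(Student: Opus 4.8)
The plan is the standard two-part concentration argument behind ``typical subgaussian data is nice'': show the paired empirical covariance $\Sigma_{\xx}$ of \eqref{def:empirical_mean_paired_covariance} is spectrally comparable to $\Sigma$, and show every sample lies within Mahalanobis distance $O\paren{K_{\cD}\sqrt{d+\log n/\beta}}$ of $\mu$; the well-concentration condition of Definition~\ref{def:well_concentrated} then falls out of the triangle inequality.

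First I would whiten. Set $z_i \defeq \Sigma^{-1/2}(\xx_i - \mu)$, so the $z_i$ are i.i.d., mean zero, identity covariance, and subgaussian with parameter $O(K_{\cD})$. Writing $m = \floor{n/2}$, the paired differences $\tilde z_i \defeq \tfrac{1}{\sqrt2}(z_i - z_{i+m})$, $i \in [m]$, are again i.i.d., isotropic, and $O(K_{\cD})$-subgaussian, and $\Sigma^{-1/2}\Sigma_{\xx}\Sigma^{-1/2}$ is a fixed positive-constant multiple of their empirical second-moment matrix $\tfrac1m\sum_{i} \tilde z_i\tilde z_i^T$ (here one reconciles the $1/(\sqrt2\floor{n/2})$ normalization in \eqref{def:empirical_mean_paired_covariance} with the $1/\sqrt2$ pairing rescaling; the constant is harmless). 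Standard subgaussian covariance estimation --- which should be recorded among the subgaussian facts the claim refers to --- gives that, once $m \gtrsim K_{\cD}^2(d + \log 1/\beta)$, this empirical covariance is within operator-norm distance $\tfrac12$ of the identity (times the same constant) with probability at least $1-\beta/2$. In particular $\Sigma_{\xx}$ is invertible and $\Sigma_{\xx} \succeq \tfrac12\Sigma$, hence $\norm{v}_{\Sigma_{\xx}}^2 \le 2\norm{v}_{\Sigma}^2$ for all $v \in \bbR^d$. This step is what forces $n \ge K_2 K_{\cD}^2(d + \log 1/\beta)$ (absorbing the factor $2$ from $m=\floor{n/2}$ into $K_2$).

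Second I would control individual points. For each $i$, $\norm{z_i}_2^2$ is a subexponential random variable concentrating around $d$; the appendix facts should give $\Pr\brackets{\norm{z_i}_2^2 > C K_{\cD}^2(d + t)} \le e^{-t}$ for an absolute constant $C$. Taking $t = \log(2n/\beta)$ and union-bounding over the $n$ indices (not the $\binom{n}{2}$ pairs), with probability at least $1-\beta/2$ we have $\norm{\xx_i - \mu}_{\Sigma}^2 = \norm{z_i}_2^2 \le C K_{\cD}^2(d + \log 2n/\beta)$ for every $i \in [n]$ simultaneously.

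On the intersection of the two events, which holds with probability at least $1-\beta$, the triangle inequality in $\norm{\cdot}_\Sigma$ gives, for all $i,j \in [n]$,
\[
\norm{\xx_i - \xx_j}_{\Sigma_{\xx}}^2 \;\le\; 2\norm{\xx_i - \xx_j}_{\Sigma}^2 \;\le\; 4\norm{\xx_i - \mu}_{\Sigma}^2 + 4\norm{\xx_j - \mu}_{\Sigma}^2 \;\le\; 8C K_{\cD}^2\paren{d + \log 2n/\beta},
\]
so choosing $K_1 = 8C$ (and absorbing $\log 2$ into the constant) makes the right side at most $\lambda_0$ whenever $\lambda_0 \ge K_1 K_{\cD}^2(d + \log n/\beta)$, which together with invertibility of $\Sigma_{\xx}$ is exactly $\lambda_0$-well-concentration. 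The only real work is bookkeeping --- tracking the subgaussian parameter through pairing and whitening, invoking the covariance-estimation and squared-norm tail bounds in precisely the forms above, and reconciling normalizations --- and the spectral-concentration step (which pins down the $n$ requirement) is the most error-prone piece; I do not expect a genuine obstacle, which is presumably why the paper omits the proof.
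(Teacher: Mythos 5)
Your proof is correct and takes exactly the route the paper has in mind: the authors omit the proof and remark it is ``straightforward'' from the two subgaussian concentration facts in the appendix (Claim~\ref{claim:concentration_of_norm} and Claim~\ref{claim:concentration_of_covariance}), and you use precisely those two inputs --- operator-norm closeness of $\Sigma_{\xx}$ to $\Sigma$ to justify switching Mahalanobis norms and to get invertibility, a per-point squared-norm tail bound with a union bound over the $n$ indices, and the triangle inequality to bound $\norm{\xx_i - \xx_j}_{\Sigma_{\xx}}^2$. The only quibble is that your (and the paper's) stated sample requirement $n\gtrsim K_{\cD}^2(d+\log 1/\beta)$ for getting a constant spectral error from Claim~\ref{claim:concentration_of_covariance} should really read $K_{\cD}^4$, but this inherits exactly the paper's own bookkeeping (cf.\ the parallel step in the proof of Lemma~\ref{lemma:accuracy}), so it is not a gap in your reconstruction.
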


We now prove our main accuracy claim. 
The calculation is standard: with high probability the true mean is close to the empirical mean, which is close to our private estimate.
\begin{lemma}[Main Accuracy Claim]\label{lemma:accuracy}
    Fix $\eps,\delta\in(0,1)$, $\delta\in (0,\eps/10]$, and $n,d\in \bbN$.
    \accuracystatement
\end{lemma}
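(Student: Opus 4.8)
The plan is to condition on the event that $\xx$ is $\lambda_0$-well-concentrated, since then Observation~\ref{obs:pass_on_good_data} tells us exactly what Algorithm~\ref{alg:main} does: it never fails and returns $\hat\mu \sim \cN(\mu_{\xx}, c^2 \Sigma_{\xx})$, where $\mu_{\xx}$ is the empirical mean, $\Sigma_{\xx}$ the paired empirical covariance, and $c$ the noise scale set in Algorithm~\ref{alg:main}. First I would check that the stated hypotheses on $\lambda_0$ and $n$ subsume everything needed: the input requirement $n \ge \inputthreshold$ and the hypotheses $\lambda_0 \ge K_1 K_{\cD}^2(d + \log n/\beta)$, $n \ge K_2 K_{\cD}^2(d + \log 1/\beta)$ of Claim~\ref{claim:gaussian_well_concentrated}. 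Since $\delta \le \eps/10$ forces $\log(1/\delta)/\eps \ge 1$, the assumed lower bound on $n$ dominates these after adjusting $K_2$. Hence with probability at least $1 - \beta/3$, $\xx$ is $\lambda_0$-well-concentrated and $\hat\mu \sim \cN(\mu_{\xx}, c^2\Sigma_{\xx})$.

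Next I would split the error by the triangle inequality,
\[
  \norm{\hat\mu - \mu}_{\Sigma} \le \norm{\mu_{\xx} - \mu}_{\Sigma} + \norm{\hat\mu - \mu_{\xx}}_{\Sigma},
\]
and handle the two terms separately. The first term is pure nonprivate sampling error: standard subgaussian concentration of the empirical mean (Section~\ref{app:subgaussian_facts}) gives $\norm{\mu_{\xx} - \mu}_{\Sigma} \lesssim K_{\cD}\sqrt{(d + \log 1/\beta)/n}$ with probability at least $1 - \beta/3$, which is the first term of the claimed bound.

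For the privacy term I would condition on $\xx$ and write $\hat\mu - \mu_{\xx} = c\,\Sigma_{\xx}^{1/2} z$ with $z \sim \cN(0, \bbI_d)$, so that $\norm{\hat\mu - \mu_{\xx}}_{\Sigma} = c\,\norm{\Sigma^{-1/2}\Sigma_{\xx}^{1/2} z}_2$. Another standard subgaussian bound gives $\tfrac12 \Sigma \preceq \Sigma_{\xx} \preceq 2\Sigma$ with probability at least $1 - \beta/6$ once $n \gtrsim K_{\cD}^2(d + \log 1/\beta)$, so $\norm{\Sigma^{-1/2}\Sigma_{\xx}^{1/2}z}_2^2 \le 2\norm{z}_2^2$; a $\chi^2$ tail bound then gives $\norm{z}_2^2 \lesssim d + \log 1/\beta$ with probability at least $1 - \beta/6$ over the internal randomness. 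Substituting $c^2 = \squarednoisescale$ and $\lambda_0 = K_1 K_{\cD}^2(d + \log n/\beta)$, and using $\sqrt{(d + \log n/\beta)(d + \log 1/\beta)} \le d + \log n/\beta$ together with $\log(12/\delta) \lesssim \log(1/\delta)$, yields $\norm{\hat\mu - \mu_{\xx}}_{\Sigma} \lesssim c\sqrt{d + \log 1/\beta} \lesssim K_{\cD}\paren{\tfrac{d\sqrt{\log 1/\delta}}{\eps n} + \tfrac{\log(n/\beta)\sqrt{\log 1/\delta}}{\eps n}}$, matching the remaining two terms. A union bound over the four failure events (well-concentration, empirical-mean concentration, covariance concentration, $\chi^2$ tail), each of probability at most $\beta/3$ or $\beta/6$, and renaming constants, finishes the proof.

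The main obstacle is the privacy term: one must pass from the data-dependent norm $\norm{\cdot}_{\Sigma_{\xx}}$, in which the added Gaussian noise is isotropic, back to the target norm $\norm{\cdot}_{\Sigma}$ --- this is exactly where the spectral closeness of $\Sigma_{\xx}$ and $\Sigma$ is used --- and one must order the conditioning so that the $\chi^2$ concentration is applied with $\Sigma_{\xx}$ frozen. Everything else is routine subgaussian concentration plus bookkeeping to match the stated constants and the sample-complexity expression.
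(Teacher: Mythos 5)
Your proposal is correct and mirrors the paper's proof: the same conditioning on well-concentration via Observation~\ref{obs:pass_on_good_data}, the same triangle-inequality decomposition around $\mu_{\xx}$, the same use of subgaussian concentration of the empirical mean, of the spectral closeness of $\Sigma_{\xx}$ to $\Sigma$ to pass between norms, and of $\chi^2$-type concentration of the Gaussian noise, followed by the same union bound (up to how $\beta$ is split). The only cosmetic difference is that you bound $\norm{\Sigma^{-1/2}\Sigma_{\xx}^{1/2}z}_2$ directly while the paper first rewrites the noise term in the $\Sigma_{\xx}$-norm and then converts; these are the same estimate.
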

\begin{proof}
    In order to not fail immediately, we require
    \begin{align}
        n\ge \inputthreshold \gtrsim K_\cD^2 \cdot \frac{\paren{d + \log n/\beta} \log 1/\delta }{\eps}.
    \end{align}
    This expression has $n$ on both sides; using the fact that $\frac{n}{\log n}\gtrsim \eta$ implies $n\gtrsim \eta \log \eta$, we can rewrite this as
    \begin{align}
            n\gtrsim K_{\cD}^2 \cdot\frac{\log 1/\delta}{\eps} \paren{d+ \log\paren{ \frac{K_{\cD} \log 1/\delta}{\eps \beta}}},
    \end{align}
    as we require.
    In particular, this implies $n\gtrsim K_{\cD}^2\paren{d + \log 1/\beta}$.
    
    Beyond this, we are concerned about four bad events: (i) the data set is not $\lambda_0$-well-concentrated, (ii) the empirical mean is far from the true mean, (iii) the empirical covariance is unlike the true covariance, and (iv) the noise added for privacy is large.
    We will show each of these events happens with probability at most $\beta/4$, analyze the accuracy under the assumption that none of them happens, and use a union bound to finish the proof.

    By Observation~\ref{obs:pass_on_good_data} and Claim~\ref{claim:gaussian_well_concentrated}, with probability at least $1-\beta/4$ data set $\xx$ is $\lambda_0$-well-concentrated and Algorithm~\ref{alg:main} releases $\tilde\mu\sim\cN(\mu_{\xx},c^2\Sigma_{\xx})$, where $\mu_{\xx}$ is the empirical mean, $\Sigma_{\xx}$ is the paired empirical covariance, and 
    \begin{align}
        c^2 &= \squarednoisescale \\
            &\lesssim K_\cD^2 \paren{d + \log n/\beta} \cdot \frac{\log 1/\delta}{\eps^2n^2}.
            \label{eq:value_of_c}
    \end{align}
    We add and subtract the empirical mean and apply the triangle inequality:
    \begin{align}
        \norm{\mu -\tilde\mu}_{\Sigma} &= \norm{\mu -\mu_{\xx} + \mu_{\xx}- \tilde\mu}_{\Sigma} \\
            &\le \norm{\mu -\mu_{\xx}}_{\Sigma} + \norm{\mu_{\xx} -\tilde\mu}_{\Sigma}.
    \end{align}
    
    $\mu_{\xx}$ is the empirical mean, so by Claim~\ref{claim:concentration_of_norm}, with probability at least $1-\beta/4$, we have 
    \begin{align}
        \norm{\mu - \mu_{\xx}}_{\Sigma} \lesssim  K_\cD \sqrt{\frac{d + \log 1/\beta}{n}}.
        \label{eq:empirical_mean_is_close}
    \end{align}

    Since $n\gtrsim K_\cD^2\paren{ d + \log 1/\beta}$, Claim~\ref{claim:concentration_of_covariance} implies that, with probability at least $1-\beta/4$, we have $\norm{v}_{\Sigma}\lesssim \norm{v}_{\Sigma_{\xx}}$ for all vectors $v$.
    Thus $\norm{\mu_{\xx} - \tilde\mu}_{\Sigma}\lesssim \norm{\mu_{\xx} - \tilde\mu}_{\Sigma_{\xx}}$
    This is the ``correct'' norm in which to control $\mu_{\xx} - \tilde\mu$, since $\tilde\mu \sim \cN(\mu_{\xx}, c^2\Sigma_{\xx})$.
    Abusing notation and conflating distributions with random variables, we have
    \begin{align}
        \norm{\mu_{\xx} - \tilde\mu}_{\Sigma_{\xx}} &= \norm{\Sigma_{\xx}^{-1/2} \paren{\cN(0, c^2\Sigma_{\xx})}}_2 %
            = c\cdot \norm{\cN(0,\bbI)}_2.
    \end{align}
    Again applying Claim~\ref{claim:concentration_of_norm}, with probability at least $1-\beta/4$ we have $\norm{\mu_{\xx} - \tilde\mu}_{\Sigma_{\xx}}\lesssim c \sqrt{d + \log 1/\beta}$.
    Plugging in the value of $c$ from Equation~\eqref{eq:value_of_c} and using $d + \log n/\beta \ge d + \log 1/\beta$, we have
    \begin{align}
        \norm{\mu_{\xx} - \tilde\mu}_{\Sigma_{\xx}}
            &\lesssim  K_\cD \sqrt{d + \log n/\beta} \cdot \frac{\sqrt{\log 1/\delta}}{\eps n}\cdot \sqrt{d + \log 1/\beta} \\
            &\lesssim K_\cD\cdot  \paren{d + \log n/\beta} \cdot \frac{\sqrt{\log 1/\delta}}{\eps n}.
    \end{align}
    Combining this with the upper bound on $\norm{\mu - \mu_{\xx}}_{\Sigma}$ in Equation~\eqref{eq:empirical_mean_is_close} finishes the proof.
\end{proof}

\subsection{Running Time}\label{sec:running_time}

For modest privacy parameters, our running time is dominated by a few matrix operations related to computing the empirical covariance and rescaling the data.

\begin{lemma}\label{lemma:running_time}
    Algorithm~\ref{alg:main} can be implemented to require at most 
    \begin{enumerate}
        \item One product of the form $A^TA$ for $A\in \bbR^{n\times d}$; \label{item:matrix_products_1}
        \item Two products of the form $A B$ for $A\in \bbR^{n\times d}$ and $B\in \bbR^{d\times d}$; \label{item:matrix_products_2}
        \item One inversion of a positive definite matrix in $\bbR^{d\times d}$ to polynomial precision (logarithmic bit complexity); and \label{item:inversion}
        \item Further computational overhead of $\tilde{O}\paren{nd/\eps}$.
    \end{enumerate}
\end{lemma}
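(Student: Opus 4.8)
The plan is to exhibit an implementation of Algorithm~\ref{alg:main} whose only ``heavy'' linear-algebra operations are the three listed, and to bound all remaining work by $\tilde O(nd/\eps)$. First I would locate where those three operations occur. After building the paired data set $\yy$ (time $O(nd)$), the paired second-moment matrix $\Sigma_\yy$ is exactly $A^TA$ for the $\lfloor n/2\rfloor\times d$ matrix $A$ of normalized pairwise differences, which is product~(\ref{item:matrix_products_1}); inverting $\Sigma_\yy$ to $\mathrm{polylog}$ bit precision is operation~(\ref{item:inversion}); and forming $v_i^{(0)}:=\Sigma_\yy^{-1}\yy_i$ for all $i$ via the product $\yy\,\Sigma_\yy^{-1}$, together with forming $u_i:=\hat\Sigma^{-1}\xx_i$ for all $i$ via $\xx\,\hat\Sigma^{-1}$, are the two products in~(\ref{item:matrix_products_2}). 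I would emphasize that no matrix \emph{square root} is ever needed: $\stablecovariance$'s outlier test is $\norm{\yy_i}_{\Sigma_S}^2=\ip{\yy_i,\Sigma_S^{-1}\yy_i}$, $\stablemean$'s test is $\norm{\xx_i-\xx_j}_{\hat\Sigma}^2=\ip{\xx_i-\xx_j,\hat\Sigma^{-1}(\xx_i-\xx_j)}$, and the draw $\tilde\mu\sim\cN(\hat\mu,c^2\hat\Sigma)$ is produced as $\hat\mu+c\,B^T g$ with $g\sim\cN(0,\bbI)$ and $B$ the matrix with rows $\sqrt{w_i}\,\yy_i$, so $B^TB=\hat\Sigma$ --- an $O(nd)$ computation. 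Moreover $\hat\Sigma^{-1}$ is not formed from scratch: since (as shown below) $\hat\Sigma$ differs from $\Sigma_\yy$ by only $\tilde O(1/\eps)$ rank-one terms, $\hat\Sigma^{-1}$, and hence $\xx\,\hat\Sigma^{-1}$, follow from $\Sigma_\yy^{-1}$ (resp.\ from $\xx\,\Sigma_\yy^{-1}$) by that many rank-one Sherman--Morrison corrections, costing $\tilde O(nd/\eps)$.

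The heart of the argument is that the nested families of largest good subsets inside $\stablecovariance$ and $\stablemean$ cost only $\tilde O(nd/\eps)$. For $\stablecovariance$ I would use the greedy characterization of Lemma~\ref{lem:largest-intro}: maintain one current subset $S$, initialized to $[n]$ (i.e.\ $[\lfloor n/2\rfloor]$ after pairing), and process the thresholds in decreasing order $\lambda_{2k}\ge\lambda_{2k-1}\ge\dots\ge\lambda_0$; at level $\ell$, repeatedly delete the index maximizing $\norm{\yy_i}_{\Sigma_S}^2$ until none exceeds $\lambda_\ell$, at which point $S=S_\ell$. Deleting a point only raises the remaining Mahalanobis norms (since $\Sigma_{S\setminus\{j\}}\preceq\Sigma_S$), so no index belonging to $S_\ell$ is ever deleted and the nesting $S_0\subseteq\cdots\subseteq S_{2k}$ comes out correctly. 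Rather than recomputing $\Sigma_S^{-1}$, I maintain the vectors $v_i:=\Sigma_S^{-1}\yy_i$ (hence the norms $\ip{\yy_i,v_i}$): after deleting $j$, a rank-one Sherman--Morrison formula updates every $v_i$ using only $v_j$ and the inner products $\ip{\yy_i,v_j}$, for a cost of $O(nd)$ per deletion, and the new norms follow in $O(n)$ more. Each of the $2k+1=\tilde O(1/\eps)$ levels performing no deletion costs only $O(n)$ to rescan.

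It remains to bound the total number of deletions by $\tilde O(1/\eps)$, and this is the step I expect to be the main obstacle, since a naive greedy run can delete $\Theta(n)$ points. Let $t=\tfrac{6\log(6/\delta)}{\eps}+4\le k$ be the value above which $\cM_{\mathrm{PTR}}^{\eps/3,\delta/6}$ returns $\FAIL$ deterministically (Claim~\ref{claim:PTR}), and let $r$ be the running cumulative number of deleted points, which never decreases as the threshold is lowered. The key observation is that once $r$ reaches $t$, the value $\score_1=\min_{0\le\ell\le k}(n-|S_\ell|+\ell)$ is already pinned down by the subsets computed so far: every level not yet finalized has a smaller threshold, hence at least $r\ge t$ deletions, so its term is $\ge t$ and cannot attain the minimum, and the level currently being processed likewise already has $\ge r$ deletions. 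So the instant $r$ hits $t$ we halt the whole subset computation: if the already-determined minimum is $\ge t$ we return $\FAIL$ (valid because $\cM_{\mathrm{PTR}}$ then fails with probability $1$ regardless of $\score_2$), and otherwise we pass the now-known exact $\score_1$ on (we will already have finished levels $k{+}1,\dots,2k$, so $\hat\Sigma$ is available). Consequently the greedy performs at most $t=\tilde O(1/\eps)$ deletions before terminating or aborting; in particular $\hat\Sigma$ differs from $\Sigma_\yy$ by at most $\tilde O(1/\eps)$ rank-one terms, as used above. Multiplying $\tilde O(1/\eps)$ deletions by $O(nd)$ each, plus $\tilde O(1/\eps)$ levels at $O(n)$ each, gives $\tilde O(nd/\eps)$.

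Finally I would argue that $\stablemean$ is handled the same way. Its outlier notion is defined through pairwise Mahalanobis distances to the reference set $\R$, with $|\R|=\Rsize=\tilde O(1/\eps)$; keeping $\R$ small is exactly what makes this efficient. All $O(n|\R|)$ quantities $\norm{\xx_i-\xx_j}_{\hat\Sigma}^2=\ip{\xx_i-\xx_j,\,u_i-u_j}$ (with $u_i=\hat\Sigma^{-1}\xx_i$ from the second $AB$-product) are computed once in $\tilde O(nd/\eps)$ time, and the degrees of points against $\R$ are maintained incrementally under deletions. Since $\score_2$ has the same ``$(\text{number deleted})+\ell$'' form, the same early-abort argument caps the total number of deletions at $\tilde O(1/\eps)$, each deletion and each level being nearly linear time. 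Summing, Algorithm~\ref{alg:main} uses exactly one $A^TA$ product, two $AB$ products, one $d\times d$ inversion to logarithmic bit complexity, and $\tilde O(nd/\eps)$ additional work. A secondary point to verify carefully is numerical stability: each Sherman--Morrison update has $O(1)$ condition number (because $\norm{\yy_i}_{\Sigma_S}^2\le\lambda_\ell\ll n$), so over the $\tilde O(1/\eps)$ chained updates errors amplify by only a constant factor, and tracking $\Sigma_\yy^{-1}$, the $v_i$, and the $u_i$ to $\mathrm{polylog}$ bit precision suffices --- which is why the inversion is required only to that precision.
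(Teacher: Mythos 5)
Your proposal follows the paper's proof very closely: the same three heavy linear-algebra operations are identified in the same places, thresholds are processed in decreasing order so that removals are monotone, Sherman--Morrison rank-one updates maintain inverse-conjugated norms at $\tilde O(nd)$ per deletion, and an early-abort argument caps the number of deletions at $\tilde O(1/\eps)$, at which point either $\score_1$ forces a deterministic $\FAIL$ or the remaining levels $\ell \le k$ cannot affect the minimum. The two places where you diverge are small. First, you abort at exactly the deterministic-failure threshold $t$ of $\PTR$ rather than at the discretization parameter $k$; since $t \le k = \Theta(\log(1/\delta)/\eps)$ this is a mild tightening and your argument that $\score_1 < t$ forces $\ell' \le k$ (hence $\hat\Sigma$ is already available) is correct and makes explicit a point the paper passes over quickly. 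Second, your description of the $\stablemean$ phase (``degrees maintained incrementally under deletions'') doesn't match how $\mathtt{LargestCore}$ actually operates: unlike the covariance subroutine there is no self-referential covariance to update, the cores are read off directly from the precomputed $n \times \Rsize$ distance matrix $D$, and the paper gets the $\tilde O(n\Rsize)$ bound by, in effect, sorting each row of $D$ once and scanning the $2k+1$ threshold pairs per point in $O(1)$ each. Your bound still comes out to $\tilde O(nd/\eps)$ because computing $D$ dominates, so no gap results, but the mechanism you describe isn't quite the right one. Your closing remark that only the inverse (never a square root) of $\hat\Sigma$ is needed, and that the Sherman--Morrison chain is well-conditioned because $\norm{\yy_i}_{\Sigma_S}^2 \le \lambda_\ell \ll m$, is a nice explicit justification of the ``logarithmic bit complexity'' claim that the paper leaves implicit.
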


All of our algorithms can be implemented with finite-precision arithmetic in the word RAM model, setting the bit complexity to be logarithmic in the other parameters.
Informally, matrix products of the form in (\ref{item:matrix_products_1}) or (\ref{item:matrix_products_2}) can be executed in time $\tilde{O}(nd^{\omega-1})$ and inversion can be accomplished in time $\tilde{O}(d^\omega)$, for an overall running time of
\begin{align}
    \tilde{O}\paren{nd^{\omega-1}+  nd/\eps}.
\end{align}
We have used $n= \Omega(kd)=\Omega(d)$, where $k=\Theta(\log(1/\delta)/\eps)$ is our discretization parameter.
These relationships are required by our privacy argument.
In what follows we ignore the precision of our operations, as an exact understanding of the running time is not a focus of our work.
Also note that the asymptotically fastest algorithms for matrix operations are not practical; ``schoolbook'' matrix operations can be performed with the number $3$ in place of $\omega$.

\paragraph{Covariance Estimation}
We compute the covariance $\hat\Sigma = \frac{1}{n} \yy^T \yy$ and its inverse $\hat\Sigma^{-1}$.
For each (paired) point $\yy_i\in[\m]$ we rescale it: $\tilde{\yy}_i \gets \hat\Sigma^{-1}\yy_i$. 
This operation can be written as $\tilde{\yy} \gets y \hat\Sigma^{-1}$, where $y\in\bbR^{n\times d}$ and $\hat\Sigma^{-1}\in\bbR^{d\times d}$. 
(Alternatively, one may solve the system $\hat\Sigma \tilde\yy^T = y^T$ directly.)
Computing each squared Mahalanobis norm (and thus identifying any outliers) can be computed from $y$ and $\tilde\yy$ (via entrywise multiplication and summation) in time $\tilde{O}(nd)$.

A naive implementation of $\mathtt{LargestGoodSet}$ would repeat the above procedure from scratch after finding an outlier.
On top of this, $\stablecovariance$ calls $\mathtt{LargestGoodSet}$ $2k+1$ times to generate
 $\braces{S_{\ell}}_{\ell=0}^{2k}$, the family of sets where $S_\ell$ is the largest $e^{\ell/k}\lambda_0$-good subset for $\yy$.
To improve upon this naive approach, we note three points. 
\begin{enumerate}
    \item Once we have found an outlier, removing it requires a rank-one update to the Mahalanobis norms (in order to check for new outliers). 
        There are well-known efficient techniques for updates of this type, taking $\tilde{O}(nd + d^2) = \tilde{O}(nd)$ time. 
        We provide details below.
    \item For any $\ell\le \ell'$ we know from Lemma~\ref{lemma:family_good_sets} that $S_\ell\subseteq S_{\ell'}$. 
        Therefore, by proceeding through the outlier thresholds in \emph{decreasing} order, each outlier we find can be ignored going forward.
    \item Once we have found an $\ell^*$ such that $\abs{S_{\ell^*}} \le \m - k$, we can immediately halt and either return $\fail$ or output the covariance estimate.
        Recall that we compute 
        \begin{align}
            \score = \min\braces{k, \min_{0\le \ell\le k}\braces{ \m - \abs{S_{\ell}} + \ell}}.
        \end{align}
        If $\ell^*\ge k+1$, then we know that $\score=k$, since for all $\ell\le k$ we have $\abs{S_\ell}\le \abs{S_{k+1}}\le n- k$.
        This means the call to $\PTR$ will fail with probability one.
        If $\ell^*< k+1$, we can still move to the next step: for all $\ell \le \ell^*$ we have $\m - \abs{S_{\ell}} + \ell \ge k$, so the size of the smaller $S_\ell$'s will not affect our score computation.
\end{enumerate}
Combined, these facts imply that we need to perform at most $k$ outlier removals, updating all $n$ Mahalanobis distances each time.
Tracking $\Sigma$ across removals, we can also compute the final weighted covariance.

    Let us discuss the process of updating after an outlier removal in more detail.
    Suppose we want to recompute the norm of a point $v$ after removing $u$ from the covariance matrix.
    Let $\Sigma_1$ be the initial covariance matrix and $\Sigma_2=\Sigma_1 -\frac{1}{\m}u u^T$ the matrix after removing $u$.
    We want to compute $v^T \paren{\Sigma_2}^{-1} v$ for each point in the data set.
    The Sherman-Morrison formula tells us
    \begin{align}
        \Sigma_2^{-1} = \paren{\Sigma_1 - \frac{1}{\m}uu^T}^{-1} = \Sigma_1^{-1} + \frac{\Sigma_1^{-1}uu^T \Sigma_1^{-1}}{\m - u^T \Sigma_1^{-1} u} \label{eq:sherman_morrison_update}
    \end{align}
    Therefore
    \begin{align}
        v^T\Sigma_2^{-1}v &= v^T\Sigma_1^{-1}v + v^T\paren{\frac{\Sigma_1^{-1}uu^T \Sigma_1^{-1}}{\m - u^T \Sigma_1^{-1} u}}v \\
        &= v^T\Sigma_1^{-1}v + \frac{\paren{v^T\Sigma_1^{-1}u}^2}{\m - u^T \Sigma_1^{-1} u}.
    \end{align}
    We have already calculated $v^T\Sigma_1^{-1}v$ and $u^T \Sigma_1^{-1} u$. 
    We can calculate $\Sigma_1^{-1} u$ once (in time $\tilde{O}(d^2)$), which allows us to compute $\paren{v^T \Sigma_1^{-1}u}^2$ in time $\tilde{O}(d)$ for any vector $v$.
    Observe furthermore that Equation~\eqref{eq:sherman_morrison_update} allows us to compute $\Sigma_2^{-1}$ directly in time $\tilde{O}(d^2)$.
    
    We have established that each update after an outlier's removal can be accomplished in time $\tilde{O}(d^2 + nd) = \tilde{O}(nd)$.
    Since we perform at most $k$ updates, this takes time $\tilde{O}(knd)$.
    
    Note that we can simultaneously calculate the final weighted covariance with an additional cost of $\tilde{O}(kd^2)= \tilde{O}(knd)$ time.
    Furthermore, we can return the weighted inverse; it also requires at most $k$ rank-one updates to the original $\paren{\frac 1 \m y^T y}^{-1}$ we computed.

Thus we can execute $\stablecovariance$ with one computation of $\hat\Sigma=y^T y$, one matrix inversion to obtain $\hat\Sigma^{-1}$, one product $y \hat\Sigma^{-1}$, and additional overhead of $\tilde{O}(knd)=\tilde{O}(nd/\eps)$.

\paragraph{Mean Estimation}
As with covariance estimation, naively repeating $\mathtt{LargestCore}$ all $2k+1$ times would waste effort.
Instead, suppose we had a matrix $D \in \bbR^{n\times \Rsize}$, where $\Rsize = \abs{\R}=\Theta(k + \log n/\delta)$, and $D$ has entries $D_{i,j} = \norm{\xx_i - R_j}_{\hat\Sigma}^2$. 
Observe that, with this matrix $D$ in hand, one can compute the family of cores $\braces{S_\ell}_{\ell=0}^{2k}$ and thus the score and weights in time $\tilde{O}(n\Rsize) = \tilde{O}\paren{n/\eps}$.
Given the weights, computing the mean takes time $\tilde{O}(nd)$.

We must compute this matrix $D$ of squared distances.
As we pointed out, $\stablecovariance$ can return $\hat\Sigma^{-1}$, the inverse of the final weighted covariance.
We rescale the data: $\tilde\xx \gets \xx\hat\Sigma^{-1}$. 
With $\tilde\xx$ and $\xx$ in hand, each entry in $D$ can be computed in time $O(d)$.

Thus we can execute $\stablemean$ with one product $x\hat\Sigma^{-1}$ and $\tilde{O}(nd/\eps)$ time.

\paragraph{Main Algorithm}
We can draw from $\cN(\hat\mu, c^2\hat\Sigma)$ in time $\tilde{O}(nd)$ by 
drawing $z\sim\cN(0,\bbI_m)$ and returning $\hat\mu + y^T W^{1/2} z$,
where $y\in \bbR^{\m\times d}$ is the matrix of paired examples and $W$ is the diagonal matrix whose entries are the weights computed by $\stablecovariance$.
(Digital computers do not sample exactly from Gaussian distributions, but we expect one can discretize appropriately, as in~\cite{CanonneKS20discreteGauss}.) 

To see that this is the correct distribution, note that $\hat\Sigma = y^T W y$ by definition and write out the expectation of $\paren{y^T W^{1/2} z}\paren{y^T W^{1/2} z}^T$.

\fi

\section{Stable Covariance Estimation}\label{sec:covariance}

In this section we present $\stablecovariance$, our deterministic algorithm for nonprivate covariance estimation.
Crucially, its parameter estimate is stable when the score it computes is small.
\ifCOLT
    \begin{lemma}[Stability of Algorithm~\ref{alg:stable_good_set}]\label{lemma:covariances_close_two}
        \covariancesclose
    \end{lemma}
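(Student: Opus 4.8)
The plan is to unfold $\stablecovariance$ into the weight vectors it produces and reason entirely about weights. Pass to the paired data sets $\yy,\yy'$ (adjacent, differing only at the pair $i^*$; write $\m=\floor{n/2}$). Recall that $\stablecovariance$ forms nested families of largest good subsets $S_0\subseteq\cdots\subseteq S_{2k}$ for $\yy$ and $T_0\subseteq\cdots\subseteq T_{2k}$ for $\yy'$ with thresholds $\lambda_\ell=e^{\ell/k}\lambda_0\le e^2\lambda_0$, and outputs $\covone=\tfrac1k\sum_{\ell=k+1}^{2k}\Sigma_{S_\ell}=\Sigma_w$ for the weight vector $w=\tfrac1k\sum_{\ell=k+1}^{2k}w_{S_\ell}$, and $\covtwo=\Sigma_{w'}$ analogously, where $\Sigma_S=\tfrac1\m\sum_{i\in S}\yy_i\yy_i^T$. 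The proof uses three facts proved earlier in this section: the nesting/largest-subset property (Lemma~\ref{lem:largest-intro}, Lemma~\ref{lemma:family_good_sets}); the intertwining relation $\tilde S_\ell\cup\tilde T_\ell\subseteq\tilde S_{\ell+1}\cap\tilde T_{\ell+1}$, where $\tilde S_\ell\defeq S_\ell\setminus\{i^*\}$ (Lemma~\ref{lem:intertwining-intro}); and the fact that deleting one point from a $\lambda$-good subset leaves it $e^{1/k}\lambda$-good.

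\emph{From the score bound to comparable subsets.} Since $\score=\min_{0\le\ell\le k}(\m-\abs{S_\ell}+\ell)<k$ is attained at some $\ell^*\le k-1$ with $\abs{S_{\ell^*}}\ge\m-k+1$, nesting gives $\abs{S_\ell}\ge\m-k+1$ for all $\ell\ge k+1$, and likewise for $T$. Hence $\Sigma_{S_\ell}$ (for $\ell\ge k+1$) is $\Sigma_{S_{2k}}$ with at most $k-1$ points removed, each $\lambda_{2k}$-good for $\Sigma_{S_{2k}}$ and so contributing $\preceq\tfrac{e^2\lambda_0}\m\Sigma_{S_{2k}}$; the hypothesis $k\le n/(4e^2\lambda_0)$ bounds the removal by $\tfrac12\Sigma_{S_{2k}}$, so $\tfrac12\Sigma_{S_{2k}}\preceq\Sigma_{S_\ell}\preceq\Sigma_{S_{2k}}$, and averaging gives $\tfrac12\Sigma_{S_{2k}}\preceq\covone\preceq\Sigma_{S_{2k}}$; in particular $\covone\succ0$ since $S_{2k}$ is a genuine good subset. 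Symmetrically $\tfrac12\Sigma_{T_{2k}}\preceq\covtwo\preceq\Sigma_{T_{2k}}$. The key consequence is that the \emph{averaged} $w$ is $O(\lambda_0)$-good with no factor of $k$: if $w_j>0$ then $j\in S_{2k}$, so $\norm{\yy_j}_{\covone}^2\le2\norm{\yy_j}_{\Sigma_{S_{2k}}}^2\le2e^2\lambda_0$.

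\emph{Two estimates and the assembly.} (i) $\ell_1$-closeness: $w,w'$ agree (both $=1/\m$) on $S_{k+1}\cap T_{k+1}$, whose complement has size $\le2(k-1)$; $\abs{w_{i^*}-w_{i^*}'}\le1/\m$; and for $j\ne i^*$ off the common core, intertwining forces the first levels at which $j$ joins the $S$- and $T$-families to differ by at most one, so $\abs{w_j-w_j'}\le1/(k\m)$ --- summing, $\norm{w-w'}_1=O(1/\m)$. (ii) Mahalanobis control: $\tilde S_{2k-1}\subseteq\tilde T_{2k}$ (intertwining at $\ell=2k-1$) together with $\Sigma_{S_{2k-1}}\succeq\tfrac12\Sigma_{S_{2k}}$ (deleting the single point $i^*$ costs only a constant factor) gives $\Sigma'_{T_{2k}}\succeq\Sigma_{S_{2k-1}\setminus\{i^*\}}\succeq c\,\Sigma_{S_{2k}}\succeq c\,\covone$ for an absolute constant $c>0$, hence $\norm{\yy_j}_{\covone}^2\le c^{-1}\norm{\yy'_j}_{\Sigma'_{T_{2k}}}^2\le c^{-1}e^2\lambda_0$ for every $j\in T_{2k}$; combined with $\covone\succeq\tfrac12\Sigma_{S_{2k}}$, this bounds $\norm{v}_{\covone}^2\le C e^2\lambda_0$ for every vector $v$ appearing below --- namely $\yy_j$ for $j\ne i^*$ with $w_j>0$ or $w_j'>0$ (intertwining puts the latter in $T_{2k}$), and $\yy_{i^*},\yy'_{i^*}$ when the corresponding weight is positive (they then lie in $S_{2k},T_{2k}$ respectively). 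Now
\begin{align*}
\covtwo-\covone=\sum_{j\ne i^*}(w_j'-w_j)\,\yy_j\yy_j^T+w_{i^*}'\,\yy'_{i^*}(\yy'_{i^*})^T-w_{i^*}\,\yy_{i^*}\yy_{i^*}^T;
\end{align*}
conjugating by $\covone^{-1/2}$ and using the triangle inequality for the trace norm of a sum of rank-one matrices,
\begin{align*}
\norm{\covone^{-1/2}\covtwo\covone^{-1/2}-\bbI}_{\tr}\le C e^2\lambda_0\paren{\norm{w-w'}_1+w_{i^*}+w_{i^*}'}=O\paren{e^2\lambda_0/n},
\end{align*}
and a careful accounting of constants (the factor $16$ in $\gamma=16e^2\lambda_0/n$ is calibrated for this) gives the bound $\gamma$. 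The same bound holds in operator norm, so $(1-\gamma)\covone\preceq\covtwo\preceq(1+\gamma)\covone\preceq\tfrac1{1-\gamma}\covone$ using $\gamma\le\tfrac12$. Finally, letting $\mu_1,\dots,\mu_d$ be the eigenvalues of $\covone^{-1/2}\covtwo\covone^{-1/2}$, those of $\covtwo^{-1/2}\covone\covtwo^{-1/2}$ are the $1/\mu_i$, and $\mu_i\ge1-\gamma$ gives $\sum_i\abs{1/\mu_i-1}=\sum_i\tfrac{\abs{\mu_i-1}}{\mu_i}\le\tfrac1{1-\gamma}\sum_i\abs{\mu_i-1}\le\tfrac\gamma{1-\gamma}\le(1+2\gamma)\gamma$, the remaining trace-norm bound.

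The main obstacle is the single differing index $i^*$. The naive estimate ``$w$ is $\lambda$-good, so the perturbation costs at most $\lambda\norm{w-w'}_1$'' is morally what happens, but it is only affordable because the averaged $w$ turns out $O(\lambda_0)$-good rather than $O(k\lambda_0)$-good --- which in turn rests on the mutual comparability of all the $\Sigma_{S_\ell}$ --- and because the weight-changing indices carrying no $w$-weight, together with the potentially wild point $\yy'_{i^*}$, must still have bounded Mahalanobis norm against $\covone$; establishing this needs the intertwining lemma, and the hypothesis $k\le n/(4e^2\lambda_0)$ is used essentially without slack, so the constants must be shepherded carefully for the final bound to land at exactly $\gamma$.
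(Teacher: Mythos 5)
Your proposal follows the same overall strategy as the paper: show that the weight vectors produced by $\stablecovariance$ on adjacent inputs are close in $\ell_1$ and bounded in $\ell_\infty$, show that each is a $2e^2\lambda_0$-good weighting, and conclude by decomposing $\covtwo-\covone$ as a sum of rank-one matrices whose contributions are controlled by Mahalanobis norms. What the paper packages into Lemma~\ref{lemma:covariance_weights_close}, Lemma~\ref{lemma:output_good_weighted_set}, and the identifiability lemma (Lemma~\ref{lemma:identifiability_good_set}), you inline into a single argument. Your step (i) ($\ell_1$-closeness) and your $O(\lambda_0)$-goodness argument for the averaged weights both match the paper's lemmas.

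However, step (ii), the ``Mahalanobis control,'' has a genuine sign error. Your chain establishes $\Sigma'_{T_{2k}}\succeq c\,\covone$ and then concludes $\norm{\yy'_j}_{\covone}^2\le c^{-1}\norm{\yy'_j}_{\Sigma'_{T_{2k}}}^2$. But the PSD order reverses under inversion: $\Sigma'_{T_{2k}}\succeq c\,\covone$ gives $(\Sigma'_{T_{2k}})^{-1}\preceq c^{-1}\covone^{-1}$, which yields $\norm{v}_{\Sigma'_{T_{2k}}}^2 \le c^{-1}\norm{v}_{\covone}^2$ --- a lower bound on $\norm{v}_{\covone}^2$ in terms of $\norm{v}_{\Sigma'_{T_{2k}}}^2$, the useless direction. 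To get the bound you want on $\norm{\yy'_j}_{\covone}^2$ you would need $\covone\succeq c\,\Sigma'_{T_{2k}}$, and none of the set inclusions (which all run $\tilde S_\ell\subseteq T_{\ell+1}$, i.e.\ $\Sigma'_{T_{\ell+1}}\succeq\Sigma_{\tilde S_\ell}$) give you that direction, because nothing prevents $T_{2k}$ from containing points whose $\yy'$-contribution overwhelms $\covone$ before the PSD comparison is in hand. This is precisely the circularity the paper's two-pass structure avoids: its identifiability lemma first proves $(1-\gamma)\covone\preceq\covtwo\preceq\tfrac1{1-\gamma}\covone$ via an \emph{operator-norm} argument that only needs goodness of one weighting at a time (restrict the sum of rank-one perturbations to $\supp(w)$ and discard nonnegative contributions from $\supp(w')\setminus\supp(w)$), and \emph{only then} uses that PSD comparison to transfer the $\covtwo$-goodness of $w'$ to the bound $\norm{\yy'_j}_{\covone}^2\le(1+2\gamma)\lambda$ that feeds the trace-norm estimate. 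Your proof jumps straight to the trace norm, which requires Mahalanobis bounds for $\yy'_{i^*}$ and for points in $\supp(w')\setminus\supp(w)$ that are not yet available; the fix is to reproduce the paper's order of operations (PSD first, trace norm second).
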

\else
    \begin{lemma}[Restatement of Lemma~\ref{lemma:covariances_close}]\label{lemma:covariances_close_two}
        \covariancesclose
    \end{lemma}
\fi

We can prove Lemma~\ref{lemma:covariances_close_two} quickly once we assemble the following definition and lemmas.
The first step in Algorithm~\ref{alg:stable_good_set} is a standard trick to center the data: 
if $x_i,x_j\simiid\cN(\mu,\Sigma)$ then $\frac{1}{\sqrt{2}}\paren{x_i-x_j}\sim\cN(0,\Sigma)$.
(A similar statement holds for subgaussian distributions.)
With this in mind, the lemmas in this section will concern data sets $\yy$ of size $\m= \floor{n/2}$, where $y_i = \frac{1}{\sqrt{2}}\paren{x_i - x_{i+\floor{n/2}}}$.
We emphasize that this section is part of the \emph{privacy} analysis and does not assume the data is in fact mean-zero.
We analyze accuracy in \SectionAppendix~\ref{sec:accuracy}
\begin{definition}[Good Weighting/Subset]\label{def:good_weighting}
    Let $\lambda>0$.
    For data set $\yy=\yy_1,\ldots,\yy_\m$, a vector $w\in [0,1]^\m$ is a \emph{$\lambda$-good weighting of $\yy$} if, for $\Sigma_w = \sum_{j=1}^{\m} w_j\cdot \yy_i \yy_i^T$,
    \begin{align}
        \forall i \in \mathrm{supp}(w), \quad \norm{\Sigma_w^{-1/2} \yy_i }_2^2 \le \lambda.
    \end{align}
    If the weights take the value $\frac{1}{\m}$ over a set $S\subseteq [\m]$ and are zero elsewhere, we will call $S$ a \emph{$\lambda$-good subset of $\yy$}.
\end{definition}
The weights can be thought of as a distribution over data points; our analysis is simpler if we do not require them to sum to one.
Note that this definition requires invertibility of the covariance matrix.

Our proof of Lemma~\ref{lemma:covariances_close_two} requires the following lemma, which shows that if adjacent datasets $\yy,\yy'$ have ``similar'' good weightings $w,v$ respectively, then the corresponding weighted covariances are close in spectral and trace distances.
Nearly identical calculations appeared in BGSUZ; for completeness we prove our version (restated as Lemma~\ref{lemma:identifiability_good_set_restated}) in Appendix~\ref{app:deferred}.

\newcommand{\IdentifiabilityGoodSet}{
Let $\lambda>0$ and let $\yy$ and $\yy'$ be adjacent data sets of size $\m$ that differ in index $i^*$.
    Let $w$ be a $\lambda$-good weighting for $\yy$ and let $v$ be a $\lambda$-good weighting for $\yy'$ with $\sum_{i\neq i^*}\abs{w_i - v_i} \le \rho$ and $\abs{w_{i^*}},\abs{v_{i^*}} \le \eta$.
    Let $\Sigma_w$ and $\Sigma_v$ be the corresponding weighted covariances, as in Definition~\ref{def:good_weighting}.
    Let $\psdjump = \lambda(\rho + \eta)$.%
    
    Then $\Sigma_w,\Sigma_v\succ 0$ and
        $(1-\psdjump)\Sigma_w \preceq \Sigma_{v} \preceq \frac{1}{1-\psdjump} \Sigma_w$.
    Furthermore, if $\psdjump < 1$, then
    \begin{align}
        \norm{\Sigma_v^{-1/2} \Sigma_{w} \Sigma_v^{-1/2} - \bbI }_{\tr}, \norm{\Sigma_w^{-1/2} \Sigma_{v} \Sigma_w^{-1/2} - \bbI }_{\tr} \le \frac 3 2 \cdot \frac{\psdjump}{1-\psdjump}.
    \end{align}

    Finally, if the data sets are the same ($\yy=\yy'$), then the same statements hold with $\psdjump = \gamma_2 = \lambda \cdot \norm{w-w'}_1$.
}
\begin{lemma}\label{lemma:identifiability_good_set}
    \IdentifiabilityGoodSet
\end{lemma}

This lemma, and its mean-estimation analog Lemma~\ref{corollary:identifiability_core}, are statements about ``identifiability.''
Informally, identifiability is used in robust statistics via a standard blueprint: if $\yy$ is a ``good'' data set (for some notion of ``good'') and $\yy'$ is a corruption of $\yy$, then finding a ``good'' data set $z$ that has high overlap with $\yy'$ means $z$ will also have high overlap with $\yy$, and ``goodness'' will ensure that empirical parameter estimates based on $z$ therefore will be accurate for $\yy$.

Following BGSUZ, we use identifiability to establish privacy: on adjacent data sets we will produce ``good'' vectors of weights that are very similar. 
These will ultimately give rise to indistinguishable Gaussian distributions, from which we produce our final mean estimates.

To apply Lemma~\ref{lemma:identifiability_good_set}, we must argue that
$\stablecovariance$ computes good weightings.
\begin{lemma}\label{lemma:output_good_weighted_set}
    Fix data set $\yy\in\bbR^{\m\times d}$, outlier threshold $\lambda_0>0$, and discretization parameter $k\in\bbN$.
    Assume $\m \ge  10 k\lambda_{0}$.
    If $\stablecovariance$ computes $\score <k$, then the weights $\braces{w_i}_{i\in[\m]}$ it produces are a $\frac{25}{18}\cdot \lambda_{0}$-good weighting of $\yy$.
\end{lemma}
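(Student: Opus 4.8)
The plan is to sandwich the output covariance $\Sigma_w$ between $\tfrac12\Sigma_{S_{2k}}$ and $\Sigma_{S_{2k}}$, where $S_{2k}$ is the largest (outermost) of the good subsets that $\stablecovariance$ computes, and then read off the outlier bound from the goodness of $S_{2k}$. Recall that $\stablecovariance$ forms a nested family $S_0\subseteq S_1\subseteq\cdots\subseteq S_{2k}$ (Lemma~\ref{lem:largest-intro}) in which $S_\ell$ is the largest $\lambda_\ell$-good subset of $\yy$ for $\lambda_\ell=e^{\ell/k}\lambda_0$, so $\lambda_{2k}=e^2\lambda_0$, and outputs $w=\tfrac1k\sum_{\ell=k+1}^{2k}w_{S_\ell}$; equivalently $\Sigma_w=\tfrac1k\sum_{\ell=k+1}^{2k}\Sigma_{S_\ell}$ with $\Sigma_{S_\ell}\defeq\tfrac1\m\sum_{i\in S_\ell}\yy_i\yy_i^T$. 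First I would unpack $\score<k$: by definition of $\score$ there is some $\ell^*\le k$ with $\m-\abs{S_{\ell^*}}+\ell^*<k$, hence $\abs{S_{\ell^*}}>\m-k$; since $S_{\ell^*}\subseteq S_{k+1}\subseteq S_{2k}$ and $\abs{S_{2k}}\le\m$, this gives $\abs{S_{2k}\setminus S_{k+1}}<k$. The same inequality shows $S_{\ell^*}$ (hence $S_{2k}$) is a nonempty---thus genuine---good subset, so $\Sigma_{S_{\ell^*}}\succ0$ by Definition~\ref{def:good_weighting}; as $S_{\ell^*}$ is contained in $S_{k+1}\subseteq\cdots\subseteq S_{2k}$, each of $\Sigma_{S_{k+1}},\dots,\Sigma_{S_{2k}}$, and therefore $\Sigma_w$, is positive definite.

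Next I would establish two spectral comparisons. By nesting, $\Sigma_{S_\ell}\succeq\Sigma_{S_{k+1}}$ for all $\ell\ge k+1$, so $\Sigma_w\succeq\Sigma_{S_{k+1}}$. And $\Delta\defeq\Sigma_{S_{2k}}-\Sigma_{S_{k+1}}=\tfrac1\m\sum_{j\in S_{2k}\setminus S_{k+1}}\yy_j\yy_j^T$ is a sum of fewer than $k$ rank-one matrices; each such $j$ lies in $S_{2k}$, so the $\lambda_{2k}$-goodness of $S_{2k}$ gives $\norm{\Sigma_{S_{2k}}^{-1/2}\yy_j}_2^2\le\lambda_{2k}=e^2\lambda_0$, whence $\tr\bparen{\Sigma_{S_{2k}}^{-1/2}\Delta\,\Sigma_{S_{2k}}^{-1/2}}<\tfrac{k\,e^2\lambda_0}{\m}\le\tfrac12$ by the hypothesis $k\le\tfrac{\m}{2e^2\lambda_0}$. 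A symmetric PSD matrix of trace at most $\tfrac12$ is $\preceq\tfrac12\bbI$, so $\Delta\preceq\tfrac12\Sigma_{S_{2k}}$ and therefore $\Sigma_{S_{k+1}}\succeq\tfrac12\Sigma_{S_{2k}}$. Chaining the comparisons, $\Sigma_w\succeq\Sigma_{S_{k+1}}\succeq\tfrac12\Sigma_{S_{2k}}$, i.e.\ $\Sigma_w^{-1}\preceq2\,\Sigma_{S_{2k}}^{-1}$.

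To conclude, any $i\in\supp{w}$ lies in some $S_\ell$ with $\ell\ge k+1$, hence in $S_{2k}$, so $\norm{\Sigma_w^{-1/2}\yy_i}_2^2=\yy_i^T\Sigma_w^{-1}\yy_i\le2\,\yy_i^T\Sigma_{S_{2k}}^{-1}\yy_i\le2\lambda_{2k}=2e^2\lambda_0$, which is exactly the statement that $w$ is a $2e^2\lambda_0$-good weighting of $\yy$ (membership in $[0,1]^\m$ is immediate since each $w_i\le\tfrac1\m$). The one substantive step is the comparison $\Sigma_{S_{k+1}}\succeq\tfrac12\Sigma_{S_{2k}}$; this is precisely where the hypothesis $k\le\m/(2e^2\lambda_0)$ is used, and conceptually it formalizes that deleting at most $k$ points from a $\lambda_{2k}$-good subset cannot shrink its weighted covariance by more than a factor of two. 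The only other care needed is the invertibility bookkeeping, which is handled above by noting that $\score<k$ forces a large (hence full-rank) good subset to exist.
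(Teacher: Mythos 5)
Your proof is correct and follows essentially the same strategy as the paper: show $\Sigma_w\succeq\Sigma_{S_{k+1}}$ by nesting, then show $\Sigma_{S_{k+1}}$ is within a constant factor of $\Sigma_{S_{2k}}$ because they differ by at most $k$ small rank-one terms, and read off the outlier bound from the $e^2\lambda_0$-goodness of $S_{2k}$. The only difference is cosmetic: the paper invokes Lemma~\ref{lemma:identifiability_good_set} to get $\Sigma_{S_{k+1}}\succeq(1-\gamma)\Sigma_{S_{2k}}$, whereas you reprove that comparison inline via the trace-of-PSD argument (which is in fact exactly the heart of that lemma's proof), arriving at the same $2e^2\lambda_0$ bound.
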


Finally, to prove Lemma~\ref{lemma:covariances_close_two} we must establish that on adjacent datasets, $\stablecovariance$ computes weightings which are close in $\ell_1$ and not too large in $\ell_\infty$.
\begin{lemma}\label{lemma:covariance_weights_close}
    Fix data set size $\m$, outlier threshold $\lambda_0>0$, and discretization parameter $k\in\bbN$.
    Assume $\m \ge 10 k \lambda_{0}$.
    Let $\yy$ and $\yy'$ be adjacent data sets of size $\m$ that differ on index $i^*$. 
    Let $w$ be the weights computed by Algorithm~\ref{alg:stable_good_set} on $\yy$ and let $w'$ be the weights computed on $\yy'$. 
    Assume in both cases we compute $\score < k$.
    Then $\sum_{i\neq i^*} \abs{w_i - w_i'} \le 1/\m$.
    Furthermore, $\abs{w_{i^*}}, \abs{w_{i^*}'}\le \frac{1}{\m}$.
\end{lemma}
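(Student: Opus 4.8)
The plan is to handle the two assertions separately, doing the easy one first. Recall that $\stablecovariance$ forms a nested family of largest good subsets $S_{k+1}\subseteq\cdots\subseteq S_{2k}$ of the centered data $\yy$ (with outlier thresholds $\lambda_\ell = e^{\ell/k}\lambda_0$, as in Lemma~\ref{lem:largest-intro}) and outputs $w = \tfrac1k\sum_{\ell=k+1}^{2k} w_{S_\ell}$, where $w_{S_\ell}$ is $\tfrac1\m$ on $S_\ell$ and $0$ elsewhere; thus $w_i = \tfrac1{k\m}\sum_{\ell=k+1}^{2k}\indicator{i\in S_\ell}$ is an average of $k$ numbers in $\{0,\tfrac1\m\}$, so $\norm{w}_\infty\le\tfrac1\m$, and likewise $\norm{w'}_\infty\le\tfrac1\m$. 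This settles the second claim.

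For the $\ell_1$ bound I would first \emph{localize} the disagreement. Since $\stablecovariance$ computes $\score(\yy)<k$, the definition in \eqref{eq:score_fn_defn} together with the nesting $S_{\ell^*}\subseteq S_{k+1}$ gives $\abs{S_{k+1}}\ge\m-k+1$, and by nesting $\abs{S_\ell}\ge\m-k+1$ for all $\ell\in\{k+1,\dots,2k\}$; the same applies to the family $T_{k+1}\subseteq\cdots\subseteq T_{2k}$ built from $\yy'$. Hence $\abs{[\m]\setminus S_{k+1}},\abs{[\m]\setminus T_{k+1}}\le k-1$, and any $i\in S_{k+1}\cap T_{k+1}$ lies in $S_\ell\cap T_\ell$ for every $\ell\ge k+1$, so that $w_i = w_i' = \tfrac1\m$. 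Consequently $w$ and $w'$ can differ only on the $O(k)$ coordinates outside $S_{k+1}\cap T_{k+1}$.

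Let $i^*$ be the coordinate on which $\yy,\yy'$ differ. For $i=i^*$ I would use only the crude bound $\abs{w_{i^*}-w_{i^*}'}\le\tfrac1\m$ from the first paragraph. For $i\ne i^*$, set $\tilde S_\ell = S_\ell\setminus\{i^*\}$ and $\tilde T_\ell = T_\ell\setminus\{i^*\}$; the intertwining lemma (Lemma~\ref{lem:intertwining-intro}, valid in this regime since $k\le\tfrac{\m}{2e^2\lambda_0}$) gives $\tilde S_\ell\cup\tilde T_\ell\subseteq\tilde S_{\ell+1}\cap\tilde T_{\ell+1}$. Since $i\ne i^*$, the indicator sequences $\ell\mapsto\indicator{i\in S_\ell}$ and $\ell\mapsto\indicator{i\in T_\ell}$ are both nondecreasing and, by intertwining, their ``turn-on'' levels differ by at most one; hence they disagree at a single level $\ell(i)$ if at all, giving $\abs{w_i-w_i'}\le\tfrac1{k\m}$, and at that level $i$ sits in $T_{\ell(i)+1}\setminus T_{\ell(i)}$ (when $S$ turns on first) or $S_{\ell(i)+1}\setminus S_{\ell(i)}$. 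Reorganizing $\sum_{i\ne i^*}\abs{w_i-w_i'}$ by the value of $\ell(i)$, the contributions telescope against $\sum_\ell\abs{T_{\ell+1}\setminus T_\ell} = \abs{T_{2k}}-\abs{T_{k+1}}$ and the symmetric sum for $S$ (with the top level $\ell=2k$ absorbed into $\abs{[\m]\setminus T_{2k}}$, resp.\ $\abs{[\m]\setminus S_{2k}}$); using $\abs{S_{k+1}},\abs{T_{k+1}}\ge\m-k+1$ this controls $\sum_{i\ne i^*}\abs{w_i-w_i'}$, and combining with the $i^*$ term yields $\norm{w-w'}_1\le\tfrac2\m$.

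The main obstacle is precisely the constant-tracking in that final step. The telescoping must be made to close at the endpoints of the averaging window $\{k+1,\dots,2k\}$ — in particular at $\ell=2k$, where there is no $S_{2k+1}$ or $T_{2k+1}$ to absorb a late disagreement — and the single exceptional coordinate $i^*$ must be charged carefully, so that the bound lands on the constant $2$ rather than a larger $O(1)$ (the value $\tfrac2\m$, together with $\norm{w}_\infty\le\tfrac1\m$, is exactly what feeds $\gamma = 16e^2\lambda_0/n$ into Lemma~\ref{lemma:identifiability_good_set}). This is where the strength of the intertwining lemma, and behind it the ``remove one point, lose only a factor $e^{1/k}$ in the outlier threshold'' analysis that requires $k\le\tfrac{\m}{2e^2\lambda_0}$, does the real work.
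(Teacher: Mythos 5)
Your $\ell_\infty$ bound and your two key local observations are correct and match the paper: the weights are averages of $k$ numbers in $\{0,1/\m\}$, and for $i\neq i^*$ the monotone indicator sequences $\ell\mapsto\indicator{i\in S_\ell}$, $\ell\mapsto\indicator{i\in T_\ell}$ can disagree at most once across the window $\{k+1,\dots,2k\}$ by the intertwining property (Lemma~\ref{lemma:family_good_sets}), so $\abs{w_i-w_i'}\le\tfrac{1}{k\m}$.

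The gap is in the counting. You localize the possible disagreements to $[\m]\setminus(S_{k+1}\cap T_{k+1})$ and bound its size by a union of $\abs{[\m]\setminus S_{k+1}}\le k-1$ and $\abs{[\m]\setminus T_{k+1}}\le k-1$, giving up to $2(k-1)$ candidate indices. Your telescoping attempt is consistent with this: it gives $\sum_{\ell}\bigl(\abs{T_{\ell+1}\setminus T_\ell}+\abs{S_{\ell+1}\setminus S_\ell}\bigr)\le(\m-\abs{T_{k+1}})+(\m-\abs{S_{k+1}})\le 2(k-1)$. Either way you arrive at $\sum_{i\ne i^*}\abs{w_i-w_i'}\le\tfrac{2(k-1)}{k\m}$, and adding the $\tfrac1\m$ charge for $i^*$ lands at roughly $\tfrac3\m$, not the claimed $\tfrac2\m$. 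You correctly flag this as ``the main obstacle,'' but the telescoping route does not close it.

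The paper's proof avoids the union bound entirely by using the intertwining one level earlier. Since $\score(\yy)<k$, we have $\abs{S_k}\ge\m-k+1$, so $\abs{S_k\setminus\{i^*\}}\ge\m-k$. By Lemma~\ref{lemma:family_good_sets}, $S_k\setminus\{i^*\}\subseteq T_{k+1}$, and by nesting $S_k\setminus\{i^*\}\subseteq S_{k+1}$; again by nesting this single set sits inside every $S_\ell$ and every $T_\ell$ with $\ell\ge k+1$. Hence every $i\in S_k\setminus\{i^*\}$ has $w_i=w_i'=\tfrac1\m$, and the remaining at most $k$ indices (one of which is $i^*$) are the only candidates. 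The $i^*$ term gives $\le\tfrac1\m$ and the other at most $k$ indices give $\le k\cdot\tfrac{1}{k\m}=\tfrac1\m$, for $\norm{w-w'}_1\le\tfrac2\m$. The point is that $S_k\setminus\{i^*\}$ is a \emph{common} large set for both families; you should use it to localize, rather than intersecting the separately-derived size bounds on $S_{k+1}$ and $T_{k+1}$.
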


\begin{algorithm2e}\label{alg:stable_good_set}
    \SetAlgoLined
    \SetKwInOut{Input}{input}
    
    \Input{data set $\xx=(\xx_1,\ldots, \xx_n)\in \bbR^{n\times d}$, outlier threshold $\lambda_0$, discretization parameter $k\in\bbN$}
    \BlankLine
    $m\gets \lfloor n/2\rfloor$\;
    $\forall i\in [m], \yy_i \gets \frac{1}{\sqrt{2}}\paren{\xx_i - \xx_{i+m}}$\tcc*[r]{pair and rescale}
    \BlankLine
    \For{$\ell=0,1,\ldots,2k$}{
        $S_{\ell}\gets \mathtt{LargestGoodSubset}(\yy, \lambda_\ell)$\tcc*[r]{See Alg~\ref{alg:largest_good_set} and Eq~\eqref{eq:cov:lambda_defs} \ifCOLT, Appendix~\ref{app:stable_covariance_analysis}\fi}
    }
    $\score \gets \min\braces{k, \min_{0\le \ell\le k}\braces{\m - \abs{S_\ell} + \ell}}$\;
    \For{$i=1,\ldots,\m$}{
        $w_i \gets \frac{1}{k\m}\sum_{\ell=k+1}^{2k} \indicator{i\in S_\ell}$\;
    }
    $\hat\Sigma \gets \sum_{i\in[\m]} w_i \cdot \yy_i\yy_i^T$\;
    \KwRet $\hat\Sigma$, $\score$\;
    
    \caption{$\stablecovariance(\xx,\lambda_0,k)$, for nonprivate covariance estimation}
\end{algorithm2e}

We are now ready to prove the main result of this section.
\ifCOLT
    In Appendix~\ref{app:stable_covariance_analysis} we prove Lemma~\ref{lemma:identifiability_good_set}, Lemma~\ref{lemma:output_good_weighted_set}, and Lemma~\ref{lemma:covariance_weights_close}.
    We also we prove that $\score(x)$ is low-sensitivity.
\fi
\begin{proof}[Proof of Lemma~\ref{lemma:covariances_close_two}]
    On adjacent $\xx$ and $\xx'$, we pair and rescale to obtain $\yy$ and $\yy'$ (also adjacent).
    Lemma~\ref{lemma:covariance_weights_close} says that we get vectors $w$ and $w'$ such that $\sum_{i\neq i^*}\abs{w_i - w_i'}\le 1/\m$ and both $\abs{w_{i^*}}$ and $\abs{w_{i^*}'}$ are at most $\frac{1}{\m}$.
    By Lemma~\ref{lemma:output_good_weighted_set}, both $w$ and $w'$ are $\frac{25}{18}\cdot \lambda_{0}$-good weightings of their respective data set.
    \majoredit
    Setting $\rho=\eta = 1/\m$ and $\lambda=\frac{25}{18}\cdot \lambda_{0}$, Lemma~\ref{lemma:identifiability_good_set} gives us
    \begin{equation}
        \psdjump = \lambda (\rho + \eta) \le \frac{25 \lambda_0}{18} \cdot \frac{2}{m} = \psdjumpdef,
    \end{equation}
    applying $\m= \frac n 2$.
    Lemma~\ref{lemma:identifiability_good_set} also lets us bound the trace norms with 
    \begin{equation}
        \frac 3 2 \cdot \frac{\psdjump}{1-\psdjump} \le \frac{25}{3}\cdot\frac{1}{1-\psdjump}\cdot\frac{\lambda_0}{n}. 
    \end{equation}
    This concludes the proof.
    \editdone
\end{proof}

\ifCOLT

\else
    In the remainder of this section, we prove Lemma~\ref{lemma:identifiability_good_set}, Lemma~\ref{lemma:output_good_weighted_set}, and Lemma~\ref{lemma:covariance_weights_close}.
    Finally, in Section~\ref{sec:cov:sensitivity}, we prove that the score returned by $\stablecovariance$ is low-sensitivity.

\begin{algorithm2e}\label{alg:largest_good_set}
    \SetAlgoLined
    \SetKwInOut{Input}{input}
    
    \Input{data set $\yy=(\yy_1,\ldots, \yy_\m)\in \bbR^{\m\times d}$, outlier threshold $\lambda$}
    \tcc{For vector $v$ and singular matrix $A$, define $\norm{A^{-1} v}_2 = + \infty$}
    \BlankLine
    $S \gets [\m]$\;
    \Repeat{$\mathtt{OUT} = \emptyset$}{
        $\mathtt{OUT} \gets \braces{i \in S : \norm{\paren{\frac{1}{\m} \sum_{j\in S} \yy_j \yy_j^T}^{-1/2} \yy_i }_2^2 > \lambda}$\;
        $S \gets S \setminus \mathtt{OUT}$
    }
    \KwRet $S$\;
    \caption{$\mathtt{LargestGoodSubset}(\yy,\lambda)$, subroutine for covariance estimation}
\end{algorithm2e}

\subsection{Families of Largest Good Subsets}

Before proceeding to the proofs of Lemmas~\ref{lemma:output_good_weighted_set} and~\ref{lemma:covariance_weights_close}, we establish a few facts about the operation of Algorithm~\ref{alg:stable_good_set}.

Algorithm~\ref{alg:stable_good_set}, $\stablecovariance$, takes as input a discretization parameter $k\in\bbN$.
For each $\ell=0,1,\ldots,2k$, 
\majoredit
it computes an outlier threshold
\begin{equation}
    \lambda_\ell = \lambda_0 \cdot \frac{1}{1-\ell/10k}.
    \label{eq:cov:lambda_defs}
\end{equation}
This is approximately $e^{\ell/10k}\lambda_0$; we can see that these thresholds grow slowly in the $\ell\le 2k$ regime we consider.
\editdone

For each outlier threshold, $\mathtt{StableCovariance}$ calls $\mathtt{LargestGoodSubset}$ to find a set $S_{\ell}\subseteq [m]$, the largest $\lambda_\ell$-good subset.
Denote by $\braces{S_{\ell}}_{\ell=0}^{2k}$ this family of subsets.
In this subsection we prove that each $S_\ell$ is a unique largest good subset and discuss some properties of families of sets of this form.
\begin{lemma}\label{lemma:unique_largest}
    For any data set $\yy$ and $\lambda>0$, there is a largest $\lambda$-good subset $S^*$ that is unique and satisfies the following property: if $S_0$ is a $\lambda$-good subset, then $S_0\subseteq S^*$.
    Furthermore, Algorithm~\ref{alg:largest_good_set} returns $S^*$.
\end{lemma}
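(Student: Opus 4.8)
The statement I want to prove is Lemma~\ref{lemma:unique_largest}: for any data set $\yy$ and any $\lambda > 0$, there is a unique largest $\lambda$-good subset $S^*$ (containing every other $\lambda$-good subset), and $\mathtt{LargestGoodSubset}$ returns it. The natural strategy is to establish a \emph{monotonicity} property of good subsets: if $S \subseteq S'$ and $S'$ is $\lambda$-good, then $S$ is $\lambda$-good. This follows because adding points to $S$ only increases $\frac{1}{\m}\sum_{j \in S}\yy_j\yy_j^T$ in the Loewner order, which makes $\paren{\frac1\m\sum_{j\in S}\yy_j\yy_j^T}^{-1}$ smaller in the Loewner order, hence $\norm{\paren{\frac1\m\sum_{j\in S}\yy_j\yy_j^T}^{-1/2}\yy_i}_2^2$ smaller for every $i$; if it was $\le \lambda$ in the larger set it stays $\le\lambda$ in the smaller one. (One has to be slightly careful about invertibility: with the convention $\norm{A^{-1}v}_2 = +\infty$ for singular $A$, if the larger Gram matrix is singular then $S'$ can only be $\lambda$-good if all the $\yy_i$ with $i \in S'$ lie in its range, and a short argument handles this degenerate case — or one simply notes a good subset with singular Gram matrix forces the relevant vectors to be zero.)

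\textbf{Uniqueness of a largest good subset.} Given monotonicity, I would argue as follows. Let $S$ and $T$ be any two $\lambda$-good subsets; I claim $S \cup T$ is $\lambda$-good. For $i \in S$, I need $\norm{\paren{\frac1\m\sum_{j\in S\cup T}\yy_j\yy_j^T}^{-1/2}\yy_i}_2^2 \le \lambda$; since $\frac1\m\sum_{j\in S\cup T}\yy_j\yy_j^T \succeq \frac1\m\sum_{j\in S}\yy_j\yy_j^T$, the left side is at most the corresponding quantity for $S$, which is $\le \lambda$ because $S$ is good. Symmetrically for $i \in T$. Hence $S\cup T$ is good. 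It follows immediately that the union $S^*$ of \emph{all} $\lambda$-good subsets (a finite collection, since there are at most $2^\m$ subsets) is itself $\lambda$-good, and by construction it contains every $\lambda$-good subset; it is therefore the unique largest one.

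\textbf{Correctness of the greedy algorithm.} It remains to show $\mathtt{LargestGoodSubset}(\yy,\lambda)$ returns $S^*$. The algorithm starts with $S = [\m]$ and iteratively deletes the set $\mathtt{OUT}$ of current outliers; it terminates (the sets are strictly shrinking and finite) with some $\hat S$ having no outliers, i.e.\ $\hat S$ is $\lambda$-good, so $\hat S \subseteq S^*$. For the reverse inclusion, I would prove by induction on the iteration count that $S^*$ is always a subset of the current set $S$. Initially $S^* \subseteq [\m]$. For the inductive step, suppose $S^* \subseteq S$ at the start of an iteration; I must show no element of $S^*$ is placed in $\mathtt{OUT}$, i.e.\ for every $i \in S^*$, $\norm{\paren{\frac1\m\sum_{j\in S}\yy_j\yy_j^T}^{-1/2}\yy_i}_2^2 \le \lambda$. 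Since $S^* \subseteq S$, we have $\frac1\m\sum_{j\in S}\yy_j\yy_j^T \succeq \frac1\m\sum_{j\in S^*}\yy_j\yy_j^T$, so the quantity is at most the corresponding one computed with $S^*$ in place of $S$, which is $\le \lambda$ because $S^*$ is $\lambda$-good. Hence $S^*$ survives the deletion, completing the induction, so $S^* \subseteq \hat S$, and combined with $\hat S \subseteq S^*$ we get $\hat S = S^*$.

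\textbf{Main obstacle.} The only real subtlety is the singular-matrix bookkeeping: the Loewner-monotonicity of $A \mapsto \norm{A^{-1/2}v}_2^2$ is clean when all Gram matrices in sight are positive definite, but the algorithm and definition allow singular ones (with the $+\infty$ convention). I expect to handle this by observing that if $i \in S$ and $\frac1\m\sum_{j\in S}\yy_j\yy_j^T$ is singular but $\norm{\cdot^{-1/2}\yy_i}_2^2$ is to be finite (hence $\le\lambda$), then $\yy_i$ must lie in the range of the Gram matrix, and on that range the monotonicity argument goes through verbatim using pseudoinverses; alternatively, one restricts attention to the subspace spanned by $\{\yy_j : j \in S\}$ throughout. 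Everything else is a routine application of the fact that $X \succeq Y \succ 0$ implies $Y^{-1} \succeq X^{-1}$ and hence $v^T X^{-1} v \le v^T Y^{-1} v$.
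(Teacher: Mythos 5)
Your two actual arguments — the union-of-good-subsets argument for uniqueness and the algorithmic invariant $S^* \subseteq S$ — are correct and match the paper's proof in substance (the paper combines them: it shows the invariant and then notes that since the output is itself a good subset containing every other good subset, it is the unique largest one). The bookkeeping around singular Gram matrices is also handled compatibly with the paper's approach, which simply observes that $\Sigma_{S_0}$ invertible and $\Sigma_S \succeq \Sigma_{S_0}$ imply $\Sigma_S$ invertible.

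However, the ``monotonicity'' property you announce as the plan in your opening paragraph is \emph{false} as stated. You claim: if $S \subseteq S'$ and $S'$ is $\lambda$-good, then $S$ is $\lambda$-good. This does not follow from the chain of Loewner inequalities you write down. Passing from $S'$ to $S$ \emph{shrinks} the Gram matrix, which \emph{inflates} the Mahalanobis norm $\norm{\Sigma^{-1/2}\yy_i}_2^2$, so points of $S$ that were inliers for $\Sigma_{S'}$ may become outliers for $\Sigma_S$; the inequality $\norm{\Sigma_{S'}^{-1/2}\yy_i}_2^2 \le \norm{\Sigma_{S}^{-1/2}\yy_i}_2^2$ gives a lower bound on the quantity you would need to control, not an upper bound. (This failure mode is precisely what the paper's Figure~\ref{fig:bivariate_good_subset} illustrates, and it is why the paper needs Claim~\ref{claim:remove_one_from_good_subset} with an inflated threshold $e^{2\lambda/\m}\lambda$.) Fortunately your subsequent paragraphs never invoke this false ``monotonicity''; both the union argument and the invariant argument use the correct direction — a point that is an inlier with respect to a subset remains an inlier with respect to any superset's Gram matrix — so the proof itself is sound. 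You should simply delete or correct the opening claim, since as written it misdescribes what the later arguments actually establish.
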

\begin{proof}
    Observe that Algorithm~\ref{alg:largest_good_set} either returns a $\lambda$-good subset or the empty set.
    We will show that, if $\yy$ has a $\lambda$-good set $S_0$, then the set returned by Algorithm~\ref{alg:largest_good_set} contains $S_0$.
    This finishes the proof: for any two distinct $\lambda$-good sets, the output contains their union and thus must be strictly larger than both.

    Suppose $\yy$ has a $\lambda$-good set $S_0$.
    We prove that $S_0 \subseteq S$ is an algorithmic invariant, where $S$ denotes the set operated upon in Algorithm~\ref{alg:largest_good_set}.
    At initialization we have $S_0\subseteq S=[\m]$.
    Pick an index $j\in S_0$ and consider a value of $S$ at the start of the \textbf{repeat-until} block of Algorithm~\ref{alg:largest_good_set}.
    We will show that $j$ is not added to $\mathtt{OUT}$.
    Let $\Sigma_S = \frac{1}{\m}\sum_{i\in S}\yy_i \yy_i^T$ and similarly define $\Sigma_{S_0}$.
    
    Removing points cannot increase the matrix in the positive semidefinite order, so
    \begin{align}
        \Sigma_S = \frac{1}{\m}\sum_{i\in S} \yy_i \yy_i^T \succeq \frac{1}{\m} \sum_{i\in S_0} \yy_i\yy_i^T = \Sigma_{S_0}.
    \end{align}
    Since the $\Sigma_{S_0}$ is invertible (by the goodness assumption), $\Sigma_S$ is as well. 
    So we also know that $\Sigma_S^{-1}\preceq \Sigma_{S_0}^{-1}$, and thus in turn
    \begin{align}
        \norm{ \yy_j }_{\Sigma_{S}}^2 \le 
            \norm{ \yy_j }_{\Sigma_{S_0}}^2.
    \end{align}
    The right term is at most $\lambda$ by the $\lambda$-goodness assumption on $S_0$, so $j$ is not removed as an outlier.

    To finish, note that Algorithm~\ref{alg:largest_good_set} only terminates when it has found a $\lambda$-good subset.
\end{proof}

The following lemma shows that families of good subsets computed on \emph{adjacent} data sets are closely interwoven.
\begin{lemma}\label{lemma:family_good_sets}
    Let $\lambda_0>0$ and $k\in\bbN$. 
    Let $\yy$ and $\yy'$ be adjacent data sets of size $m$ that differ in index $i^*$.
    Let $\braces{S_{\ell}}_{\ell=0}^{2k}$ be the family of sets where $S_\ell$ is the largest $\lambda_\ell$-good subset for $\yy$ (see Equation~\eqref{eq:cov:lambda_defs}).
    Let $\braces{T_{\ell}}_{\ell=0}^{2k}$ be the corresponding family of largest $\lambda_\ell$-good sets computed on $\yy'$.
    Assume $m \ge 10 k \lambda_{0}$.
    We have the following properties:
    \begin{enumerate}
        \item[(i)] For any $0\le \ell\le \ell'\le 2k$, we have $S_\ell\subseteq S_{\ell'}$.
        \item[(ii)] For any $0\le \ell < 2k$, we have
            $S_\ell\setminus\braces{i^*}\subseteq T_{\ell+1}$.
    \end{enumerate}
\end{lemma}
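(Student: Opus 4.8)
The plan is to reduce both parts to Lemma~\ref{lemma:unique_largest}, which guarantees that $S_j$ (resp.\ $T_j$) is the largest $e^{j/k}\lambda_0$-good subset of $\yy$ (resp.\ $\yy'$) and contains every other such subset. The only auxiliary fact needed is the trivial monotonicity consequence of Definition~\ref{def:good_weighting}: a $\lambda$-good subset is $\lambda'$-good for every $\lambda'\ge\lambda$, since shrinking $\lambda$ only makes the defining inequality harder.

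\textbf{Part (i).} For $\ell\le\ell'$ the threshold satisfies $e^{\ell/k}\lambda_0\le e^{\ell'/k}\lambda_0$, so the $e^{\ell/k}\lambda_0$-good subset $S_\ell$ of $\yy$ is also $e^{\ell'/k}\lambda_0$-good for $\yy$. Since $S_{\ell'}$ contains every $e^{\ell'/k}\lambda_0$-good subset of $\yy$ by Lemma~\ref{lemma:unique_largest}, we conclude $S_\ell\subseteq S_{\ell'}$ (the case $S_\ell=\emptyset$ being trivial).

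\textbf{Part (ii).} The key claim is that $\tilde S_\ell\defeq S_\ell\setminus\braces{i^*}$ is an $e^{(\ell+1)/k}\lambda_0$-good subset of $\yy'$; applying Lemma~\ref{lemma:unique_largest} to $T_{\ell+1}$ then gives $\tilde S_\ell\subseteq T_{\ell+1}$. Because $i^*\notin\tilde S_\ell$ and $\yy,\yy'$ agree off index $i^*$, the points $\{\yy_j\}_{j\in\tilde S_\ell}$ and the matrix $\Sigma_{\tilde S_\ell}\defeq\tfrac1\m\sum_{j\in\tilde S_\ell}\yy_j\yy_j^T$ are unchanged when we swap $\yy$ for $\yy'$, so it suffices to verify goodness with respect to $\yy$. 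If $i^*\notin S_\ell$ then $\tilde S_\ell=S_\ell$ is already $e^{\ell/k}\lambda_0$-good; if $\tilde S_\ell=\emptyset$ the claim is vacuous. So I would focus on $i^*\in S_\ell$ with $\tilde S_\ell\ne\emptyset$: write $\lambda\defeq e^{\ell/k}\lambda_0<e^2\lambda_0$ and $\Sigma\defeq\Sigma_{S_\ell}\succ 0$, so that $\Sigma_{\tilde S_\ell}=\Sigma-\tfrac1\m\yy_{i^*}\yy_{i^*}^T$. Setting $a\defeq\tfrac1\m\norm{\Sigma^{-1/2}\yy_{i^*}}_2^2\le\lambda/\m$, the hypothesis $k\le\tfrac{\m}{2e^2\lambda_0}$ forces $a<\tfrac1{2k}\le\tfrac12$, which in particular makes $\Sigma_{\tilde S_\ell}$ positive definite. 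Then the Sherman--Morrison formula gives, for every $i\in\tilde S_\ell$,
\[
\norm{\Sigma_{\tilde S_\ell}^{-1/2}\yy_i}_2^2 = \yy_i^T\Sigma^{-1}\yy_i + \frac{(\yy_i^T\Sigma^{-1}\yy_{i^*})^2}{\m(1-a)} \le \yy_i^T\Sigma^{-1}\yy_i\paren{1+\frac{\lambda}{\m(1-a)}} \le \lambda\paren{1+\frac1k} \le \lambda e^{1/k},
\]
where the middle inequality uses Cauchy--Schwarz together with $\norm{\Sigma^{-1/2}\yy_i}_2^2,\norm{\Sigma^{-1/2}\yy_{i^*}}_2^2\le\lambda$ (both $i,i^*\in S_\ell$ and $S_\ell$ is $\lambda$-good), and $\tfrac{\lambda}{\m(1-a)}\le\tfrac1{2k}\cdot 2=\tfrac1k$. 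Since $\lambda e^{1/k}=e^{(\ell+1)/k}\lambda_0$, the claim follows.

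There is no real obstacle; the part requiring care is the constant bookkeeping around $a$ — ensuring $a<1$ so that $\Sigma_{\tilde S_\ell}$ remains invertible and the Sherman--Morrison inversion is valid, and ensuring the inflation factor $1+\tfrac{\lambda}{\m(1-a)}$ does not exceed $e^{1/k}$. Both reduce to the single hypothesis $k\le\tfrac{\m}{2e^2\lambda_0}$ via $\lambda<e^2\lambda_0$; conceptually the content is just the observation (foreshadowed in the introduction) that deleting one point from a $\lambda$-good subset inflates every squared Mahalanobis norm by at most a factor $e^{1/k}$, which is precisely why the outlier thresholds were chosen to be spaced by ratio $e^{1/k}$.
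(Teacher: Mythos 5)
Your proof is correct and follows the same high-level architecture as the paper: part (i) is the obvious monotonicity of goodness in $\lambda$ plus the maximality property of Lemma~\ref{lemma:unique_largest}, and part (ii) reduces to showing that deleting one point from a $\lambda$-good subset yields an $e^{1/k}\lambda$-good subset, after which the off-$i^*$ agreement of $\yy$ and $\yy'$ plus Lemma~\ref{lemma:unique_largest} finishes.

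The only real difference is in how the one-point-deletion claim is established. The paper isolates it as Claim~\ref{claim:remove_one_from_good_subset} and argues in the positive semidefinite order: $\tfrac{1}{\m}\yy_{i^*}\yy_{i^*}^T \preceq \tfrac{\lambda}{\m}\Sigma_{S_\ell}$ (since $\norm{\Sigma_{S_\ell}^{-1/2}\yy_{i^*}}_2^2\le\lambda$) gives $\Sigma_{\tilde S_\ell}\succeq(1-\lambda/\m)\Sigma_{S_\ell}$, hence $\Sigma_{\tilde S_\ell}^{-1}\preceq\tfrac{1}{1-\lambda/\m}\Sigma_{S_\ell}^{-1}$, and then the uniform inflation factor $\tfrac{1}{1-\lambda/\m}\le e^{2\lambda/\m}\le e^{1/k}$ applies to every $\norm{\yy_i}_{\Sigma_{\tilde S_\ell}}^2$. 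You instead compute $\Sigma_{\tilde S_\ell}^{-1}$ explicitly via Sherman--Morrison and bound the cross term $(\yy_i^T\Sigma^{-1}\yy_{i^*})^2$ by Cauchy--Schwarz. The two routes produce the same bound $\lambda/(1-\lambda/\m)$ and both reduce the parameter bookkeeping to the single hypothesis $k\le\m/(2e^2\lambda_0)$; the PSD-order argument is a touch slicker (no explicit inversion, no Cauchy--Schwarz, and it bounds the whole quadratic form at once rather than term-by-term), but your version is equally valid and arguably makes the mechanism more concrete. One small stylistic note: the paper's factoring out of Claim~\ref{claim:remove_one_from_good_subset} lets it be reused, whereas you inline it; that's a matter of taste here.
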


Before we prove Lemma~\ref{lemma:family_good_sets}, we will prove a statement relating to the stability of good subsets (on a single data set $\yy$) under the removal of a index.
If we remove an index $j$ from a $\lambda$-good subset $S$, the covariance may shrink and the resulting subset may no longer be $\lambda$-good.
However, we can show that the resulting set is $\lambda'$-good, where $\lambda'$ is larger than $\lambda$ by a multiplicative factor that depends on $\lambda$ and $\m$.
\begin{claim}\label{claim:remove_one_from_good_subset}
    Let $\lambda>0$ and $m > \lambda$.
    Let $\yy$ be a data set of size $\m$ and let $S$ be a $\lambda$-good subset for $\yy$.
    For any $j\in[\m]$, $S\setminus\braces{j}$ is a $\frac{1}{1-\lambda/m}\cdot\lambda$-good subset for $\yy$.
\end{claim}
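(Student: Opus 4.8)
The plan is to view $S\setminus\braces{j}$ as a rank-one downdate of $S$ and to control the resulting weighted covariance in the Loewner order. If $j\notin S$ there is nothing to prove, since then $S\setminus\braces{j}=S$ is $\lambda$-good and hence also $e^{2\lambda/\m}\lambda$-good; so I would assume $j\in S$. Write $\Sigma_S=\frac{1}{\m}\sum_{i\in S}\yy_i\yy_i^T$, which is positive definite because $S$ is $\lambda$-good, and set $\Sigma'=\frac{1}{\m}\sum_{i\in S\setminus\braces{j}}\yy_i\yy_i^T=\Sigma_S-\frac{1}{\m}\yy_j\yy_j^T$.

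First I would show that deleting $j$ shrinks the covariance by at most a factor $1-\lambda/\m$. Since $S$ is $\lambda$-good, $\yy_j^T\Sigma_S^{-1}\yy_j=\norm{\yy_j}_{\Sigma_S}^2\le\lambda$; conjugating by $\Sigma_S^{-1/2}$ and using $vv^T\preceq\norm{v}_2^2\,\bbI$, this upgrades to the matrix inequality $\yy_j\yy_j^T\preceq\lambda\,\Sigma_S$, whence $\Sigma'\succeq(1-\tfrac{\lambda}{\m})\Sigma_S$. The hypothesis $\m\ge2\lambda$ keeps this factor at least $1/2$, so $\Sigma'\succ0$ and, reversing the order under inversion, $(\Sigma')^{-1}\preceq\tfrac{1}{1-\lambda/\m}\,\Sigma_S^{-1}$.

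Next I would convert this into the outlier bound. For any $i\in S\setminus\braces{j}$, the operator inequality gives $\norm{\yy_i}_{\Sigma'}^2=\yy_i^T(\Sigma')^{-1}\yy_i\le\tfrac{1}{1-\lambda/\m}\,\norm{\yy_i}_{\Sigma_S}^2\le\tfrac{\lambda}{1-\lambda/\m}$, where the last step again uses $\lambda$-goodness of $S$. It then remains to verify the elementary inequality $\tfrac{1}{1-t}\le e^{2t}$ at $t=\lambda/\m\le1/2$, which follows from $-\ln(1-t)=\sum_{k\ge1}t^k/k\le\sum_{k\ge1}t^k=\tfrac{t}{1-t}\le2t$. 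Hence $\norm{\yy_i}_{\Sigma'}^2\le e^{2\lambda/\m}\lambda$ for every $i\in S\setminus\braces{j}$, which is exactly the claim.

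I do not expect a genuine obstacle here; the only steps needing a little care are the passage from the scalar bound $\yy_j^T\Sigma_S^{-1}\yy_j\le\lambda$ to the Loewner inequality $\yy_j\yy_j^T\preceq\lambda\,\Sigma_S$, and the scalar logarithm estimate, both routine. The point of proving exactly this statement is that it powers Lemma~\ref{lemma:family_good_sets}: removing the single differing index $i^*$ from a largest $e^{\ell/k}\lambda_0$-good subset of $\yy$ yields (by this claim with $\lambda=e^{\ell/k}\lambda_0$) an $e^{2\lambda/\m}\lambda$-good subset of the common points, and the hypothesis $k\le\m/(2e^2\lambda_0)$ ensures $e^{2\lambda/\m}\lambda\le e^{(\ell+1)/k}\lambda_0$, so by Lemma~\ref{lemma:unique_largest} this set sits inside $T_{\ell+1}$.
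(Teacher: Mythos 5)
Your argument matches the paper's proof essentially step for step: reduce to $j\in S$, write $\Sigma'=\Sigma_S-\frac{1}{\m}\yy_j\yy_j^T$, upgrade the scalar bound $\norm{\yy_j}_{\Sigma_S}^2\le\lambda$ to the Loewner inequality $\yy_j\yy_j^T\preceq\lambda\Sigma_S$ so that $\Sigma'\succeq(1-\lambda/\m)\Sigma_S$, invert, and conclude with the elementary bound $\tfrac{1}{1-t}\le e^{2t}$ for $t\le 1/2$. The only cosmetic difference is that the paper folds the exponential bound into the Loewner inequality for $\Sigma_{S'}^{-1}$ before applying it pointwise, whereas you defer it to the end; the content is identical.
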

\begin{proof}
    Let $S' = S\setminus\braces{j}$.
    Assume $j\in S$, otherwise we are done.
    By construction, we have $\Sigma_{S'} = \Sigma_S - \frac{1}{\m} \yy_j\yy_j^T$.
    We claim that $\Sigma_{S'} \succeq \Sigma_S - (\lambda/\m) \Sigma_S$.
    This is equivalent to $\frac{1}{\m} \yy_j\yy_j^T \preceq (\lambda/\m) \Sigma_S$, or
    \begin{align}
        \Sigma_S^{-1/2} \yy_j\yy_j^T \Sigma_S^{-1/2} \preceq \lambda \bbI.
    \end{align}
    This holds because $\norm{\Sigma_S^{-1/2} \yy_j}_2^2 \le \lambda$ (since $j\in S$).
    By assumption $\frac{\lambda}{\m}<1$, so
    \begin{align}
        \Sigma_{S'}^{-1} &\preceq \frac{1}{1-\lambda/\m} \cdot \Sigma_S^{-1} 
    \end{align}
    For any $i\in S'\subseteq S$ we had $\norm{\yy_i}_{\Sigma_S}^2\le \lambda$, so we also have $\norm{\yy_i}_{\Sigma_{S'}}^2 \le \frac{1}{1-\lambda/m}\cdot \lambda$.
    \editdone
\end{proof}

\begin{proof}[Proof of Lemma~\ref{lemma:family_good_sets}]
    \majoredit
    To prove statement $(i)$, pick two indices $\ell\le \ell'$.
    We know that $S_\ell$ is a $\lambda_{\ell'}$-good subset for $\yy$, because increasing the outlier threshold cannot cause any points in $S_\ell$ to become outliers.
    By construction, $S_{\ell'}$ is the largest $\lambda_{\ell'}$-good subset for $\yy$.
    What's more, by Lemma~\ref{lemma:unique_largest}, $S_{\ell'}$ contains all $\lambda_{\ell'}$-good subsets.
    In particular, $S_\ell\subseteq S_{\ell'}$.
    
    To prove statement (ii), consider $\smini$, where data sets $\yy$ and $\yy'$ differ only in index $i^*$.
    We know that $S_\ell$ is a $\lambda_{\ell}$-good subset for $\yy$, so by Claim~\ref{claim:remove_one_from_good_subset} we know that $\smini$ is $\lambda'$-good for 
    \begin{align} 
        \lambda' &\le  \frac{1}{1-\lambda_\ell/m}\cdot \lambda_\ell.
    \end{align}
    The value of $\lambda_{\ell+1}$, in Equation~\eqref{eq:cov:lambda_defs}, is set precisely so that $\lambda'\le \lambda_{\ell+1}$.
    (This follows from direct calculation: plug in the definition and apply $\frac{\lambda_0}{m}\le \frac{1}{10k}$.)
    Thus $\smini$ is $\lambda_{\ell+1}$-good.
    \editdone
    
    The next key fact is that, on the indices $i$ in $\smini$, we have $\yy_i=\yy_i'$.
    Therefore $\smini$ is $\lambda_{\ell+1}$-good for $\yy'$.
    Since, by Lemma~\ref{lemma:unique_largest}, $T_{\ell+1}$ contains all such sets, $\smini\subseteq T_{\ell+1}$.
\end{proof}

\subsection{The Weights Are Good (Proofs of Lemma~\ref{lemma:output_good_weighted_set})}

Lemma~\ref{lemma:output_good_weighted_set} says that, when $\stablecovariance$ computes $\score<k$, the weights it produces are in fact a good weighting.

\begin{proof}[Proof of Lemma~\ref{lemma:output_good_weighted_set}]
    For $\ell\in\braces{0,\ldots,2k}$, let $S_\ell$ denote the largest $\lambda_\ell$-good subset for $\yy$ (see Equation~\eqref{eq:cov:lambda_defs}).
    Let $\Sigma_{\ell} = \frac{1}{\m}\sum_{i\in S_\ell} \yy_i \yy_i^T$.
    Algorithm~\ref{alg:stable_good_set} computes weights $w_i = \frac{1}{km}\sum_{\ell=k+1}^{2k} \indicator{i\in S_\ell}$,  meaning that we can rewrite the released $\hat\Sigma$ as 
    \begin{align}
        \hat\Sigma = \sum_{i=1}^{\m} w_i \cdot \yy_i\yy_i^T = \frac{1}{k} \sum_{\ell=k+1}^{2k} \Sigma_{\ell}.
    \end{align}
    The larger sets contain the smaller ones, so we have a lower bound on $\hat\Sigma$ in terms of $\Sigma_{k+1}$:
    \begin{align}
        \hat\Sigma \succeq \frac{1}{k}\sum_{\ell=0}^{2k} \Sigma_{k+1} = \Sigma_{k+1}.
    \end{align}
    Note that all of these sets are $\lambda_{2k}$-good.
    
    Now, any two values $\ell$ and $\ell'$ such that $k+1\le \ell\le \ell'\le 2k$ correspond to sets $S_{\ell}, S_{\ell'}$ that differ in at most $k$ elements: $S_{\ell}$ is a subset of $S_{\ell'}$, $S_{\ell'}$ has size at most $\m$, and $S_{\ell}$ has size at least $\m-k+1>\m-k$.
    (To see this last fact, observe that $\score<k$ means there is an $\ell^*\le k$ such that $n - \abs{S_{\ell^*}} + \ell^* < k$, which implies $\abs{S_{\ell^*}}>n-k$.
    Then note that $S_{\ell^*}\subseteq S_{\ell}$, since $\ell^*\le k \le \ell$.)
    By the identifiability lemma for good sets, Lemma~\ref{lemma:identifiability_good_set}, we have $\Sigma_{\ell} \succeq (1-\gamma) \Sigma_{\ell'}$ for $\gamma = k \lambda_{2k}/\m$. 
    This in turn implies an upper bound on the squared Mahalanobis distance for $\Sigma_{\ell}$.

    Pick any point $i\in \bigcup_{\ell=k+1}^{2k} S_{\ell}$. 
    We know that $i \in S_{2k}$, so $\norm{\yy_i}_{\Sigma_{2k}}^2 \le \lambda_{2k}$.
    Using our above calculations, we have 
    \begin{align}
        \norm{\yy_i}_{\hat\Sigma}^2 &\le \norm{\yy_i}_{\Sigma_{{k+1}}}^2 \\
            &\le \frac{1}{1-k\lambda_{2k}/m}\cdot \norm{\yy_i}_{\Sigma_{2k}}^2 \\
            &\le \frac{\lambda_{2k}}{1 - k\lambda_{2k}/m}.
    \end{align}
    We have $\lambda_{2k}\le \frac 5 4 \lambda_0$ and $\frac{k \lambda_0}{m} \le \frac{1}{10}$ by assumption.
    Plugging these in concludes the proof.
\end{proof}

\subsection{The Weights Are Stable (Proof of Lemma~\ref{lemma:covariance_weights_close})}

Lemma~\ref{lemma:covariance_weights_close} says that, on adjacent inputs, $\stablecovariance$ produces nearly identical weight vectors (assuming the scores are not large).

\begin{proof}[Proof of Lemma~\ref{lemma:covariance_weights_close}]
    For $\ell\in\braces{0,\ldots,2k}$, let $S_\ell$ denote the largest $\lambda_\ell$-good subset for $\yy$ and let $T_\ell$ denote the largest $\lambda_\ell$-good subset for $\yy'$.
    Recall the notation from Algorithm~\ref{alg:stable_good_set}: for each $i$ we compute counts $c_i = \sum_{\ell=k+1}^{2k} \indicator{i\in S_{\ell}}$ and let $c_i +\Delta_i= \sum_{\ell=k+1}^{2k} \indicator{i\in T_{\ell}}$.
    Recall that $\yy$ and $\yy'$ differ in index $i^*$.
    By Lemma~\ref{lemma:family_good_sets}, for all $i\neq i^*$, we know that $\Delta_i\in\braces{-1,0,+1}$.

    We can bound the number of indices $i\neq i^*$ that have nonzero $\Delta_i$.
    Because the score on $\yy$ is less than $k$, the set $S_{k}$ has size at least $\m-k+1$.
    By Lemma~\ref{lemma:family_good_sets}, for every $\ell>k$ we have $S_{k}\setminus\braces{i^*}\subseteq S_{\ell}$ and $S_{k}\setminus\braces{i^*}\subseteq T_{\ell}$, so for all the indices $i\in S_{k}\setminus\braces{i^*}$ we compute weight exactly $\frac{1}{\m}$.
    So the number of indices $i\neq i^*$ with $\Delta_i\neq 0$ is at most $k$.
    Thus
    \begin{align}
         \sum_{i\in [\m]\setminus\braces{i^*}}\abs{w_i - w_i'}
            &=  \sum_{i\in [\m]\setminus\braces{i^*}}\abs{\frac{c_i}{k\m} - \frac{c_i+\Delta_j}{k\m}} \\
            &= \sum_{i\in [\m]\setminus\braces{i^*}}\abs{\frac{\Delta_i}{k\m}} \\
            &\le k\cdot \frac{1}{k\m}.
    \end{align}
    Finally, note that all weights (in particular, $w_{i^*}$ and $w_{i^*}'$) are at most $1/\m$ by construction. 
\end{proof}

\subsection{The Score Has Low Sensitivity}\label{sec:cov:sensitivity}

In this subsection we introduce the function $g(\cdot)$ that $\stablecovariance$ computes to determine the value of $\score$, which it returns.
We prove that $\score$ is low-sensitivity, which will allow us to use it as an input to $\PTR$, our propose-test-release subroutine.

\begin{definition}\label{def:g_scaled_size}
    Fix data set $\yy$, outlier threshold $\lambda_0>0$, and discretization parameter $k\in\bbN$.
    For $\ell\in\braces{0,1,\ldots,k}$, let $S_\ell$ denote the largest $\lambda_\ell$-good subset for $\yy$, with $S_\ell=\emptyset$ if no good subset exists.
    Define
    \begin{itemize}
        \item $f(\yy) \defeq \min_{\ell \in \braces{0,\ldots,k}} \braces{\m -\abs{S_\ell} + \ell}$ and 
        \item $g(\yy) \defeq \min \braces{f(\yy), k}$.
    \end{itemize}
\end{definition}

The function $f(\yy)$ is small when there is $\lambda$ not too much larger than $\lambda_0$ that yields a large set $S_{\lambda}$. 
In particular, when a data set $\yy$ has $S=[\m]$ as a $\lambda_0$-good subset, then $f(\yy)=g(\yy)=0$.

For additional intuition, recall that a key subroutine in BGSUZ was computing the Hamming distance to the space of data sets with no outliers.
Our score function approximates this quantity.
We make this precise in the following claim, which is stated to aid the reader and not directly used elsewhere.
\begin{claim}\label{claim:hamming_distance}
    Let $\mathfrak{S}_{\lambda}$ be the space of size-$\m$ data sets where $[\m]$ is a $\lambda$-good subset. 
    Let $d(y,\mathfrak{S}_{\lambda}) = \min_{z\in\mathfrak{S}_{\lambda}} d(y,z)$ denote the Hamming distance to this space.
    If we compute $g(y) < k$, then we know that $d(y,\mathfrak{S}_{e\lambda_0})\le g(y)\le 2\cdot d (y,\mathfrak{S}_{\lambda_0})$.
\end{claim}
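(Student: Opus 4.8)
The plan is to prove the two inequalities separately, working throughout under this section's standing assumption $k \le \m/(2e^2\lambda_0)$. I will write $S_\ell$ for the largest $e^{\ell/k}\lambda_0$-good subset of $\yy$ (as in Definition~\ref{def:g_scaled_size} and Lemma~\ref{lemma:unique_largest}) and, for $S\subseteq[\m]$, $\Sigma_S = \tfrac1\m\sum_{i\in S}\yy_i\yy_i^T$.

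For the lower bound $d(\yy,\mathfrak{S}_{e\lambda_0}) \le g(\yy)$, I would first use $g(\yy) < k$ to conclude $g(\yy) = f(\yy)$, attained at some $\ell^* \in \{0,\dots,k\}$ with $S_{\ell^*} \ne \emptyset$ (an empty set would contribute value $\ge \m \ge k$); write $r = \m - \abs{S_{\ell^*}}$, so $g(\yy) = r + \ell^*$. Then I would exhibit an explicit $z \in \mathfrak{S}_{e\lambda_0}$ within Hamming distance $r \le g(\yy)$ of $\yy$: take $z$ equal to $\yy$ on $S_{\ell^*}$ and equal to a fixed point $\yy_{i_0}$, $i_0\in S_{\ell^*}$, on the remaining $r$ coordinates. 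The verification is short: the covariance of $z$ dominates $\Sigma_{S_{\ell^*}}\succ 0$ (we only added rank-one terms), so it is invertible with inverse $\preceq \Sigma_{S_{\ell^*}}^{-1}$, and since every point of $z$ is a copy of some $\yy_j$ with $j\in S_{\ell^*}$, its squared Mahalanobis norm in $z$'s covariance is at most $\norm{\yy_j}_{\Sigma_{S_{\ell^*}}}^2 \le e^{\ell^*/k}\lambda_0 \le e\lambda_0$, using $\ell^*\le k$. Hence $[\m]$ is $e\lambda_0$-good for $z$, i.e.\ $z \in \mathfrak{S}_{e\lambda_0}$, and $d(\yy,\mathfrak{S}_{e\lambda_0}) \le d(\yy,z) \le g(\yy)$.

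For the upper bound $g(\yy) \le 2\,d(\yy,\mathfrak{S}_{\lambda_0})$, set $t = d(\yy,\mathfrak{S}_{\lambda_0})$; if $t \ge k$ then $g(\yy)\le k \le t \le 2t$ (this also covers $\mathfrak{S}_{\lambda_0}=\emptyset$), so assume $t<k$. Pick $z \in \mathfrak{S}_{\lambda_0}$ realizing the distance and let $B$, with $\abs{B}\le t$, be \emph{exactly} the set of coordinates on which $\yy$ and $z$ differ. Starting from the $\lambda_0$-good subset $[\m]$ of $z$, I would peel off the elements of $B$ one at a time and invoke Claim~\ref{claim:remove_one_from_good_subset} at each step, proving by induction on the number $j$ of removals that the remaining set is $e^{j/k}\lambda_0$-good for $z$: each removal multiplies the threshold by at most $e^{2\lambda^{(j-1)}/\m}$ with $\lambda^{(j-1)} \le e\lambda_0$, and this factor is at most $e^{1/k}$ precisely because $2e\lambda_0 k \le \m$. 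So $[\m]\setminus B$ is $e^{t/k}\lambda_0$-good for $z$, hence (as $\yy = z$ off $B$) for $\yy$, giving $\abs{S_t} \ge \abs{[\m]\setminus B} \ge \m - t$ with $t\le k$. Therefore $g(\yy)\le f(\yy) \le \m - \abs{S_t} + t \le 2t$.

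The hard part is the inductive threshold bound in the second inequality: one must check that $\lambda^{(j)}$ never leaves the range $[\lambda_0, e\lambda_0]$, so that Claim~\ref{claim:remove_one_from_good_subset} keeps applying (its hypothesis $\m \ge 2\lambda$ also holds, since $e\lambda_0 \le \m/2$) and so that each multiplicative bump fits inside the per-step budget $e^{1/k}$ — this is exactly where $k \le \m/(2e^2\lambda_0)$ enters. It is equally important that $B$ be taken as the \emph{exact} set of differing coordinates, so that $e^{t/k}\lambda_0$-goodness of $[\m]\setminus B$ transfers between $z$ and $\yy$; everything else is routine bookkeeping.
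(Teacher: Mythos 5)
Your proof is correct and takes essentially the same approach as the paper's. The paper's own proof is quite terse: for the upper bound it simply asserts that ``removing $r$ points from $z$'' yields an $e^{r/k}\lambda_0$-good subset of $\yy$ without explicitly invoking Claim~\ref{claim:remove_one_from_good_subset} or tracking the threshold across removals; you make this induction explicit and verify that the per-step multiplicative bump stays below $e^{1/k}$ and that the hypotheses of Claim~\ref{claim:remove_one_from_good_subset} remain satisfied, which is the right way to flesh out the argument. You also explicitly dispose of the case $t\ge k$, which the paper leaves implicit, and you pad with a repeated in-set point $\yy_{i_0}$ rather than $\vec 0$ as in the paper's lower-bound construction---both work since the padded covariance dominates $\Sigma_{S_{\ell^*}}$, so this is a cosmetic difference.
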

\begin{proof}
    For the upper bound, assume $d(y,\mathfrak{S}_{\lambda_0}) = r$, so there exists a $z$ with $d(y,z)=r$ and $r\in\mathfrak{S}_{\lambda_0}$.
    Removing $r$ points from $z$, we have an $\lambda_r$-good subset of $y$ with size $\m-r$, so $g(y)\le \m - \m + r + r = 2 \cdot d(y,\mathfrak{S}_{\lambda_0})$.

    Now assume $g(y)=r< k$, so there exists some $\ell$ with $\m - \abs{S_\ell}+\ell = r$.
    We know that $S_\ell\subseteq S_k$, so augmenting $S_\ell$ with at most $r$ copies of $\vec{0}$ yields a set $z\in \mathfrak{S}_{e\lambda_0}$ with $d(y,z) =r$.
    Thus $d(y,\mathfrak{S}_{e\lambda_0})\le r = g(y)$.
\end{proof}

The main result of this section says that our score function has low sensitivity.
\begin{lemma}\label{lemma:good_set_sensitivity}
    Fix data set size $\m$, outlier threshold $\lambda_0>0$, and discretization parameter $k\in\bbN$.
    Assume $m \ge 10 k \lambda_{0}$.
    For all adjacent $\yy$ and $\yy'$ we have $|g(\yy) - g(\yy')|\le 2$.
\end{lemma}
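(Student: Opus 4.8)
The plan is to reduce the two-sided bound to a one-sided one using symmetry: since adjacency is symmetric, it suffices to show $g(\yy') \le g(\yy)+2$ for every adjacent pair $\yy,\yy'$ (say differing at index $i^*$), and relabeling then gives the reverse inequality. I would first dispose of the trivial case $g(\yy)=k$: there $g(\yy') = \min\{f(\yy'),k\} \le k = g(\yy) \le g(\yy)+2$, using only that $g$ is capped at $k$ (Definition~\ref{def:g_scaled_size}). So the content is in the case $g(\yy) < k$, where $g(\yy)=f(\yy)$.

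In that case, pick any $\ell^*$ achieving $f(\yy) = \m - |S_{\ell^*}| + \ell^*$, where $S_{\ell^*}$ is the largest $e^{\ell^*/k}\lambda_0$-good subset of $\yy$. Since $|S_{\ell^*}| \le \m$, we get $\ell^* \le f(\yy) < k$, so $\ell^*+1 \le k$, meaning the index $\ell^*+1$ is among those competing in the definition of $f(\yy')$. Now I invoke the intertwining statement already established in Lemma~\ref{lemma:family_good_sets}(ii): because $\yy,\yy'$ differ only at $i^*$ and $k \le \frac{\m}{2e^2\lambda_0}$, the set $S_{\ell^*}\setminus\{i^*\}$ is $e^{(\ell^*+1)/k}\lambda_0$-good for $\yy'$ and hence contained in $T_{\ell^*+1}$, the largest such subset for $\yy'$; thus $|T_{\ell^*+1}| \ge |S_{\ell^*}|-1$. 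Therefore
\[
 g(\yy') \le f(\yy') \le \m - |T_{\ell^*+1}| + (\ell^*+1) \le \m - |S_{\ell^*}| + \ell^* + 2 = f(\yy) + 2 = g(\yy) + 2,
\]
which completes the one-sided bound and hence the lemma.

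There is no genuine obstacle here: the two substantive ingredients — maximality/uniqueness of the largest good subset (Lemma~\ref{lemma:unique_largest}) and the one-point-removal stability that promotes a $\lambda$-good subset to an $e^{2\lambda/\m}\lambda$-good subset (Claim~\ref{claim:remove_one_from_good_subset}) — have already been proven, and Lemma~\ref{lemma:family_good_sets}(ii) packages exactly the threshold arithmetic ($e^{2e^{\ell/k}\lambda_0/\m}\cdot e^{\ell/k}\lambda_0 \le e^{(\ell+1)/k}\lambda_0$) that I need. The only things to be careful about are (a) checking that $\ell^* < k$ so that $\ell^*+1$ is a legal index in $f(\yy')$, and (b) confirming the hypothesis $k \le \frac{\m}{2e^2\lambda_0}$ is precisely what Lemma~\ref{lemma:family_good_sets} requires (it is, using $\ell^* \le k \le 2k$). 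Everything else is bookkeeping with the $\min$ over $\ell$.
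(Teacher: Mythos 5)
Your proof is correct and follows essentially the same route as the paper's: the paper also splits on whether $g(\yy)=k$, picks the optimizing $\ell^*<k$, and applies Lemma~\ref{lemma:family_good_sets}(ii) to get $|T_{\ell^*+1}|\ge|S_{\ell^*}|-1$ and hence $g(\yy')\le g(\yy)+2$. The only cosmetic difference is that you derive the two-sided bound from a one-sided bound plus symmetry of adjacency, whereas the paper assumes WLOG $g(\yy)\le g(\yy')$; these are interchangeable.
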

We argued in Lemma~\ref{lemma:family_good_sets} that, for certain parameters, if data set $\yy$ has a large good set then adjacent $\yy'$ must have a (slightly \emph{smaller}) good set (for a slightly \emph{larger} outlier threshold).
Under these conditions, then, we have have $f(\yy)\approx f(\yy')$.
These conditions might be violated when $\yy$ has no good sets under modest $\lambda$, causing $f(\yy)$ to be large.
To combat this, we introduce $g(\yy)$, which cannot be larger than $k$.
\begin{proof}[Proof of Lemma~\ref{lemma:good_set_sensitivity}]
    Without loss of generality, assume $g(\yy) \le g(\yy')$.
    We will show that $g(\yy') \le g(\yy) + 2$ by analyzing two cases.

    \textbf{Case 1:} Suppose $g(\yy)=k$.
        Then $g(\yy') \le g(\yy) \le g(\yy)+2$ by construction, since $g(\cdot)$ is capped at $k$.

    \textbf{Case 2:} Suppose $g(\yy)<k$.
        This can only happen when $f(\yy)<k$, so there exists an $\ell^*\in\braces{0,\ldots,k}$ and subset $S_{\ell^*}$ that (i) is $\lambda_{\ell^*}$-good for $\yy$ and (ii) satisfies $\m - \abs{S_{\ell^*}} +\ell^* < k$.
        Furthermore, we know $\ell^*<k$, since $m-\abs{S_{\ell^*}}$ is nonnegative.
        
        We can now apply Lemma~\ref{lemma:family_good_sets}, since we have assumed $\m \ge 10 k \lambda_{0}$.
        For $\ell\in\braces{0,\ldots,k}$, let $T_{\ell}$ be the largest $\lambda_\ell$-good subset for $\yy'$.
        Lemma~\ref{lemma:family_good_sets} says that $S_{\ell^*}\setminus\braces{i^*}\subseteq T_{\ell^*+1}$.
        This allows us to upper bound $g(\yy')$:
        \begin{align} 
            g(\yy') \le f(\yy') &= \min_{\ell \in \braces{0,\ldots,k}} \m - \abs{S_{\ell}} + \ell\\
            &\le \m - \abs{T_{\ell^*+1}} + \ell^* + 1\\
            &\le \m - \paren{|S_{\ell^*}| - 1} + \ell^* + 1 \\
            &= g(\yy) + 2.
    \end{align}
    So we are done.
\end{proof}

\fi

\ifCOLT
\else
    \section{Stable Mean Estimation}\label{sec:mean}
    
In this section we present $\stablemean$, our algorithm for nonprivate mean estimation, and prove its stability guarantee.
It is based on a notion of ``outlyingness'' where inliers are sufficiently close to many other points in the data.
More formally, one may think of a graph with $n$ vertices corresponding to a data set $\xx$, where vertices $i$ and $j$ are adjacent if they are close.
Under this interpretation, outliers are points with degree below some threshold.

$\stablemean$ takes as input a data set $\xx$, a matrix $\hat\Sigma$ that serves as a preconditioner, an outlier threshold $\lambda_0$, a ``discretization'' parameter $k$, and a set of ``reference points'' $\R\subseteq [n]$.
\majoredit
Similar to the series $\lambda_0,\lambda_1,\ldots,\lambda_{2k}$ consider in the previous section, our mean estimation subroutine uses a series of outlier thresholds that starts with the same $\lambda_0$ but grows slightly faster: for all $\ell\ge 0$,
\begin{align}
    \Lambda_\ell \defeq \lambda_0 \cdot \paren{1-\psdjump}^{-\ell},
    \label{eq:mean:Lambda_defs}
\end{align}
where $\psdjump = \psdjumpdef$ (as in Lemma~\ref{lemma:covariances_close_two}).
\editdone

We analyze this algorithm's stability with respect to simultaneously changing a single data point as well as slightly changing the preconditioner $\hat\Sigma$.
The exact meaning of ``slight change'' we use is that of being multiplicatively close in the positive definite order, as in the consequence of Lemma~\ref{lemma:covariances_close}.

The reference points $\R\subseteq [n]$ are used to estimate which points are outliers: rather than check the distance to every one of the other $n-1$ points in the data set, we compute distances to a few randomly chosen data points.
One can think of this as estimating the degree of vertices in the graph defined above: we will argue that the algorithm is private when $\R$ is sufficiently representative.
A standard concentration argument says that a sufficiently large random $\R$ will be representative with probability at least $1-\delta$.
(Such an argument appeared, with a similar application to private estimation, in the work of~\cite{tsfadia2022friendlycore}.) 
\begin{definition}[Degree-Representative]\label{def:degree_representative}
    Fix data set $\xx$, covariance $\Sigma$, outlier threshold $\lambda_0$, and reference set $\R\subseteq [n]$.
    For all $i$, let
    \begin{align}
        N_i &\defeq \braces{j\in[n] : \norm{\xx_i - \xx_j}_{\Sigma}^2\le 4\Lambda_{2k}} \\
        \tilde{N}_i &\defeq \braces{j\in \R : \norm{\xx_i - \xx_j}_{\Sigma}^2\le 4\Lambda_{2k}}.
    \end{align}
    Let $z_i = \abs{N_i}$ and $\tilde{z}_i = |\tilde{N}_i|$ be the sets' sizes.
    We say that $\R$ is \emph{degree-representative for $\xx$} if for every index $i$ we have $\abs{\frac{1}{\abs{\R}}\tilde{z}_i - \frac{1}{n}z_i}\le \frac 1 6$.
\end{definition}

\begin{algorithm2e}\label{alg:stable_mean}
    \SetAlgoLined
    \SetKwInOut{Input}{input}
    
    \Input{data set $\xx=(\xx_1,\ldots, \xx_n)\in \bbR^{n\times d}$, covariance $\hat\Sigma$, outlier threshold $\lambda_0$, discretization parameter $k\in\bbN$, reference set $\R\subseteq [n]$}
    \BlankLine
    \For{$\ell=0,1,\ldots,2k$}{
        $S_{\ell} \gets \mathtt{LargestCore}(\xx,\hat\Sigma, \Lambda_\ell, 
        |R|-\ell,
        \R)$\tcc*[r]{See Alg~\ref{alg:largest_core} and Eq~\eqref{eq:mean:Lambda_defs}}
    }
    $\score \gets \min\braces{ k, \min_{0\le \ell\le k} \braces{n - \abs{S_\ell}+\ell}}$\;
    \For{$i=1,\ldots, n$}{
        $c_i \gets \sum_{\ell=k+1}^{2k} \indicator{i\in S_{\ell}}$\;
    }
    $Z \gets \sum_{i\in[n]} c_i $\tcc*[r]{normalizing constant}
    $\forall i\in[n]$, $w_i \gets \frac{c_i}{Z}$\tcc*[r]{set $w_i\gets 0$ if $Z=0$}
    $\hat\mu \gets \sum_{i\in[n]} w_i \xx_i$\;
    \KwRet $\hat\mu, \score$\;
    \caption{$\stablemean(\xx,\hat\Sigma,\lambda_0,k, \R)$ for nonprivate mean estimation}
\end{algorithm2e}

\begin{algorithm2e}\label{alg:largest_core}
    \SetAlgoLined
    \SetKwInOut{Input}{input}
    
    \Input{data set $\xx=(\xx_1,\ldots, \xx_n)\in \bbR^{n\times d}$, covariance $\hat\Sigma$, outlier threshold $\lambda$, degree threshold $\tau$, reference set $\R\subseteq [n]$}
    \BlankLine
    \For{$i\in[n]$}{
        $N_i = \braces{j\in \R : \norm{\xx_i - \xx_j}_{\Sigma}^2 \le 2\lambda}$\tcc*[r]{Nearby points in $\R$}
    }
    \KwRet $\braces{i\in [n] :  \abs{N_i}\ge \tau}$\;
    \caption{$\mathtt{LargestCore}(\xx, \hat\Sigma, \lambda, \tau, \R)$, subroutine for mean estimation}
\end{algorithm2e}

The main result in this section is the following stability guarantee for our nonprivate mean estimates on adjacent data sets.
\begin{lemma}[Lemma~\ref{lemma:means_close} Restated]\label{lemma:means_close_two}
    \meansclose
\end{lemma}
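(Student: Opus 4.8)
The plan is to follow the two-part template used for the covariance-stability lemma (Lemma~\ref{lemma:covariances_close_two}): show that on the two inputs $\stablemean$ produces \emph{normalized} weight vectors $w$ (for $(\xx,\covone)$) and $w'$ (for $(\xx',\covtwo)$) that (i) are close in $\ell_1$, $\norm{w-w'}_1 = O(1/n)$, and (ii) are supported on points all lying within squared $\covone$-Mahalanobis distance $O(\lambda_0)$ of one fixed data point $\xx_{i_0}$, with $i_0\ne i^*$ chosen (as described below) to lie in $S_\ell\cap T_\ell$ for every $\ell$ so that $\xx_{i_0}=\xx_{i_0}'$. Granting (i) and (ii), I would finish with the centered decomposition: writing $\xx,\xx'$ for the two data sets, differing at index $i^*$, and using $\sum_i w_i=\sum_i w_i'=1$ and $\xx_j=\xx_j'$ for $j\ne i^*$,
\begin{align}
    \hat\mu-\hat\mu' = \sum_{i\ne i^*}(w_i-w_i')(\xx_i-\xx_{i_0}) + w_{i^*}(\xx_{i^*}-\xx_{i_0}) - w_{i^*}'(\xx_{i^*}'-\xx_{i_0}),
\end{align}
so the triangle inequality together with (i), (ii), and $w_{i^*},w_{i^*}'\le\frac{1}{n-k+1}$ yields $\norm{\hat\mu-\hat\mu'}_{\covone}=O\bigl(\sqrt{(1+2\gamma)\lambda_0}/n\bigr)$; squaring and tracking the absolute constants carefully (e.g.\ choosing $\xx_{i_0}$ near the Chebyshev center of the support and using a sharp $\ell_1$ bound) produces the stated $\frac{(1+2\gamma)\,38e^2\lambda_0}{n^2}$.

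For (i), the first observation is that $\mathtt{LargestCore}$ (Algorithm~\ref{alg:largest_core}) does not iterate: it returns exactly the indices whose degree into the reference set $\R$ — measured in the fixed preconditioner metric at the given radius — is at least the degree threshold $\tau$. Hence the family $S_0\subseteq\cdots\subseteq S_{2k}$ computed by $\stablemean$ on $(\xx,\covone)$ is automatically nested (raising the radius only adds neighbors), and likewise $T_0\subseteq\cdots\subseteq T_{2k}$ on $(\xx',\covtwo)$. The key step is a core analog of the intertwining lemma (Lemma~\ref{lemma:family_good_sets}): $S_\ell\setminus\braces{i^*}\subseteq T_{\ell+1}$ for every $\ell<2k$, and symmetrically. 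Given this, $\score,\score'<k$ forces $\abs{S_{\ell^*}}>n-k$ for some $\ell^*\le k$, hence $\abs{S_{k}},\abs{S_{k+1}}>n-k$; with the analogous bound for $T$ it follows that $\bigl|\bigcap_{\ell=k+1}^{2k}(S_\ell\cap T_\ell)\bigr|\ge n-O(k)$, and in fact the integer counts $c_i=\sum_{\ell=k+1}^{2k}\indicator{i\in S_\ell}$ and $c_i'=\sum_{\ell=k+1}^{2k}\indicator{i\in T_\ell}$ agree off a set of $O(k)$ indices (intertwining pins the two entry points of index $i$ to within one step, so $\abs{c_i-c_i'}\le 1$ for $i\ne i^*$, and $\abs{c_{i^*}-c_{i^*}'}\le k$). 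Thus $\sum_i\abs{c_i-c_i'}=O(k)$. Since $Z\defeq\sum_i c_i=\sum_{\ell=k+1}^{2k}\abs{S_\ell}\ge k(n-k+1)$ (similarly $Z'$) and $\abs{Z-Z'}\le\sum_i\abs{c_i-c_i'}=O(k)$, the identity $w_i-w_i'=\frac{c_i}{Z}\cdot\frac{Z'-Z}{Z'}+\frac{c_i-c_i'}{Z'}$ gives $\norm{w-w'}_1\le\frac{\abs{Z-Z'}+\sum_i\abs{c_i-c_i'}}{Z'}=O(1/n)$, and by construction $\norm{w}_\infty,\norm{w'}_\infty\le\frac{1}{n-k+1}$.

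For (ii), I would fix $i_0$ in the common set $\bigcap_{\ell=k+1}^{2k}(S_\ell\cap T_\ell)$, so $i_0\in S_{2k}\cap T_{2k}$ and $\xx_{i_0}=\xx_{i_0}'$. Any $i$ with $w_i>0$ lies in $S_{2k}$, the core at radius $e^2\lambda_0$ for $(\xx,\covone)$; since $\tau>\abs{\R}/2$, the $\R$-neighborhoods of $i$ and $i_0$ overlap, so there is $j\in\R$ with $\norm{\xx_i-\xx_j}_{\covone}^2,\norm{\xx_{i_0}-\xx_j}_{\covone}^2\le e^2\lambda_0$, whence $\norm{\xx_i-\xx_{i_0}}_{\covone}^2\le 4e^2\lambda_0$ by the triangle inequality (this is the core identifiability statement, cf.\ Lemma~\ref{corollary:identifiability_core}). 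For $i\ne i^*$ with $w_i'>0$ we instead have $i\in T_{2k}$; running the same argument in the $\covtwo$-metric and converting back to $\covone$ (losing a factor $\tfrac1{1-\gamma}\le 1+2\gamma$), together with $\xx_i=\xx_i'$ and $\xx_{i_0}=\xx_{i_0}'$, gives $\norm{\xx_i-\xx_{i_0}}_{\covone}^2\le 4(1+2\gamma)e^2\lambda_0$. The index $i^*$, if it carries positive weight under $w$ or $w'$, is again in $S_{2k}$ or $T_{2k}$ and obeys the same bound.

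I expect the main obstacle to be the core intertwining lemma. Passing from $(\xx,\covone)$ to $(\xx',\covtwo)$ perturbs three things at once — the data point $\xx_{i^*}$, the preconditioner, and (if $i^*\in\R$) a single reference point — and each perturbation has to be absorbed into one step $e^{\ell/k}\lambda_0\to e^{(\ell+1)/k}\lambda_0$ of the radius schedule. The preconditioner change multiplies every squared Mahalanobis distance by at most $\tfrac1{1-\gamma}\le e^{2\gamma}\le e^{1/k}$, which is exactly where $k\le\tfrac1{2\gamma}$ and $(1-\gamma)\covone\preceq\covtwo\preceq\tfrac1{1-\gamma}\covone$ enter; the change of the reference point $i^*$ can drop any other point's $\R$-degree by at most one, which must be absorbed using that $\R$ is degree-representative for both $\xx$ and $\xx'$ (Definition~\ref{def:degree_representative}) — translating $\R$-degrees into $[n]$-degrees, which are robust to changing one data point — together with $\abs{\R}>6k$, which keeps the degree-threshold arithmetic from running out over the $2k$ radius steps. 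Once this lemma is in place, the remaining pieces are bookkeeping entirely parallel to the covariance case.
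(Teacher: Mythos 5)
Your scaffold is the same as the paper's: nesting and intertwining of cores (the paper's Lemma~\ref{lemma:family_cores}), an $\ell_1$ stability bound on the weights (the paper's Lemma~\ref{lemma:mean_weights_close}, where your $Z,Z'$ manipulations and the $\abs{c_i-c_i'}\le 1$ pinning match the paper's argument), pairwise closeness of support points via a shared $\R$-neighbor, and a triangle inequality to finish. The one genuine stylistic difference is the last step: you center at a fixed common index $i_0$ and apply the triangle inequality directly, whereas the paper (Lemma~\ref{lemma:identifibility_cores} and Corollary~\ref{corollary:identifiability_core}) uses a maximum coupling of the two weight distributions, $\Pr[C'\ne D']\le \norm{w-w'}_1/2$, to save a factor of two; both are fine and you correctly flag that hitting the stated $38e^2\lambda_0/n^2$ would take care with constants. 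Your direct $\R$-core bound in step (ii) (using $\abs{\R}-2k>\abs{\R}/2$ from $\abs{\R}>6k$) is actually a small simplification: the paper instead uses degree-representativeness in Lemma~\ref{lemma:weights_are_core} to promote an $\R$-core to an $[n]$-core with degree $>n/2+1$ so it can apply Corollary~\ref{corollary:identifiability_core} over $[n]$.

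One point to correct: you attribute the absorption of the reference-point change in the intertwining lemma to degree-representativeness. That is not how the paper's Lemma~\ref{lemma:family_cores} works, and it would not go through — translating $\R$-degrees to $[n]$-degrees via Definition~\ref{def:degree_representative} is lossy (up to $\pm\tfrac{1}{6}$ of the normalized degree), far too coarse to absorb a single-count drop over $2k$ levels. What actually absorbs it is that the degree threshold at level $\ell$ is $\abs{\R}-\ell$; passing $S_\ell\to T_{\ell+1}$ decrements the threshold by one, which swallows the at-most-one neighbor lost when $i^*\in\R$ changes, while the radius increment $e^{\ell/k}\lambda_0\to e^{(\ell+1)/k}\lambda_0$ swallows the $\tfrac{1}{1-\gamma}\le e^{1/k}$ blowup from the preconditioner change. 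Degree-representativeness plays no role in the intertwining proof; it only appears in the paper's transfer to $[n]$-cores, and in your version of step (ii) it is in fact unused.
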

As in Section~\ref{sec:covariance}, we assemble another definition and a few lemmas before proving Lemma~\ref{lemma:means_close_two}.

In this section ``outliers'' are those points with insufficient degree.
We formalize this notion (without explicit reference to graphs), below.
We consider weighted subsets, as in Section~\ref{sec:covariance}.
Note that, for outlier thresholds $\lambda$, we ask that points have squared distance at most $2\lambda$, to keep the interpretation of $\lambda$ consistent across the paper.
\begin{definition}[Weighted Core]
    Let $\xx$ be a data set of size $n$.
    Let $\lambda>0$, $\tau\in\bbN$, and $\R\subseteq[n]$.
    A distribution $w$ over $[n]$ is a \emph{$(\tau, \lambda, \Sigma)$-weighted-core for $\xx$ with respect to  $\R$} if, for each $i\in \supp{w}$, there is a set $N_i \subseteq \R$ of size at least $\tau$ such that $\forall j\in N_i$, $\norm{\xx_i - \xx_j}_{\Sigma}^2\le 2\lambda$.
    If $w$ is uniform over a subset $S$, we will refer to $S$ as a \emph{$(\tau, \lambda, \Sigma)$-core for $\xx$ with respect to $\R$}.
\end{definition}

\begin{remark}\label{remark:unique_cores}
    For a given data set $\xx$, reference set $\R$, and parameters $\tau, \lambda,$ and $\Sigma$, one can directly find $S^*\subseteq [n]$, the largest $(\tau, \lambda,\Sigma)$-core for $\xx$ with respect to $\R$ (if $\yy_i$ has $\tau$ nearby points in $\R$, put $i$ in $S^*$, otherwise omit it).
    Contrast this with Section~\ref{sec:covariance}, where it was nontrivial that an efficient algorithm recovers the largest $\lambda$-good sets.
    In addition to such an algorithmic guarantee, Lemma~\ref{lemma:unique_largest} established that the largest $\lambda$-good set is unique and contains all other $\lambda$-good sets.
    The analogous observation is obvious for $(\tau,\lambda,\Sigma)$-cores.
\end{remark}

The first statement we need in the proof of Lemma~\ref{lemma:means_close_two} tells us that, if two weighted cores (on adjacent data sets) are close in $\ell_1$ distance, then their means are close.
We give the proof of Corollary~\ref{corollary:identifiability_core} (restated as Corollary~\ref{corollary:identifiability_core_restated}) in Appendix~\ref{app:deferred}; nearly identical calculations appeared in BGSUZ.
\newcommand{\IdenfitiabilityCore}{
    Let $\xx$ and $\xx'$ be adjacent data sets of size $n$ that differ in index $i^*$.
    Let $\lambda>0$, $\tau\in\bbN$, and $\R\subseteq [n]$.
    Let $\covone$ and $\covtwo$ be positive definite matrices satisfying $(1-\gamma)\covone \preceq \covtwo \preceq \frac{1}{1-\gamma}\covone$ for $\gamma<1$.
    Let $w$ be a $(\tau, \lambda, \covone)$-weighted-core for $\xx$ with respect to $\R$ and let $v$ be a $(\tau, \lambda, \covtwo)$-weighted-core for $\xx'$ with respect to $\R$, with $\tau > \frac n 2 + 1$.
    If $\sum_{i\neq i^*}\abs{w_i - v_i} \le \rho$ and $\abs{w_{i^*}},\abs{v_{i^*}}\le \eta $ then 
    \begin{align}
        \norm{\mu_w - \mu_v}_{\covone}^2 = \norm{\sum_{i\in[n]}w_i\xx_i -\sum_{i\in[n]}v_i \xx_i'}_{\covone}^2 \le \frac{\lambda }{1-\gamma} \cdot \paren{\rho + 2\eta}^2.
    \end{align}
}
\begin{corollary}\label{corollary:identifiability_core}
    \IdenfitiabilityCore
\end{corollary}

To apply Corollary~\ref{corollary:identifiability_core}, we need to show that the weights produced by $\stablemean$ are in fact a weighted core for $\xx$.
\begin{lemma}\label{lemma:weights_are_core}
    Fix a set of inputs to $\stablemean$ (Algorithm~\ref{alg:stable_mean}): data set $\xx$, covariance $\hat\Sigma$, outlier threshold $\lambda_0$, discretization parameter $k\in\bbN$, and reference set $\R\subseteq [n]$.
    Assume $\abs{\R}>6k$ and $n > 2 k\lambda_{2k}$.
    If $\stablemean$ computes $\score<k$ and $\R$ is degree-representative for $\xx$, then the weights $w\in [0,1]^n$ it produces are an $\paren{\frac{n}{2} + 1, \Lambda_{2k}, \hat\Sigma}$-weighted-core for $\xx$ with respect to $[n]$.
\end{lemma}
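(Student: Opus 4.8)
The plan is to unwind the definitions of $\stablemean$ (Algorithm~\ref{alg:stable_mean}), $\mathtt{LargestCore}$ (Algorithm~\ref{alg:largest_core}), and degree-representativeness (Definition~\ref{def:degree_representative}), so that the statement reduces to a single counting inequality. Write $c_i = \sum_{\ell=k+1}^{2k}\indicator{i\in S_\ell}$ as in Algorithm~\ref{alg:stable_mean}, so that $w_i = c_i/Z$ and $\supp{w} = \bigcup_{\ell=k+1}^{2k} S_\ell$. Fix $i\in\supp{w}$, so $i\in S_\ell$ for some $\ell$ with $k+1\le\ell\le 2k$. By the definition of $\mathtt{LargestCore}$ this means $\abs{\braces{j\in\R:\norm{\xx_i-\xx_j}_{\hat\Sigma}^2\le e^{\ell/k}\lambda_0}}\ge\tau$, where $\tau$ is the degree threshold used by $\stablemean$ inside its calls to $\mathtt{LargestCore}$. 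Since $\ell\le 2k$ gives $e^{\ell/k}\lambda_0\le e^2\lambda_0$, this set is contained in $\tilde N_i=\braces{j\in\R:\norm{\xx_i-\xx_j}_{\hat\Sigma}^2\le e^2\lambda_0}$ from Definition~\ref{def:degree_representative}; hence $\tilde z_i\ge\tau$.

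Next I would invoke the degree-representative hypothesis. Let $N_i=\braces{j\in[n]:\norm{\xx_i-\xx_j}_{\hat\Sigma}^2\le e^2\lambda_0}$ and $z_i=\abs{N_i}$. Degree-representativeness gives $\tfrac1n z_i\ge\tfrac{1}{\abs{\R}}\tilde z_i-\tfrac16\ge\tfrac{\tau}{\abs{\R}}-\tfrac16$. The threshold $\tau$ in $\stablemean$ is set (as a function of $\abs{\R}$) so that $\tfrac{\tau}{\abs{\R}}\ge\tfrac23+\tfrac1n$; using $\abs{\R}\le n$ it suffices that $\tau\ge\tfrac23\abs{\R}+1$. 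Then $z_i\ge n\paren{\tfrac23+\tfrac1n-\tfrac16}=\tfrac n2+1$. Taking this $N_i\subseteq[n]$ as the witnessing neighborhood — its size is $z_i\ge\tfrac n2+1$ and every $j\in N_i$ satisfies $\norm{\xx_i-\xx_j}_{\hat\Sigma}^2\le e^2\lambda_0$ by construction — shows $w$ is an $\paren{\tfrac n2+1,\,e^2\lambda_0,\,\hat\Sigma}$-weighted-core for $\xx$ with respect to $[n]$, provided $w$ is a genuine distribution over $[n]$.

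For the last point, that $Z>0$, is where $\score<k$ and $\abs{\R}>6k$ enter: $\score<k$ forces some $\ell^*\le k$ with $n-\abs{S_{\ell^*}}+\ell^*<k$, hence $\abs{S_{\ell^*}}>n-k$, and since $\mathtt{LargestCore}$ only grows as its outlier threshold increases (larger $\lambda$ makes each neighborhood, and hence each $\abs{N_i}$, non-decreasing) we have $S_{\ell^*}\subseteq S_{k+1}$, so $Z\ge\abs{S_{k+1}}>n-k>0$, using $n\ge\abs{\R}>6k\ge k$. I expect the only genuine content of the proof is the translation from the $\R$-local degree bound $\tau$ to the global bound $z_i>n/2$ via degree-representativeness; the points requiring care are (i) matching the outlier thresholds ($\mathtt{LargestCore}$ is called with $e^{\ell/k}\lambda_0\le e^2\lambda_0$ while Definition~\ref{def:degree_representative} fixes $e^2\lambda_0$) and (ii) chasing the constants so the conclusion is the strict $\tfrac n2+1$ rather than $\tfrac n2$, the extra $+1$ coming from the slack $\abs{\R}\le n$ in the choice of $\tau$.
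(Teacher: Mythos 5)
Your approach matches the paper's in outline---translate the local degree bound in $\R$ into a global bound in $[n]$ via degree-representativeness---but one step of your constant-chasing does not follow from the stated hypotheses. You write that the threshold $\tau$ used by $\stablemean$ is ``set so that $\tau/\abs{\R}\ge\tfrac23+\tfrac1n$.'' That is not how the threshold is chosen: the degree threshold passed to $\mathtt{LargestCore}$ in the $\ell$-th call is $\abs{\R}-\ell$, so for $i\in\supp{w}\subseteq S_{2k}$ you only obtain $\tilde z_i\ge\abs{\R}-2k$. The hypothesis $\abs{\R}>6k$ then gives $\tilde z_i/\abs{\R}>2/3$, strictly, but with no extra $1/n$ of slack; the inequality $\tau\ge\tfrac23\abs{\R}+1$ that you want would require $\abs{\R}\ge 6k+3$, which is stronger than what the lemma assumes.

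Consequently your chain delivers only $z_i/n>1/2$, i.e.\ $z_i>n/2$, and the jump to $z_i\ge n/2+1$ needs the integrality of $z_i$, which you never invoke. This is exactly the step the paper sidesteps by arguing in the contrapositive: it supposes $z_i\le n/2$, derives $\tilde z_i/\abs{\R}\le 2/3$ from degree-representativeness, and contradicts $\tilde z_i/\abs{\R}>2/3$ directly, so the strict inequality does the bookkeeping and no explicit additive slack is needed. The gap in your write-up is small and the lemma is certainly true, but as written your derivation asserts an inequality about $\tau$ that the hypotheses do not give. Your side argument that $Z>0$ is correct (and somewhat more spelled out than the paper's one-line parenthetical on the same point).
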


Additionally, the weights produced by $\stablemean$ on adjacent data sets must be close in $\ell_1$ distance.
\begin{lemma}\label{lemma:mean_weights_close}
    Fix outlier threshold $\lambda_0\ge 1$, discretizaion parameter $k\in\bbN$, and reference set $\R\subseteq [n]$.
    Let $\xx$ and $\xx'$ be adjacent data sets of size $n$ that differ in index $i^*$.
    Let $\covone$ and $\covtwo$ be positive definite matrices satisfying $(1-\psdjump)\covone \preceq \covtwo \preceq \frac{1}{1-\psdjump}\covone$ for $\psdjump=\psdjumpdef$. 
    Assume $n \ge 20 k \lambda_{0}$.
    
    Let $w$ be the weights Algorithm~\ref{alg:stable_mean} computes on inputs $(\xx,\covone,\lambda_0,k,\R)$ and $w'$ the weights it computes on inputs $(\xx',\covtwo,\lambda_0,k,\R)$.
    Assume in both cases that it computes $\score<k$.
    \majoredit
    Then
    \begin{align}
        \sum_{i\neq i^*} \abs{w_i - w_i'} \le  \frac{1}{(1-k/n)^2}\cdot \frac{3}{n}.
    \end{align}
    Furthermore $\abs{w_{i^*}},\abs{w_{i^*}'} \le \frac{1}{1-k/n}\cdot\frac{1}{n}$.
    \editdone
\end{lemma}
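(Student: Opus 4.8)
The plan is to mirror the proof of Lemma~\ref{lemma:covariance_weights_close}, with extra bookkeeping to handle the renormalization: here the mean weights are $w_i = c_i/Z$ with $Z=\sum_i c_i$, rather than a fixed scaling of the counts. Let $\xx,\xx'$ be the adjacent data sets, differing in index $i^*$. For $\ell=0,\dots,2k$ let $S_\ell$ be the largest core produced by the level-$\ell$ call to $\mathtt{LargestCore}$ inside $\stablemean$ on $(\xx,\covone,\dots)$, and $T_\ell$ the analogous core on $(\xx',\covtwo,\dots)$; recall $\mathtt{LargestCore}$ uses outlier threshold $e^{\ell/k}\lambda_0$ and a degree threshold $\tau_\ell$ that decreases by at least one per level. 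Write $c_i=\sum_{\ell=k+1}^{2k}\indicator{i\in S_\ell}$, $c_i'=\sum_{\ell=k+1}^{2k}\indicator{i\in T_\ell}$, $\Delta_i=c_i'-c_i$, $Z=\sum_i c_i$, $Z'=\sum_i c_i'$, so that $w_i=c_i/Z$ and $w_i'=c_i'/Z'$.

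First I would establish the core analog of Lemma~\ref{lemma:family_good_sets}: \textbf{(i)} $S_\ell\subseteq S_{\ell'}$ for $\ell\le\ell'$, and \textbf{(ii)} $S_\ell\setminus\{i^*\}\subseteq T_{\ell+1}$ (and symmetrically with $S,T$ and $\xx,\xx'$ swapped). This is easier than the good-subset case: by Remark~\ref{remark:unique_cores} the largest core at level $\ell$ is literally the set of indices having at least $\tau_\ell$ reference neighbors within squared distance $e^{\ell/k}\lambda_0$, with no greedy peeling. Property (i) is immediate because relaxing either parameter only enlarges the core. For (ii), note that for $i\ne i^*$ and $j\in\R\setminus\{i^*\}$ the points $\xx_i,\xx_j$ are unchanged, and $(1-\gamma)\covone\preceq\covtwo\preceq\frac{1}{1-\gamma}\covone$ with $\gamma\le\frac12$ and $k\le\frac1{2\gamma}$ forces $\norm{\xx_i-\xx_j}_{\covtwo}^2\le\frac{1}{1-\gamma}\norm{\xx_i-\xx_j}_{\covone}^2\le e^{2\gamma}\norm{\xx_i-\xx_j}_{\covone}^2\le e^{1/k}\norm{\xx_i-\xx_j}_{\covone}^2$, so a level-$\ell$ $\covone$-neighbor of $\xx_i$ is a level-$(\ell+1)$ $\covtwo$-neighbor, and the only reference index that can leave a neighborhood when $\xx\to\xx'$ is $i^*$ itself. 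Since $\tau_{\ell+1}\le\tau_\ell-1$, this loss is absorbed, so each $i\in S_\ell\setminus\{i^*\}$ still clears the level-$(\ell+1)$ degree threshold for $\xx'$, i.e.\ $i\in T_{\ell+1}$.

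Given (i)–(ii), the combinatorics are as in Lemma~\ref{lemma:covariance_weights_close}. For $i\ne i^*$, let $\ell_S,\ell_T$ be the first levels at which $i$ enters $S_\bullet,T_\bullet$ (entry is permanent by (i)); (ii) gives $|\ell_S-\ell_T|\le1$, and since $c_i$ is a $1$-Lipschitz function of $\ell_S$, also $|\Delta_i|\le1$. For $i=i^*$ both counts lie in $\{0,\dots,k\}$, so $|\Delta_{i^*}|\le k$. Because $\stablemean$ computes $\score<k$ on $\xx$, there is $\ell^*\le k$ with $|S_{\ell^*}|>n-k$; by (i) $S_{\ell^*}\subseteq S_\ell$ and by (ii) $S_{\ell^*}\setminus\{i^*\}\subseteq T_{k+1}\subseteq T_\ell$ for every $\ell\in\{k+1,\dots,2k\}$, so each $i\in S_{\ell^*}\setminus\{i^*\}$ has $c_i=c_i'=k$ and $\Delta_i=0$. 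Hence at most $|[n]\setminus(S_{\ell^*}\cup\{i^*\})|<k$ indices $i\ne i^*$ have $\Delta_i\ne0$, giving $\sum_i|\Delta_i|\le k+k=2k$ and $|Z-Z'|=|\sum_i\Delta_i|\le2k$.

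Finally I would convert this into the two norm bounds. From $\frac{c_i}{Z}-\frac{c_i'}{Z'}=\frac{c_i(Z'-Z)-Z\Delta_i}{ZZ'}$ we get $|w_i-w_i'|\le\frac{c_i|Z-Z'|}{ZZ'}+\frac{|\Delta_i|}{Z'}$; summing over $i$ and using $\sum_i c_i=Z$ collapses the first term, yielding $\norm{w-w'}_1\le\frac{|Z-Z'|+\sum_i|\Delta_i|}{Z'}\le\frac{4k}{Z'}$. Since $\score<k$ on $\xx'$ forces $|T_\ell|>n-k$ for all $\ell\ge k$ (again by (i)), $Z'=\sum_{\ell=k+1}^{2k}|T_\ell|\ge k(n-k+1)$, so $\norm{w-w'}_1\le\frac{4}{n-k+1}$; likewise $\norm{w}_\infty\le\frac{k}{Z}\le\frac1{n-k+1}$, and symmetrically for $w'$. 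A short calculation using $n\ge2k$ (which holds whenever the lemma is applied, e.g.\ with $\gamma=16e^2\lambda_0/n$ and $\lambda_0\ge1$) turns these into the stated $\frac{4(1+2k/n)^2}{n}$ and $\frac{1+2k/n}{n}$. The main obstacle is step two — getting the degree-threshold schedule right so that the single lost reference index $i^*$ (together with the $e^{1/k}$ distortion of squared distances from the covariance change) is absorbed cleanly and (ii) holds on the nose; once (i)–(ii) are in hand, steps three and four are routine.
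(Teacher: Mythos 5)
Your proposal is correct and follows essentially the same approach as the paper's proof: the same intertwining property of the families $\{S_\ell\}$ and $\{T_\ell\}$ (which the paper packages as Lemma~\ref{lemma:family_cores} and you re-derive inline), the same counting argument showing $|\Delta_{i^*}|\le k$, $\Delta_i\in\{-1,0,+1\}$ for $i\ne i^*$, at most $k$ such indices nonzero, $|Z-Z'|\le 2k$, and $Z,Z'\ge k(n-k)$. The one place you diverge is the final algebraic step: the paper bounds $\sum_i c_i\,|1/Z-1/Z'|$ crudely by $n\cdot\max_i c_i\cdot|1/Z-1/Z'|$ and controls the two terms separately, whereas you use $\sum_i c_i = Z$ to collapse that sum exactly, giving $\|w-w'\|_1\le(|Z-Z'|+\sum_i|\Delta_i|)/Z'\le 4k/Z'$, which is a tidier and in fact slightly sharper inequality; both then appeal to the same implicit fact $n\ge 2k$ (which you are right to flag, and which the paper also uses silently) to reach the stated $4(1+2k/n)^2/n$.
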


We are now ready to prove the stability guarantee for the nonprivate mean estimate.
\begin{proof}[Proof of Lemma~\ref{lemma:means_close_two}]
    Let $w$ be the vector of weights computed by $\stablemean$ on input $(\xx,\covone, \lambda_0, k, \R)$ and $w'$ the weights computed on input $(\xx', \covtwo, \lambda_0,k,\R)$.
    By Lemma~\ref{lemma:weights_are_core} and the fact that $\R$ is degree-representative for $\xx$, $w$ is an $(n/2+1, \Lambda_{2k}, \covone)$-weighted-core for $\xx$.
    For the same reasons, $w'$ is an $(n/2+1, \Lambda_{2k}, \covtwo)$-weighted-core for $\xx'$.
    By Lemma~\ref{lemma:mean_weights_close}, $w$ and $w'$ satisfy $\sum_{i\neq i^*}\abs{w_i - w_i'}\le \frac{1}{(1-k/n)^2}\cdot\frac{3}{n}$ and $\abs{w_{i^*}},\abs{w_{i^*}'}\le \frac{1}{1-k/n}\cdot\frac{1}{n}$.
    Thus, setting $\rho = \frac{1}{(1-k/n)^2}\cdot\frac{3}{n}$, $\eta =\frac{1}{1-k/n}\cdot\frac{1}{n}$ and $\lambda=\Lambda_{2k}$, Corollary~\ref{corollary:identifiability_core} tells us that the means must be close:
    \majoredit
    \begin{align}
        \norm{\hat\mu - \hat\mu'}_{\covone}^2&\le \frac{1}{1-\psdjump} \cdot \lambda\paren{\rho +2\eta}^2 \\
            &\le \frac{1}{1-\psdjump}\cdot  \Lambda_{2k} \cdot \paren{\frac{1}{(1-k/n)^2}\cdot\frac{5}{n}}^2.
    \end{align}
    Finally, recall that $\Lambda_{2k}=(1-\psdjump)^{-2k}\cdot\lambda_0$.
    \editdone
\end{proof}

\subsection{Families of Largest Cores}

Before presenting the proofs of Lemmas~\ref{lemma:weights_are_core} and~\ref{lemma:mean_weights_close}, we establish a few facts about the operation of $\stablemean$, Algorithm~\ref{alg:stable_mean}.

Fix a reference set $\R\subseteq [n]$ and a discretization parameter $k\in \bbN$.
For each $\ell = 0,1,\ldots, 2 k$, $\stablemean$  calls $\mathtt{LargestCore}$ (Algorithm~\ref{alg:largest_core}) to find the largest $\paren{\abs{\R}-\ell, \Lambda_\ell, \hat\Sigma}$-core for $\xx$ with respect to $\R$, where $\hat\Sigma$ is the nonprivate  estimate produced by $\stablecovariance$.
Denote by $\braces{S_{\ell}}_{\ell=0}^{2k}$ this family of largest cores for $\xx$ with respect to $\R$.
In this section we will discuss some properties of families of this form.
As in Section~\ref{sec:covariance} and Lemma~\ref{lemma:family_good_sets}, we show that families of cores on adjacent data sets are tightly interwoven.
\begin{lemma}\label{lemma:family_cores}
    Fix $\R\subseteq[n]$, $\lambda_0>0$, and $k\in \bbN$.
    Let $\xx$ and $\xx'$ be adjacent data sets that differ in index $i^*$.
    Let $\covone$ and $\covtwo$ be positive definite matrices that satisfy $(1-\psdjump)\covone\preceq \covtwo\preceq \frac{1}{1-\psdjump}\covone$ for $\psdjump=\psdjumpdef$.
    Assume $\psdjump<1$.
    
    Let $\braces{S_{\ell}}_{\ell=0}^{2k}$ be the family of sets where $S_\ell$ is the largest $(\abs{\R}-\ell, \Lambda_\ell, \covone)$-core for $\xx$ with respect to $\R$.
    Let $\braces{T_{\ell}}_{\ell=0}^{2k}$ the corresponding family of largest $(\abs{\R}-\ell, \Lambda_\ell, \covtwo)$-cores for $\xx'$ with respect to $\R$.
    The following properties hold:
    \begin{enumerate}
        \item[(i)] For any $0\le \ell\le \ell'\le 2k$, we have $S_{\ell}\subseteq S_{\ell'}$.
        \item[(ii)] For any $0\le \ell<2k$, we have $S_{\ell}\setminus\braces{i^*}\subseteq T_{\ell+1}$.
    \end{enumerate}
\end{lemma}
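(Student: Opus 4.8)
\textbf{Proof proposal for Lemma~\ref{lemma:family_cores}.}
The plan is to follow the template of the proof of Lemma~\ref{lemma:family_good_sets}, but the argument is cleaner here because, as noted in Remark~\ref{remark:unique_cores}, the largest $(\tau,\lambda,\Sigma)$-core for $\xx$ with respect to $\R$ has the explicit pointwise description $\{i\in[n]:|\{j\in\R:\norm{\xx_i-\xx_j}_{\Sigma}^2\le\lambda\}|\ge\tau\}$, so we need no greedy subroutine and no analogue of the uniqueness Lemma~\ref{lemma:unique_largest}. I would introduce the shorthand $N_i^{(\ell)}\defeq\{j\in\R:\norm{\xx_i-\xx_j}_{\covone}^2\le e^{\ell/k}\lambda_0\}$, so that $S_\ell=\{i:|N_i^{(\ell)}|\ge\abs{\R}-\ell\}$, and define $\tilde N_i^{(\ell)}$ analogously with $\xx'$ and $\covtwo$, so that $T_\ell=\{i:|\tilde N_i^{(\ell)}|\ge\abs{\R}-\ell\}$.

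For property~(i), fix $\ell\le\ell'$ and $i\in S_\ell$. Passing from index $\ell$ to $\ell'$ only enlarges the distance threshold, so $N_i^{(\ell)}\subseteq N_i^{(\ell')}$, while the required cardinality $\abs{\R}-\ell$ only decreases; hence $|N_i^{(\ell')}|\ge|N_i^{(\ell)}|\ge\abs{\R}-\ell\ge\abs{\R}-\ell'$, i.e.\ $i\in S_{\ell'}$.

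For property~(ii), fix $\ell<2k$ and $i\in S_\ell\setminus\{i^*\}$; the goal is to produce a subset of $\R$ of size at least $\abs{\R}-(\ell+1)$ witnessing that $i\in T_{\ell+1}$. The natural candidate is $N_i^{(\ell)}\setminus\{i^*\}$, which has size at least $\abs{\R}-\ell-1$. For any $j$ in it we have $j\ne i^*$ and $i\ne i^*$, so $\xx_i=\xx'_i$ and $\xx_j=\xx'_j$, and therefore
\[
\norm{\xx'_i-\xx'_j}_{\covtwo}^2=\norm{\xx_i-\xx_j}_{\covtwo}^2\le\tfrac{1}{1-\gamma}\norm{\xx_i-\xx_j}_{\covone}^2\le\tfrac{1}{1-\gamma}\,e^{\ell/k}\lambda_0,
\]
using $\covtwo\succeq(1-\gamma)\covone$, hence $\covtwo^{-1}\preceq\tfrac{1}{1-\gamma}\covone^{-1}$. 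Since $\gamma\le\tfrac12$ we have $\tfrac{1}{1-\gamma}\le e^{2\gamma}$ (the same elementary bound used in Claim~\ref{claim:remove_one_from_good_subset}), and since $k\le\tfrac{1}{2\gamma}$ we have $e^{2\gamma}\le e^{1/k}$; combining, the bound is at most $e^{(\ell+1)/k}\lambda_0$. Thus $N_i^{(\ell)}\setminus\{i^*\}\subseteq\tilde N_i^{(\ell+1)}$, so $|\tilde N_i^{(\ell+1)}|\ge\abs{\R}-(\ell+1)$ and $i\in T_{\ell+1}$, as claimed. (The symmetric inclusion $T_\ell\setminus\{i^*\}\subseteq S_{\ell+1}$, which is not needed, would follow by the same argument since the spectral-closeness relation between $\covone$ and $\covtwo$ is symmetric.)

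I do not expect a genuine obstacle: the whole content is the bookkeeping that, in passing from index $\ell$ to $\ell+1$, the degree threshold drops by exactly one (paying for deleting the differing coordinate $i^*$ from each witness neighborhood) while the distance threshold grows by exactly the factor $e^{1/k}$ (paying for the switch from $\covone$ to $\covtwo$, via $\tfrac{1}{1-\gamma}\le e^{2\gamma}\le e^{1/k}$). The spectral-closeness hypothesis plays here the role that Claim~\ref{claim:remove_one_from_good_subset} plays in the proof of Lemma~\ref{lemma:family_good_sets}, and the constraints $\gamma\le\tfrac12$ and $k\le\tfrac1{2\gamma}$ are precisely what make these two slack budgets line up.
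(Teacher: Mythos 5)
Your proposal is correct and follows essentially the same route as the paper's own proof. For part~(i) the argument is identical: loosening both the degree threshold and the distance threshold makes the membership condition weaker. For part~(ii) the paper performs the two perturbations in two explicit stages—first passing from $\covone$ to $\covtwo$ (paying a factor $\tfrac{1}{1-\gamma}\le e^{1/k}$ in the distance threshold, via exactly the chain $\tfrac{1}{1-\gamma}\le 1+2\gamma\le e^{2\gamma}\le e^{1/k}$ you use), and then passing from $\xx$ to $\xx'$ by deleting $i^*$ from each witness set (paying one unit of degree)—before invoking the maximality of $T_{\ell+1}$; you do the same bookkeeping in a single combined step by working directly with the pointwise membership criterion from Remark~\ref{remark:unique_cores}. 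The content, the key inequalities, and the role of the hypotheses $\gamma\le\tfrac12$ and $k\le\tfrac{1}{2\gamma}$ all match.
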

\begin{proof}
    Statement (i) is direct from our definition of core: decreasing the neighbor threshold and increasing the outlier threshold results in a less restrictive condition, so $S_\ell$ is a $(\abs{\R}-\ell', \Lambda_{\ell'},\covone)$-core for $\xx$ with respect to $\R$.
    Since $S_{\ell'}$ contains all such cores (recall Remark~\ref{remark:unique_cores}), $S_\ell\subseteq S_{\ell'}$.

    When $\covone \succeq (1-\psdjump)\covtwo$, for any vector $v$ we have $\norm{v}_{\covone}^2\le \frac{1}{1-\psdjump}\norm{v}_{\covtwo}^2$.
    Now, $\smini$ is a $(\abs{\R}-\ell, \Lambda_{\ell}, \covone)$-core for $\xx$ with respect to $\R$: for all $i\in \smini$ there exists a set $N_i \subseteq \R$ such that
    \begin{align}
        \abs{N_i}\ge \abs{\R}-\ell 
            \quad\text{and}\quad
        \forall j\in N_i,~\norm{\xx_i - \xx_j}_{\covone}^{2}\le 2 \Lambda_{\ell}.
    \end{align}
    (This condition holds for all $i\in S_\ell$, so it holds for all $i\in \smini$.)
    Changing the covariance, we see that $\smini$ is a $(\abs{R}-\ell, \Lambda_{\ell'}, \covtwo)$-core for $\xx$ with respect to $\R$.

    We now want to argue that $\smini$ is a core for $\xx'$.
    This is not quite trivial, since $i^*$ could be a member of our reference set.
    We must decrease the degree threshold by one: for all $i\in\smini$, there exists a set $N_i'=N_i\setminus\braces{i^*}\subseteq \R$ such that
    \begin{align}
        \abs{N_i'}\ge \abs{\R}-(\ell+1) 
            \quad\text{and}\quad
        \forall j\in N_i',~\norm{\xx_i' - \xx_j'}_{\covone}^{2}\le   2\cdot \Lambda_{\ell+1},
    \end{align}
    where we have used the fact that for all $i$ in $\smini$ or $N_i'= N_i \setminus\braces{i^*}$ we have $\xx_i=\xx_i'$. 
    Thus $\smini$ is a $(\abs{\R}-(\ell+1), \Lambda_{\ell+1}, \covtwo)$-core for $\xx'$ with respect to $\R$, which implies $\smini\subseteq T_{\ell+1}$ as we previously argued.
\end{proof}

\subsection{The Weights Are Good (Proof of Lemma~\ref{lemma:weights_are_core})}

Lemma~\ref{lemma:weights_are_core} says that, when $\stablemean$ on input $\xx$ computes $\score<k$ and the set of reference points is degree-representative, the weights produced are a core for $\xx$.
\begin{proof}[Proof of Lemma~\ref{lemma:weights_are_core}]
    For $\ell\in\braces{0,\ldots,2k}$, let $S_\ell$ denote the largest $(\abs{\R}-\ell, \Lambda_\ell, \hat\Sigma)$-core for $\xx$ with respect to $\R$.
    We know that $S_{2k}$ contains all such cores (recall Remark~\ref{remark:unique_cores}).
    In particular, if an index $i$ is in the support of $w$, then $i\in S_{2k}$.
    This directly implies that $w$ is an $(\abs{R}-2k, \Lambda_{2k}, \hat\Sigma)$-weighted-core for $\xx$ with respect to $\R$.
    (As a technical point, note that $\score<k$ implies that there exists an $\ell^* <k$ with $S_{\ell^*}\neq \emptyset$, so the weights returned are not the zero vector.) 

    Now suppose for contradiction that the weights $w$ are not an $(n/2+1, \Lambda_{2k}, \hat\Sigma)$-weighted-core for $\xx$ with respect to $[n]$.
    Then there is an index $i\in \supp{w}$ such that $z_i\le n/2$ (here $z_i$ and $\tilde{z}_i$ are as in Definition~\ref{def:degree_representative}).
    Since $\R$ is degree-representative for $\xx$, this implies that $\frac{1}{\abs{\R}} \tilde{z}_i \le \frac 2 3$.
    However, since $i$ is in the weighted core, it has many neighbors in $\R$: $\tilde{z}_i$ is at least $\abs{\R} - 2k > \abs{\R} - \frac{2\abs{\R}}{3} = \frac{2\abs{\R}}{3}$.
    Thus we have arrived at a contradiction.
\end{proof}

\subsection{The Weights Are Stable (Proof of Lemma~\ref{lemma:mean_weights_close})}

Lemma~\ref{lemma:mean_weights_close} says that $\stablemean$ produces nearly identical weight vectors on adjacent inputs (assuming the scores are not too large).
\begin{proof}[Proof of Lemma~\ref{lemma:mean_weights_close}]
    For $\ell\in\braces{0,\ldots,2k}$, let $S_\ell$ denote the largest $(\abs{\R}-\ell, \Lambda_\ell, \hat\Sigma)$-core for $\xx$ with respect to $\R$.
    Similarly let $T_\ell$ denote the largest $(\abs{\R}-\ell, \Lambda_\ell, \covtwo)$-core for $\xx'$ with respect to $\R$.
    Recall notation from Algorithm~\ref{alg:stable_mean}: for each $i$ we compute counts $c_i =\sum_{\ell=k+1}^{2k} \indicator{i\in S_{\ell}}$, normalizing constant $Z= \sum_{i\in[n]} c_i$, and weights $w_i = \frac{c_i}{Z}$.  %
    Let $c_i', Z'$, and $w_i'$ denote the same values computed on $\xx'$.
    As in the proof of Lemma~\ref{lemma:covariance_weights_close}, let $\Delta_i=c_i - c_i'$.

    We first show that the normalizing constants computed under $\xx$ and $\xx'$ are similar.
    Suppose $\xx$ and $\xx'$ differ on index $i^*$.
    We know that $\abs{\Delta_{i^*}}\le k$, since the counts are bounded between $0$ and $k$.

    Lemma~\ref{lemma:family_cores} tells us that, for all $i\neq i^*$, $\Delta_i\in \braces{-1,0,+1}$. 
    We also have an upper bound on the number of indices $i\neq i^*$ where $\Delta_i \neq 0$.
    Since we computed $\score < k$ in both cases, the core $S_{k}$ has size at least $n-k+1$.
    Furthermore, for all $\ell>k$ we know that $S_{k}\setminus\braces{i^*}\subseteq S_{\ell}$ and $S_{k}\setminus\braces{i^*}\subseteq T_{\ell}$.
    Thus, for all but at most $k$ indices $i\neq i^*$ (namely, those indices not in $S_{k}\setminus\braces{i^*}$), we have $\Delta_i=0$.
    So $\abs{Z - Z'} = \abs{\sum_i c_i - c_i - \Delta_i} \le 2k$, since the $i^*$ term may have magnitude $k$ and at most $k$ indices have magnitude one. 
    
    The same facts imply that $Z'$ is large:
    \begin{align}
        Z' = \sum_{i=1}^n \sum_{\ell=k+1}^{2k} \indicator{i\in T_\ell} &= \sum_{\ell=k+1}^{2k} \abs{T_{\ell}}\\
            &\ge k \cdot \abs{S_{k}\setminus\braces{i^*}}.
    \end{align}
    This is true for $Z$, too, so we have $Z,Z'\ge k(n-k)$.

    \majoredit
    Counts $c_{i^*}$ and $c_{i^*}'$ are at most $k$ by construction.
    Thus $w_{i^*} = \frac{c_{i^*}}{Z} \le \frac{1}{n-k}=\frac{1}{1-k/n}\cdot \frac 1 n$.
    The same holds for $w_{i^*}'$.

    Finally, we account for the differences between $w$ and $w'$ on the remaining indices.
    We have
    \begin{align}
        \sum_{i\neq i^*} \abs{w_i-w_i'} &= \sum_{i\neq i^*} \abs{\frac{c_i}{Z} - \frac{c_i}{Z'} - \frac{\Delta_i}{Z'}} \\
            &\le \sum_{i\neq i^*} c_i \abs{\frac{1}{Z} - \frac{1}{Z'}} +\abs{\frac{\Delta_i}{Z'}} \\
            &\le n \cdot \abs{\frac{1}{Z} - \frac{1}{Z'}} \cdot \max_{i\neq i^*} \braces{c_i  } + \frac{k}{Z'}, \label{eq:old:diff_normalizers}
    \end{align}
    arriving at $k/Z'$ because at most $k$ indices have $\abs{\Delta_i}=1$, since we excluded $i^*$.
    Because $Z'$ is large, the second term is at most $\frac{1}{1-k/n}\cdot\frac{1}{n}$.
    The first term in Equation~\eqref{eq:old:diff_normalizers} is also small: we have
    \begin{align}
        \abs{\frac{1}{Z} - \frac{1}{Z'}} &= \abs{\frac{Z'}{ZZ'} - \frac{Z}{ZZ'}} \\
            &\le \frac{2k}{k^2 (n-k)^2}.
    \end{align}
    Since $c_i\le k$ we can bound the first term as  
    \begin{align}
        n \cdot \abs{\frac{1}{Z} - \frac{1}{Z'}} \cdot \max_i \braces{c_i} \le \frac{2nk^2}{k^2(n-k)^2} \le \frac{1}{(1-k/n)^2}\cdot \frac{2}{n}.
    \end{align}
    Using $\frac{1}{1-k/n}\le \frac{1}{(1-k/n)^2}$ finishes the proof.
    \editdone
\end{proof}

\subsection{The Score Has Low Sensitivity}\label{sec:mean:sensitivity}

In this subsection we introduce the function $g(\cdot)$ that $\stablemean$ computes to determine the value of $\score$, which it returns along with the mean estimate.
We prove that $\score$ has low sensitivity, which will allow us to use it as an input to $\PTR$, our propose-test-release subroutine.
\begin{definition}
    Fix $\xx$, $\Sigma$, $\lambda_0$, $k$,  and $\R\subseteq [n]$.
    For $\ell\in\braces{0,1,\ldots,k}$, let $S_{\ell}$ denote the largest $(\abs{\R}-\ell, e^{\ell/k}\lambda_0, \Sigma)$-core for $\xx$ with respect to $\R$, with $S_{\ell}=\emptyset$ if no such core exists.
    Define
    \begin{itemize}
         \item $f(\xx,\Sigma) \defeq \min_{\ell \in \braces{0,\ldots,k}} \paren{n - \abs{S_{\ell}} + \ell}$ and 
        \item  $g(\xx,\Sigma) \defeq \min\braces{f(\xx,\Sigma), k}$.
    \end{itemize}
\end{definition}

\newcommand{\coresensitivity}{
    Fix $\lambda_0>0$, $k\in\bbN$, and $\R\subseteq [n]$. 
    Let $\xx$ and $\xx'$ be adjacent data sets.
    Let $\covone$ and $\covtwo$ be positive definite matrices satisfying $(1-\psdjump)\covone \preceq \covtwo \preceq \frac{1}{1-\psdjump}\covone$ for $\psdjump = \frac{1}{1-2k\lambda_{2k}}\cdot \frac{4\lambda_{2k}}{n}$.
    Assume $n>2 k \lambda_{2k}$.
    Then $\abs{g(\xx, \covone) - g(\xx',\covtwo)}\le 2$.
}
\begin{lemma}\label{lemma:core_sensitivity}
    \coresensitivity
\end{lemma}
This proof is nearly identical to that of Lemma~\ref{lemma:good_set_sensitivity}, except we apply Lemma~\ref{lemma:family_cores} instead of Lemma~\ref{lemma:family_good_sets}.
For completeness, Appendix~\ref{app:deferred} contains a restatement (as Claim~\ref{lemma:core_sensitivity_restated}) and proof.

\fi

\ifCOLT
\else
    \section{Private Covariance Estimation and Fast Learning of Gaussians}\label{sec:private_covariance_estimation}
    
As discussed in Section~\ref{sec:techniques}, our privacy analysis argues that on adjacent inputs $x$ and $x'$ we either fail or produce covariance estimates $\covone$ and $\covtwo$ such that $\cN(0,\covone)\approx_{(\eps,\delta)} \cN(0,\covtwo)$.
As~\cite{alabi2022privately} point out, with a stronger stability guarantee one could offer indistinguishability guarantees for multiple independent draws from these distributions.
With enough such samples, one could form an accurate private estimate of the covariance matrix.

We formalize this connection below.
To simplify our presentation, we assume $d = \Omega(\log n)$ and focus on guarantees for Gaussian data that hold with high constant probability.
We state the guarantees for spectral norm; one obtains the guarantees for Frobenius norm immediately via the inequality $\norm{\cdot}_F \le \sqrt{d} \norm{\cdot}_2$.

\begin{theorem}\label{thm:private_covariance_estimation}
    Fix $\eps\in(0,1)$, $\delta\in (0,\eps/1000)$, and $n,d\in \bbN$.
    Assume $d = \Omega(\log n)$.
    Algorithm~\ref{alg:private_covariance_estimation} takes a data set of $n$ points, each in $\bbR^d$, privacy parameters $\eps,\delta$, and an outlier threshold $\lambda_0$.
    \begin{itemize}
        \item For any $\lambda_0\ge 1$, Algorithm~\ref{alg:private_covariance_estimation} is $(\eps,\delta)$-differentially private.
    \item 
    Suppose $\xx$ is drawn i.i.d.\ from $\cN(0, \Sigma)$ where $\Sigma\in\bbR^{d\times d}$ is positive definite. 
    There exists absolute constants $K_1, K_2$, and $K_3$ such that, if $\lambda_0 = K_1 d$ and $n\ge K_2 d \log (1/\delta)/\eps$,
    then with high constant probability Algorithm~\ref{alg:private_covariance_estimation} does not fail and instead returns a positive semidefinite matrix $\tilde\Sigma\in\bbR^{d\times d}$ such that
    \begin{align}
        \norm{\Sigma^{-1/2}\tilde\Sigma\Sigma^{-1/2} - \bbI}_2 \le K_3 \paren{\sqrt{\frac{d}{n}}  + \frac{d^{3/2}\sqrt{\log 1/\delta}}{\eps n} }.
    \end{align}
    
        \item Algorithm~\ref{alg:private_covariance_estimation} can be implemented to require: 
            one product of the form $A^TA$ for $A\in\bbR^{n\times d}$, 
            one product of the form $AB$ for $A\in \bbR^{n\times d}$ and $B\in\bbR^{d\times d}$, 
            one product of the form $A^TA$ for $A\in \bbR^{\N\times d}$, 
            one product of the form $AB$ for $A\in \bbR^{\N\times d}$ and $B\in\bbR^{d\times d}$, 
            one inversion and one eigenvalue decomposition of a positive definite matrix in $\bbR^{d\times d}$ to logarithmic bit complexity, 
            and further computational overhead of $\tilde{O}(nd/\eps)$.
        Here $\N$ is the number of synthetic samples, set in Algorithm~\ref{alg:private_covariance_estimation}.
    \end{itemize}
\end{theorem}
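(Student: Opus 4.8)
The plan is to verify the three bullets — privacy, accuracy, running time — separately, using $\stablecovariance$ together with the ``Gaussian sampling mechanism'' of \cite{alabi2022privately}. On input $\xx$, Algorithm~\ref{alg:private_covariance_estimation} computes $\hat\Sigma,\score \gets \stablecovariance(\xx,\lambda_0,k)$ for a discretization parameter $k = \Theta(\log(1/\delta)/\eps)$, runs $\PTR$ on $\score$ using half the privacy budget, and — if the test passes — draws $z_1,\dots,z_\N \simiid \cN(0,\hat\Sigma)$ and outputs $\tilde\Sigma = \frac1\N\sum_{i\le\N} z_i z_i^T$; otherwise it returns $\FAIL$. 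Here $\N$, the number of synthetic samples, is chosen (see below) so that the sampling step spends its remaining $(\eps/2,\delta/2)$ budget.

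\emph{Privacy.} Fix adjacent $\xx,\xx'$. By Lemma~\ref{lemma:good_set_sensitivity}, $\score$ has sensitivity $2$, so by Claim~\ref{claim:PTR} the $\PTR$ step is $(\eps/2,\delta/2)$-DP with the threshold $k$ of that claim. Conditioned on passing, both scores are strictly below $k$, so (using the input lower bound $n \gtrsim \lambda_0 k$) Lemma~\ref{lemma:covariances_close} applies and the estimates $\covone \gets \stablecovariance(\xx)$, $\covtwo \gets \stablecovariance(\xx')$ satisfy trace-norm closeness of magnitude $(1+2\gamma)\gamma = \Theta(\lambda_0/n)$ with $\gamma = 16e^2\lambda_0/n$. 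By Claim~\ref{claim:covariance_indistinguishability}, one draw from $\cN(0,\covone)$ versus $\cN(0,\covtwo)$ is then $(\eps_0,\delta_0)$-indistinguishable whenever $\eps_0 = \Theta(\lambda_0\log(1/\delta_0)/n)$. The $\N$ draws are independent given $\hat\Sigma$, so advanced composition applied to the pair of one-sided indistinguishabilities converts $\N$-fold $(\eps_0,\delta_0)$-indistinguishability into $(\eps',\delta')$-indistinguishability with $\eps' = O(\eps_0\sqrt{\N\log(1/\delta'')} + \N\eps_0^2)$ and $\delta' = \N\delta_0 + \delta''$. Setting $\delta_0 = \Theta(\delta/\N)$ and $\eps_0 = \Theta(\eps/\sqrt{\N\log(1/\delta)})$ makes the sampling step $(\eps/2,\delta/2)$-DP; equating the two expressions for $\eps_0$ forces $\N$ of order $(\eps n/\lambda_0)^2/\log(1/\delta)$, up to polylogarithmic factors. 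Since $\tilde\Sigma$ is a deterministic function of $(z_1,\dots,z_\N)$ it is private by post-processing, and basic composition with $\PTR$ gives $(\eps,\delta)$-DP overall; the argument uses no distributional assumptions, matching the claim for every $\lambda_0 \ge 1$.

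\emph{Accuracy.} Suppose $\xx\simiid\cN(0,\Sigma)$ with $\lambda_0 = \Theta(d)$ and $n \gtrsim d\log(1/\delta)/\eps$ (so in particular $n \gtrsim d$). By Claim~\ref{claim:gaussian_well_concentrated}, with high constant probability $\xx$ is $\lambda_0$-well-concentrated, hence $\stablecovariance$ returns $\score = 0$ and $\hat\Sigma = \Sigma_{\xx}$, the paired empirical covariance (cf.\ Observation~\ref{obs:pass_on_good_data} and Lemma~\ref{lemma:output_good_weighted_set}), and $\PTR(0) = \PASS$ deterministically, so the algorithm does not fail. Standard subgaussian covariance concentration — $\Sigma_{\xx}$ is an average of $\lfloor n/2\rfloor$ independent rank-one matrices — gives $\norm{\Sigma^{-1/2}\Sigma_{\xx}\Sigma^{-1/2} - \bbI}_2 \lesssim \sqrt{d/n}$ with high constant probability. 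Conditioned on $\hat\Sigma = \Sigma_{\xx}$, the synthetic points are $z_i \simiid \cN(0,\Sigma_{\xx})$, so $\tilde\Sigma$ is exactly their empirical covariance and the same bound gives $\norm{\Sigma_{\xx}^{-1/2}\tilde\Sigma\Sigma_{\xx}^{-1/2} - \bbI}_2 \lesssim \sqrt{d/\N}$ (for $\N \gtrsim d$; otherwise both the bound and the left side are $\gtrsim 1$). Writing $\Sigma^{-1/2}\tilde\Sigma\Sigma^{-1/2} = A(\Sigma_{\xx}^{-1/2}\tilde\Sigma\Sigma_{\xx}^{-1/2})A^T$ with $A = \Sigma^{-1/2}\Sigma_{\xx}^{1/2}$ and using that both $AA^T$ and $\Sigma_{\xx}^{-1/2}\tilde\Sigma\Sigma_{\xx}^{-1/2}$ are within a constant of $\bbI$, one obtains $\norm{\Sigma^{-1/2}\tilde\Sigma\Sigma^{-1/2} - \bbI}_2 \lesssim \sqrt{d/n} + \sqrt{d/\N}$. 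Substituting $\N$ from the privacy step, $\sqrt{d/\N} = \Theta\big(\sqrt d\cdot \tfrac{\lambda_0\sqrt{\log(1/\delta)}}{\eps n}\big) = \Theta\big(\tfrac{d^{3/2}\sqrt{\log(1/\delta)}}{\eps n}\big)$, where the polylogarithmic factors are absorbed using $d = \Omega(\log n)$ and the lower bound on $n$.

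\emph{Running time, and the main obstacle.} By the covariance-estimation part of the analysis behind Lemma~\ref{lemma:running_time}, $\stablecovariance$ costs one $A^TA$ product with $A\in\bbR^{n\times d}$ (the paired data matrix), one $AB$ product with $A\in\bbR^{n\times d}$ and $B = \hat\Sigma^{-1}\in\bbR^{d\times d}$ (rescaling for the outlier checks), one $d\times d$ inversion, and $\tilde O(nd/\eps)$ overhead, while $\PTR$ is $O(1)$. To sample, compute $\hat\Sigma^{1/2}$ from one eigenvalue decomposition of $\hat\Sigma$, draw $G\in\bbR^{\N\times d}$ with i.i.d.\ standard-normal entries, and set $Z = G\,\hat\Sigma^{1/2}$ (one $AB$ product with $A\in\bbR^{\N\times d}$, $B\in\bbR^{d\times d}$); then $\tilde\Sigma = \frac1\N Z^T Z$ is one $A^TA$ product with $A\in\bbR^{\N\times d}$ and is automatically PSD, so no postprocessing is needed. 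This is exactly the list in the theorem, with remaining overhead $\tilde O(nd/\eps)$. The delicate point in the whole argument is the privacy--accuracy trade-off through $\N$: the per-sample parameters $(\eps_0,\delta_0)$ and the count $\N$ must be chosen so that advanced composition over the $\N$ Gaussian draws yields exactly $(\eps/2,\delta/2)$ \emph{while} $\N$ stays large enough ($\N \gtrsim d$, and precisely large enough that $\sqrt{d/\N}$ is no larger than $\tfrac{d^{3/2}\sqrt{\log 1/\delta}}{\eps n}$) for the accuracy bound to be meaningful — this is what dictates $\lambda_0 = \Theta(d)$ and the hypothesis $n \gtrsim d\log(1/\delta)/\eps$, and one must check that the logarithmic blow-up from advanced composition collapses into the clean stated bound under $d = \Omega(\log n)$. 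A secondary subtlety is that Claim~\ref{claim:covariance_indistinguishability} is a two-sided statement, so the composition must be applied to the pair of one-sided indistinguishabilities, and the triangle-inequality step for relative spectral norms is multiplicative rather than additive, relying on both conjugating factors being within a constant of $\bbI$.
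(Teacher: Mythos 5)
Your proof follows the same high-level structure as the paper's (Lemma~\ref{lemma:covariances_close} for covariance stability, $\PTR$ for the test step, a triangle inequality through the empirical covariance for accuracy, and the same running-time accounting), and the accuracy and running-time bullets are handled essentially as the paper handles them. The one genuine divergence is in how you bound the privacy cost of the $\N$ Gaussian draws, and that divergence creates a real gap.

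You derive $\N$-fold indistinguishability by applying Claim~\ref{claim:covariance_indistinguishability} per draw and then invoking advanced composition. This stacks two separate $\log$ penalties: Claim~\ref{claim:covariance_indistinguishability} costs a factor of $\log(1/\delta_0)$ per sample, and advanced composition costs a further $\sqrt{\log(1/\delta'')}$. Running the numbers with $\eps_0 \asymp \Delta\log(2/\delta_0)$, $\delta_0 \asymp \delta/\N$, and the advanced-composition requirement $\eps_0\sqrt{\N\log(1/\delta'')}\lesssim \eps$ yields $\N\lesssim \frac{\eps^2}{\Delta^2\,\log(1/\delta)\,\log^2(\N/\delta)}$, which is smaller than the paper's $\N \asymp \frac{\eps^2 n^2}{\lambda_0^2\log(1/\delta)}$ by a factor of $\log^2(\N/\delta)\gtrsim \log^2(1/\delta)$. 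This is not a constant you can sweep into $K_3$: it shows up in the accuracy bullet as $\sqrt{d/\N}\gtrsim \frac{d^{3/2}\log^{3/2}(1/\delta)}{\eps n}$ rather than the stated $\frac{d^{3/2}\sqrt{\log 1/\delta}}{\eps n}$. Your closing remark that the ``logarithmic blow-up from advanced composition collapses\dots under $d=\Omega(\log n)$'' does not help here — that hypothesis relates $d$ to $\log n$, not to $\log(1/\delta)$, and the extra $\log(1/\delta)$ factor survives for every choice of $\delta$ permitted by the hypotheses. Worse, the algorithm in the theorem already fixes $\N \asymp \frac{\eps^2 n^2}{\lambda_0^2\log(1/\delta)}$ with no polylog slack, so your advanced-composition analysis does not certify $(\eps,\delta)$-DP for the algorithm \emph{as stated}; it only certifies it for a smaller $\N$.

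The paper avoids this by invoking Claim~\ref{claim:samples_from_covariance} (Theorem~5.1 of~\cite{alabi2022privately}) directly: that result analyzes the $\N$-fold product of the two Gaussians head-on (effectively via a R\'enyi/zCDP bound that pays the $\log(1/\delta)$ conversion cost only once), giving $\N\le \frac{\eps^2}{8\Delta^2\log(1/\delta)}$ under the single condition $\Delta \le \frac{\eps}{8\log 1/\delta}$. To repair your proof you should replace the two-step Claim~\ref{claim:covariance_indistinguishability} + advanced-composition argument with this one-shot bound; everything else in your write-up then goes through and matches the paper's constants up to an absolute factor.
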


Claim~\ref{claim:gaussian_tv_distance} connects Gaussian parameter estimation (the mean to low  Mahalanobis error, the covariance to low Frobenius error) to learning the distribution itself in low total variation distance.
With this fact, the utility guarantee for Algorithm~\ref{alg:privately_learn_Gaussians} (which simply runs Algorithms~\ref{alg:main} and~\ref{alg:private_covariance_estimation}) is immediate.
The privacy guarantee follows basic composition (Fact~\ref{fact:basic_composition}).

\begin{claim}[\cite{diakonikolas2016robust}]\label{claim:gaussian_tv_distance}
    There exists a constant $K$ such that, for any $\alpha\le \frac 1 2$, vectors $\mu_1,\mu_2\in\bbR^d$, and positive definite $\covone,\covtwo\in\bbR^{d\times d}$, if $\norm{\mu_1-\mu_2}_{\covone}\le \alpha$ and $\norm{\covone^{-1/2}\covtwo\covone^{-1/2}-\bbI}_F\le\alpha$ then $\mathrm{TV}(\cN(\mu_1,\covone),\cN(\mu_2,\covtwo)) \le K\alpha$.    
\end{claim}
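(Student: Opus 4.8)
The statement is standard (it appears as stated in \cite{diakonikolas2016robust}), and the plan is to decouple the change of mean from the change of covariance, exploiting that total variation distance is a metric and is invariant under invertible affine maps. First I would apply the triangle inequality, $\mathrm{TV}(\cN(\mu_1,\covone),\cN(\mu_2,\covtwo)) \le \mathrm{TV}(\cN(\mu_1,\covone),\cN(\mu_2,\covone)) + \mathrm{TV}(\cN(\mu_2,\covone),\cN(\mu_2,\covtwo))$. Applying the whitening map $z\mapsto \covone^{-1/2}(z-\mu_2)$ to both terms turns the first into $\mathrm{TV}(\cN(v,\bbI),\cN(0,\bbI))$ with $v=\covone^{-1/2}(\mu_1-\mu_2)$, so $\norm{v}_2=\norm{\mu_1-\mu_2}_{\covone}\le\alpha$, and the second into $\mathrm{TV}(\cN(0,\bbI),\cN(0,M))$ with $M=\covone^{-1/2}\covtwo\covone^{-1/2}$, so $\norm{M-\bbI}_F\le\alpha$ by hypothesis. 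It then suffices to bound each ``isotropic'' quantity by $O(\alpha)$.

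For the mean term, by rotational invariance I would reduce to one dimension, where $\mathrm{TV}(\cN(0,1),\cN(\norm{v}_2,1)) = 2\Phi(\norm{v}_2/2)-1 \le \norm{v}_2/\sqrt{2\pi}\le\alpha$; equivalently, Pinsker's inequality together with $\mathrm{KL}(\cN(v,\bbI)\,\|\,\cN(0,\bbI))=\tfrac12\norm{v}_2^2$ gives the bound $\norm{v}_2/2$. For the covariance term, the key observation is that $\norm{M-\bbI}_F\le\alpha\le\tfrac12$ forces every eigenvalue $\lambda_i$ of the symmetric positive definite matrix $M$ into $[\tfrac12,\tfrac32]$. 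Diagonalizing $M$ and using the closed form $\mathrm{KL}(\cN(0,\bbI)\,\|\,\cN(0,M))=\tfrac12\sum_i(\lambda_i^{-1}-1+\log\lambda_i)$, I would expand $(1+t)^{-1}-1+\log(1+t)=\tfrac12 t^2+O(t^3)$ and use the spectral bounds to conclude that each summand is at most an absolute constant times $(\lambda_i-1)^2$, hence $\mathrm{KL}(\cN(0,\bbI)\,\|\,\cN(0,M)) = O(\norm{M-\bbI}_F^2) = O(\alpha^2)$. Pinsker's inequality then gives $\mathrm{TV}(\cN(0,\bbI),\cN(0,M)) = O(\alpha)$, and summing the two contributions yields the claim for a suitable absolute constant $K$.

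The only mildly delicate step is the covariance term: one must use the spectral bounds $\lambda_i\in[\tfrac12,\tfrac32]$ (this is precisely where $\alpha\le\tfrac12$ enters) to pass from the exact KL formula to a clean quadratic-in-$\norm{M-\bbI}_F$ estimate, since without control on the spectrum the terms $\lambda_i^{-1}$ and $\log\lambda_i$ are not well-approximated by their second-order Taylor expansions. Everything else is routine; in the paper one may simply cite \cite{diakonikolas2016robust} (or reuse the Gaussian-distance lemmas of BGSUZ and \cite{alabi2022privately}) rather than reproduce this argument.
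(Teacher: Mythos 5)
The paper does not prove this claim; it simply cites \cite{diakonikolas2016robust} and moves on. Your proposal supplies a complete, self-contained argument where the paper has none, and it is correct. The decomposition via the triangle inequality, the affine-invariance of TV to reduce to the isotropic case, and the Pinsker-plus-Gaussian-KL route for each piece are all standard and sound. The one step that deserves care is exactly the one you flag: bounding $(1+t)^{-1}-1+\log(1+t)$ by $O(t^2)$ uniformly for $t\in[-\tfrac12,\tfrac12]$. That does hold (e.g.\ $f(0)=0$ and $f'(t)=t/(1+t)^2$ has $|f'(t)|\le 4|t|$ on this range, giving $|f(t)|\le 2t^2$), and the spectral containment $\lambda_i\in[\tfrac12,\tfrac32]$ follows from $|\lambda_i-1|\le\|M-\bbI\|_2\le\|M-\bbI\|_F\le\alpha\le\tfrac12$. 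Summing gives $\mathrm{KL}\le\|M-\bbI\|_F^2$, and Pinsker finishes. The mean term you handle two equivalent ways; either is fine. In short, your argument is valid and fills in a proof the paper delegates to the citation.
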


\begin{corollary}
    Fix $\eps\in(0,1)$, $\delta\in (0,\eps/10)$, and $n,d\in \bbN$.
    Assume $d = \Omega(\log n)$.
    Algorithm~\ref{alg:privately_learn_Gaussians} takes a data set of $n$ points, each in $\bbR^d$, privacy parameters $\eps,\delta$, and an outlier threshold $\lambda_0$.
    \begin{itemize}
        \item For any $\lambda_0\ge 1$, Algorithm~\ref{alg:privately_learn_Gaussians} is $(2\eps,2\delta)$-differentially private.
        \item Suppose $\xx\simiid\cN(\mu, \Sigma)$ where $\Sigma$ is positive definite. 
            There exists absolute constants $K_1$ and $K_2$ such that, if $\lambda_0 = K_1 d$ and
            \begin{align}
                n\ge K_2 \paren{\frac{d^2}{\alpha^2} + \frac{d^2\sqrt{\log 1/\delta}}{\alpha \eps} + \frac{d\log 1/\delta}{\eps}},
            \end{align}
            then, with high constant probability, Algorithm~\ref{alg:privately_learn_Gaussians} does not fail and instead returns a pair $(\tilde\mu,\tilde\Sigma)$ such that
                $\mathrm{TV}(\cN(\mu,\Sigma), \cN(\tilde\mu,\tilde\Sigma))\le \alpha$.
    \end{itemize}
\end{corollary}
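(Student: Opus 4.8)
The plan is to obtain the corollary as a direct consequence of Theorem~\ref{thm:main}, Theorem~\ref{thm:private_covariance_estimation}, and Claim~\ref{claim:gaussian_tv_distance}, since Algorithm~\ref{alg:privately_learn_Gaussians} simply runs Algorithm~\ref{alg:main} to produce $\tilde\mu$ and Algorithm~\ref{alg:private_covariance_estimation} to produce $\tilde\Sigma$ on the same data. Privacy is immediate: for every $\lambda_0\ge 1$ each subroutine is $(\eps,\delta)$-DP, so releasing the pair $(\tilde\mu,\tilde\Sigma)$ is $(2\eps,2\delta)$-DP by basic composition (Fact~\ref{fact:basic_composition}), with no further work.

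For accuracy, I would first check that the single outlier threshold $\lambda_0=K_1 d$ is admissible for both subroutines at once: Theorem~\ref{thm:private_covariance_estimation} asks for $\lambda_0=\Theta(d)$ directly, while the accuracy statement of Theorem~\ref{thm:main} asks for $\lambda_0\gtrsim K_{\cD}^2(d+\log(n/\beta))$, and since $\cN(\mu,\Sigma)$ is subgaussian with $K_{\cD}=O(1)$, since we target high constant probability so $\beta=\Theta(1)$, and since $d=\Omega(\log n)$ gives $\log(n/\beta)=O(d)$, a large enough absolute constant $K_1$ makes $\lambda_0=K_1 d$ serve both. I would also note that the $n\gtrsim d\log(1/\delta)/\eps$ term appearing verbatim in the hypothesis is exactly what each subroutine needs to avoid rejecting outright (for the mean part this is Observation~\ref{obs:pass_on_good_data} together with Claim~\ref{claim:gaussian_well_concentrated}). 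Next I would invoke the two utility bounds: from Theorem~\ref{thm:main}, $\norm{\tilde\mu-\mu}_{\Sigma}\lesssim \sqrt{d/n}+d\sqrt{\log(1/\delta)}/(\eps n)+\log(n)\sqrt{\log(1/\delta)}/(\eps n)$; from Theorem~\ref{thm:private_covariance_estimation}, $\norm{\Sigma^{-1/2}\tilde\Sigma\Sigma^{-1/2}-\bbI}_2\lesssim \sqrt{d/n}+d^{3/2}\sqrt{\log(1/\delta)}/(\eps n)$, hence $\norm{\Sigma^{-1/2}\tilde\Sigma\Sigma^{-1/2}-\bbI}_F\lesssim \sqrt{d^2/n}+d^2\sqrt{\log(1/\delta)}/(\eps n)$ via $\norm{\cdot}_F\le\sqrt{d}\,\norm{\cdot}_2$. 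Demanding each of these terms to be at most $\alpha/K$, with $K$ the constant of Claim~\ref{claim:gaussian_tv_distance}, reproduces precisely the three-term hypothesis: $n\gtrsim d^2/\alpha^2$, $n\gtrsim d^2\sqrt{\log(1/\delta)}/(\alpha\eps)$ (these dominate the weaker mean requirements $n\gtrsim d/\alpha^2$ and $n\gtrsim d\sqrt{\log(1/\delta)}/(\alpha\eps)$), and the stray $\log(n)\sqrt{\log(1/\delta)}/(\eps n)$ term is absorbed because $d=\Omega(\log n)$ makes the $d^2\sqrt{\log(1/\delta)}/(\alpha\eps)$ bound already force $n/\log n\gtrsim \sqrt{\log(1/\delta)}/(\alpha\eps)$. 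A union bound over the two high-constant-probability success events then certifies that, with high constant probability, neither subroutine fails and both error bounds hold; applying Claim~\ref{claim:gaussian_tv_distance} with $\mu_1=\mu$, $\covone=\Sigma$, $\mu_2=\tilde\mu$, $\covtwo=\tilde\Sigma$ and both inputs bounded by $\alpha/K$ gives $\mathrm{TV}(\cN(\mu,\Sigma),\cN(\tilde\mu,\tilde\Sigma))\le\alpha$.

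I do not anticipate a genuine obstacle: this is composition for privacy, a union bound for the failure events, and routine algebra matching the two subroutines' error expressions against the stated sample complexity. The only point I would state with care is the shared choice of $\lambda_0$ — it is exactly the assumption $d=\Omega(\log n)$ that lets one value $\lambda_0=\Theta(d)$ meet the mean estimator's requirement $\lambda_0\gtrsim d+\log(n/\beta)$ and the covariance estimator's requirement $\lambda_0=\Theta(d)$ simultaneously, and that also absorbs the logarithmic lower-order term in the mean estimator's error.
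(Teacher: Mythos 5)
Your proposal is correct and takes the same route as the paper's (very terse) argument: the paper simply invokes basic composition for the privacy bound, and combines the utility statements of Theorem~\ref{thm:main}, Theorem~\ref{thm:private_covariance_estimation} (together with $\norm{\cdot}_F\le\sqrt{d}\norm{\cdot}_2$) and Claim~\ref{claim:gaussian_tv_distance} with a union bound, calling the result ``immediate.'' Your write-up spells out the details the paper elides, in particular that $d=\Omega(\log n)$ is what allows a single choice $\lambda_0=\Theta(d)$ to satisfy both subroutines' requirements and absorbs the lower-order $\log(n)\sqrt{\log 1/\delta}/(\eps n)$ term in the mean error; both observations are correct and match the paper's intent.
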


\subsection{Discussion of Private Covariance Estimation}

The topic of differentially private Gaussian covariance estimation has received much attention recently.
Under the assumption that the true covariance satisfies $\bbI\preceq \Sigma\preceq \kappa \bbI$ for some $\kappa\ge 1$, \citet*{kamath_KLSU19} provided efficient algorithms for learning in both the spectral and Frobenius norms.
These algorithms are nearly optimal but require a priori information in the form of $\kappa$.
Their error depends only polylogarithmically on $\kappa$ (in contrast with prior approaches, with error $\Omega(\mathrm{poly}(\kappa))$), but such prior knowledge is not generally available).
In what follows, we focus on learning \textit{unrestricted} covariances, when we do not have such prior knowledge.

With $n\gtrsim d^{3/2}$ samples, Algorithm~\ref{alg:private_covariance_estimation} returns a private covariance estimate that is accurate in spectral norm. 
To the best of our knowledge, this is the first differentially private polynomial-time algorithm achieving such a guarantee for unrestricted Gaussian distributions, closing an open question posed by~\cite{alabi2022privately}.
This dependence on the dimension is known to be optimal in the regime of $\alpha = O(1/\sqrt{d})$ \citep{kamath2022new} and matches the standard Gaussian mechanism applied under the assumption that $\bbI \preceq \Sigma\preceq O(1) \bbI$.

In the past two years, many papers have established techniques for privately learning the covariance of unrestricted Gaussian distributions to $\alpha$ error in rescaled Frobenius norm, also called the Mahalanobis norm for matrices.
Such an estimator, combined with a standard mean estimation procedure, allows one to learn the entire distribution to $O(\alpha)$ total variation distance (see Claim~\ref{claim:gaussian_tv_distance}). 
The optimal sample complexity for this task is
\begin{align}
     n \gtrsim \frac{d^2}{\alpha^2} + \frac{d^2}{\alpha\eps} + \frac{\log 1/\delta}{\eps}. \label{eq:learn_gaussian_complexity}
\end{align}
The information-theoretic upper bound is due to \cite{aden2021sample}; the first term is required for nonprivate estimation, the third is required even for estimating the mean of a univariate Gaussian with known variance and unrestricted mean \citep{karwa2017finite}, and the second was recently proved to be tight \citep{kamath2022new}.
The upper bound of~\cite{aden2021sample} is non-constructive; later work by~\cite{liu2022differential} gave an exponential-time algorithm with nearly the same guarantees.
A group of concurrent and independent papers soon gave polynomial-time algorithms \citep{kamath2021private, ashtiani2021private, kothari2021private}, with \cite{ashtiani2021private} achieving the optimal $d^2$ dimension-dependence. 
(Using tools from~\cite{ashtiani2021private}, the framework of \cite{tsfadia2022friendlycore} achieves a similar error bound.)
Very recently, simultaneous work by \cite{hopkins2022robustness} and \cite{alabi2022privately} gave polynomial-time and sample-optimal estimators that also satisfy robustness against adversarial perturbation to the input.
The former's guarantees for this task are also the best-known in polynomial time, matching Equation~\ref{eq:learn_gaussian_complexity} up to logarithmic factors in $d, \alpha, \eps$, and $\log 1/\delta$.

Our work (combining Algorithms~\ref{alg:main} and~\ref{alg:private_covariance_estimation}) matches the optimal sample complexity up to logarithmic factors and runs in time $\tilde{O}(nd^{\omega-1} + nd/\eps)$.
To the best of our knowledge, the fastest known algorithm takes time $\tilde{O}(nd^{\omega-1}+d^{\omega+1}/\eps)$ and arises by combining ideas of \cite{ashtiani2021private} and \cite{tsfadia2022friendlycore}.
For this task's typical parameter setting of $n\gtrsim d^2/\eps$, both of these expressions are asymptotically dominated by their first terms, corresponding to the time needed to compute the covariance of the entire input data.

We remark that the exact computation of eigenvalue decompositions is not possible; suitable approximation algorithms are well-understood \citep[see, e.g.,][]{trefethen1997numerical}.

\begin{algorithm2e}\label{alg:private_covariance_estimation}
    \SetAlgoLined
    \SetKwInOut{Input}{input}
    \SetKwInOut{Require}{require}
    
    \Input{data set $\xx\in \bbR^{n\times d}$, privacy parameters $\eps,\delta$, outlier threshold $\lambda_0$}
    \Require{$n\ge 10k \lambda_0$}
    \BlankLine
    $k\gets \ceiling{\frac{4\log 2/\delta}{\eps}}+4$\;
    $\N\gets \floor{\frac{1}{24}\cdot \frac{n^2\eps^2}{\lambda_0^2\log 2/\delta}}$\;
    $\hat\Sigma, \score \gets \stablecovariance(\xx,\lambda_0,k)$\tcc*[r]{$\hat\Sigma\in\bbR^{d\times d}, \score\in \bbN$}
    \eIf{$\cM_{\mathrm{PTR}}^{\eps/2,\delta/2}\paren{\score} = \PASS$}{
        Draw $Z_1,\ldots,Z_{\N}\overset{iid}{\sim}\cN(0,\hat\Sigma)$\;
        \KwRet $\frac{1}{\N}\sum_{i=1}^{\N} Z_iZ_i^T$\;
    }{
        \KwRet \FAIL
    }
    \caption{Private Covariance Estimation, $\cA_{\mathrm{cov}}^{\eps, \delta, \lambda_0}(x)$}
\end{algorithm2e}

\subsection{Proof of Theorem~\ref{thm:private_covariance_estimation}}

We use the following lemma, which can be interpreted as an analog of Claim~\ref{claim:covariance_indistinguishability} (used in BGSUZ) that uses advanced composition, requiring roughly a factor of $\sqrt{N}$ more samples to repeat the process $\N$ times.
\begin{claim}[\cite{alabi2022privately}, Theorem 5.1]\label{claim:samples_from_covariance}
    Fix $\eps,\delta\in (0,1)$.
    Let $\covone, \covtwo$ be positive definite matrices satisfying 
    \begin{align}
        \norm{\covtwo^{-1/2}\covone\covtwo^{-1/2}-\bbI}_{F},  \norm{\covone^{-1/2}\covtwo\covone^{-1/2}-\bbI}_{F} = \Delta
    \end{align}
    for any $\Delta$ satisfying $\Delta \le \frac{\eps}{8\log 1/\delta}<1$.
    For any $\N\le \frac{\eps^2}{8\Delta^2\log 1/\delta}$, we have 
        $\cN(0,\covone)^{\otimes \N}\approx_{(\eps,\delta)} \cN(0,\covtwo)^{\otimes \N}$,
    where $p^{\otimes \N}$ denotes the $\N$-fold product distribution, i.e., $\N$ independent copies of $p$.
\end{claim}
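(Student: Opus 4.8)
The plan is to collapse the two $\N$-fold products into a single pair of $\N d$-dimensional Gaussians and then bound the privacy loss random variable of that pair directly; the ``advanced composition'' saving appears simply as the fact that the governing Frobenius norm scales like $\sqrt{\N}$ rather than $\N$. Since $(\eps,\delta)$-indistinguishability is preserved under a common invertible linear map, and this commutes with forming product distributions, I would first apply $z \mapsto (\bbI_\N \otimes \covtwo^{-1/2})z$ to replace the pair $\paren{\cN(0,\covone)^{\otimes\N},\,\cN(0,\covtwo)^{\otimes\N}}$ by $\paren{\cN(0,\bbI_\N\otimes M),\,\cN(0,\bbI_{\N d})}$ with $M := \covtwo^{-1/2}\covone\covtwo^{-1/2}$, and then rotate by an orthogonal diagonalizer of $M$ to assume $M = \mathrm{diag}(\sigma_1,\dots,\sigma_d)$. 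Two observations keep this reduction lossless: (i) the hypothesis gives both $\norm{M-\bbI}_F \le \Delta$ and $\norm{\covone^{-1/2}\covtwo\covone^{-1/2}-\bbI}_F \le \Delta$, and the latter equals $\norm{M^{-1}-\bbI}_F$, since $\covone^{-1/2}\covtwo\covone^{-1/2}-\bbI$ and $M^{-1}-\bbI$ are both symmetric and are similar (conjugate by $\covtwo^{1/2}\covone^{-1/2}$), hence share a spectrum and therefore a Frobenius norm; and (ii) $\Delta<1$ forces $\sigma_i\in(1/2,2)$ for all $i$. So it remains to show $\cN(0,D)\approx_{(\eps,\delta)}\cN(0,\bbI_{\N d})$, where $D := \bbI_\N\otimes\mathrm{diag}(\sigma_1,\dots,\sigma_d)$ has eigenvalues $\sigma_1,\dots,\sigma_d$, each with multiplicity $\N$, satisfying $\sum_i(\sigma_i-1)^2\le\Delta^2$ and $\sum_i(1/\sigma_i-1)^2\le\Delta^2$; note $\norm{D-\bbI_{\N d}}_F=\sqrt{\N}\,\norm{M-\bbI}_F\le\sqrt{\N}\Delta$, and likewise for $D^{-1}$.

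By the standard ``bad event'' argument it suffices to control both privacy losses. For one direction set $L(z):=\log\paren{p_{\bbI}(z)/p_D(z)}$; indexing $z$ by $z_{i,c}$ with $i\in[d],\,c\in[\N]$,
\begin{align}
    L(z) = \tfrac{\N}{2}\sum_{i=1}^d\log\sigma_i + \tfrac{1}{2}\sum_{i,c}\paren{\tfrac{1}{\sigma_i}-1}z_{i,c}^2,
\end{align}
which under $z\sim\cN(0,\bbI_{\N d})$ is a shifted, weighted sum of $\N d$ independent $\chi^2_1$ variables. I would compute $\E[L]=\KL{\cN(0,\bbI_{\N d})}{\cN(0,D)}=\tfrac{\N}{2}\sum_i\paren{\tfrac{1}{\sigma_i}-1-\log\tfrac{1}{\sigma_i}}\le\tfrac{\N}{2}\sum_i\paren{\tfrac{1}{\sigma_i}-1}^2\le\tfrac{\N\Delta^2}{2}$ (using the elementary bound $x-1-\log x\le(x-1)^2$, valid on $(1/2,2)$), and $\Var[L]=\tfrac{\N}{2}\sum_i\paren{\tfrac{1}{\sigma_i}-1}^2\le\tfrac{\N\Delta^2}{2}$, with each coefficient of $z_{i,c}^2-1$ at most $\Delta/2$ in magnitude. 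A Laurent--Massart (sub-exponential Bernstein) tail bound then yields $L(z)\le\tfrac{\N\Delta^2}{2}+\sqrt{\N}\Delta\sqrt{\log(1/\delta)}+\Delta\log(1/\delta)$ with probability at least $1-\delta$. The reverse direction, $\log\paren{p_D(z)/p_{\bbI}(z)}$ under $z\sim\cN(0,D)$, is symmetric: the substitution $z_{i,c}=\sqrt{\sigma_i}\,g_{i,c}$ with $g\sim\cN(0,\bbI)$ turns it into $-\tfrac{\N}{2}\sum_i\log\sigma_i+\tfrac{1}{2}\sum_{i,c}(\sigma_i-1)g_{i,c}^2$, whose mean is $\le\tfrac{\N}{2}\sum_i(\sigma_i-1)^2\le\tfrac{\N\Delta^2}{2}$ and whose variance obeys the same bound.

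Finally I would substitute the hypotheses. From $\N\le\tfrac{\eps^2}{8\Delta^2\log(1/\delta)}$ we get $\tfrac{\N\Delta^2}{2}\le\tfrac{\eps^2}{16\log(1/\delta)}\le\tfrac{\eps}{16}$ and $\sqrt{\N}\Delta\sqrt{\log(1/\delta)}\le\tfrac{\eps}{2\sqrt{2}}$, and from $\Delta\le\tfrac{\eps}{8\log(1/\delta)}$ we get $\Delta\log(1/\delta)\le\tfrac{\eps}{8}$; hence each privacy loss is at most $\paren{\tfrac{1}{16}+\tfrac{1}{2\sqrt{2}}+\tfrac{1}{8}}\eps<\eps$ except with probability $\delta$. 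Given these two one-sided tail bounds, for any event $Y$ the usual splitting gives $\Pr_{\cN(0,D)}[Y]\le e^\eps\Pr_{\cN(0,\bbI_{\N d})}[Y]+\delta$ and symmetrically, and undoing the linear reduction delivers $\cN(0,\covone)^{\otimes\N}\approx_{(\eps,\delta)}\cN(0,\covtwo)^{\otimes\N}$. The only step needing genuine care is the constant bookkeeping in the Bernstein inequality so that the specific factors of $8$ in the hypotheses are exactly what the three error terms demand (there is in fact comfortable slack); everything else is a routine Gaussian computation, and the one conceptual point is that passing to $\bbR^{\N d}$ makes the controlling Frobenius norm grow as $\sqrt{\N}$, which is precisely the advanced-composition improvement over the single-sample version (Claim~\ref{claim:covariance_indistinguishability}).
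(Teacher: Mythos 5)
Your argument is correct in substance, but note that the paper does not prove this claim at all: it is imported verbatim as Theorem~5.1 of \cite{alabi2022privately}, and the paper's own framing of that result is composition-based (``by advanced composition, we can draw $\N$ samples when $n\gtrsim d\sqrt{\N}$''), i.e., one bounds the divergence between $\cN(0,\covone)$ and $\cN(0,\covtwo)$ for a single draw and composes over the $\N$ draws. What you give instead is a direct, one-shot analysis of the privacy loss random variable of the two $\N d$-dimensional product Gaussians; this is exactly the natural $\N$-fold extension of the BGSUZ proof of the single-sample statement (Claim~\ref{claim:covariance_indistinguishability}), with the advanced-composition gain surfacing, as you say, only through the $\sqrt{\N}$ scaling of $\norm{\bbI_{\N}\otimes M-\bbI_{\N d}}_F$. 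The reductions are sound: $(\eps,\delta)$-indistinguishability is preserved under the common bijection $z\mapsto(\bbI_{\N}\otimes\covtwo^{-1/2})z$, and your similarity argument correctly gives $\norm{\covone^{-1/2}\covtwo\covone^{-1/2}-\bbI}_F=\norm{M^{-1}-\bbI}_F$ (conjugation by $\covtwo^{1/2}\covone^{-1/2}$ sends the one symmetric matrix to the other, so they share a spectrum). The mean and variance computations for the privacy loss are right, as is the conversion from two one-sided tail bounds on the loss to $(\eps,\delta)$-indistinguishability. The payoff of your route is a self-contained, elementary proof; the payoff of the cited route is that it is modular and reuses off-the-shelf composition machinery.

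Two small caveats. First, Laurent--Massart is stated for nonnegative coefficients, whereas your coefficients $\tfrac12(1/\sigma_i-1)$ have mixed signs; splitting into positive and negative parts (or running Bernstein on the sub-exponential moment generating function directly, which handles both signs) costs up to a factor $\sqrt{2}$ on the $\sqrt{\N}\Delta\sqrt{\log(1/\delta)}$ term and a union-bound replacement of $\delta$ by $\delta/2$. Your final constants, $1/16+1/(2\sqrt{2})+1/8\approx 0.54$, leave enough room to absorb this, as you anticipated. Second, the step $\eps^2/(16\log(1/\delta))\le\eps/16$ requires $\log(1/\delta)\ge\eps$, which is not implied by $\delta\in(0,1)$ alone; it does hold wherever the claim is invoked in this paper, since there $\delta\le\eps/10<e^{-\eps}$.
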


\begin{proof}[Proof of Theorem~\ref{thm:private_covariance_estimation}]
    As in the proof of Lemma~\ref{lemma:privacy}, requiring $n\ge 10 k\lambda_0$ suffices to ensure that drawing $N=1$ samples preserves privacy. 
    (Note that we save a factor of 2 because Algorithm~\ref{alg:private_covariance_estimation} does not pair samples.)

    Set $\eps' = \eps/2$ and $\delta'=\delta/2$.
    Recall from Lemma~\ref{lemma:covariances_close} that, when the scores are below $k$, we have $\norm{\covone^{-1/2}\covtwo\covone^{-1/2}-\bbI}_{\tr} \le \frac{25}{3}\cdot\frac{1}{1-\psdjump}\cdot\frac{\lambda_0}{n}$ for $\gamma = \psdjumpdef$ (and the same inequality with $\covone$ and $\covtwo$ swapped).
    To apply Claim~\ref{claim:samples_from_covariance}, we use the fact that $\norm{\cdot}_F\le \norm{\cdot}_{\tr}$.
    Our assumptions on $n, \eps,$ and $\delta$ mean that $k\ge 34$ and thus $\gamma\le 0.017$, so we have a trace norm bound of $\Delta \le \frac{25}{3}\cdot \frac{1}{1-\psdjump}\cdot\frac{\lambda_0}{n}\le \frac{17}{20} \cdot \frac{\lambda_0}{n}$.
    Claim~\ref{claim:samples_from_covariance} says we can draw up to
    \begin{align}
        N &= \left\lfloor \frac{1}{\Delta^2}\cdot  \frac{(\eps')^2}{8 \log 1/\delta'} \right\rfloor \\
            &= \left\lfloor \frac{400 n^2}{289 \lambda_0^2} \cdot\frac{\eps^2}{32 \log 2/\delta} \right\rfloor \\
            &= \left\lfloor \frac{25}{578} \cdot\frac{n^2}{\lambda_0^2} \cdot \frac{\eps^2}{\log 2/\delta} \right\rfloor
    \end{align}
    samples while preserving privacy.
    (In Algorithm~\ref{alg:private_covariance_estimation} we replace $25/278$ with the slightly smaller $1/24$.)
    The remainder of the privacy argument follows from the guarantees of propose-test-release (Lemma~\ref{lemma:main_sensivity} for sensitivity of the score function and Claim~\ref{claim:PTR} for privacy of $\PTR$) and basic composition.

    We analyzed the running time of $\stablecovariance$ in Section~\ref{sec:running_time}.
    To generate the private covariance estimate, we first we compute the eigenvalue decomposition of $\hat\Sigma$ and from it produce $\hat\Sigma^{1/2}$.
    We then draw $\N$ samples from $\cN(0,\bbI)$ (in time $\tilde{O}(\N d)$), rescale them by $\hat\Sigma^{1/2}$, and compute the empirical covariance of the rescaled points.

    For accuracy, first note that we have assumed $n\gtrsim d\log (1/\delta)/\eps$ so that the input requirement on $n$ is satisfied and we do not fail immediately.  
    Beyond that, the accuracy argument uses two straightforward applications of Claim~\ref{claim:concentration_of_covariance}.
    Let $\Sigma_\yy$ be the empirical covariance of $\yy$ and $\tilde\Sigma$ the private estimate we release.
    We add and subtract $\Sigma_\yy$ and apply the triangle inequality:
    \begin{align}
        \norm{\Sigma^{-1/2} \tilde\Sigma\Sigma^{-1/2} - \bbI}_2 
            &= \norm{\Sigma^{-1/2}\paren{\tilde\Sigma - \Sigma}\Sigma^{-1/2}}_2 \\
            &= \norm{\Sigma^{-1/2}\paren{\tilde\Sigma -\Sigma_{\yy} + \Sigma_{\yy} - \Sigma}\Sigma^{-1/2}}_2 \\
            &\le  \norm{\Sigma^{-1/2}\paren{\tilde\Sigma -\Sigma_{\yy}}\Sigma^{-1/2}}_2 
                + \norm{\Sigma^{-1/2}\paren{\Sigma_{\yy} - \Sigma}\Sigma^{-1/2}}_2 \\
            &= \norm{\Sigma^{-1/2}\paren{\tilde\Sigma -\Sigma_{\yy}}\Sigma^{-1/2}}_2 
                + \norm{\Sigma^{-1/2} \Sigma_{\yy}\Sigma^{-1/2} - \bbI}_2.
    \end{align}
    Since $n\gtrsim d+\log 1/\beta$, with high probability $\Sigma_\yy$ is invertible and $\norm{\Sigma^{1/2}\Sigma_{\yy}^{-1/2}}_2 = 1 + o(1)$, so applying Cauchy-Schwarz we have
    \begin{align}
        \norm{\Sigma^{-1/2}\paren{\tilde\Sigma -\Sigma_{\yy}}\Sigma^{-1/2}}_2 
            &= \norm{\Sigma^{-1/2}\paren{\Sigma_{\yy}^{1/2}\Sigma_{\yy}^{-1/2}}\paren{\tilde\Sigma -\Sigma_{\yy}}\paren{\Sigma_{\yy}^{-1/2}\Sigma_{\yy}^{1/2}}\Sigma^{-1/2}}_2 \\
            &\lesssim \norm{\Sigma_{\yy}^{-1/2}\paren{\tilde\Sigma -\Sigma_{\yy}}\Sigma_{\yy}^{-1/2}}_2 \\
            &= \norm{\Sigma_{\yy}^{-1/2}\tilde\Sigma\Sigma_{\yy}^{-1/2} - \bbI}_2.
    \end{align}
    Thus we have to control two terms:
    \begin{align}   
        \norm{\Sigma^{-1/2} \tilde\Sigma\Sigma^{-1/2} - \bbI}_2 
            &\lesssim \norm{\Sigma^{-1/2} \Sigma_{\yy}\Sigma^{-1/2} - \bbI}_2 + \norm{\Sigma_{\yy}^{-1/2}\tilde\Sigma\Sigma_{\yy}^{-1/2} - \bbI}_2.
            \label{eq:covariance_triangle_inequality}
    \end{align}

    Since $\Sigma_\yy$ is an empirical estimate of $\Sigma$ with $n$ samples, with high probability the first term in Equation~\eqref{eq:covariance_triangle_inequality} will be at most $\sqrt{d/n}$.

    Similarly, $\tilde\Sigma$ is an empirical estimate of $\Sigma_\yy$ with $\N$ samples, so with high probability the second term in Equation~\ref{eq:covariance_triangle_inequality} will be at most $\sqrt{d/N}$.
    We have $\N \approx \frac{n^2\eps^2}{d^2\log 1/\delta}$, and thus rearranging get an upper bound of $\frac{d^{3/2}\sqrt{\log 1/\delta}}{\eps n}$, as promised.
\end{proof}

\begin{algorithm2e}\label{alg:privately_learn_Gaussians}
    \SetAlgoLined
    \SetKwInOut{Input}{input}
    \SetKwInOut{Require}{require}
    
    \Input{data set $\xx\in \bbR^{n\times d}$, privacy parameters $\eps,\delta$, outlier threshold $\lambda_0$}
    \Require{$n\ge \frac{272 e^2 \lambda_0 \log 2/\delta}{\eps}$}
    \BlankLine

    $\tilde\mu \gets \cA_{\mathrm{main}}^{\eps, \delta, \lambda_0}(x)$\tcc*[r]{Algorithm~\ref{alg:main}}

    \BlankLine
    $\forall i\in [\floor{n/2}], y_i \gets \frac{1}{\sqrt{2}}(x_i - x_{i+\floor{n/2}})$\tcc*[r]{Pair and rescale}
    $\tilde\Sigma \gets \cA_{\mathrm{cov}}^{\eps, \delta, \lambda_0}(y)$\tcc*[r]{Algorithm~\ref{alg:private_covariance_estimation}}
    
    \eIf{$\tilde\mu = \FAIL$ or $\tilde\Sigma = \FAIL$}{
        \KwRet $\FAIL$\;
    }{
        \KwRet $\tilde\mu,\tilde\Sigma$\;
    }
    \caption{Private Learner for Unrestricted Gaussians}
\end{algorithm2e}

\fi

\bibliography{bibliography}

@article{kothari2021private,
  title={Private robust estimation by stabilizing convex relaxations},
  author={Kothari, Pravesh K and Manurangsi, Pasin and Velingker, Ameya},
  journal={arXiv preprint arXiv:2112.03548},
  year={2021}
}

@article{brown2021covariance,
  title={Covariance-aware private mean estimation without private covariance estimation},
  author={Brown, Gavin and Gaboardi, Marco and Smith, Adam and Ullman, Jonathan and Zakynthinou, Lydia},
  journal={Advances in Neural Information Processing Systems},
  volume={34},
  pages={7950--7964},
  year={2021}
}

@book{vershynin2018high,
  title={High-dimensional probability: An introduction with applications in data science},
  author={Vershynin, Roman},
  volume={47},
  year={2018},
  publisher={Cambridge university press}
}

@article{kamath2022new,
  title={New lower bounds for private estimation and a generalized fingerprinting lemma},
  author={Kamath, Gautam and Mouzakis, Argyris and Singhal, Vikrant},
  journal={Advances in Neural Information Processing Systems},
  volume={35},
  pages={24405--24418},
  year={2022}
}

@inproceedings{DworkL09,
	Author = {Cynthia Dwork and Jing Lei},
	Booktitle = {Proceedings of the 41st ACM Symposium on Theory of Computing},
	Date-Added = {2018-08-21 14:21:07 -0400},
	Date-Modified = {2018-10-04 20:16:30 -0400},
	Pages = {371--380},
	Publisher = {ACM},
	Series = {STOC '09},
	Title = {Differential privacy and robust statistics},
	Year = {2009}}

@inproceedings{alabi2022privately,
  title={Privately Estimating a Gaussian: Efficient, Robust and Optimal},
  author={Alabi, Daniel and Kothari, Pravesh K and Tankala, Pranay and Venkat, Prayaag and Zhang, Fred},
  booktitle={Proceedings of the fifty-fifth annual ACM symposium on Theory of computing},
  year={2023}
}

@inproceedings{kamath_KLSU19,
  title={Privately learning high-dimensional distributions},
  author={Kamath, Gautam and Li, Jerry and Singhal, Vikrant and Ullman, Jonathan},
  booktitle={Conference on Learning Theory},
  pages={1853--1902},
  year={2019},
  organization={PMLR}
}

@inproceedings{liu2022differential,
  title={Differential privacy and robust statistics in high dimensions},
  author={Liu, Xiyang and Kong, Weihao and Oh, Sewoong},
  booktitle={Conference on Learning Theory},
  pages={1167--1246},
  year={2022},
  organization={PMLR}
}

@inproceedings{duchi2023fast,
  title={A Pretty Fast Algorithm for Adaptive Private Mean Estimation},
  author={Duchi, John and Haque, Saminul and Kuditipudi, Rohith},
  booktitle={Conference on Learning Theory},
  pages={2511-2551},
  year={2023},
  organization={Proceedings of Machine Learning Research}
}

@article{karwa2017finite,
  title={Finite sample differentially private confidence intervals},
  author={Karwa, Vishesh and Vadhan, Salil},
  journal={arXiv preprint arXiv:1711.03908},
  year={2017}
}

@inproceedings{ashtiani2021private,
  title={Private and polynomial time algorithms for learning Gaussians and beyond},
  author={Ashtiani, Hassan and Liaw, Christopher},
  booktitle={Conference on Learning Theory},
  pages={1075--1076},
  year={2022},
  organization={PMLR}
}

@inproceedings{kamath2021private,
  title={A private and computationally-efficient estimator for unbounded gaussians},
  author={Kamath, Gautam and Mouzakis, Argyris and Singhal, Vikrant and Steinke, Thomas and Ullman, Jonathan},
  booktitle={Conference on Learning Theory},
  pages={544--572},
  year={2022},
  organization={PMLR}
}

@inproceedings{tsfadia2022friendlycore,
  title={Friendlycore: Practical differentially private aggregation},
  author={Tsfadia, Eliad and Cohen, Edith and Kaplan, Haim and Mansour, Yishay and Stemmer, Uri},
  booktitle={International Conference on Machine Learning},
  pages={21828--21863},
  year={2022},
  organization={PMLR}
}

@article{skala2013hypergeometric,
  title={Hypergeometric tail inequalities: ending the insanity},
  author={Skala, Matthew},
  journal={arXiv preprint arXiv:1311.5939},
  year={2013}
}

@incollection{vadhan2017complexity,
  title={The complexity of differential privacy},
  author={Vadhan, Salil},
  booktitle={Tutorials on the Foundations of Cryptography},
  pages={347--450},
  year={2017},
  publisher={Springer}
}

@article{singhal2021privately,
  title={Privately learning subspaces},
  author={Singhal, Vikrant and Steinke, Thomas},
  journal={Advances in Neural Information Processing Systems},
  volume={34},
  pages={1312--1324},
  year={2021}
}

@article{KasiviswanathanS14,
	author = {Shiva Prasad Kasiviswanathan and Adam D. Smith},
	date-added = {2021-02-05 16:06:10 -0500},
	date-modified = {2021-02-05 16:06:25 -0500},
	journal = {Journal of Privacy and Confidentiality},
	title = {On the `Semantics' of Differential Privacy: A Bayesian Formulation},
	year = {2014}
 }

@inproceedings{aden2021sample,
  title={On the sample complexity of privately learning unbounded high-dimensional gaussians},
  author={Aden-Ali, Ishaq and Ashtiani, Hassan and Kamath, Gautam},
  booktitle={Algorithmic Learning Theory},
  pages={185--216},
  year={2021},
  organization={PMLR}
}

@article{hopkins2022robustness,
  title={Robustness Implies Privacy in Statistical Estimation},
  author={Hopkins, Samuel B and Kamath, Gautam and Majid, Mahbod and Narayanan, Shyam},
  journal={arXiv preprint arXiv:2212.05015},
  year={2022}
}

@book{horn2012matrix,
  title={Matrix analysis},
  author={Horn, Roger A and Johnson, Charles R},
  year={2012},
  publisher={Cambridge university press}
}

@article{kamath2020primer,
  title={A primer on private statistics},
  author={Kamath, Gautam and Ullman, Jonathan},
  journal={arXiv preprint arXiv:2005.00010},
  year={2020}
}

@article{CanonneKS20discreteGauss,
  author    = {Cl{\'{e}}ment L. Canonne and
               Gautam Kamath and
               Thomas Steinke},
  title     = {The Discrete Gaussian for Differential Privacy},
  journal   = {CoRR},
  volume    = {abs/2004.00010},
  year      = {2020},
  url       = {https://arxiv.org/abs/2004.00010},
  eprinttype = {arXiv},
  eprint    = {2004.00010},
  timestamp = {Sun, 14 Mar 2021 11:53:06 +0100},
  biburl    = {https://dblp.org/rec/journals/corr/abs-2004-00010.bib},
  bibsource = {dblp computer science bibliography, https://dblp.org}
}

@article{cai2019cost,
  title={The cost of privacy: Optimal rates of convergence for parameter estimation with differential privacy},
  author={Cai, T Tony and Wang, Yichen and Zhang, Linjun},
  journal={arXiv preprint arXiv:1902.04495},
  year={2019}
}

@book{trefethen1997numerical,
  title={Numerical linear algebra},
  author={Trefethen, Lloyd N and Bau, David},
  volume={181},
  year={1997},
  publisher={Siam}
}

@inproceedings{diakonikolas2016robust,
  title={Robust Estimators in High Dimensions without the Computational Intractability},
  author={Diakonikolas, Ilias and Kamath, Gautam and Kane, Daniel M and Li, Jerry and Moitra, Ankur and Stewart, Alistair},
  year={2016},
  booktitle={2016 IEEE 57th Annual Symposium on Foundations of Computer Science (FOCS)}
}

\appendix

\section{Preliminaries}\label{app:preliminaries}

\subsection{Notation and Elementary Facts}\label{sec:organization_and_notation}

Unless stated otherwise, ``$\gtrsim$'' and ``$\lesssim$'' hide absolute constants.
We interpret data set $x=(x_1,\ldots,x_n)$ of points $x_i\in\bbR^d$ as both an ordered $n$-tuple and a matrix $x\in\bbR^{n\times d}$.
Vectors are columns.
Logarithms are base $e$.
We use $[n]$ for the set $\braces{1,\ldots,n}$ and $\bbN =\braces{1,2,\ldots}$ for the natural numbers.
For a vector $v\in \bbR^d$, its support is $\supp{v} = \braces{i\in[d] : v_i\neq 0}$.
The indicator $\indicator{P}\in\braces{0,1}$ equals one when predicate $P$ is true and zero otherwise.
With zero-mean data, we will often use ``covariance'' to refer to the second moment matrix: $\Sigma_x = \frac 1 n \sum_{i=1}^n x_i x_i^T = \frac{1}{n} x^T x$.

Throughout, we use the facts that $1+x\le e^x$ and, for $x\in [0,p]$ with $p<1$, we have $\frac{1}{1-x}\le 1 + \frac{x}{1-p}$. 
In particular, when $p=\frac 1 2$ we have $\frac{1}{1-x}\le 1 + 2x$.

We say that a symmetric matrix $A=\bbR^{d \times d}$ is \emph{positive semidefinite} if $v^T A v\ge 0$ for all $v\in \bbR^{d}$.
If the same statement holds with a strict inequality, we say that $A$ is \emph{positive definite}.
For positive semidefinite matrices $A$ and $B$, we denote the positive semidefinite order (also called the Loewner order) in the standard way: $A\succeq B$ if and only if $A - B$ is positive semidefinite.
This notation extends to $\preceq$, $\succ$, and $\prec$.
Matrix inversion respects the positive semidefinite order (\cite{horn2012matrix}, Corollary 7.7.4.a): if $A,B\succ 0$ then $A\succeq B$ if and only if $A^{-1} \preceq B^{-1}$. 
This allows us to relate Mahalanobis distance and the positive semidefinite order: if matrices $\Sigma_1$ and $\Sigma_2$ satisfy $(1-\gamma)\Sigma_1 \preceq \Sigma_2\preceq \frac{1}{1-\gamma}\Sigma_1$, then for any vector $v$ we have $(1-\gamma)\norm{v}_{\Sigma_1}^2 \le \norm{v}_{\Sigma_2}^2 \le \frac{1}{1-\gamma}\norm{v}_{\Sigma_1}^2$. 

For a  matrix $A\in\bbR^{d\times d}$ with singular values $\sigma_1\ge \cdots\ge\sigma_d$, we use the \emph{spectral norm} $\norm{A}_2 = \sigma_1$, the \emph{Frobenius norm} $\norm{A}_F = \sqrt{\sum_{i=1}^d \sigma_i^2}$, and the \emph{trace norm} $\norm{A}_{\tr}=\sum_{i=1}^d \sigma_i$.
We have $\norm{A}_2 \le \norm{A}_F\le \norm{A}_{\tr}$.

\subsection{Subgaussian Random Variables and Concentration Inequalities}\label{app:subgaussian_facts}

For further discussion on subgaussian random variables, see the textbook by~\cite{vershynin2018high}.

\begin{definition}[Subgaussian Norm]
    Let $\yy\in\bbR $ be a random variable. 
    The \emph{subgaussian norm of $\yy$}, denoted $\norm{\yy}_{\psi_2}$, is 
        $\norm{\yy}_{\psi_2} = \inf \braces{t>0 : \bbE \exp\paren{\yy^2/t^2}\le 2}$.
\end{definition}

\begin{definition}[Subgaussian Random Variable]\label{def:subgaussian_vector}
    Let $\yy\in\bbR^d$ be a random variable with mean $\mu$ and covariance $\Sigma$. 
    Call $\yy$ \emph{subgaussian with parameter $K$} if there exists $K\ge 1$ such that for all $v\in\bbR^d$ we have
    \begin{align}
        \norm{\ip{\yy-\mu, v}}_{\psi_2} \le K \sqrt{ v^T\Sigma v }.
    \end{align}
\end{definition}

The Gaussian distribution $\cN(\mu,\Sigma)$ is subgaussian with parameter $K=O(1)$.

\begin{claim}[Concentration of Norm]\label{claim:concentration_of_norm}
    Let $\yy_1,\ldots,\yy_n$ be drawn i.i.d.\ from a $d$-dimensional subgaussian distribution with parameter $K_\yy$, mean $\mu$, and (full-rank) covariance $\Sigma$.
    There exists a constant $K_1$ such that, with probability at least $1-\beta$, we have both
    \begin{align}
        \norm{\yy_1 - \mu}_{\Sigma}^2 \le K_1 K_{\yy}^2 \paren{d + \log 1/\beta}
  \quad\text{and}\quad 
        \norm{\frac 1 n \sum_{i=1}^n \yy_i - \mu}_{\Sigma}^2 \le K_1 K_{\yy}^2\cdot \frac{d + \log 1/\beta}{n}.
    \end{align}
\end{claim}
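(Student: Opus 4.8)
The plan is to reduce to the isotropic case and then run a standard covering-number argument, since Definition~\ref{def:subgaussian_vector} only gives us \emph{directional} subgaussianity rather than independent subgaussian coordinates. Set $z_i \defeq \Sigma^{-1/2}(\yy_i - \mu)$, so that $\norm{\yy_1 - \mu}_{\Sigma} = \norm{z_1}_2$ and $\norm{\tfrac 1 n \sum_i \yy_i - \mu}_{\Sigma} = \norm{\tfrac 1 n \sum_i z_i}_2$. The $z_i$ are i.i.d., mean zero, with covariance $\bbI$; moreover for any unit vector $v \in \bbR^d$ we have $\ip{z_1, v} = \ip{\yy_1 - \mu, \Sigma^{-1/2} v}$, and since $(\Sigma^{-1/2} v)^T \Sigma (\Sigma^{-1/2} v) = \norm{v}_2^2 = 1$, Definition~\ref{def:subgaussian_vector} gives $\norm{\ip{z_1, v}}_{\psi_2} \le K_{\yy}$. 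It thus suffices to show that an isotropic, mean-zero random vector $z$ all of whose one-dimensional marginals have subgaussian norm at most $K$ satisfies $\norm{z}_2^2 \lesssim K^2(d + \log 1/\beta)$ with probability at least $1-\beta$.

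For this, fix a $\tfrac12$-net $\cN$ of the unit sphere $S^{d-1}$ with $\abs{\cN}\le 5^d$ (a standard volumetric bound). A routine approximation argument gives $\norm{z}_2 \le 2 \max_{v \in \cN} \ip{z, v}$. The definition of the subgaussian norm yields the tail bound $\Pr[\ip{z,v} > t] \le 2\exp(-c t^2 / K^2)$ for a universal constant $c$ and every unit $v$ (apply Markov's inequality to $\exp(\ip{z,v}^2/\norm{\ip{z,v}}_{\psi_2}^2)$). A union bound over $\cN$ gives $\Pr[\max_{v\in\cN}\ip{z,v} > t] \le 2 \cdot 5^d \exp(-ct^2/K^2)$, and choosing $t = C K(\sqrt d + \sqrt{\log(1/\beta)})$ for a large enough absolute constant $C$ makes this at most $\beta/2$. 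On that event $\norm{z}_2^2 \le 4t^2 \lesssim K^2(d + \log 1/\beta)$, which is the first claimed bound (with $z = z_1$, $K = K_{\yy}$).

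For the empirical mean, note $\bar z \defeq \tfrac 1 n \sum_{i=1}^n z_i$ is again mean zero, and for any unit $v$ the scalar $\ip{\bar z, v} = \tfrac 1 n \sum_i \ip{z_i, v}$ is a sum of $n$ independent, mean-zero, subgaussian variables each of norm at most $K_{\yy}$; the standard rule $\norm{\sum_i a_i X_i}_{\psi_2}^2 \lesssim \sum_i a_i^2 \norm{X_i}_{\psi_2}^2$ (for independent mean-zero $X_i$) with $a_i = 1/n$ gives $\norm{\ip{\bar z, v}}_{\psi_2} \lesssim K_{\yy}/\sqrt n$. Rerunning the identical net-plus-union-bound argument with $K$ replaced by $O(K_{\yy}/\sqrt n)$ shows $\norm{\bar z}_2^2 \lesssim K_{\yy}^2 (d + \log 1/\beta)/n$ on an event of probability at least $1 - \beta/2$. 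Taking a union bound over the two events and adjusting the constant $K_1$ proves the claim.

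The argument has no genuine obstacle; the only point requiring care is that Definition~\ref{def:subgaussian_vector} controls only one-dimensional marginals, so one cannot invoke a ``concentration of the norm'' theorem for vectors with independent subgaussian coordinates (e.g.\ Vershynin, Theorem~3.1.1) as a black box — the $\tfrac12$-net discretization is precisely what lets us get by with the weaker hypothesis, at the (irrelevant) cost of a worse absolute constant. Everything else is bookkeeping: the volumetric net bound, the elementary inequality $\norm{z}_2 \le 2\max_{v\in\cN}\ip{z,v}$, the subgaussian tail bound, and additivity of squared subgaussian norms under independent sums.
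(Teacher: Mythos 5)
Your proof is correct. Note that the paper does not actually supply a proof of Claim~\ref{claim:concentration_of_norm}: it is stated in the appendix as a standard concentration fact, with a general pointer to Vershynin's textbook in place of an argument. What you have written is precisely the standard argument one reaches for here — reduce to the isotropic case via $z_i = \Sigma^{-1/2}(y_i-\mu)$, observe that Definition~\ref{def:subgaussian_vector} gives $\norm{\ip{z_1,v}}_{\psi_2}\le K_\yy$ for unit $v$, then a $\tfrac12$-net of $S^{d-1}$ plus a union bound yields $\norm{z_1}_2\lesssim K_\yy\sqrt{d+\log 1/\beta}$, and the same argument after the $\psi_2$-additivity step handles $\bar z$. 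Your caveat that Definition~\ref{def:subgaussian_vector} only controls one-dimensional marginals — so the ``concentration of the norm'' theorem for coordinatewise-independent subgaussian vectors cannot be invoked as a black box — is well taken; the net argument is exactly what substitutes for it and gives the (slightly looser, but fully adequate) bound $\lesssim K_\yy^2(d+\log 1/\beta)$ rather than the sharper $d + K_\yy^2\log 1/\beta$ form available under independence. All the quantitative steps (volumetric bound $\abs{\cN}\le 5^d$, the factor-$2$ approximation, the $\psi_2$ tail, $\norm{\sum a_i X_i}_{\psi_2}^2\lesssim\sum a_i^2\norm{X_i}_{\psi_2}^2$ for independent mean-zero $X_i$) are standard and correctly applied.
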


\begin{claim}[Concentration of Covariance]\label{claim:concentration_of_covariance}
    Let $\yy_1,\ldots, \yy_n$ be drawn i.i.d.\ from a $d$-dimensional subgaussian distribution with parameter $K_\yy$, mean $\mu=0$, and (full-rank) covariance $\Sigma$.
    Let $\hat\Sigma = \frac{1}{n}\sum_{i=1}^n \yy_i\yy_i^T$ be the empirical covariance.
    There exists absolute constants $K_1$ and $K_2$ such that, for any $\beta\in (0,1)$, if $n\ge K_2\paren{d + \log 1/\beta}$, then
    with probability at least $1-\beta$ we have
    \begin{align}
        \norm{\Sigma^{-1/2}\hat\Sigma\Sigma^{-1/2} - \bbI}_2 \le K_1 K_{\yy}^2\sqrt{\frac{d + \log 1/\beta}{n}}.
    \end{align}
\end{claim}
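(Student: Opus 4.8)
The plan is to reduce to the isotropic case and then run the standard covering-number argument (this is essentially Theorem~4.6.1 of \citet{vershynin2018high}, specialized to the centered setting). First I would set $z_i \defeq \Sigma^{-1/2}\yy_i$. By Definition~\ref{def:subgaussian_vector} these are i.i.d., mean zero, have covariance $\bbI$, and remain subgaussian with the same parameter $K_\yy$, since $\norm{\ip{z_i,v}}_{\psi_2} = \norm{\ip{\yy_i,\Sigma^{-1/2}v}}_{\psi_2} \le K_\yy \sqrt{(\Sigma^{-1/2}v)^T\Sigma(\Sigma^{-1/2}v)} = K_\yy\norm{v}_2$. Writing $\hat\Sigma_z = \frac1n\sum_i z_iz_i^T$, we have $\Sigma^{-1/2}\hat\Sigma\Sigma^{-1/2} = \hat\Sigma_z$, so it suffices to bound $\norm{\hat\Sigma_z - \bbI}_2$.

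Next I would pass to a net. Using the variational characterization $\norm{\hat\Sigma_z - \bbI}_2 = \sup_{v\in S^{d-1}}\abs{\frac1n\sum_i\ip{z_i,v}^2 - 1}$ together with a standard $1/4$-net $\cN$ of the sphere (of size $\abs{\cN}\le 9^d$, along which $\norm{\hat\Sigma_z-\bbI}_2 \le 2\max_{v\in\cN}\abs{\frac1n\sum_i\ip{z_i,v}^2 - 1}$), it is enough to control the deviation for each fixed unit vector $v$ and then union bound over $\cN$.

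For fixed $v\in S^{d-1}$, the $\ip{z_i,v}$ are i.i.d., mean zero, with $\norm{\ip{z_i,v}}_{\psi_2}\le K_\yy$ and $\E\ip{z_i,v}^2 = 1$, so the $\ip{z_i,v}^2 - 1$ are i.i.d.\ centered sub-exponential with sub-exponential norm $O(K_\yy^2)$. Bernstein's inequality for sums of sub-exponential variables then gives, for an absolute constant $c>0$ and any $t>0$,
\[
\Pr\!\left[\abs{\tfrac1n\sum_i\ip{z_i,v}^2 - 1} > t\right] \le 2\exp\!\left(-c\,n\min\!\left(\tfrac{t^2}{K_\yy^4},\tfrac{t}{K_\yy^2}\right)\right).
\]
Taking $t = K_1 K_\yy^2\sqrt{(d+\log 1/\beta)/n}$ and union-bounding over the $9^d$ net points, the failure probability is at most $9^d\cdot 2\exp(-c\,n\min(t^2/K_\yy^4, t/K_\yy^2))$; choosing $K_1$ large and using the hypothesis $n\ge K_2(d+\log 1/\beta)$ with $K_2$ large (which forces $t\le K_\yy^2$, so the minimum is attained by the quadratic term $t^2/K_\yy^4 = K_1^2(d+\log 1/\beta)/n$), this is at most $\beta$. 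Absorbing the net's factor of $2$ into $K_1$ finishes the proof.

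The main point to get right is exactly this last step: ensuring the minimum in Bernstein's bound is realized by the sub-Gaussian ($t^2$) regime rather than the heavy-tailed ($t$) regime. This is precisely what the sample-size assumption $n\gtrsim d+\log 1/\beta$ buys, and it is the reason the final bound scales as $\sqrt{(d+\log 1/\beta)/n}$ rather than $(d+\log 1/\beta)/n$. Everything else is routine: the equivalence between subgaussian projections and sub-exponential squared projections, the covering number of $S^{d-1}$, and collecting absolute constants.
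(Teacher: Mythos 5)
The paper does not include a proof of Claim~\ref{claim:concentration_of_covariance}; it is stated as a standard fact with a pointer to \cite{vershynin2018high}, and your argument is exactly the textbook one (Vershynin, Theorem~4.6.1), carried out correctly: whitening reduces to the isotropic case while preserving the subgaussian parameter $K_\yy$, a $1/4$-net of $S^{d-1}$ of size $9^d$ reduces the operator norm to pointwise deviations, Bernstein for sub-exponential variables controls each fixed direction, and the union bound plus the hypothesis $n\gtrsim d+\log 1/\beta$ keeps you in the $t^2/K_\yy^4$ branch of the min so the final rate is $\sqrt{(d+\log 1/\beta)/n}$. You also correctly observe that $K_2$ must be chosen after $K_1$ (e.g.\ $K_2\ge K_1^2$) so that $t\le K_\yy^2$, and that the net's factor of $2$ is just a rename of $K_1$; this is a faithful rendition of the cited source.
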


Finally, we use the following tail bound for hypergeometric distributions to argue that the random set $\R$ of reference points (selected in Algorithm~\ref{alg:main} and provided to $\stablemean$) is degree-representative with high probability. 
\begin{claim}[See~\cite{skala2013hypergeometric}]\label{claim:hypergeometric_tail}
    Suppose an urn contains $N$ balls and exactly $k$ of them are black.
    Let random variable $\yy$ be the number of black balls selected when drawing $n$ balls uniformly from the urn without replacement.
    Then for all $t\ge 0$ we have $\Pr\brackets{\abs{\yy- kn/N} \ge tn}\le 2e^{-2t^2 n}$.
\end{claim}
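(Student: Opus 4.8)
The plan is to reduce sampling without replacement to sampling \emph{with} replacement and then apply the standard Chernoff--Hoeffding argument. Write $Y := \yy = \sum_{i=1}^n X_i$, where $X_i\in\{0,1\}$ is the indicator that the $i$-th draw (without replacement) is a black ball; by exchangeability each $X_i$ has mean $p := k/N$, so $\mathbb{E}[Y] = np = kn/N$. The key classical fact, due to Hoeffding, is that for every convex $\phi:\mathbb{R}\to\mathbb{R}$ one has $\mathbb{E}[\phi(Y)] \le \mathbb{E}[\phi(\widetilde Y)]$, where $\widetilde Y = \sum_{i=1}^n \widetilde X_i$ and the $\widetilde X_i$ are i.i.d.\ $\mathrm{Bernoulli}(p)$ (the same urn sampled \emph{with} replacement). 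First I would recall/cite this domination --- it follows by conditioning on the multiset of balls drawn and a convexity argument over the draw order --- and apply it with $\phi(x) = e^{sx}$ for $s>0$ to obtain $\mathbb{E}[e^{sY}] \le \mathbb{E}[e^{s\widetilde Y}] = \prod_{i=1}^n \mathbb{E}[e^{s\widetilde X_i}]$.

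Next I would bound each factor using Hoeffding's lemma: since $\widetilde X_i$ takes values in an interval of length $1$, $\mathbb{E}[e^{s(\widetilde X_i - p)}] \le e^{s^2/8}$, hence $\mathbb{E}[e^{s(Y-np)}] \le e^{n s^2/8}$. Markov's inequality then gives, for any $t\ge 0$ and $s>0$, $\Pr[Y-np \ge tn] \le e^{-stn + n s^2/8}$; optimizing over $s$ (take $s = 4t$) yields $\Pr[Y-np\ge tn] \le e^{-2t^2 n}$. The lower tail is handled identically by running the same argument on $1-X_i$ --- equivalently, swapping the roles of black and non-black balls, which preserves the without-replacement structure --- giving $\Pr[Y-np \le -tn] \le e^{-2t^2 n}$. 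A union bound over the two tails produces the stated factor of $2$, i.e.\ $\Pr[\,|Y - kn/N| \ge tn\,] \le 2 e^{-2 t^2 n}$, valid for all $t\ge 0$ with no constraint on $n,k,N$ (in particular $p$ need not be bounded away from $0$ or $1$).

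There is essentially no obstacle here beyond invoking the Hoeffding reduction correctly; the one point requiring care is that we want the variance proxy in the exponent to be exactly $n/8$ per Hoeffding's lemma with range $1$ --- rather than, say, a sharper $np(1-p)$-type bound --- so that the final exponent is the clean $2t^2 n$ matching the statement. Since the result is stated as ``see~\cite{skala2013hypergeometric}'', in the paper itself I would simply cite that note (together with Hoeffding's original 1963 paper) rather than reproduce the reduction; the argument above is the self-contained version.
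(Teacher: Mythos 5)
Your proof is correct and is the standard Hoeffding (1963) reduction-to-with-replacement argument, which is precisely what the cited reference \cite{skala2013hypergeometric} surveys; the paper itself does not reprove the claim but simply cites that note, so there is no paper proof to compare against beyond the citation. The one place that deserves care is exactly the one you flagged: using the range-$1$ Hoeffding lemma bound $e^{s^2/8}$ per factor (rather than a $p(1-p)$-dependent bound) is what yields the clean $e^{-2t^2 n}$ exponent with no restriction on $k/N$, and your optimization $s=4t$ and the black/non-black swap for the lower tail are both correct.
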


\subsection{Differential Privacy}\label{app:dp_facts}

For these facts and further background on differential privacy, see the monograph by~\cite{vadhan2017complexity}.

\begin{fact}[Basic Composition]\label{fact:basic_composition}
    For all $1\le i\le K$, suppose mechanism $\cM_i$ is $(\eps,\delta)$-differentially private.
    Then $\paren{\cM_1,\cM_2,\ldots,\cM_K}$ is $(K\eps,K\delta)$-differentially private.
    Moreover, this holds even when the mechanisms are chosen adaptively.
\end{fact}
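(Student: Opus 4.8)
The plan is to prove the statement by induction on $K$, reducing everything to the two-fold adaptive composition case. The base case $K=1$ is immediate. For the inductive step, having shown $(\cM_1,\dots,\cM_{K-1})$ is $((K-1)\eps,(K-1)\delta)$-DP, I would regard it as a single mechanism and compose it with $\cM_K$ (which, conditioned on the transcript of the first $K-1$ mechanisms, is $(\eps,\delta)$-DP); the two-fold statement then gives parameters $(K\eps,K\delta)$. So everything reduces to showing: if $\cM_1$ is $(\eps_1,\delta_1)$-DP and, for each fixed output value $y_1$ of $\cM_1$, the mechanism $\cM_2(\cdot,y_1)$ is $(\eps_2,\delta_2)$-DP, then $(\cM_1,\cM_2)$ is $(\eps_1+\eps_2,\delta_1+\delta_2)$-DP.

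The main tool is a decomposition characterization of approximate indistinguishability: for distributions $P,Q$ on a common space, $P(S)\le e^\eps Q(S)+\delta$ for all events $S$ if and only if $P$ can be written as a sum of nonnegative measures $P = P^{\le}+P^{>}$ with $P^{\le}\le e^\eps Q$ pointwise (in the Radon--Nikodym sense) and $P^{>}$ of total mass at most $\delta$. In the discrete case one takes $P^{>}(y)=\max\{0,\,P(y)-e^\eps Q(y)\}$ and $P^{\le}(y)=\min\{P(y),\,e^\eps Q(y)\}$, so that $\|P^{>}\| = \sup_S(P(S)-e^\eps Q(S))\le\delta$; the general case is identical using densities. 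I would first record this lemma (both directions are short), then apply its ``only if'' direction to each of the two mechanisms.

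For the two-fold step, fix adjacent data sets $x,x'$ and write the joint output laws as $P(y_1,y_2)=P_1(y_1)\,P_2^{y_1}(y_2)$ and $Q(y_1,y_2)=Q_1(y_1)\,Q_2^{y_1}(y_2)$, where $P_1,Q_1$ are the laws of $\cM_1(x),\cM_1(x')$ and $P_2^{y_1},Q_2^{y_1}$ the laws of $\cM_2(x,y_1),\cM_2(x',y_1)$. Decompose $P_1=P_1^{\le}+P_1^{>}$ relative to $e^{\eps_1}Q_1$, and for each $y_1$ decompose $P_2^{y_1}=(P_2^{y_1})^{\le}+(P_2^{y_1})^{>}$ relative to $e^{\eps_2}Q_2^{y_1}$. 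Expanding $P_1 P_2^{y_1}$, the ``main'' term $P_1^{\le}(y_1)(P_2^{y_1})^{\le}(y_2)$ is pointwise at most $e^{\eps_1+\eps_2}Q(y_1,y_2)$, hence contributes at most $e^{\eps_1+\eps_2}Q(S)$ to any event $S$. After recombining $P_1^{\le}(P_2^{y_1})^{>}+P_1^{>}(P_2^{y_1})^{>}=P_1(P_2^{y_1})^{>}$, the leftover ``overflow'' mass is $\sum_{y_1,y_2}\big(P_1(y_1)(P_2^{y_1})^{>}(y_2)+P_1^{>}(y_1)(P_2^{y_1})^{\le}(y_2)\big)$; the first piece equals $\sum_{y_1}P_1(y_1)\,\|(P_2^{y_1})^{>}\|\le\delta_2$ and the second is at most $\|P_1^{>}\|\cdot\sup_{y_1}\|(P_2^{y_1})^{\le}\|\le\delta_1\cdot 1$, using that $(P_2^{y_1})^{\le}\le P_2^{y_1}$ and $P_1^{\le}\le P_1$ have total mass at most $1$. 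Therefore $P(S)\le e^{\eps_1+\eps_2}Q(S)+\delta_1+\delta_2$ for every $S$, and the reverse inequality follows by interchanging $x$ and $x'$.

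The step I expect to need the most care is this overflow accounting. The naive move of bounding each ``pure-DP part'' by $e^{\eps}$ times the other distribution produces a spurious $e^{\eps_1}\delta_2$ (and $\delta_1\delta_2$) term; the point is instead to bound $(P_2^{y_1})^{\le}$ and $P_1^{\le}$ by their own total masses (which are $\le 1$) and to recombine the overflow pieces as above so that the two $\delta$'s add exactly. The measure-theoretic issues in the continuous setting---existence of the decomposition, and measurability/uniformity of $\cM_2$'s guarantee over $y_1$---are standard and can either be handled directly or sidestepped by partitioning the output spaces into finitely many pieces.
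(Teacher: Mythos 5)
Your proof is correct and complete. The paper does not supply its own argument for this statement---it is cited as a known fact from Vadhan's monograph---but your decomposition argument (splitting each conditional output law into a pointwise-$e^\eps$-dominated piece and an overflow piece of mass at most $\delta$, then recombining the cross terms so that the $\delta$'s add exactly rather than picking up spurious $e^{\eps}\delta$ factors) is the standard textbook proof, matching what appears in the cited source.
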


\begin{fact}\label{fact:group_privacy}
    Suppose for some $\eps$ and $\delta$ that distributions $p_1, p_2,$ and $p_3$ satisfy $p_1 \approx_{(\eps,\delta)} p_2$ and $p_2\approx_{(\eps,\delta)} p_3$.
    Then $p_1 \approx_{(2\eps, (1+e^{\eps})\delta)} p_3$.
\end{fact}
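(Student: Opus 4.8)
The plan is to prove the claim by directly chaining the two indistinguishability hypotheses, event by event. Recall that $p \approx_{(\eps,\delta)} q$ means that for every (measurable) event $Y$ we have both $p(Y) \le e^{\eps} q(Y) + \delta$ and $q(Y) \le e^{\eps} p(Y) + \delta$; this two-sided form is exactly what one gets from the definition of $(\eps,\delta)$-indistinguishability, since that definition is invariant under swapping the two distributions (equivalently, under swapping the two adjacent data sets in the DP definition). It therefore suffices to fix an arbitrary event $Y$, bound $p_1(Y)$ from above in terms of $p_3(Y)$, and then invoke the symmetric version of the same argument for the reverse inequality.

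For the forward direction, I would first apply the hypothesis $p_1 \approx_{(\eps,\delta)} p_2$ to get $p_1(Y) \le e^{\eps} p_2(Y) + \delta$, and then substitute the bound $p_2(Y) \le e^{\eps} p_3(Y) + \delta$ coming from $p_2 \approx_{(\eps,\delta)} p_3$:
\[
p_1(Y) \;\le\; e^{\eps}\paren{e^{\eps} p_3(Y) + \delta} + \delta \;=\; e^{2\eps} p_3(Y) + \paren{1 + e^{\eps}}\delta.
\]
Running the same two steps with the roles of $p_1$ and $p_3$ exchanged (using $p_3 \approx_{(\eps,\delta)} p_2$ and then $p_2 \approx_{(\eps,\delta)} p_1$) gives $p_3(Y) \le e^{2\eps} p_1(Y) + (1 + e^{\eps})\delta$. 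Since $Y$ was arbitrary, these two inequalities are precisely the statement $p_1 \approx_{(2\eps,\,(1+e^{\eps})\delta)} p_3$.

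There is essentially no technical obstacle here; the only points requiring care are (i) using the symmetric form of $(\eps,\delta)$-indistinguishability so that the conclusion is genuinely two-sided, and (ii) bookkeeping the accumulation of the two $\delta$ slacks --- the outer application of the first inequality scales the \emph{inner} $\delta$ by $e^{\eps}$ while leaving the \emph{outer} $\delta$ unchanged, which is why the resulting slack is $(1+e^{\eps})\delta$ rather than $2e^{\eps}\delta$ or $2\delta$. In the parameter regime used later (e.g.\ in the proof of Lemma~\ref{lemma:privacy}, where $e^{\eps'}\le e\le 3$), this bound is simply relaxed to $4\delta'$.
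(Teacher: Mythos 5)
Your proof is correct. The paper itself states this as a standard fact and defers to the Vadhan monograph rather than proving it, so there is no in-paper argument to compare against; your chaining argument, including the observation that the inner $\delta$ gets scaled by $e^{\eps}$ while the outer one does not (yielding $(1+e^{\eps})\delta$), is exactly the textbook derivation and is complete.
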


\ifCOLT
    \section{Main Analysis: Proof of Theorem~\ref{thm:main}}\label{app:main_analysis}
    \ifCOLT
        In this appendix, we prove the three subclaims of Theorem~\ref{thm:main}: privacy, accuracy, and running time.
    \fi

    \section{Analysis of $\stablecovariance$}\label{app:stable_covariance_analysis}

    \section{Analysis of $\stablemean$}\label{sec:mean}

    \section{Private Covariance Estimation and Fast Learning of Gaussians}\label{sec:private_covariance_estimation}
    
\fi

\section{Deferred Proofs: Identifiability and Sensitivity}\label{app:deferred}

In this appendix we prove three statements whose proofs appeared, with only minor changes from the versions we present, in BGSUZ.

We also prove Lemma~\ref{lemma:core_sensitivity} (restated as Lemma~\ref{lemma:core_sensitivity_restated}), whose proof is almost identical to that of Lemma~\ref{lemma:good_set_sensitivity}, and Claim~\ref{claim:PTR} (restated as Claim~\ref{claim:PTR_restated}), the guarantees for our propose-test-release variant.

\begin{lemma}[Restatement of Lemma~\ref{lemma:identifiability_good_set}]\label{lemma:identifiability_good_set_restated}
    \IdentifiabilityGoodSet
\end{lemma}
\begin{proof}
    Assume without loss of generality that $y$ and $y'$ differ on index $i^*=1$ and let $z_1\defeq y_1'$.
    Then we have 
    \begin{align}
        \Sigma_{v} = \sum_{i\in [\m]} v_i \cdot  \paren{\yy_i'} \paren{\yy_i'}^T 
            &= \Sigma_w + v_1 \cdot z_1 z_1^T - w_1 \cdot y_1 y_1^T + \sum_{\substack{i\in [\m] \\ i >1}} (v_i - w_i) \cdot \yy_i \yy_i^T  \label{eq:diff_of_covariances} \\
            &\succeq \Sigma_w + 0 - w_1 \cdot y_1 y_1^T + \sum_{\substack{i\in \mathrm{supp}(w) \\ i>1}} (v_i - w_i) \cdot \yy_i \yy_i^T.
    \end{align}
    The last line follows from the fact that, by restricting to the support of $w$, we only drop positive semidefinite terms from the sum (when $w_i=0$, we have $v_i - w_i\ge 0$).
    We want to lower bound this with $(1-\psdjump)\Sigma_w$, so it remains to show that 
    $-w_1 \cdot y_1 y_1^T + \sum_{i \in \mathrm{supp}(w)} (v_i - w_i)\yy_i\yy_i^T \succeq -\psdjump \Sigma_w$, or equivalently that
    $w_1 \cdot y_1 y_1^T + \sum_{i \in \mathrm{supp}(w)} (w_i - v_i) \yy_i \yy_i^T \preceq \psdjump \Sigma_w$.
    This is an upper bound on the operator norm;
    to prove it we conjugate by $\Sigma_w^{-1/2}$ and apply the triangle inequality plus the assumption that $w$ is $\lambda$-good.
    (Also recall that $\norm{uu^T}_2 = u^T u$ for any vector $u$.)
    \begin{align}
        \biggl\lVert w_1 \cdot \Sigma_w^{-1/2} y_1 y_1^T &\Sigma_w^{-1/2} +\sum_{\substack{i\in \mathrm{supp}(w) \\ i>1}} (w_i - v_i)\cdot \Sigma_w^{-1/2} \yy_i \yy_i^T \Sigma_w^{-1/2} \biggr\rVert_2\\
            &\le \abs{w_1} \cdot \norm{\Sigma_w^{-1/2} y_1 y_1^T \Sigma_w^{-1/2}}_2 + \sum_{\substack{i\in \mathrm{supp}(w) \\ i>1}} \abs{w_i - v_i}\cdot \norm{\Sigma_w^{-1/2} \yy_i \yy_i^T \Sigma_w^{-1/2}}_2 \\
            &=  \abs{w_1} \cdot \norm{\Sigma_w^{-1/2} y_1 }_2^2 + \sum_{\substack{i\in \mathrm{supp}(w) \\ i>1}} \abs{w_i - v_i}\cdot \norm{\Sigma_w^{-1/2} \yy_i }_2^2 \\
            &\le \abs{w_1} \cdot \lambda + \sum_{\substack{i\in \mathrm{supp}(w) \\ i>1}} \abs{w_i - v_i}\cdot \lambda.
    \end{align}
    This is at most $\lambda\paren{\eta + \rho}$ by assumption.
    A symmetrical argument shows that $(1-\psdjump)\Sigma_{v}\preceq \Sigma_w$, which gives us our upper bound on $\Sigma_{v}$.

    A straightforward argument establishes the trace norm inequalities.
    Continuing from Equation~\eqref{eq:diff_of_covariances} and conjugating by $\Sigma_w^{-1/2}$, we have
    \begin{align}
        \Sigma_w^{-1/2} \Sigma_{v} \Sigma_w^{-1/2} - \bbI
            &=  v_1 \cdot\Sigma_w^{-1/2} z_1 z_1^T \Sigma_w^{-1/2} - w_1 \cdot\Sigma_w^{-1/2} y_1 y_1^T \Sigma_w^{-1/2} \\
            &\quad + \sum_{i=2}^{\m} (v_i - w_i)\cdot \Sigma_w^{-1/2} \yy_i \yy_i^T \Sigma_w^{-1/2}.
    \end{align}
    As before, we apply the triangle inequality and $\norm{uu^T}_{\tr}=\norm{u}_2^2$:
    \begin{align}
        \norm{\Sigma_w^{-1/2} \Sigma_{v} \Sigma_w^{-1/2} - \bbI }_{\tr}
            &\le \abs{v_1} \cdot \norm{\Sigma_w^{-1/2} z_1}_2^2 + \abs{w_1}\cdot  \norm{\Sigma_w^{-1/2} y_1}_2^2 \\
            &\quad +\sum_{i=2}^\m \abs{v_i - w_i}\cdot \norm{\Sigma_w^{-1/2} \yy_i }_{2}^2.
    \end{align}
    We cannot uniformly upper bound these squared norms by $\lambda$, since our assumption of $w$'s goodness only applies to points in the support of $w$.
    However, by the positive semidefinite inequalities established before,
    for all $i$ in the support of $v$ we have $\norm{\Sigma_w^{-1/2} \yy_i' }_{2}^2\le \frac{1}{1-\psdjump}\lambda$.

    This argument, combined with $\sum_{i>1}\abs{v_i-w_i}\le \rho$, yields
    \begin{align}
        \norm{\Sigma_w^{-1/2} \Sigma_{v} \Sigma_w^{-1/2} - \bbI }_{\tr} 
            &\le 1.05 \cdot \paren{\abs{v_1} + \abs{w_1} + \rho}\cdot  \lambda \\
            &= 1.05\cdot \gamma_2.
    \end{align}
    The second trace norm inequality is symmetrical.

    In the setting where data sets $y$ and $y'$ are identical, the same proof structure applies directly.
\end{proof}

Observe that, when $\R=[n]$ and the degree threshold $\tau$ is at least $\frac{n}{2}+1$, any two points in $\supp{w}$ must have a ``neighbor'' in common, and thus cannot themselves be too far apart.
We have the following claim, which says that weighted cores which are close in $\ell_1$ norm (equivalently, total variation distance) have close means.
Furthermore, this holds even when the cores are computed under covariances that differ slightly.
\begin{lemma}\label{lemma:identifibility_cores}
    Let $\xx = \xx_1,\ldots, \xx_n$ be a data set. 
    Let $\lambda>0$, $\tau\in\bbN$, and $\R\subseteq [n]$.
    Let $\covone$ and $\covtwo$ be positive definite matrices satisfying $(1-\gamma)\covone \preceq \covtwo \preceq \frac{1}{1-\gamma}\covone$ for $\gamma<1$.
    Let $w$ be a $(\tau, \lambda, \covone)$-weighted-core for $\xx$ with respect to $\R$ and let $v$ be a $(\tau, \lambda, \covtwo)$-weighted-core for $\xx$ with respect to $\R$.
    Assume $\tau>\frac n 2$.
    If $\norm{w - v}_1 \le \rho$, then 
    \begin{align}
        \norm{\mu_w - \mu_v}_{\covone}^2 = \norm{\sum_{i\in[n]}w_i\xx -\sum_{i\in[n]}v_i \xx_i }_{\covone}^2 \le \frac{4 \rho^2 \lambda}{1-\gamma}.
    \end{align}
\end{lemma}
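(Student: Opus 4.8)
The plan is to reduce the bound on $\norm{\mu_w - \mu_v}_{\covone}$ to two facts: (i) any point in $\supp{w}$ and any point in $\supp{v}$ are close in $\covone$-Mahalanobis distance, and (ii) $\mu_w - \mu_v$ is a small multiple of a difference of two convex combinations of such points. Fact (ii) is a generic mass-transport observation that does not depend on the structure of cores, while fact (i) is where the weighted-core property and the closeness of $\covone$ and $\covtwo$ enter. The bound $\tau > n/2$ is used only through fact (i).

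For fact (i), I would fix $i \in \supp{w}$ and $j \in \supp{v}$. By definition of weighted core there are neighbor sets $N_i, N_j' \subseteq \R$, each of size at least $\tau$, with $\norm{\xx_i - \xx_k}_{\covone}^2 \le \lambda$ for all $k \in N_i$ and $\norm{\xx_j - \xx_k}_{\covtwo}^2 \le \lambda$ for all $k \in N_j'$. Since $2\tau > n \ge \abs{\R}$ and both sets sit inside $\R$, they must intersect; pick $k$ in the intersection. To move the second bound into the $\covone$ norm I would use that $\covtwo \preceq \tfrac{1}{1-\gamma}\covone$ and that matrix inversion reverses the PSD order, giving $\covone^{-1} \preceq \tfrac1{1-\gamma}\covtwo^{-1}$ and hence $\norm{\xx_j - \xx_k}_{\covone}^2 \le \tfrac{1}{1-\gamma}\lambda \le (1+2\gamma)\lambda$ (using $\gamma \le 1/2$). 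The triangle inequality for $\norm{\cdot}_{\covone}$ then yields $\norm{\xx_i - \xx_j}_{\covone} \le \sqrt{\lambda} + \sqrt{(1+2\gamma)\lambda} \le 2\sqrt{(1+2\gamma)\lambda}$.

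For fact (ii), writing $w_i - v_i = p_i - q_i$ with $p_i = \max\{w_i - v_i, 0\}$ and $q_i = \max\{v_i - w_i, 0\}$, and setting $s \defeq \sum_i p_i = \sum_i q_i = \tfrac12\norm{w - v}_1 \le \rho/2$, I would rewrite $\mu_w - \mu_v = \sum_{i,j} \tfrac{p_i q_j}{s}(\xx_i - \xx_j)$ when $s > 0$ (and observe the difference is $0$ when $s = 0$); here every term has $i \in \supp{w}$ and $j \in \supp{v}$, and the coefficients $p_i q_j / s^2$ sum to $1$. Applying the triangle inequality together with the pairwise bound from fact (i) gives $\norm{\mu_w - \mu_v}_{\covone} \le s \cdot 2\sqrt{(1+2\gamma)\lambda} \le \rho\sqrt{(1+2\gamma)\lambda}$, and squaring gives the stated inequality. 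The only place that needs any care is tracking the direction of the PSD and inversion inequalities when transferring the $\covtwo$-bound to a $\covone$-bound, and checking that the support bookkeeping is valid; since $w$ and $v$ are cores for the \emph{same} data set $\xx$ there is no exceptional index to remove, so this is more straightforward than the adjacent-datasets version needed for Corollary~\ref{corollary:identifiability_core}.
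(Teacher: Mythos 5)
Your proof is correct and follows essentially the same strategy as the paper's: establish a uniform pairwise bound $\norm{\xx_i - \xx_j}_{\covone} \le 2\sqrt{(1+2\gamma)\lambda}$ for any $i$ in one support and $j$ in the other via a shared reference point (using $\tau > n/2$ to force intersection), then convert the $\ell_1$ bound on $w - v$ into a mass-transport bound on $\mu_w - \mu_v$. The only cosmetic difference is in the second step: the paper invokes the existence of a maximum coupling of $w$ and $v$ and applies Jensen's inequality, whereas you construct an explicit product coupling on the positive and negative parts $p_i, q_i$ of $w - v$; both are equivalent ways of paying $\tfrac12\norm{w-v}_1$ units of transport at cost at most $2\sqrt{(1+2\gamma)\lambda}$ per unit.
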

\begin{proof}
    For any $i, j \in \supp{w}~\cup~\supp{v}$, by definition there exist sets of ``nearby'' data points $N_i, N_j\subseteq \R$ of size at least $\tau$. 
    As $\tau>\frac n 2$, we know that $N_i$ and $N_j$ have a nonempty intersection.
    Thus there is a point $\iota \in \R$ such that $\norm{\xx_i - \xx_\iota}_{\covone}^2 \le 4 \lambda$ and $\norm{\xx_{\iota} - \xx_j}_{\covtwo}^2\le 4\lambda$.
    By the assumptions on $\covone,\covtwo$, and $\gamma$, the second inequality also implies $\norm{\xx_{\iota} - \xx_j}_{\covone}^2\le  4 \lambda / (1-\gamma)$.
    Now, working with the (unsquared) norm, we add and subtract $\xx_\iota$ and apply the triangle inequality.
    For any $i, j\in \supp{w}\cup \supp{v}$,
    \begin{align}
        \norm{\xx_i - \xx_j}_{\covone} &= \norm{\xx_i -\xx_{\iota} + \xx_\iota- \xx_j}_{\covone}\\
            &\le \norm{\xx_i - \xx_\iota}_{\covone} + \norm{\xx_\iota - \xx_j}_{\covone}\\
            &\le 4\sqrt{ \lambda /(1-\gamma)}.
    \end{align}
    
    Let $C$ be a multivariate random variable obeying distribution $w$, and let random variable $D$ obey $v$.
    Thus $\norm{\mu_w - \mu_v}_{\covone}$ is just $\norm{\bbE[C-D]}_{\covone}$.
    The distributions $w$ and $v$ have total variation distance at most $\rho/2$ (since total variation distance is exactly half the $\ell_1$ distance), so there exists a maximum coupling of $w$ and $v$ and random variables $C'$, $D'$ such that $\bbE[C-D] = \bbE[C'-D']$ and $\Pr[C'\neq D']\le \rho/2$.
    We apply Jensen's inequality to the norm and drop the part of the expectation corresponding to $C'=D'$, because it is zero:
    \begin{align}
         \norm{\mu_w - \mu_v}_{\covone} &= \norm{\bbE[C'-D']}_{\covone} \\
            &\le \bbE\brackets{\norm{C'-D'}_{\covone}} \\
            &= \Pr[C' \neq D']\cdot \bbE\brackets{\norm{C'-D'}_{\covone}\mid C'\neq D'}.
    \end{align}
    With an upper bound on $\Pr[C'\neq D]$ and a uniform upper bound on the values inside the expectation, we arrive at
    \begin{align}
        \norm{\mu_w - \mu_v}_{\covone} &\le \frac{\rho}{2} \cdot 4\sqrt{\lambda/(1-\gamma)} = 2\rho \sqrt{ \lambda/(1-\gamma)}.
    \end{align}
    Squaring finishes the proof.
\end{proof}

Corollary~\ref{corollary:identifiability_core} shows that the same argument applies, with little modification, to the setting where the cores are computed on adjacent data sets.
Note that we make a slightly stronger assumption, asking that $\tau>\frac n 2 + 1$ instead of $\frac n 2$.
Additionally, we separate the $\ell_1$ distance on shared points from the weights on points that differ.
\begin{corollary}[Restatement of Corollary~\ref{corollary:identifiability_core}]\label{corollary:identifiability_core_restated}
    \IdenfitiabilityCore
\end{corollary}
\begin{proof}[Proof of Corollary~\ref{corollary:identifiability_core}]
    From $w$ and $v$ we construct cores $w^+$ and $v^+$ over $\xx\cup \xx'$ and apply Lemma~\ref{lemma:identifibility_cores}.
    
    Let $i^*\in [n]$ be the index in which $\xx$ and $\xx'$ differ.
    Our data sets are ordered and not strictly sets, so interpret $\xx\cup \xx' = (\xx_1, \xx_2,\ldots,\xx_n, \xx_{i^*}')$ as a data set of size $n+1$.
    Let $\R^+ \gets \R \cup \braces{n+1}$ be the similarly extended reference set.
    Already, $w^+ = (w_1, \ldots,w_n, 0)$ is an $(m, \lambda, \covone)$-core for $\xx\cup \xx'$ with respect to $\R^+$.
    Similarly,
    \begin{align}
        v^+ = (v_1, \ldots, v_{i-1}, 0, v_{i+1},\ldots, v_n, v_i)
    \end{align}
    is an $(\tau, \lambda, \covtwo)$-core for $\xx\cup \xx'$ with respect to $\R^+$.
    Lemma~\ref{lemma:identifibility_cores} asks that $\tau > \frac{n+1}{2}$, which is satisfied: we have assumed that $\tau > \frac n 2 + 1$.
    So it remains to calculate $\norm{w^+ - v^+}_1$, which is simple:
    \begin{align}
        \norm{w^+ - v^+}_1 &= \sum_{j\in[n+1]} \abs{w_j^+ - v_j^+}\\
            &= \abs{w_i^+ - v_i^+} + \abs{w_{n+1}^+ - v_{n+1}^+} + \sum_{j\in[n+1]\setminus{\braces{i,n+1}}} \abs{w_j^+ - v_j^+} \\
            &= \abs{w_i^+ - 0} + \abs{0 - v_{n+1}^+} + \sum_{j\in[n+1]\setminus{\braces{i,n+1}}} \abs{w_j^+ - v_j^+} \\
            & = \abs{w_{i^*}} + \abs{v_{i^*}} + \sum_{i\neq i^*} \abs{w_i - v_i}.
    \end{align}
    By assumption, this is at most $\rho + 2\eta$.
\end{proof}

\begin{lemma}[Restatement of Lemma~\ref{lemma:core_sensitivity}]\label{lemma:core_sensitivity_restated}
    \coresensitivity
\end{lemma}
\begin{proof}
    Without loss of generality, assume $g(\xx,\covone) \le g(\xx',\covtwo)$.
    We will show that $g(\xx',\covtwo) \le g(\xx,\covone) + 2$ by analyzing two cases.

    \textbf{Case 1:} Suppose $g(\xx, \covone)=k$.
        Then $g(\xx', \covtwo) \le g(\xx,\covone) \le g(\xx,\covone)+2$ by construction, since $g(\cdot)$ is capped at $k$.

    \textbf{Case 2:} Suppose $g(\xx,\covone)<k$.
        This can only happen when $g(\xx,\covone)<k$, so there exists an $\ell^*\in\braces{0,\ldots,k}$ and subset $S_{\ell^*}$ that (i) is a $(\abs{\R}-\ell, \Lambda_{\ell^*}, \covone)$-core for $\xx$ with respect to $\R$ and (ii) satisfies $n-\abs{S_{\ell^*}} + \ell^* < k$.
        Furthermore, we know that $\ell^*\neq k$, since $n-\abs{S_{\ell^*}}$ is nonnegative.
        
        We can now apply Lemma~\ref{lemma:family_cores}.
        For $\ell\in\braces{0,\ldots,k}$, let $T_{\ell}$ be the largest $(\abs{\R}-\ell, \Lambda_{\ell}, \covtwo)$-core for $\xx'$ with respect to $\R$.
        Lemma~\ref{lemma:family_cores} says that $S_{\ell^*}\setminus{i^*} \subseteq T_{\ell^*+1}$.
        This allows us to upper bound $g(\xx',\covtwo)$:
        \begin{align} 
        g(\xx',\covtwo) \le g(\xx',\covtwo) &= \min_{\ell\in\braces{0,\ldots,k}} n - \abs{T_{\ell}} +\ell \\
            &\le n - \abs{T_{\ell^*+1}} + \ell^* + 1\\
            &\le n - \paren{|S_{\ell^*}| - 1} + \ell^* + 1\\
            &= g(\xx, \covone) + 2.
    \end{align}
    So we are done.
\end{proof}

\begin{claim}[Restatement of Claim~\ref{claim:PTR}]\label{claim:PTR_restated}
    \PTRclaimstatement
\end{claim}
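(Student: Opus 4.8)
The plan is to instantiate $\PTR$ as a clamped analogue of the classical Laplace-noise ``propose-test-release'' test and verify the three items in order, the first two essentially by inspection. Concretely, I would fix a noise scale $b\defeq 2/\eps$ (matching the sensitivity bound $2$), a truncation radius $A$, and a threshold $\theta$, all calibrated below, and let $N$ be a $\mathrm{Laplace}(0,b)$ random variable conditioned to lie in $[-A,A]$ (a truncated Laplace, with density proportional to $e^{-|t|/b}$ on $[-A,A]$ and zero outside). Define
\begin{align}
    \PTR(z) = \begin{cases}\pass & \text{if } z + N < \theta, \\ \fail & \text{otherwise.}\end{cases}
\end{align}
Equivalently, $\PTR(z)=\pass$ exactly when $z$ lies below the random threshold $\Theta\defeq\theta-N$, which is supported on $[\theta-A,\theta+A]$; write $q(z)\defeq\Pr[\PTR(z)=\pass]$, a non-increasing function equal to $1$ for $z\le\theta-A$ and to $0$ for $z\ge\theta+A$.

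Items (2) and (3) are then immediate provided the calibration satisfies $\theta - A\ge 0$ and $\theta + A\le \tfrac{2\log 1/\delta}{\eps}+4$: since $N\le A$ always, $0+N<\theta$ always, so $\PTR(0)=\pass$ deterministically; and since $N\ge -A$ always, any $z\ge\theta+A$ forces $z+N\ge\theta$, so $\PTR(z)=\fail$ deterministically. For item (1), I would fix $x,x'$ differing in one coordinate, set $a\defeq g(x)$, $b\defeq g(x')$ (so $a,b\ge 0$ and $|a-b|\le 2$), and reduce to showing $q(a)\le e^\eps q(b)+\delta$ and $1-q(a)\le e^\eps(1-q(b))+\delta$; the events $\emptyset$ and $\{\pass,\fail\}$ are trivial. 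Both bounds follow from the standard two-part analysis of the truncated Laplace mechanism: writing the pass/fail probabilities as integrals of the noise density $f_N$, on the set of outputs $u=z+N$ attainable \emph{both} for $z=a$ and for $z=b$ the likelihood ratio $f_N(u-a)/f_N(u-b)$ is at most $e^{|a-b|/b}\le e^{2/b}=e^\eps$, giving the multiplicative part; the leftover ``boundary'' mass — the probability that $z+N$ lands in the length-$|a-b|$ window adjacent to the truncation edge, attainable for one of $a,b$ but not the other — is a geometric-series quantity of order $\tfrac{e^{-A/b}(e^{2/b}-1)}{1-e^{-A/b}}$, which is driven below $\delta$ by taking $A$ of order $\tfrac{1}{\eps}\log\tfrac1\delta$ (this is exactly where the hypothesis $\delta\le\eps/10$ is used to absorb constants). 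Finally, thresholding is post-processing, so $\PTR$ inherits the $(\eps,\delta)$ guarantee. (If one only needs the claim for the application, it suffices to run this argument for integer-valued $g$, where $a-b\in\{0,\pm1,\pm2\}$.)

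The step I expect to be the real work is the calibration itself: one must pick $\theta$ and $A$ simultaneously small enough that the pass region contains $0$ and the fail region contains $[\tfrac{2\log1/\delta}{\eps}+4,\infty)$, yet large enough that the boundary-mass bound above is genuinely at most $\delta$ under $\eps\le1,\ \delta\le\eps/10$. If the symmetric truncated-Laplace profile is too wasteful to fit inside the stated cutoff, the fallback is to use an asymmetric noise (equivalently, a tailored profile for $q$) that follows the tight ``$1-q$ grows by a factor $e^\eps$ per two units'' constraint near the top and the tight ``$q$ shrinks by a factor $e^\eps$ per two units'' constraint near the bottom, gluing at an interior crossover; verifying that such a profile meets both indistinguishability inequalities at every pair within distance $2$ is a short but fiddly computation. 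Everything else — items (2) and (3), and the interior likelihood-ratio bound — is elementary.
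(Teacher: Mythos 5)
Your primary route — a symmetric truncated Laplace on $[-A,A]$ with scale $2/\eps$, thresholded at $\theta$ — does \emph{not} fit inside the stated cutoff, and this is a real obstruction rather than a loose end. Items (2) and (3) force $\theta\geq A$ and $\theta+A\leq\tfrac{2\log 1/\delta}{\eps}+4$, hence $A\leq\tfrac{\log 1/\delta}{\eps}+2$, i.e.\ $e^{-A\eps/2}\geq e^{-\eps}\sqrt{\delta}$. The boundary mass (e.g.\ $\Pr[N\in[A-2,A]]$, which must bound $1-q(2)$ while $q(0)=1$) is then
\begin{align}
\frac{e^{-A\eps/2}\,(e^{\eps}-1)}{2\,(1-e^{-A\eps/2})}\;\gtrsim\;\sqrt{\delta}\,(1-e^{-\eps}),
\end{align}
which is $\gg\delta$ whenever $1-e^{-\eps}\gg\sqrt{\delta}$. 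Since the claim must hold for all $\delta\leq\eps/10$ — in particular for $\delta$ far smaller than $\eps$ — the symmetric profile provably violates the $+\delta$ slack. You correctly anticipated this (``if the symmetric truncated-Laplace profile is too wasteful...''), so the failure is flagged, but it means the body of your argument does not go through.

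Your fallback — a tailored, asymmetric pass-probability profile — is the right idea and is essentially what the paper does, but the paper's proof \emph{is} that calibration, so deferring it means the real content of the claim is left unproven. Concretely, the paper sets $\tau=\tfrac{2\log\frac{1-\delta}{\delta}}{\eps}+4$ and defines $p(z)=\Pr[\PTR(z)=\pass]$ directly by $p(0)=1$, $p(z)=0$ for $z\geq\tau$, and $p(z)=1-e^{\frac{\eps}{2}(z-2)}\delta$ in between. There is no ``gluing at an interior crossover'': a single exponential governs the fail probability across the entire interior. The fail-side inequality $1-p(z)\leq e^{\eps}(1-p(z+2))+\delta$ is satisfied with equality up to the $+\delta$ slack by construction, and the pass-side inequality $p(z)\leq e^{\eps}p(z+2)+\delta$ reduces (with $\delta'\defeq e^{\frac{\eps}{2}(z-2)}\delta$) to $(e^\eps-1)\geq\delta'(e^{2\eps}-1)$, which holds for all admissible $z$ because $\delta'\leq 1$ forces $\delta'\leq\tfrac{1}{e^\eps+1}$ once one uses $\eps\leq 1$ and $\delta\leq\eps/10$ — this is exactly where those hypotheses enter, not (as you suggest) in driving a truncated-Laplace boundary mass below $\delta$. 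The two boundary cases $z=0$ and $z\geq\tau-2$ are then checked separately. So: your instinct for the shape of the solution is right, your primary instantiation fails, and the piece you defer as ``the real work'' is the entire proof.
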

\begin{proof}
    Let $\tau = \frac{2\log \frac{1-\delta}{\delta}}{\eps}+4$ and
    let $p(z)$ be the probability that $\PTR(z)$ passes, defined as follows:
    \begin{align}
        p(z) = \Pr\brackets{\PTR(z)=\pass}
            &= \begin{cases}
                1 & \text{if $z= 0$} \\
                0 & \text{if $z\ge \tau$} \\
                1 - e^{\frac{\eps}{2}(z-2)}\delta & \text{otherwise}
            \end{cases}.
    \end{align}
    What properties must $p(z)$ have?
    It satisfies the second and third conditions by construction, so we only have to argue privacy, and consider $p(z)$ and $p(z')$ for $\abs{z - z'}\le 2$.
    Without loss of generality, assume $z \le z'$.
    Since $p(\cdot)$ is monotone decreasing (and thus $p(z)\ge p(z')$), we must show that
    \begin{align}
         p(z) \le e^{\eps} p(z') + \delta.
    \end{align}
    Since $z' \le z + 2$, we know that $p(z') \ge p(z+2)$ so it suffices to show that
    \begin{align}
         p(z) \le e^{\eps} p(z+2) + \delta.
    \end{align}
    We will prove this via cases, dealing with the boundaries (near $0$ and $\tau$) first.
    (Note that $\tau$ in the lemma statement is slightly larger than $\frac{2\log \frac{1-\delta}{\delta}}{\eps}+4$ for simplicity; we prove the stronger statement.)

    \textbf{Case 1:} Suppose $z=0$.
        Then $p(z)=1$ and
            $p(z+2) = 1 - e^{\frac{\eps}{e}(2-2)}\delta = 1- \delta.$
        Thus $p(z) \le p(z+2)+\delta \le e^{\eps}p(z+2)+\delta$.

    \textbf{Case 2:}
        Suppose $z \ge \tau - 2$, so that $p(z+2)=0$.
        We calculate 
        \begin{align}
            p(z) \le p(\tau-2) &= 1 - e^{\frac{\eps}{2}(\tau - 2 - 2)} \delta \\
                &= 1 - e^{\frac{\eps}{2} \paren{\frac{2\log \frac{1-\delta}{\delta}}{\eps}}}\delta \\
                &= 1 - \frac{1-\delta}{\delta}\cdot\delta = \delta.
        \end{align}
        Thus $p(z)\le p(z+2)+\delta \le e^{\eps}p(z+2)+\delta$.

    \textbf{Case 3:}
        Now suppose $0<z<\tau-2$ and set $\delta'\defeq e^{\frac{\eps}{2}(z-2)}\delta$ for brevity.
        Since $0 < z$, we have $\delta' \ge  e^{-\eps}\delta$.
        We calculate
        \begin{align}
            p(z) = 1 - \delta' 
                \quad\text{and}\quad
            p(z+2) = 1 - e^{\frac{\eps}{2}(z+2-2)}\delta 
                = 1 - e^{\eps}\delta'.
        \end{align}
        We now $e^{\eps}p(z+2) - p(z)$ and show that it is positive:
        \begin{align}
            e^{\eps}p(z+2) - p(z) &= e^{\eps}\paren{1 - e^{\eps}\delta'} - \paren{1 - \delta'} \\
                &= e^{\eps} - e^{2\eps}\delta' - 1 + \delta' \\
                &= \paren{e^{\eps} - 1} - \delta'\paren{e^{2\eps} - 1}.
        \end{align}
        This is positive when $\frac{e^\eps - 1}{e^{2\eps}-1}\ge \delta' \ge e^{-\eps}\delta$.
        Since $\eps\le 1$, this is true. Observe:
        \begin{align}
            \frac{e^\eps - 1}{e^{2\eps}-1} &\ge \frac{\eps}{e^{2\eps}-1} \\
                &\ge \frac{\eps}{e^{2}-1}.
        \end{align}
        This is greater than $\delta'$ when $\delta \le \frac{e^\eps}{e^{2}-1}\cdot \eps$.
        Using $\frac{e^\eps}{e^{2}-1}\ge \frac{1}{10}$ finishes the proof, since we assumed $\delta\le \frac{\eps}{10}$.
\end{proof}

\end{document}